\documentclass{article} 

\usepackage{xcolor}  
\usepackage[breaklinks=true,colorlinks]{hyperref}
\hypersetup{
  linkcolor={red!50!black},
  citecolor={blue!50!black},
}   
\usepackage{url}

\usepackage[utf8]{inputenc} 
\usepackage[truedimen,margin=21mm]{geometry} 

\usepackage[T1]{fontenc}    
\usepackage{url}            
\usepackage{booktabs}       
\usepackage{amsfonts}       
\usepackage{nicefrac}       
\usepackage{microtype}      
\usepackage[pdftex]{graphicx}
\usepackage{natbib} 
\usepackage{algorithm} 
\usepackage{algorithmic} 
\usepackage{multirow}
\usepackage{amsmath}
\usepackage{amssymb}
\usepackage{amsthm}
\usepackage{here}
\usepackage{wrapfig}
\usepackage{centernot}
\usepackage{enumitem}
\usepackage{comment}
\usepackage{subfig}
\usepackage{bbm}
\usepackage{titlesec}
\usepackage{abstract}

\allowdisplaybreaks[1]
\titleformat*{\section}{\large\bfseries}

\newtheorem{theorem}{Theorem}
\newtheorem{lemma}{Lemma}
\newtheorem{proposition}{Proposition}
\newtheorem{definition}{Definition}
\newtheorem{corollary}{Corollary}
\newtheorem{assumption}{Assumption}
\newtheorem{example}{Example}

\usepackage{bm}

\usepackage{color}

\newcommand{\R}{\mathbb{R}}
\newcommand{\E}{\mathbb{E}}
\newcommand{\bH}{\mathbb{H}}
\newcommand{\dd}{\mathrm{d}}
\newcommand{\KL}{\mathrm{KL}}
\newcommand{\FI}{\mathrm{FI}}
\newcommand{\Var}{\mathrm{Var}}

\newcommand{\Law}{\mathrm{Law}}

\newcommand{\ip}[2]{\left\langle #1,#2\right\rangle}

\newcommand{\cA}{\mathcal{A}}

\newcommand{\cN}{\mathcal{N}}

\newcommand{\cPI}[1]{C_{#1}^{\mathrm{PI}}}
\newcommand{\cLSI}[1]{C_{#1}^{\mathrm{LSI}}}

\title{\Large Slowly Annealed Langevin Dynamics: \\ Theory and Applications to Training-Free Guided Generation}
\author{}

\author{Atsushi Nitanda$^{1,2,\dag}$, Dake Bu$^{3,1}$, Yueming Lyu$^1$, Tanya Veeravalli$^1$
\vspace{2mm}\\
\normalsize{\textit{$^1$Agency for Science, Technology and Research (A$\star$STAR)}}, \\
\normalsize{\textit{$^2$Nanyang Technological University}}, \\
\normalsize{\textit{$^3$City University of Hong Kong}} \\
\small{Email: $^\dag$atsushi\_nitanda@cfar.a-star.edu.sg} 
}

\date{}

\begin{document}

\maketitle
\begin{abstract}
We study Slowly Annealed Langevin Dynamics (SALD), a sampler for tracking a path of moving target distributions and approximating the terminal target through time slowdown. We establish non-asymptotic convergence guarantees via a KL differential inequality, showing that slowdown improves tracking through contraction of intermediate targets and the complexity of the path. Motivated by training-free guided generation with pretrained score-based generative models, we further introduce Velocity-Aware SALD (VA-SALD), which explicitly incorporates the underlying marginal distributions of the pretrained model and uses slowdown to correct the additional deviation induced by guidance. This yields a principled framework for training-free guided generation for diffusion-based and related generative model families, together with convergence guarantees that clarify the roles of intermediate functional inequalities and guidance bias. Code is available at \href{https://github.com/anitan0925/sald}{\nolinkurl{https://github.com/anitan0925/sald}}.
\end{abstract}
\section{Introduction}
Sampling from complex high-dimensional distributions is a central problem in machine learning, statistics, and generative modeling. A successful strategy is to construct a path of intermediate distributions that gradually deforms a simple reference law into a target distribution, and to follow this path using Langevin-type dynamics. This idea underlies annealed Langevin methods and score-based generative modeling, where a time-dependent score function guides the sampler through a sequence of increasingly structured distributions \cite{song2019generative,song2021score,lee2022convergence,xunposterior,chehab2025provable,cordero2025non,cattiaux2025diffusion}. In particular, diffusion-based generative models can be viewed as defining a family of time-varying marginals whose terminal distribution coincides with the data distribution \cite{ho2020denoising,song2021score}. This viewpoint naturally leads to the problem of sampling while tracking a \emph{moving target path}.

A recent line of work has highlighted the benefit of \emph{slowdown} in annealed Langevin dynamics. In particular, \cite{GuoTaoChen2024} studied Slowly Annealed Langevin Dynamics (SALD), where the target path is traversed more slowly in algorithmic time, and established the convergence guarantee via a path-space comparison argument based on Girsanov's theorem. Their analysis reveals a complexity term governed by the path geometry and explains why slow schedules are beneficial for non-log-concave sampling. However, path-space arguments are less directly suited to capturing \emph{marginal contraction} toward moving targets from mismatched initialization. In many applications, one would like to understand how slowdown interacts with functional inequalities (e.g., log-Sobolev inequality) of intermediate targets, how it reduces tracking error at the distribution level, and how it stabilizes dynamics even when the initial law does not match the starting point of the target path.

At the same time, \emph{guided generation} has emerged as a major use case for time-dependent generative sampling. Given a pretrained score-based generative model, one would like to steer sampling toward outputs preferred by a reward, constraint, or guidance function \cite{ho2022classifier,ye2024tfg,jiao2025towards,liu2026alignment,chen2024overview}. A theoretically principled approach is provided by Doob's \(h\)-transform, which yields the correct time-dependent modification of the reverse process targeting a tilted terminal distribution \cite{kawatadirect,tang2024stochastic,chang2026inference,zhu2026training, kim2025testtime}. However, this correction generally requires learning or approximating a time-dependent guiding function, or performing costly inference-time Monte Carlo estimation. More broadly, directly tracking the full guided target path may be substantially more complex than following the pretrained marginal path itself. 

In this work, we revisit SALD from a marginal-distribution viewpoint and extend it to training-free guided generation. First, we develop a non-asymptotic convergence analysis of SALD based on a forward KL differential inequality. Our analysis shows that slowdown improves tracking through two complementary mechanisms: contraction induced by functional inequalities of intermediate targets, and a path-complexity term captured by an \emph{action/energy} term. This yields a transparent explanation of how slowdown can reduce both tracking error and initialization mismatch. Second, motivated by pretrained score-based generative models, we introduce \emph{Velocity-Aware SALD} (VA-SALD). The key idea is to explicitly incorporate the underlying transport velocity of the pretrained marginal path and to use slowdown primarily to compensate for the additional deviation induced by the guide. As a result, VA-SALD avoids paying for the full complexity of the guided path and instead focuses on the guide-induced correction.

Our contributions are summarized as follows:
\begin{itemize}[itemsep=0mm,leftmargin=5mm,topsep=0mm] 
    \item We establish a forward-KL convergence analysis of SALD for general moving target distributions. The resulting bounds reveal how slowdown combines contraction of intermediate targets with the path complexity of the moving target path (Theorem~\ref{thm:forward-KL}). We further extend the analysis to discrete-time SALD and obtain an iteration complexity of \(O(\varepsilon^{-6})\) for achieving an \(\varepsilon^2\)-accurate solution (Theorem~\ref{thm:forward-KL-discrete}).

    \item We propose Velocity-Aware SALD (VA-SALD), a training-free and easy-to-implement guided sampling method that explicitly exploits the pretrained marginal evolution of score-based generative models. We prove convergence guarantees for VA-SALD in both continuous- and discrete-time settings, and show that its complexity is governed by the guide-induced correction rather than the full guided path (Theorems~\ref{thm:unified-forward-KL} and Section~\ref{sec:discrete_time_VA_SALD} in Appendix). This can lead to substantially improved convergence compared with a direct application of SALD to guided targets.

    \item We empirically verify the benefits of SALD and VA-SALD on synthetic guided sampling tasks and guided image generation. Across these settings, we observe stable performance improvements consistent with our theoretical predictions (Section \ref{sec:experiments} and Section \ref{app:experiment_details} in Appendix).
\end{itemize}

\section{Problem Setup and Methods}\label{sec:problem_setup}

\subsection{Slowly Annealed Langevin Dynamics \cite{GuoTaoChen2024}}
We give an overview of annealed Langevin dynamics with slowdown \cite{GuoTaoChen2024}. Let $(\pi_t)_{t\in[0,T]}$ be an evolution of probability measures on $\R^d$ with finite second moments.
Our objective is to approximately sample from the terminal target distribution $\pi_T$, assuming oracle access to the time-dependent score function $\nabla \log \pi_t(x)$.
A straightforward approach is to consider an annealed Langevin dynamics governed by this score: $\dd X_t = \nabla \log \pi_t(X_t)\dd t + \sqrt{2}\dd W_t$, where $W_t$ is the standard Brownian motion in $\R^d$. To enhance tracking performance for rapidly evolving target distributions, we follow \cite{GuoTaoChen2024} and slow down the annealing schedule; with a slowdown parameter $r\ge 1$, we reparametrize the time scale as $t=t(s)~(s \in [0,S])$ where $t(s)$ is a smooth monotonic increasing sequence. An example is $t(s) = s/r~(s\in[0,rT])$ where $r > 1$ is a hyperparameter that defines the time-scale.

The corresponding target at time $s$ is $\tilde{\pi}_s = \pi_{t(s)}$. Then, we arrive at a \textit{Slowly Annealed Langevin Dynamics} (SALD) \cite{GuoTaoChen2024} that uses the slowly changing score $\nabla \log \tilde{\pi}_{s}$ in $s$.

\paragraph{Continuous-time SALD.}
Define the SALD diffusion $(X_s)_{s\in[0,rT]}$ by the SDE
\begin{equation}\label{eq:SALD}
    \dd X_s = \nabla \log \tilde{\pi}_{s}(X_s)\,\dd s + \sqrt{2}\,\dd W_s,~~~~X_0\sim\rho_0,
\end{equation}
where $X_0\sim\rho_0$ is an initial distribution. Note that we do not necessarily assume $\rho_0=\pi_0$. The probability law $\rho_s := \textrm{Law}(X_s)$ of SALD satisfies the following Fokker--Planck equation:
\begin{equation}\label{eq:FP-eq}
   \partial_s\rho_s = \nabla\cdot\left(\rho_s \nabla \log \frac{\rho_s}{\tilde{\pi}_s}\right).
\end{equation}

\paragraph{Discrete-time implementation.}
We also consider a simple Euler--Maruyama discretization with stepsize $\eta$:
\begin{equation}\label{eq:sald_em_terminal_disc_prop_additive_final}
    X_{k+1}^\eta
    =
    X_k^\eta
    +\eta\,\nabla\log\pi_{t_k}(X_k^\eta)
    +\sqrt{2\eta} \xi_k,
    \qquad
    \xi_k\stackrel{\mathrm{i.i.d.}}{\sim}\mathcal N(0,I_d),
    \qquad
    X_0^\eta\sim\rho_0,
\end{equation}
where $t_k = t(k\eta)$ is a discrete-time index\footnote{In \cite{GuoTaoChen2024}, this discrete-time SALD is referred to as ALMC.}.

\paragraph{Relation to SMLD.} 
Score-based sampling methods such as SMLD \cite{song2019generative} use a two-loop structure: an outer loop advances the noise level (or intermediate target), while an inner loop performs multiple Langevin updates to better equilibrate at the current target. SALD shares a related intuition, namely to exploit the closeness of neighboring intermediate targets. The key difference is that SALD uses a single dynamics with a slowly varying score, rather than alternating between frozen targets and repeated inner-loop corrections. In addition, while SMLD is typically formulated in the context of diffusion-based generative sampling, our analysis of SALD is developed for general moving targets $(\pi_t)_{t\in[0,T]}$. 

SALD can be used to sample from a target Gibbs distribution proportional to \(\exp(-V)\). For example, \cite{GuoTaoChen2024} considers a Gaussian annealing path: $\pi_t\propto\exp\bigl(-\omega(t)V-\frac{\lambda(t)}{2}\|\cdot\|^2\bigr)$ where \(\omega(\cdot)\) and \(\lambda(\cdot)\) are differentiable monotone schedules satisfying \(\omega_0 = \omega(0) \nearrow \omega(1) = 1\) and \(\lambda_0 = \lambda(0) \searrow \lambda(1) = 0,~(\omega_0 \in [0,1],~\lambda_0 \in [1, \infty))\). For this construction, \cite{GuoTaoChen2024} established an iteration complexity of \(O(\varepsilon^{-6})\) for obtaining an \(\varepsilon\)-accurate solution in reverse KL divergence, under the matched-initialization assumption \(\rho_0=\pi_0\), using a Girsanov-based path-space argument. In contrast, our analysis provides a comparable complexity guarantee for the forward KL divergence while allowing mismatched initialization. We also verify the assumptions for reverse VP diffusion paths initialized from the target Gibbs distribution under smooth dissipativity conditions; see Appendix~\ref{app:vp-dissipative-verification}.

\subsection{Velocity-Aware SALD for Guided Generation}\label{subsec:guided-generation}
In this section, we extend SALD \eqref{eq:SALD} to enable efficient guided generation with pretrained score-based generative models following It\^{o} diffusion processes such as diffusion models, flow matching models, and Schr\"{o}dinger bridges. Given a score-based generative model trained on the data distribution $p_\mathrm{data}$, we consider sampling from a tilted distribution proportional to $p_\mathrm{data}\exp(-f)$, where $f: \R^d \to \R$ is a smooth guide function. Our objective is to achieve this by using only the score of pre-trained generative model and the gradient of $f$. Let $(p_t)_{t \in [0,T]}$ denote the marginal distributions of the reverse process of the generative model, with $p_T = p_\mathrm{data}$. While the score-based models provide only an approximate estimate of the score $\nabla \log p_t$, we assume oracle access to it for simplicity. Under this assumption, we propose an extension of SALD that exploits the underlying velocity field of $(p_t)_{t \in [0,T]}$ for sampling from the target distribution $\pi_T \propto p_\mathrm{data}\exp(-f)$.

We first provide the notion of \textit{transport velocity field}. Consider the deterministic particle dynamics $\dd X_t = u_t(X_t)\dd t$ induced by a time-dependent vector field $u_t: \R^d \to \R^d~(t\in [0,T])$. The corresponding probability law $\mu_t := \Law(X_t)$ evolves according to the continuity equation $\partial_t \mu_t = - \nabla \cdot(\mu_t u_t)$ in the weak sense. Thus, $u_t$ describes how mass is transported along the path $(\mu_t)_{t \in [0,T]}$. We therefore call $u_t$ a transport velocity field of $(\mu_t)_{t \in [0,T]}$, or simply say that $u_t$ generates  $(\mu_t)_{t \in [0,T]}$. 

\paragraph{It\^o diffusion.}  We now describe our setting. Let \((q_\tau)_{\tau\in[0,T]}\) be the marginal laws of the forward It\^o diffusion with $q_0 = p_\mathrm{data}$:
\begin{equation}\label{eq:ito_forward}
\dd Y_\tau = \bar{B}_\tau(Y_\tau)\,\dd\tau + \bar{\sigma}_\tau \dd W_\tau,
\qquad \tau\in[0,T],\quad Y_0\sim q_0=p_\mathrm{data},
\end{equation}
where \(\bar{\sigma}_\tau>0\) is a real-valued scalar function of time and \(\bar{B}_\tau:\R^d\to\R^d\) is a drift term (vector field).
Assume that each \(q_\tau\) admits a smooth positive density. Define the reverse marginal family by $p_t:=q_{T-t},B_{t}:=\bar{B}_{T-t},\ 
\ 
\sigma_t:=\bar{\sigma}_{T-t},
\  t\in[0,T]$. Then the velocity fields that generate forward and reverse marginal distributions are
\begin{equation}\label{eq:continuity_Ito}
    \begin{aligned}
    &\text{(forward)} \quad \bar{u}_\tau(y)
    =
    \bar{B}_\tau(y)-\frac{\bar{\sigma}_\tau^2}{2}\nabla\log q_\tau(y),  \\
    &\text{(reverse)} \quad u_t(x)
    =
    -\bar{u}_{T-t}(x)
    =
    - B_{t}(x)+\frac{\sigma_t^2}{2}\nabla\log p_t(x). 
\end{aligned}
\end{equation}
That is,  $\partial_\tau q_\tau + \nabla\cdot (\bar{u}_\tau q_\tau)=0,~\partial_t p_t + \nabla\cdot (u_t p_t)=0$ hold.

Let $f_t$ be a (possibly time-varying) guide function satisfying $f_T=f$, and $\pi_t (x)\propto p_t(x)\exp\!\left(- f_t(x)\right)$ is the guided moving target. Our \textit{Velocity-Aware SALD (VA-SALD)} is then defined as:
\begin{equation}\label{eq:SALD_Ito}
    \begin{aligned}
\dd X_s
&=
\left(
\dot t(s)u_{t(s)}(X_s)
+
\frac{\sigma_{t(s)}^2}{2}\nabla\log p_{t(s)}(X_s)
-
\frac{\sigma_{t(s)}^2}{2} \nabla f_{t(s)}(X_s)
\right)\dd s
+
\sigma_{t(s)} \dd W_s\\
&  =
\left(
\dot t(s)u_{t(s)}(X_s)
+
\frac{\sigma_{t(s)}^2}{2}\nabla\log \pi_{t(s)}(X_s)
\right)\dd s
+
\sigma_{t(s)} \dd W_s
\end{aligned}
\end{equation}

Equivalently, by Eq.~\eqref{eq:continuity_Ito} 
\begin{equation}
\label{eq:unified_guided_sald_explicit_with_B}
\dd X_s
=
\left(
-\dot t(s)B_{t(s)}(X_s)
+
\left(\dot t(s)+1\right)\frac{\sigma_{t(s)}^2}{2} 
\nabla\log p_{t(s)}(X_s)
-
\frac{\sigma_{t(s)}^2}{2} \nabla f_{t(s)}(X_s)
\right)\dd s
+
\sigma_{t(s)} \dd W_s.
\end{equation}
Compared to Eq.~\eqref{eq:SALD}, VA-SALD requires the knowledge of velocity, namely $u_{t(s)}(X_s),\sigma_{t(s)}$ from the forward process of It\^o diffusion. 

\begin{example}
\label{em:unified-subcases-explicit}
It\^o diffusion contains the Variance Preserving diffusion, flow-matching and Schr\"odinger bridge as special cases:
\begin{itemize}[itemsep=0mm,leftmargin=5mm,topsep=0mm] 
    \item \textbf{\emph{Variance Preserving (VP) diffusion}} \citep{ho2020denoising}. Set $\bar{B}_\tau(y)=-\frac{1}{2}\beta(\tau)y,
    \ 
    \bar{\sigma}_\tau=\sqrt{\beta(\tau)}$, where $\beta(\tau)$ is a prescribed schedule.
    Then the reverse velocity is 
    \begin{align*}
        & u_t(x)=\frac12\beta(T-t)\bigl(x+\nabla\log p_t(x)\bigr).
    \end{align*}
    \item \textbf{\emph{Flow matching}} \citep{lipman2022flow}.
    Set $\bar{B}_\tau(y)
    =\beta(\tau)\left(v_\tau(y)+\frac{\sigma(\tau)^2}{2}\nabla\log q_\tau(y)\right),~\bar{\sigma}_\tau=\sqrt{\beta(\tau)}\,\sigma(\tau),$ where $\beta(\tau)$ is a prescribed schedule and \(v_\tau:\mathbb{R}^d\to\mathbb{R}^d\) is a prescribed flow-matching velocity field generating the target marginal path (e.g., a compatible \(v\) in \citep{lipman2022flow}, \(v^{\mathrm{RF}}\) in rectified flow \citep{liu2022flow}, or \(v^{\mathrm{MF}}\) in MeanFlow \citep{geng2025mean}). Then the reverse velocity is 
    \begin{align*}
        & u_t(x)=-\beta(T-t)v_{T-t}(x).
    \end{align*}
\end{itemize}

For a guide function class $f_t$ with $f_T=f$, the Velocity-Aware SALD of the three cases could be directly obtained by substituting their own $u_t$ and $\sigma_t$ into Eq.~\eqref{eq:SALD_Ito}.
\end{example}

Since $u_t$ generates the marginal distributions $(p_t)_{t \in [0,T]}$ through the continuity equation $\partial_t p_t = - \nabla \cdot (p_t u_t)$, slowing down $t=t(s)$ yields
\[
    \partial_s p_{t(s)}
    = \dot{t}(s) \partial_t p_t|_{t=t(s)} 
    = - \dot{t}(s) \nabla\cdot(p_{t(s)}u_{t(s)})
    = - \nabla\cdot (p_{t(s)} \dot{t}(s)u_{t(s)}).
\]
Thus, $\dot{t}(s)u_{t(s)}$ is precisely the transport velocity associated with the slowed marginal distributions $(p_{t(s)})_{s \in [0,S]}$. In the unguided case $f_t\equiv 0$, the Fokker--Planck equation of VA-SALD \eqref{eq:SALD_Ito} admits the above slowed marginal path as a solution. Indeed, at $\rho_s=p_{t(s)}$, the score drift and diffusion terms cancel, and the equation reduces to $\partial_s p_{t(s)}= - \nabla\cdot\bigl(p_{t(s)}\dot t(s)u_{t(s)}\bigr)$. Thus, when initialized from $p_0$, the unguided VA-SALD follows $p_{t(s)}$ regardless of the choice of time-rescaling. In particular, for $t(S)=T$, the terminal distribution remains $p_T = p_{\mathrm{data}}$. This shows that VA-SALD with the slowdown itself does not introduce bias in the unguided dynamics. Under guidance, the discrepancy from the tilted target $p_t\exp(-f_t)$ is therefore primarily induced by the guide term in \eqref{eq:SALD_Ito}. Although the original SALD \eqref{eq:SALD} with $\pi_t \propto p_t \exp(-f_t)$ can in principle be applied to guided generation, it has to track the entire guided moving targets $(\pi_t)_{t \in [0,T]}$. In contrast, VA-SALD explicitly incorporates the underlying marginal distributions $(p_t)_{t \in [0,T]}$ of the pretrained generative model, so that the slowdown can be used primarily to compensate for the additional deviation induced by the guide $f_t$.

\paragraph{Remark.} Let $h_t(x):=\E \left[e^{-f(X_T)} \mid X_t=x \right]$. If one has access to this time-dependent $h$-function, then the corrected score $\nabla \log p_t(x) + \nabla \log h_t(x)$ corresponds to the Doob's $h$-transform and yields a reverse process targeting the tilted terminal distribution proportional to $p_T e^{-f}$. However, constructing or approximating $h_t$ typically requires either additional training of a time-dependent guide model \cite{kawatadirect,tang2024stochastic,chang2026inference} or inference-time Monte Carlo estimation \cite{zhu2026training}. In contrast, (VA-)SALD can be run with any prescribed time-dependent guide $f_t$ by simply adding its gradient at inference time, while using slowdown to mitigate the bias introduced by this guidance.

\section{Convergence Analysis of SALD}\label{sec:convergence_sald}
In this section, we provide further theoretical justification for SALD through convergence analysis.
We introduce several key concepts for convergence analysis. 
The first is the log-Sobolev inequality (LSI), which is imposed on the evolution of the moving target. 

\begin{definition}[Log-Sobolev Inequality (LSI)]
    We say a probability distribution $\pi$ satisfies LSI with constant $\cLSI~>0$ if for all smooth $\varphi$ with $\int \varphi^2\,\dd\pi=1$,
    \[
        \int \varphi^2 \log \varphi^2 \dd\pi \le \frac{2}{\cLSI~} \int \|\nabla \varphi\|^2 \dd\pi.
    \]
\end{definition}
In particular, for all densities $\rho\ll \pi$, LSI with $\varphi=\sqrt{\rho/\pi}$ implies the following inequality.
\begin{equation}\label{eq:LSI-KL-FI}
\KL(\rho\|\pi) \le \frac{1}{2\cLSI~} \FI(\rho\|\pi),
\end{equation}
where $\KL$ and $\FI$ are the Kullback-Leibler divergence and Fisher information defined below. For probability distributions $\rho,\pi$ with $\rho\ll\pi$, define
\[
    \KL(\rho\|\pi) := \int \log \frac{\rho}{\pi}(x) \rho(\dd x),~~~~
    \FI(\rho\|\pi) := \int \left\|\nabla \log \frac{\rho}{\pi}(x)\right\|^2 \rho(\dd x).
\]

Next, we introduce a quantity that measures the complexity of vector fields along \(\pi=(\pi_t)_{t\in[0,T]}\). Let \(v=(v_t)_{t\in[0,T]}\) be any family of measurable vector fields \(v_t:\mathbb{R}^d\to\mathbb{R}^d\). For \(\alpha>0\), define
\[
\mathfrak{E}_\alpha(\pi_t,v_t)
:=
\frac{1}{\alpha}\log \E_{\pi_t}\!\left[\exp\!\big(\alpha \|v_t\|^2\big)\right],
\qquad
\mathcal A_\alpha(\pi,v)
:=
\int_0^T \mathfrak{E}_\alpha(\pi_t,v_t)\,dt.
\]
We refer to \(\mathcal A_\alpha(\pi,v)\) as the \textit{\(\alpha\)-complexity} of the field \(v\) along \(\pi\).

In the special case where \(v_t\) is a transport velocity field generating \(\pi = (\pi_t)_{t\in[0,T]}\), that is, $\partial_t \pi_t + \nabla\cdot(\pi_t v_t)=0$,
we call \(\mathcal A_\alpha(\pi,v)\) the \textit{\(\alpha\)-energy} (or \textit{\(\alpha\)-action}) of the path $\pi$. In particular, we will take \(v_t\) to be the minimizer of \(\|v_t\|_{L^2(\pi_t)}\) among all transport velocity fields generating \((\pi_t)_{t\in[0,T]}\). Such a minimizer exists provided that the metric derivative $\lim_{h\to 0}\frac{W_2(\pi_{t+h},\pi_t)}{|h|}$
exists for all \(t\in[0,T]\); see \cite[Theorems 8.3.1 and 8.4.5]{ambrosio2005gradient}.

$\cA_\alpha(\pi,v)$ is a non-decreasing function in $\alpha$ and reproduces the action $\cA (\pi,v)= \int_0^T \|v_t\|_{L_2(\pi_t)}^2 \dd t$ \cite{GuoTaoChen2024} as $\cA_0 (\pi_t,v_t):= \lim_{\alpha \rightarrow 0^+}\cA_\alpha(\pi,v)$. Analogous to the action studied in \cite{GuoTaoChen2024}, the $\alpha$-complexity serves as a measure of the complexity of the path $(\pi_t)_{t \in [0,T]}$ and vector fields $v=(v_t)_{t \in [0,T]}$, which characterizes the convergence rate of SALD and VA-SALD under slowdown. 

\paragraph{Continuous-time SALD.} We present a general result for SALD under a general time-rescaling scheme. In the following, we extend the definition of the LSI constant to allow the value $0$; that is, we set $\cLSI{t}=0$ when the distribution does not satisfy LSI with any positive constant. Let $s=s(t)$ denote the inverse function of $t=t(s)$.

\begin{theorem}\label{thm:forward-KL}
    Assume that $\pi_t$ satisfies LSI with constant $\cLSI{t} \geq 0$ for all $t \in [0,T]$, and that there exists a constant $\alpha_0 > 0$ such that $\mathfrak{E}_{\alpha_0}(\pi_t,v_t) < +\infty$ for all $t \in [0,T]$, where $v$ is the transport velocity field of $\pi$. Then, for any $\alpha \in (0,\alpha_0]$, the probability law $(\rho_s)_{s \in [0,S]}$ of \textbf{SALD} \eqref{eq:SALD} satisfies
    \begin{align*}
        \KL(\rho_{S} \| \pi_T)
        &\leq \exp\left( - \int_0^{T} \dot{s}(t)\cLSI{t} \dd t\right)
        \exp\left( \int_0^{T} \frac{1}{2}\dot{s}(t)^{-1}\alpha^{-1}\dd t\right)\KL(\rho_0 \| \pi_0) \\
        &+ \int_0^{T} \exp\left(\int_t^{T} \frac{1}{2}\dot{s}(u)^{-1}\alpha^{-1} \dd u\right)  \frac{1}{2}\dot{s}(t)^{-1}\mathfrak{E}_{\alpha}(\pi_t,v_t) \dd t.
    \end{align*}
\end{theorem}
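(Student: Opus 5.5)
We differentiate $\KL(\rho_s\|\tilde\pi_s)$ along the SALD flow and close the resulting inequality with Grönwall's lemma. The target moves, so the derivative splits into two pieces. The first is the usual dissipation from the Fokker--Planck equation \eqref{eq:FP-eq}. The second comes from the time derivative of $\tilde\pi_s$. Formally,
\begin{align*}
\frac{\dd}{\dd s}\KL(\rho_s\|\tilde\pi_s)
= -\FI(\rho_s\|\tilde\pi_s) - \int \rho_s\,\partial_s \log\tilde\pi_s \,\dd x .
\end{align*}
We rewrite the second term using the continuity equation $\partial_s\tilde\pi_s = -\nabla\cdot(\tilde\pi_s \tilde v_s)$ with $\tilde v_s = \dot t(s) v_{t(s)}$. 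Since $\int\tilde\pi_s\,\partial_s\log\tilde\pi_s=0$, we have $-\int\rho_s\partial_s\log\tilde\pi_s = -\int(\rho_s/\tilde\pi_s)\,\partial_s\tilde\pi_s$. Integrating by parts turns this into $-\int \tilde\pi_s\,\tilde v_s\cdot\nabla(\rho_s/\tilde\pi_s) = -\int\rho_s\,\tilde v_s\cdot\nabla\log(\rho_s/\tilde\pi_s)$. By Young's inequality this is at most $\tfrac12\FI(\rho_s\|\tilde\pi_s)+\tfrac12\E_{\rho_s}\|\tilde v_s\|^2$. This is the expected form, with half the Fisher information left for contraction.

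Next, $\E_{\rho_s}\|\tilde v_s\|^2$ is an expectation under $\rho_s$, not under $\tilde\pi_s$. We convert it using the Donsker--Varadhan variational formula:
\[
\E_{\rho}[g]\le \frac1\alpha\KL(\rho\|\pi)+\frac1\alpha\log\E_\pi[e^{\alpha g}] .
\]
Take $g=\|v_{t}\|^2$ with the scaling chosen so that the result is $\dot t(s)\,\E_{\rho_s}\|v_{t(s)}\|^2\le \alpha^{-1}\KL(\rho_s\|\tilde\pi_s)+\mathfrak E_\alpha(\pi_{t(s)},v_{t(s)})$ up to $\dot t$ factors. Then apply LSI \eqref{eq:LSI-KL-FI} to the remaining $-\tfrac12\FI$, giving $-\cLSI{t(s)}\KL$. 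Together these yield the differential inequality
\[
\frac{\dd}{\dd s}\KL(\rho_s\|\tilde\pi_s)\le\Big(-\cLSI{t(s)}+\tfrac{\dot t(s)}{2\alpha}\Big)\KL(\rho_s\|\tilde\pi_s)+\tfrac{\dot t(s)}{2}\,\mathfrak E_\alpha(\pi_{t(s)},v_{t(s)}).
\]

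The $\dot t$ bookkeeping needs care. Using $\E_\rho\|\dot t v\|^2=\dot t^2\E_\rho\|v\|^2$ and applying Donsker--Varadhan with parameter $\alpha$ to $\|v\|^2$ gives a factor $\dot t^2$. That would make the claimed exponents hold only after a different weighting of Young's inequality. The cleaner route is to apply Young with weight chosen so that one factor $\dot t$ remains, or equivalently to accept $\dot t^2/2$ and note $\dot t(s)^2\,\dd s = \dot s(t)^{-1}\dd t$. Indeed, the statement's factors $\dot s(t)^{-1}=\dot t(s)$, after the change of variables $\dd s=\dot s(t)\dd t$, correspond exactly to $\dot t(s)^2\,\dd s$. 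So the $\dot t^2$ version is the right one, and it reproduces the stated bound. We then solve the linear inequality by Grönwall on $[0,S]$ and change variables $s=s(t)$. This gives $\int \cLSI{t(s)}\dd s=\int_0^T\dot s(t)\cLSI t\dd t$ and $\int\tfrac{\dot t^2}{2\alpha}\dd s=\int_0^T\tfrac{1}{2}\dot s(t)^{-1}\alpha^{-1}\dd t$. In the source term we bound the contraction factor $e^{-\int \cLSI{}}$ by $1$, which matches the statement where only the growth exponential appears.

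The main obstacle is rigor in the differentiation step. We need to justify exchanging derivative and integral, the integration by parts (which needs decay of $\rho_s/\tilde\pi_s$ and of $\tilde v_s$), and finiteness of $\KL$ along the flow; we also need monotonicity of $\mathfrak E_\alpha$ in $\alpha$ so that finiteness at $\alpha_0$ gives finiteness for all $\alpha\le\alpha_0$. We would handle these by the standard route: regularize (mollify, or run the argument for smooth, positive, rapidly decaying densities), obtain the integrated form of the inequality, and pass to the limit using lower semicontinuity of $\KL$. We would state the formal calculation as the core of the proof and relegate these regularity details to standard arguments.
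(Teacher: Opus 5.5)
Your proof is correct and is essentially the paper's argument. Both differentiate $\KL(\rho_s\|\tilde\pi_s)$ and rewrite the moving-target term through the continuity equation for $\tilde v_s=\dot t(s)v_{t(s)}$. Both then split it by Young into $\frac12\FI+\frac12\|\tilde v_s\|^2_{L_2(\rho_s)}$, apply LSI and Donsker--Varadhan, and close with Gr\"onwall. The only difference is that the paper changes to the $t$ variable before Gr\"onwall rather than after.

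Do fix your first displayed differential inequality: its coefficients should carry $\dot t(s)^2$, not $\dot t(s)$, as you yourself conclude. The alternative of reweighting Young's inequality is not equivalent, since it would alter the Fisher-information coefficient, and it is not needed.
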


We next specialize the above results to the case of linear slowdown, where $t(s)=s/r$ with $r\geq1$.

\begin{corollary}[Continuous-time SALD with linear slowdown]\label{corforward-KL}  
    Under the same assumptions as in Theorem~\ref{thm:forward-KL}, for any \(\alpha \in (0,\alpha_0]\),
    \begin{align*}
        \KL(\rho_{S} \| \pi_T)
        \leq
        \exp\left( - r \int_0^{T}\cLSI{t} \dd t\right)
        \exp\left( \frac{T}{2r\alpha}\right)\KL(\rho_0 \| \pi_0) 
        + \frac{1}{2r} \exp\left(\frac{T}{2r\alpha} \right) \cA_{\alpha}(\pi,v).
    \end{align*}
\end{corollary}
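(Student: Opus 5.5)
The corollary follows by substituting the linear schedule into Theorem~\ref{thm:forward-KL}. With $t(s)=s/r$, the inverse map is $s(t)=rt$, so $\dot s(t)=r$ for every $t\in[0,T]$, and the horizon is $S=rT$. Each of the three factors in Theorem~\ref{thm:forward-KL} then becomes an explicit elementary expression. All assumptions (LSI with $\cLSI{t}\ge 0$ and finiteness of $\mathfrak{E}_{\alpha_0}(\pi_t,v_t)$) are inherited verbatim, so the theorem applies for every $\alpha\in(0,\alpha_0]$.

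\textbf{Initialization term.} Substituting $\dot s\equiv r$ gives
\[
\exp\Bigl(-\int_0^T r\,\cLSI{t}\,\dd t\Bigr)=\exp\Bigl(-r\int_0^T\cLSI{t}\,\dd t\Bigr)
\quad\text{and}\quad
\exp\Bigl(\int_0^T \tfrac{1}{2r\alpha}\,\dd t\Bigr)=\exp\Bigl(\tfrac{T}{2r\alpha}\Bigr).
\]
This is exactly the prefactor of $\KL(\rho_0\|\pi_0)$ in the corollary.

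\textbf{Tracking term.} The inner exponential is $\exp\bigl(\tfrac{T-t}{2r\alpha}\bigr)$. Since $T-t\le T$ for $t\in[0,T]$ and the exponential is increasing, it is at most $\exp\bigl(\tfrac{T}{2r\alpha}\bigr)$. The integrand $\tfrac{1}{2r}\mathfrak{E}_\alpha(\pi_t,v_t)$ must be nonnegative for this bound to apply. Jensen's inequality gives $\log\E[e^{\alpha\|v\|^2}]\ge\alpha\E\|v\|^2\ge 0$, so nonnegativity holds. Pulling the constant out of the integral leaves
\[
\frac{1}{2r}\exp\Bigl(\frac{T}{2r\alpha}\Bigr)\int_0^T\mathfrak{E}_\alpha(\pi_t,v_t)\,\dd t=\frac{1}{2r}\exp\Bigl(\frac{T}{2r\alpha}\Bigr)\cA_\alpha(\pi,v),
\]
which uses the definition of $\cA_\alpha$.

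No step is a genuine obstacle. The only point requiring care is the sign check on $\mathfrak{E}_\alpha$, which makes the monotone bound on the inner exponential legitimate.
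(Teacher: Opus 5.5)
Your proposal is correct and follows the route the paper has in mind: the paper writes out no separate proof and presents the corollary as the specialization of Theorem~\ref{thm:forward-KL} to $\dot s(t)\equiv r$, with the inner factor bounded as $\exp\bigl(\tfrac{T-t}{2r\alpha}\bigr)\le\exp\bigl(\tfrac{T}{2r\alpha}\bigr)$. Your sign check $\mathfrak{E}_\alpha(\pi_t,v_t)\ge 0$ is the right justification for that step, though $e^{\alpha\|v_t\|^2}\ge 1$ already gives it without Jensen.
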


The initial error $\KL(\rho_0\| \pi_0)$ decays at the exponential rate $\exp\left(-r \int_0^T \cLSI{t} \dd t \right)$ as $r\to \infty$, provided that $\int_0^T \cLSI{t} \dd t > 0$. In the typical construction of the moving targets $\pi_t$, earlier targets satisfy stronger functional inequalities such as LSI. SALD therefore reduces the initial error by exploiting these earlier-stage targets. The second term, governed by $\alpha$-complexity, also vanishes as $r \to \infty$. Consequently, Theorem \ref{thm:forward-KL} guarantees the quantitative convergence rate of SALD under a linear slowdown schedule. Furthermore, since $\cA_\alpha(\pi,v) \searrow \cA_0(\pi,v)$ as $\alpha \searrow 0$, we obtain the following convergence rate regarding $r\to\infty$; for sufficiently small $\alpha > 0$, 
\[ 
    \KL(\rho_{S} \| \pi_T) 
    = O\left( \exp\left( - r \int_0^{T}\cLSI{t} \dd t\right)\KL(\rho_0 \| \pi_0) 
    + \frac{\cA_0(\pi,v)}{r} \right).
\]

\paragraph{Comparison with Girsanov-based path-space analysis.}
Guo et al.~\cite{GuoTaoChen2024} analyze SALD through a Girsanov-based comparison of path measures, followed by data processing to obtain a terminal marginal bound. While this approach captures the cost of moving along the target path, it does not directly explain contraction toward the moving targets. In contrast, our analysis differentiates the marginal KL divergence \(\KL(\rho_s\|\tilde\pi_s)\) itself. The resulting differential inequality converts the LSIs of the intermediate targets into contraction, while controlling the remaining error by the $\alpha$-complexity. Consequently, our forward-KL bound can reduce initialization mismatch \(\rho_0\neq \pi_0\) through slowdown, a mechanism that is not visible from a purely path-space comparison.

\paragraph{Discrete-time SALD.} We also present a similar result for discrete-time SALD \eqref{eq:sald_em_terminal_disc_prop_additive_final}. Define $s_k:=k\eta,~t_k:=t(s_k),~\rho_k^\eta:=\Law(X_k^\eta)$, where $k=0,\dots,K,\ K=\frac{S}{\eta}$. Akin to \cite{GuoTaoChen2024}, we consider Lipschitz assumptions with constants $L_{\pi,\mathrm{space}},L_{\pi,\mathrm{time}}>0$ and a measurable function  $M(\cdot)$ such that for all $x,y\in\mathbb R^d$, $t\in[t_k,t_{k+1}]$ 
\begin{equation}\label{eq:lip_SALD_1}
   \|\nabla\log\pi_{t_k}(x)-\nabla\log\pi_{t_k}(y)\|
   \le L_{\pi,\mathrm{space}}\|x-y\|,
\end{equation}
and for all $s < s'~(s,s' \in [0,S]),~x \in \R^d$,
\begin{equation}\label{eq:lip_SALD_2}
    \|\nabla\log\pi_{t(s')}(x)-\nabla\log\pi_{t(s)}(x)\|
    \le L_{\pi,\mathrm{time}}(s'-s)(1+M(x)).
\end{equation}

Furthermore, we define $\bar{\Gamma}:= \int_0^T \Gamma(t)\dd t,~\bar{\Delta}_{\alpha'}:=\int_0^T\Delta(t)\dd t$, where 
\begin{align*}
    &\Gamma(t) 
    := 2L_{\pi,\mathrm{time}}^2
    + 64 L_{\pi,\mathrm{space}}^2
    \Bigl(
    \dot{s}(t)^{-2} + 1 + \eta^2L_{\pi,\mathrm{time}}^2  \Bigr),\\
    &\Delta(t) := 
    64\eta L_{\pi,\mathrm{space}}^2 
    \mathfrak{E}_{\alpha'}(\pi_{t},\nabla\log\pi_{t})
    + 2\eta L_{\pi,\mathrm{time}}^2 (1+32\eta^2 L_{\pi,\mathrm{space}}^2)\mathfrak{E}_{\alpha'}(\pi_{t},1+M)
    + 16 d L_{\pi,\mathrm{space}}^2.
\end{align*}

$\bar{\Gamma}$ and $\bar{\Delta}_{\alpha'}$ arise from the time-discretization. They are controlled by smoothness and complexities and remain $O(1)$ regarding $\eta \to 0$ and $\alpha' \to 0$.
We consider the linear slowdown $t(s)=s/r~(r\geq 1)$.

\begin{theorem}\label{thm:forward-KL-discrete}
Assume the same assumptions of Theorem~\ref{thm:forward-KL} and Lipschitz continuity conditions in Eq.~\eqref{eq:lip_SALD_1},~\eqref{eq:lip_SALD_2} . Assume also there exists $\alpha_0'>0$,
$\mathfrak E_{\alpha_0'}(\pi_t,\nabla\log\pi_t)<\infty$ and
$\mathfrak E_{\alpha_0'}(\pi_t,1+M)<\infty, \ \forall t\in[0,T]$, and
$4\eta^2L_{\pi,\mathrm{space}}^2<\frac12$.
Then, for any $\alpha \in (0,\alpha_0)$ and $\alpha'\in(0,\alpha_0']$, discrete-time SALD \eqref{eq:sald_em_terminal_disc_prop_additive_final} satisfies
\begin{align*}
    \KL(\rho_K^\eta\|\pi_T)
    &\le
    \exp\!\left(
    -r\int_0^T \cLSI{t}\,\dd t
     \right)\exp\!\left(\frac{T}{ r\alpha}
    +\frac{2r\eta^2}{ \alpha'}\bar{\Gamma}
    \right)
    \KL(\rho_0\|\pi_0)
    \\
    &\quad+
    \exp\!\left(
    \frac{T}{ r\alpha}
    +\frac{2r\eta^2}{ \alpha'}\bar{\Gamma}
    \right)
    \left( \frac{1}{ r} \cA_{\alpha}(\pi,v)
    + 2r \eta \bar{\Delta}_{\alpha'}
    \right).
\end{align*}
\end{theorem}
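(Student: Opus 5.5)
We would prove Theorem~\ref{thm:forward-KL-discrete} by the standard interpolation argument for Euler--Maruyama schemes, combined with the same KL differential inequality that underlies Theorem~\ref{thm:forward-KL}. On each interval $s\in[s_k,s_{k+1}]$ we introduce the continuous interpolation
\[
\dd X_s = \nabla\log\pi_{t_k}(X_{s_k})\,\dd s + \sqrt{2}\,\dd W_s,
\]
whose law $\rho_s$ coincides with $\rho_k^\eta$ at $s=s_k$. Its Fokker--Planck equation has drift $b_s(x)=\E[\nabla\log\pi_{t_k}(X_{s_k})\mid X_s=x]$. Differentiating $\KL(\rho_s\|\tilde\pi_s)$ and using $\partial_s\tilde\pi_s=-\nabla\cdot(\tilde\pi_s\,\dot t(s)v_{t(s)})$ gives
\[
\frac{\dd}{\dd s}\KL(\rho_s\|\tilde\pi_s)
= -\FI(\rho_s\|\tilde\pi_s) + \E_{\rho_s}\Bigl[\Bigl\langle \nabla\log\tfrac{\rho_s}{\tilde\pi_s},\, b_s-\nabla\log\tilde\pi_s\Bigr\rangle\Bigr] + \dot t(s)\,\E_{\rho_s}\Bigl[\Bigl\langle\nabla\log\tfrac{\rho_s}{\tilde\pi_s}, v_{t(s)}\Bigr\rangle\Bigr].
\]
The last term is the one already present in the continuous analysis. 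We apply Young's inequality to the two cross terms, keeping a fraction of $-\FI$ (say $-\tfrac12\FI$ after absorbing both), then use LSI to get $-\cLSI{t(s)}\KL$.

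The key step is turning expectations of $\|v\|^2$ and of the discretization error under $\rho_s$ (not $\tilde\pi_s$) into expectations under $\tilde\pi_s$. We use the Donsker--Varadhan variational formula
\[
\E_{\rho}[g]\le \frac1\alpha\bigl(\KL(\rho\|\pi)+\log\E_\pi e^{\alpha g}\bigr).
\]
With $g=\dot t^2\|v\|^2$ and weight $\alpha$ this produces $\tfrac{1}{r\alpha}\KL+\tfrac1r\mathfrak E_\alpha(\pi_t,v_t)$, as in the continuous case. The extra factor $2$ relative to Corollary~\ref{corforward-KL}, i.e.\ $T/(r\alpha)$ instead of $T/(2r\alpha)$, comes from splitting the Fisher information budget between two error terms.

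For the discretization term, we bound $\E_{\rho_s}\|b_s-\nabla\log\tilde\pi_s\|^2\le\E\|\nabla\log\pi_{t_k}(X_{s_k})-\nabla\log\pi_{t(s)}(X_s)\|^2$ by Jensen. We then split it using \eqref{eq:lip_SALD_1} and \eqref{eq:lip_SALD_2} into
\[
2L_{\pi,\mathrm{time}}^2\eta^2\,\E(1+M(X_s))^2 \quad\text{and}\quad 2L_{\pi,\mathrm{space}}^2\,\E\|X_s-X_{s_k}\|^2 .
\]
Next we write $X_s-X_{s_k}=(s-s_k)\nabla\log\pi_{t_k}(X_{s_k})+\sqrt2(W_s-W_{s_k})$. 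The Brownian part gives the $16dL_{\pi,\mathrm{space}}^2\eta$-type term. The drift part requires $\E\|\nabla\log\pi_{t_k}(X_{s_k})\|^2$, which we re-express at $X_s$ via Lipschitzness. This yields a term $\eta^2 L_{\pi,\mathrm{space}}^2\|X_s-X_{s_k}\|^2$ that is absorbed using $4\eta^2L_{\pi,\mathrm{space}}^2<\tfrac12$, producing the constants $64$ and $32$. All remaining quantities are expectations under $\rho_s$ of $\|\nabla\log\pi_t\|^2$ and $(1+M)^2$. These are converted by Donsker--Varadhan with parameter $\alpha'$ into $\mathfrak E_{\alpha'}$ terms plus $\eta^2\Gamma/\alpha'\cdot\KL$, so the KL-coefficient terms go into $\Gamma$ and the $\mathfrak E$ terms into $\Delta$. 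Since $s=rt$, this gives the factors $r\eta^2\bar\Gamma/\alpha'$ and $r\eta\bar\Delta_{\alpha'}$ after changing variables; the $\dot s(t)^{-2}$ inside $\Gamma$ arises from the $\dot t\,v$ contributions reappearing in the increment bound.

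Collecting terms gives, for $s\in[s_k,s_{k+1}]$,
\[
\frac{\dd}{\dd s}\KL\le\Bigl(-\cLSI{t(s)}+\tfrac{1}{r^2\alpha}+\tfrac{2\eta^2}{\alpha'}\Gamma\Bigr)\KL+\tfrac{1}{r^2}\mathfrak E_\alpha+2\eta\Delta .
\]
Grönwall over $[0,S]$, chained across intervals and with the change of variables $\dd s=r\,\dd t$, yields the stated bound. Here the $\exp(\int_t^T\cdot)$ weights in the source integral are bounded crudely by the full exponential.

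We expect the main obstacle to be the increment bound $\E\|X_s-X_{s_k}\|^2$ under $\rho_s$. The conditional-expectation drift $b_s$ and the mismatch between evaluating at $X_s$ versus $X_{s_k}$ require careful bookkeeping so that every term becomes either a KL term or an exponential moment under $\pi_t$. We would first attempt the bound entirely via Jensen on the joint law of $(X_{s_k},X_s)$, transferring to $\pi_{t}$ only at the last step.
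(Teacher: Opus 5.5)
Your proposal is correct and follows the paper's proof essentially step for step. The shared steps are: the frozen-drift interpolation with conditional-expectation drift $\bar b_{k,s}$; the KL derivative written as $-\FI$ plus two cross terms, each bounded at the cost of $\tfrac14\FI$; for the discretization term, Jensen on the joint law, the space/time Lipschitz split, the increment bound absorbed via $4\eta^2L_{\pi,\mathrm{space}}^2<\tfrac12$, and Donsker--Varadhan with $\alpha'$; then LSI, Donsker--Varadhan with $\alpha$ on $\dot t^2\|v_{t(s)}\|^2$, the change of variables $\dd s=r\,\dd t$, and Gr\"onwall.

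One small correction concerns the $\dot s(t)^{-2}$ term in $\Gamma$. It does not come from any $v$-contribution, since the SALD increment contains no $v$. In the paper it is an artifact of specializing the VA-SALD Lemma~\ref{lem:frozen_delta_cross_lip} to $c\equiv 0$. Your own bookkeeping would therefore give a smaller $\Gamma$, which only strengthens the stated bound.
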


Theorem \ref{thm:forward-KL-discrete} yields the following convergence rate regarding $r \to \infty$ and $\eta \to 0$ under $r \eta = O(1)$:
\[ 
    \KL(\rho_K^\eta\|\pi_T)
    = O\left( \exp\left( -r \int_0^T \cLSI{t} \dd t\right)\KL(\rho_0\|\pi_0)
    + \frac{1}{r}\cA_{\alpha}(\pi,v) + r \eta \bar{\Delta}_{\alpha'} \right).
\]

Set $\Lambda := \int_0^T \cLSI{t} \dd t$. To achieve $\varepsilon^2$-accurate solutions, $\KL(\rho_K^\eta\|\pi_T) \leq \varepsilon^2$, it is sufficient to choose $r = O\left(\max\{ \Lambda^{-1} \log \varepsilon^{-1},~ \cA_{\alpha}(\pi,v)\varepsilon^{-2}\}\right)$ and $\eta = O(\varepsilon^2 r^{-1}\bar{\Delta}_{\alpha'}^{-1})$. This results in the iteration complexity
\[
    K 
    = \frac{rT}{\eta} 
    = O\left( \max\left\{ 
        \frac{\bar{\Delta}_{\alpha'}}{\varepsilon^2 \Lambda^2} \left(\log \frac{1}{\varepsilon}\right)^2
        ,~ \frac{\bar{\Delta}_{\alpha'}\cA_{\alpha}(\pi,v)^2}{\varepsilon^{6}}
    \right\} \right).
\]

\cite{GuoTaoChen2024} established an iteration complexity of $O(\varepsilon^{-6})$ for the reverse KL-divergence $\KL(\pi_T \| \cdot)$ without requiring an LSI. Our result for the forward KL-divergence achieves a comparable complexity while additionally allowing for a mismatch between the initial distributions, i.e., $\rho_0 \neq \pi_0$ by exploiting the contraction of intermediate distributions, provided that the accumulated LSI $\int_0^T \cLSI{t} \dd t$ is positive. This condition is typically satisfied in practice, as $\pi_t$ is annealed toward a simpler distribution. 

The assumptions in Theorem~\ref{thm:forward-KL-discrete} are satisfied by a broad class of reverse diffusion paths. In particular, consider the VP forward process $(q_\tau)_{\tau \in [0,T]}$ initialized from $p_{\rm data}\propto e^{-V_0}$, and set $\pi_t=q_{T-t}$. If $V_0$ has Lipschitz continuous derivatives up to third order and satisfies a two-point dissipativity condition, then the reverse VP marginal path satisfies the score Lipschitz conditions \eqref{eq:lip_SALD_1}--\eqref{eq:lip_SALD_2}. The required $\alpha$-complexity terms are also finite for sufficiently small $\alpha, \alpha'>0$, as the dissipativity condition yields suitable quadratic exponential moment bounds. The full verification is given in Appendix~\ref{app:vp-dissipative-verification}.

\section{Convergence Analysis of Velocity-Aware SALD}\label{sec:convergence_velocityAwareSALD}
We now analyze VA-SALD in the guided generation setting introduced in Section \ref{subsec:guided-generation}. As discussed above, VA-SALD is designed to correct the additional deviation from the underlying pretrained marginal path $(p_t)_{t\in[0,T]}$ induced by the guide $f_t$. We first make this deviation explicit.
Let $u_t:\mathbb R^d\to\mathbb R^d~(t\in[0,T])$ be a velocity field generating the pretrained path $(p_t)_{t\in[0,T]}$, namely $\partial_t p_t+\nabla\cdot(p_tu_t)=0$. For the guided path $\pi_t \propto p_t e^{-f_t}$, the same velocity field does not in general generate $(\pi_t)_{t\in[0,T]}$. Indeed,
\begin{equation}\label{eq:residual-term}
    \partial_t \pi_t+\nabla\cdot(\pi_t u_t) 
    = -\pi_t\Big(g_t-\E_{\pi_t}[g_t]\Big),
\end{equation}
where $g_t:=\partial_t f_t+\nabla f_t^\top u_t$ (see Proposition \ref{prop:guided_path_residual} in Appendix~\ref{app:convergence_velocityAwareSALD}). Thus, the residual term on the right-hand side precisely quantifies the part of the guided evolution that is not captured by the pretrained velocity field $u_t$.
To compensate for this mismatch, we introduce a correction field $w_t:\mathbb R^d\to\mathbb R^d$ satisfying
\begin{equation}\label{eq:poisson-eq}
    \nabla\cdot(\pi_t w_t)=\pi_t\Big(g_t-\E_{\pi_t}[g_t]\Big).
\end{equation}
Then $u_t+w_t$ is a transport velocity field for $\pi_t$. In this sense, $w_t$ represents the additional transport required to deform the pretrained marginal evolution into the guided one.

The next proposition shows that the convergence of VA-SALD is governed by the complexity of this correction field $w_t$, rather than that of the full guided velocity $u_t + w_t$.

\begin{theorem}\label{thm:unified-forward-KL}
    Assume that $\pi_t$ satisfies LSI with constant $\cLSI{t} \geq 0$ for all $t \in [0,T]$, and that there exists \(\alpha_0>0\) such that $\mathfrak{E}_{\alpha_0}(\pi_t,w_t)<+\infty, \forall t\in[0,T]$. Then, for any $\alpha \in (0,\alpha_0]$, the probability law $(\rho_s)_{s \in [0,S]}$ of \textbf{VA SALD} \eqref{eq:SALD_Ito} satisfies
    \begin{align}\label{eq:ito_forward_KL_bound_calibrated}
        \KL(\rho_S\|\pi_T)
        &\le
        \exp\left(
        -\int_0^T \frac{\sigma_t^2}{2}\dot s(t)\cLSI{t}\,\dd t
        \right)
        \exp\left(
        \int_0^T \sigma_t^{-2}\dot s(t)^{-1}\alpha^{-1}\,\dd t
        \right)
        \KL(\rho_0\|\pi_0)
        \nonumber\\
        &\quad+
        \int_0^T
        \exp\left(
        \int_t^T \sigma_u^{-2}\dot s(u)^{-1}\alpha^{-1}\,\dd u
        \right)
        \sigma_t^{-2}\dot s(t)^{-1}\mathfrak{E}_{\alpha}(\pi_t,w_t)\,\dd t.
    \end{align}
\end{theorem}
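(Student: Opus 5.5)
The plan is to mirror the marginal-KL argument behind Theorem~\ref{thm:forward-KL}. I will differentiate $\KL(\rho_s\|\tilde\pi_s)$ along the VA-SALD flow, where $\tilde\pi_s=\pi_{t(s)}$. The aim is a differential inequality in which the pretrained velocity $u_t$ cancels exactly and only the correction $w_t$ remains.

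\textbf{Step 1: the two evolution equations.} The Fokker--Planck equation of \eqref{eq:SALD_Ito} can be written as
\[
\partial_s\rho_s=-\nabla\cdot\bigl(\rho_s\,\dot t(s)u_{t(s)}\bigr)+\frac{\sigma_{t(s)}^2}{2}\nabla\cdot\Bigl(\rho_s\nabla\log\frac{\rho_s}{\tilde\pi_s}\Bigr).
\]
Here I combine the drift $\frac{\sigma^2}{2}\nabla\log\pi$ with the Laplacian term $\frac{\sigma^2}{2}\Delta\rho$. For the targets, Eq.~\eqref{eq:residual-term} and the defining equation \eqref{eq:poisson-eq} (Proposition~\ref{prop:guided_path_residual}) show that $u_t+w_t$ generates $(\pi_t)$. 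Hence
\[
\partial_s\tilde\pi_s=-\dot t(s)\,\nabla\cdot\bigl(\tilde\pi_s(u_{t(s)}+w_{t(s)})\bigr).
\]

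\textbf{Step 2: the KL derivative.} I use $\frac{\dd}{\dd s}\KL=\int\log\frac{\rho}{\tilde\pi}\,\partial_s\rho-\int\frac{\rho}{\tilde\pi}\,\partial_s\tilde\pi$ and integrate by parts in both terms. The $u$-contributions appear as $+\dot t\int\rho\,\nabla\log\frac{\rho}{\tilde\pi}\cdot u$ and $-\dot t\int\rho\,\nabla\log\frac{\rho}{\tilde\pi}\cdot u$, so they cancel. This leaves
\[
\frac{\dd}{\dd s}\KL(\rho_s\|\tilde\pi_s)=-\frac{\sigma^2}{2}\FI(\rho_s\|\tilde\pi_s)-\dot t\int\rho_s\nabla\log\frac{\rho_s}{\tilde\pi_s}\cdot w\,\dd x .
\]
The cancellation is the whole point of velocity awareness: only the correction field $w$ survives.

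\textbf{Step 3: bounding the cross term.}
\begin{itemize}
\item Young's inequality bounds the cross term by $\frac{\sigma^2}{4}\FI+\frac{\dot t^2}{\sigma^2}\E_{\rho_s}\|w\|^2$.
\item The Donsker--Varadhan variational formula gives $\E_{\rho}\|w\|^2\le\alpha^{-1}\KL(\rho\|\pi)+\mathfrak E_\alpha(\pi,w)$.
\item The LSI \eqref{eq:LSI-KL-FI} turns $-\frac{\sigma^2}{4}\FI$ into $-\frac{\sigma^2}{2}\cLSI{t}\KL$.
\end{itemize}
Changing variables $s\mapsto t$ with $\dd s=\dot s(t)\,\dd t$ and $\dot t=\dot s^{-1}$, the function $\phi(t):=\KL(\rho_{s(t)}\|\pi_t)$ satisfies
\[
\phi'(t)\le\Bigl(-\frac{\sigma_t^2}{2}\dot s(t)\cLSI{t}+\sigma_t^{-2}\dot s(t)^{-1}\alpha^{-1}\Bigr)\phi(t)+\sigma_t^{-2}\dot s(t)^{-1}\mathfrak E_\alpha(\pi_t,w_t).
\]

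\textbf{Step 4: Gronwall.} Applying Gronwall's lemma on $[0,T]$ gives \eqref{eq:ito_forward_KL_bound_calibrated}. In the inhomogeneous term I drop the contraction factor $\exp(-\int_t^T\cdots\cLSI{u})\le1$, as the stated bound does.

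\textbf{Main obstacle.} I expect the algebra to be routine; the hard part is rigor. One must justify differentiating the KL, the integrations by parts (vanishing boundary terms, finiteness of $\FI$), and the Donsker--Varadhan step. The last one needs $\mathfrak E_\alpha(\pi_t,w_t)<\infty$, which follows from $\alpha\le\alpha_0$ and monotonicity in $\alpha$. I would handle regularity by assuming smooth positive densities as in the setup, or by a standard regularization argument as in the proof of Theorem~\ref{thm:forward-KL}.
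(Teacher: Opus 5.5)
Your proposal is correct and follows the paper's proof essentially step for step. The paper first proves Theorem~\ref{thm:general-moving-target-SALD} for an arbitrary drift $c_t$ with residual $m_t=v_t-c_t$, then sets $c_t=u_t$ and $v_t=u_t+w_t$ so that $m_t=w_t$; from there, the KL differentiation with cancellation of the $u$-terms, the Young split into $\frac{\sigma^2}{4}\FI+\sigma^{-2}\dot t^2\|w\|^2$, the LSI step, Lemma~\ref{lem:dv_variation}, the change of variables $s\mapsto t$, and Lemma~\ref{lem:gronwall} (with the contraction factor dropped in the inhomogeneous term) are exactly as you describe.

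One small correction: the paper's proof of Theorem~\ref{thm:forward-KL} is itself formal and contains no regularization argument. The rigor issues you flag would therefore need to be handled independently rather than imported from that proof.
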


We specialize the above results to the case of linear slowdown, where $t(s)=s/r$ with $r\geq 1$. Set $\sigma_{\mathrm{min}}:= \min_{t\in [0,T]}\sigma_t$.

\begin{corollary}\label{cor:unified-forward-KL}
    Under the same assumptions in Theorem \ref{thm:unified-forward-KL}, for any $\alpha \in (0,\alpha_0]$, \textbf{VA SALD} \eqref{eq:SALD_Ito} satisfies
    \[\resizebox{0.98\linewidth}{!}{$\begin{aligned}\label{eq:ito_forward_KL_bound_calibrated_linear-slowdown}
        \KL(\rho_S\|\pi_T)
        &\le
        \exp\left(
        - \frac{r \sigma_{\mathrm{min}}^2}{2} \int_0^T \cLSI{t} \dd t
        \right)
        \exp\left(
        \frac{T}{r \sigma_{\mathrm{min}}^{2} \alpha}
        \right)
        \KL(\rho_0\|\pi_0)
        + \frac{1}{r \sigma_{\mathrm{min}}^{2}}
        \exp\left( 
         \frac{T}{r \sigma_{\mathrm{min}}^{2} \alpha}
        \right)
        \cA_{\alpha}(\pi,w).
    \end{aligned}$}\]
\end{corollary}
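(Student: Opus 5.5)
This corollary follows from Theorem~\ref{thm:unified-forward-KL} by specializing the time change and bounding $\sigma_t$ from below by $\sigma_{\mathrm{min}}$ at each occurrence. With $t(s)=s/r$, the inverse is $s(t)=rt$, so $\dot s(t)=r$ is constant on $[0,T]$, and $S=rT$ gives $t(S)=T$ as required. The plan is to substitute $\dot s\equiv r$ into both terms of \eqref{eq:ito_forward_KL_bound_calibrated} and to use monotonicity of the integrands in $\sigma_t$.

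For the first term, the contraction factor becomes $\exp(-\tfrac{r}{2}\int_0^T\sigma_t^2\cLSI{t}\,\dd t)$. Because $\cLSI{t}\ge 0$ and $\sigma_t^2\ge\sigma_{\mathrm{min}}^2$, we have
\[
\int_0^T\sigma_t^2\cLSI{t}\,\dd t\ \ge\ \sigma_{\mathrm{min}}^2\int_0^T\cLSI{t}\,\dd t,
\]
so the factor is at most $\exp(-\tfrac{r\sigma_{\mathrm{min}}^2}{2}\int_0^T\cLSI{t}\,\dd t)$. The expansion factor is $\exp(\int_0^T\sigma_t^{-2}r^{-1}\alpha^{-1}\,\dd t)$. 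Since $\sigma_t^{-2}\le\sigma_{\mathrm{min}}^{-2}$, it is at most $\exp(T/(r\sigma_{\mathrm{min}}^2\alpha))$. Both factors multiply the nonnegative quantity $\KL(\rho_0\|\pi_0)$, so the inequalities are preserved.

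For the second term, bound the inner exponential the same way:
\[
\exp\!\Big(\int_t^T\sigma_u^{-2}r^{-1}\alpha^{-1}\,\dd u\Big)\ \le\ \exp\!\Big(\frac{T-t}{r\sigma_{\mathrm{min}}^2\alpha}\Big)\ \le\ \exp\!\Big(\frac{T}{r\sigma_{\mathrm{min}}^2\alpha}\Big),
\]
which is uniform in $t$. Also bound the prefactor by $\sigma_t^{-2}r^{-1}\le(r\sigma_{\mathrm{min}}^2)^{-1}$. Pulling these constants out leaves $\int_0^T\mathfrak{E}_\alpha(\pi_t,w_t)\,\dd t=\cA_\alpha(\pi,w)$, which is exactly the stated second term.

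The main point to check is that $\mathfrak{E}_\alpha(\pi_t,w_t)\ge 0$, so that replacing the prefactors by larger constants is legitimate. This holds by Jensen's inequality:
\[
\frac{1}{\alpha}\log\E_{\pi_t}\big[e^{\alpha\|w_t\|^2}\big]\ \ge\ \E_{\pi_t}\|w_t\|^2\ \ge\ 0.
\]
Finiteness of $\mathfrak{E}_\alpha$ for $\alpha\le\alpha_0$ is inherited from the hypotheses through the monotonicity in $\alpha$ noted in Section~\ref{sec:convergence_sald}. One should also assume $\sigma_{\mathrm{min}}>0$, which holds since $\bar\sigma_\tau>0$ and is continuous on the compact interval $[0,T]$. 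Combining the two bounds gives the corollary.
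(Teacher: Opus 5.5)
Your proof is correct and matches the intended argument. The paper gives no separate proof for this corollary; it treats it as an immediate specialization of Theorem~\ref{thm:unified-forward-KL}: set $\dot s(t)\equiv r$, use $\sigma_t^2\ge\sigma_{\mathrm{min}}^2$ in the contraction factor and $\sigma_t^{-2}\le\sigma_{\mathrm{min}}^{-2}$ in the expansion factor and the prefactor, and use $\mathfrak{E}_\alpha(\pi_t,w_t)\ge 0$ to pull the uniform constants out of the integral, which is exactly what you do.
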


Therefore a linear slowdown implies $\KL(\rho_S\|\pi_T)\to 0$ as $r\to\infty$ under the assumptions above. The key advantage of VA-SALD over a direct application of SALD to the guided path $\pi_t\propto p_t e^{-f_t}$ lies in the associated $\alpha$-complexity term. For the original SALD, the relevant complexity is governed by $\cA_\alpha(\pi,u+w)$, since the dynamics must track the entire guided path directly. In contrast, VA-SALD explicitly incorporates the pretrained velocity $u_t$ in its update and only needs to control the correction field $w_t$, leading to the smaller complexity term $\cA_\alpha(\pi,w)$. This distinction is particularly meaningful when $\|w_t\|\ll \|u_t+w_t\|$, in which case $\cA_\alpha(\pi,w)\ll \cA_\alpha(\pi,u+w)$. Such a regime is \textit{natural} in guided generation: $u_t$ encodes the full marginal evolution of the pretrained generative model $(p_t)_{t \in [0,T]}$, whereas $(w_t)_{t \in [0,T]}$ only compensates for the additional deviation induced by the guide.

VA-SALD also exhibits an interesting behavior in the unguided case $f_t\equiv 0$. In this case, the correction term vanishes, i.e., $w_t\equiv 0$, and hence the $\alpha$-complexity term in Theorem \ref{thm:unified-forward-KL} disappears. The error bound then reduces to the exponential contraction of the initial KL error alone. In particular, slowdown effectively removes initialization mismatch $\rho_0\neq p_0$ without incurring any correction-complexity cost.

Furthermore, as $\cA_\alpha(\pi,w) \searrow \cA_0(\pi,w)$ as $\alpha \searrow 0$, we have the following convergence rate; for sufficiently small $\alpha > 0$, 
\[ 
    \KL(\rho_{S} \| \pi_T) 
    = O\left( \exp\left( - \frac{r \sigma_{\mathrm{min}}^2}{2} \int_0^{T}\cLSI{t} \dd t\right)\KL(\rho_0 \| \pi_0) 
    + \frac{\cA_0(\pi,w)}{r} \right).
\]

Furthermore, a discrete-time analogue is provided in Theorem \ref{thm:general-moving-target-SALD-discrete} in Appendix \ref{sec:discrete_time_VA_SALD}.


\section{Experiments}\label{sec:experiments}
We empirically evaluate SALD and VA-SALD in both controlled and practical guided generation settings. The synthetic experiments are designed to test the theoretical prediction that increasing the slowdown budget improves tracking of the terminal distribution, where distributional errors can be computed directly. The image-generation experiments then examine whether the velocity-aware slowdown principle improves inference-time, training-free guidance with pretrained generative models and black-box rewards.

\subsection{Synthetic Data}

We evaluate guided generation on two 2D VP tasks: a two-Gaussian law with a
two-moons guide, and an eight-Gaussian ring with a penalty on the four left-half
modes. All methods use the same VP family, score, guide, and matched
particle-step budgets. For DOIT~\cite{zhu2026training},
$r$ denotes the computational cost; see Appendix \ref{app:additional_syth} for the precise setup and
implementation details. Figure~\ref{fig:synthetic-main} shows that SALD improves
monotonically with $r$, VA-SALD achieves low KL with smaller budgets, and DOIT
remains far from the guided terminal target under matched budgets.

\subsection{Guided Image Generation}

We further evaluate our VA-SALD on training-free guided image generation tasks. 
Specifically, we focus on practical inference-time guidance for high-resolution image generation using \textbf{black-box} reward functions.

We employ the pre-trained Stable-Diffusion 3.5 Medium (SD-3.5M)~\cite{esser2024scaling}  as our backbone and the aesthetic score as the black-box reward function.  We evaluate our VA-SALD by comparing it with two closely related baselines, FM-ZG and FM-Evolv~\cite{wei2025evolvable},  that use the default (Flow-Matching) SDE sampler in~\cite{liu2025flow} and zeroth-order gradient estimator for guided generation. More details can be found in Appendix \ref{app:additional_stablediff}.
Figure~\ref{lion_gc8} shows that our VA-SALD achieves more stable guided generation compared with FM-ZG and FM-Evolv. 

Our image-generation experiments are intended to illustrate the stability of VA-SALD under black-box inference-time guidance. Beyond the representative example in Figure~\ref{lion_gc8}, we provide additional text-to-image generation results in Appendix~\ref{app:additional_stablediff}, where reward functions, including aesthetic score, pickscore, and CLIPscore, are used as guidance functions. These experiments suggest that velocity-aware slowdown can help preserve the pretrained generative trajectory while incorporating external guidance, although a more systematic evaluation across broader prompt sets and human-preference metrics remains an important direction for future work.

\begin{figure}[t]
    \centering
    \subfloat[]{\includegraphics[width=0.248\textwidth]{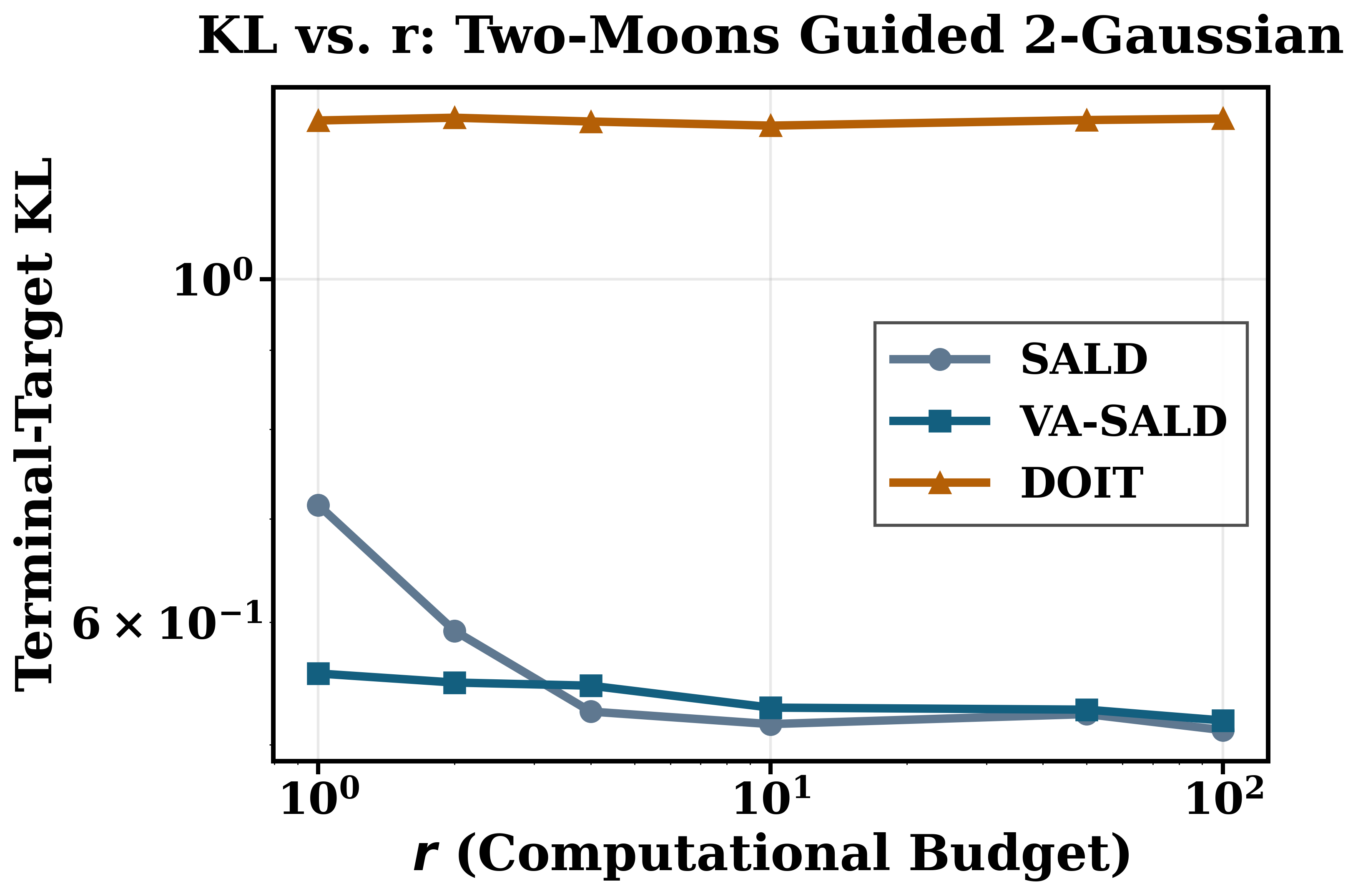}\label{fig:synthetic-main-a}}
    \subfloat[]{\includegraphics[width=0.248\textwidth]{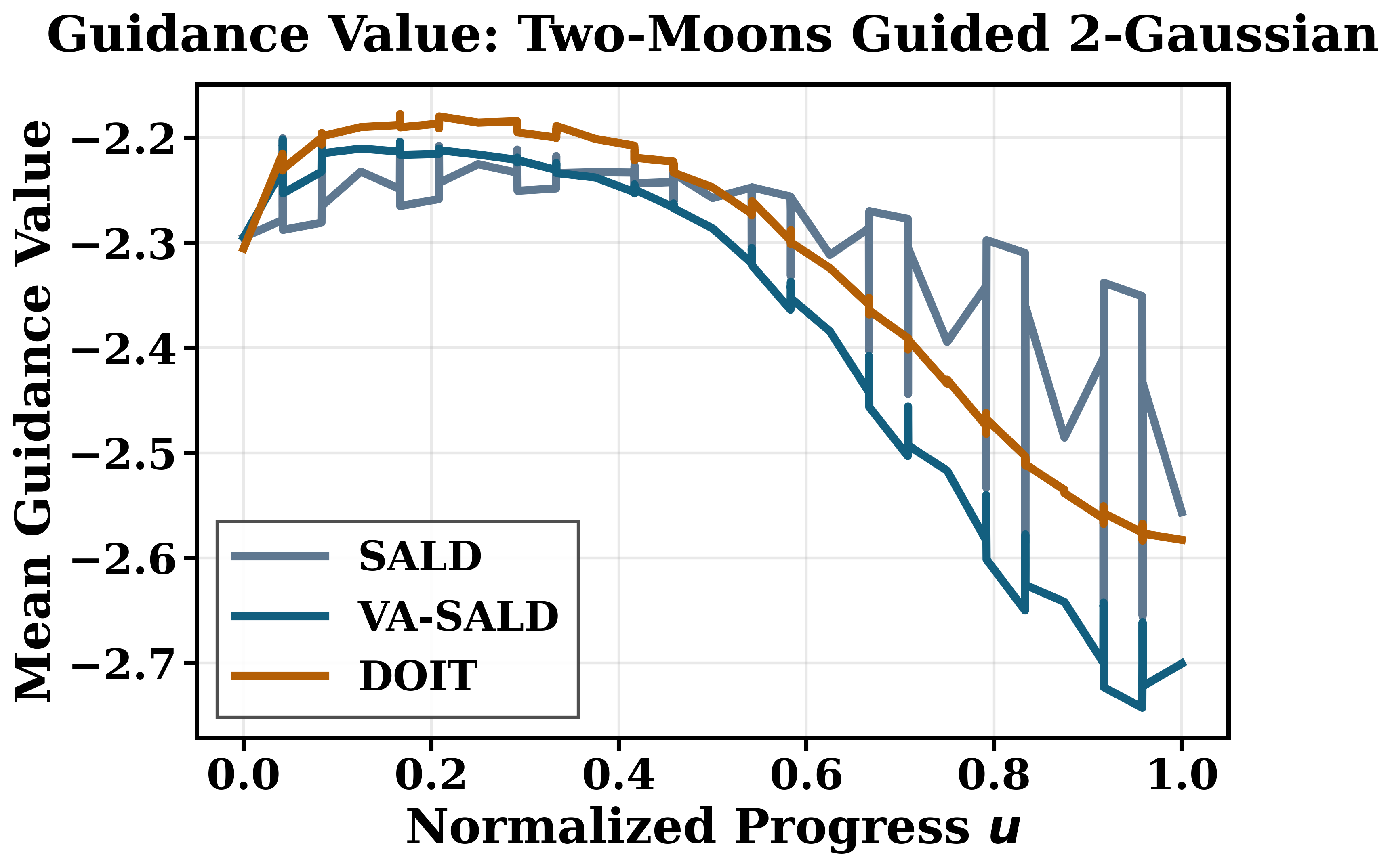}\label{fig:synthetic-main-b}}
    \subfloat[]{\includegraphics[width=0.248\textwidth]{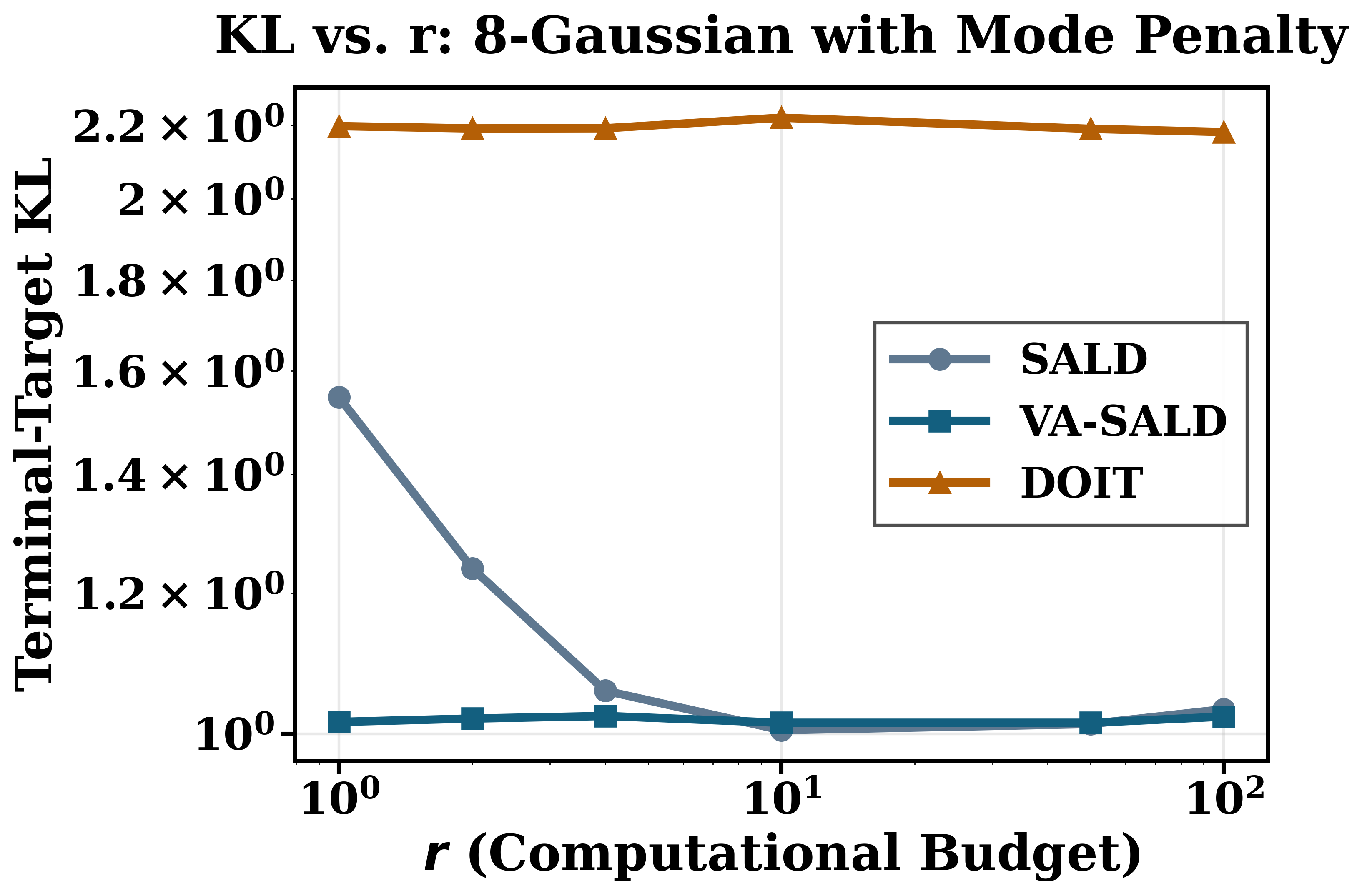}\label{fig:synthetic-main-c}}
    \subfloat[]{\includegraphics[width=0.248\textwidth]{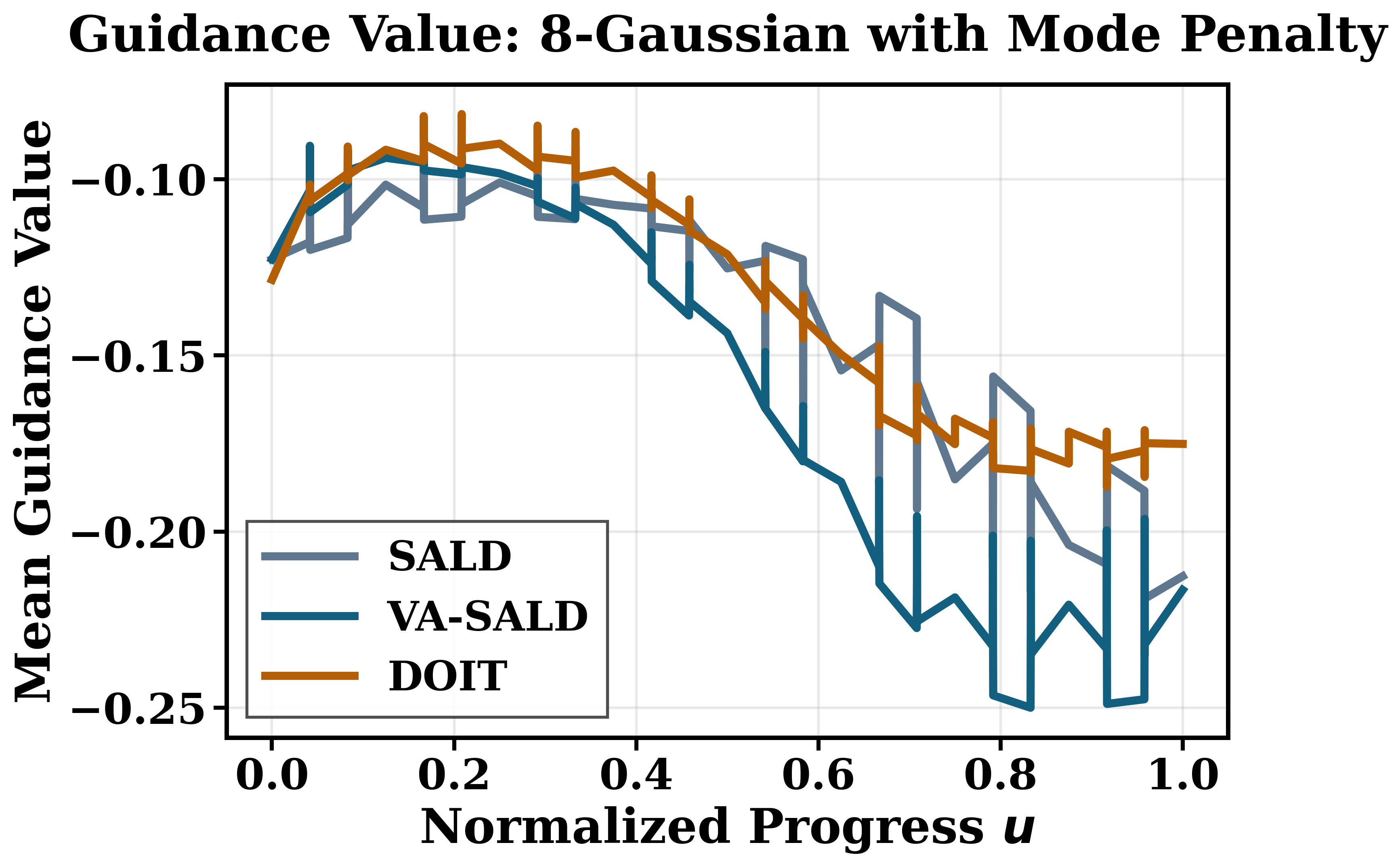}\label{fig:synthetic-main-d}}

    \caption{Synthetic guided VP experiments.
    We compare SALD, VA-SALD, and DOIT on two guided VP benchmarks:
two-Gaussian/two-moons (a,b) and eight-Gaussian/left-mode penalty (c,d).
Panels (a,c) show terminal-target KL versus budget $r$, and panels (b,d)
show mean guidance value along reverse progress. Unlike DOIT, SALD and
VA-SALD consistently reduce terminal KL as $r$ grows; VA-SALD also yields
the strongest guidance-value profiles.}
    \label{fig:synthetic-main}
\end{figure}

\begin{figure}[t] 
    \centering
\includegraphics[width=1\textwidth]{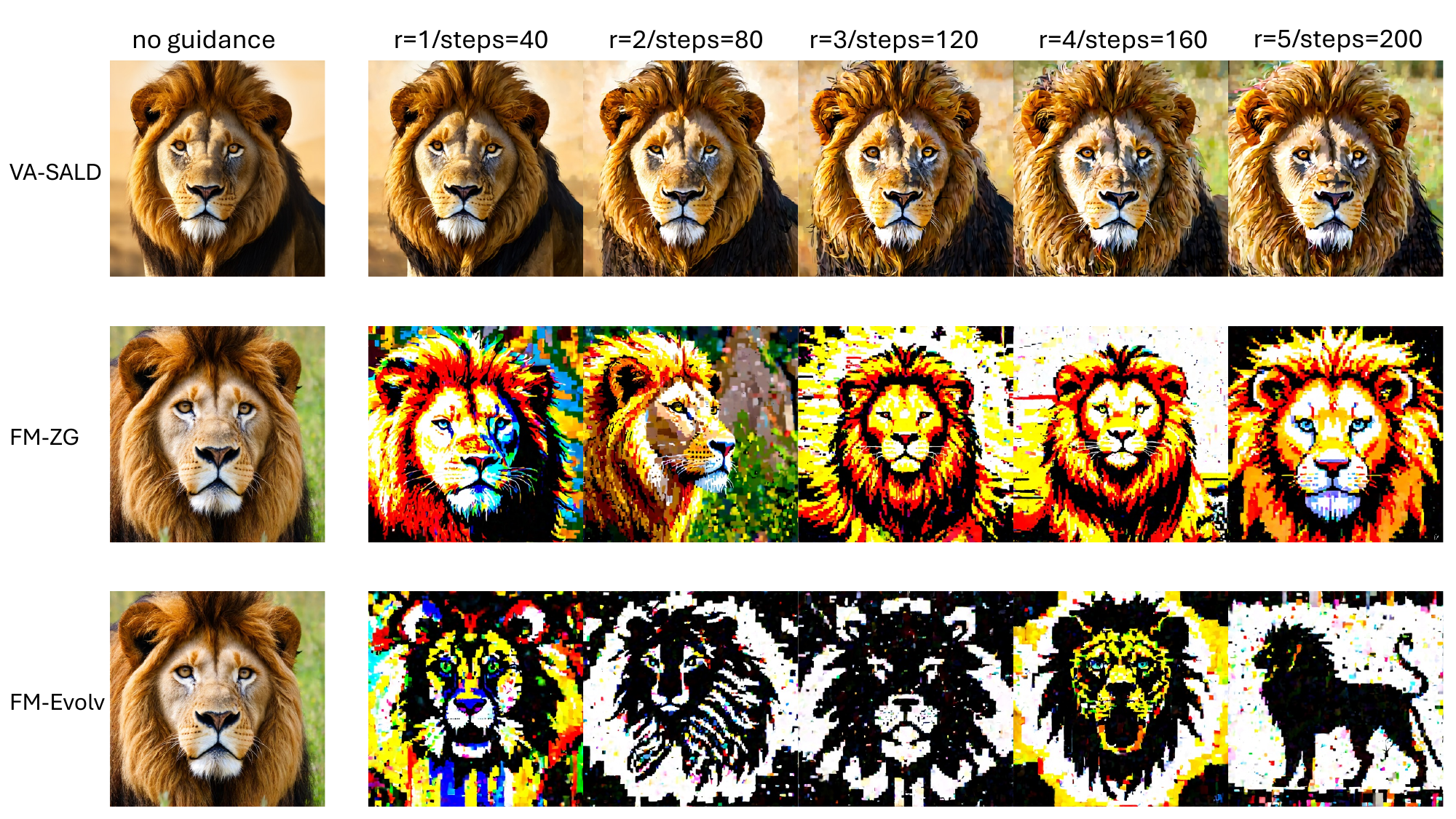} 
    \caption{Guided generation on "lion"   with guidance of black-box reward functions based on neural Aesthetic scorer. Across increasing budgets, VA-SALD produces stable and progressively refined
samples, whereas FM-ZG and FM-Evolv exhibit severe artifacts and unstable
semantic preservation.}
\label{lion_gc8}
\end{figure}

\section*{Conclusion and Discussion}
We studied Slowly Annealed Langevin Dynamics (SALD) for tracking moving target distributions and approximating their terminal target via time slowdown. Our forward-KL analysis, based on marginal KL differentiation, shows how slowdown amplifies contraction induced by functional inequalities of intermediate targets. We also introduced Velocity-Aware SALD (VA-SALD) for training-free guided generation with pretrained score-based generative models. By incorporating the pretrained marginal velocity, VA-SALD separates the complexity of the pretrained path from the additional deviation induced by guidance, yielding convergence bounds governed by the correction field \(w_t\) rather than the full guided velocity.

Several limitations and directions remain. First, slowdown improves tracking by giving the sampler more algorithmic time to follow the moving target, but it also increases the number of iterations. Combining SALD with acceleration or distillation techniques is an important direction for retaining the benefit of slow tracking while improving generation efficiency. Second, although VA-SALD avoids paying for the full complexity of the guided moving target path and instead depends on the guide-induced correction \(w_t\), it remains unclear how to choose the intermediate guide schedule \((f_t)_{t\in[0,T]}\) for a given terminal guide \(f\). Designing guide schedules that reduce the \(\alpha\)-complexity of \((w_t)_{t\in[0,T]}\) remains an open problem.

\section*{Acknowledgements}
This research is supported by the National Research Foundation, Singapore and the Ministry of Digital Development and Information under the AI Visiting Professorship Programme (award number AIVP-2024-004). Any opinions, findings and conclusions or recommendations expressed in this material are those of the author(s) and do not reflect the views of National Research Foundation, Singapore and the Ministry of Digital Development and Information. Yueming Lyu is partially supported by the Career Development Fund (CDF) of the Agency for Science, Technology and Research (A*STAR) (No: C243512014)

\bibliographystyle{plain}
\bibliography{ref.bib}



\clearpage


{

{
\renewcommand{\contentsname}{Table of Contents}
\tableofcontents
}

\newpage

{\centering \Large\bfseries Appendix \par}
\bigskip

\section{Other Related Work}
\paragraph{Annealed Langevin}

Annealed Langevin dynamics is based on the idea of sampling through a path of intermediately-noised distributions rather than directly targeting a (potentially difficult to sample from) terminal law. In score-based generative modeling, this manifests in SMLD (score-matching with Langevin dynamics) and continuous-time score-based diffusion models, where a time-dependent score \(\nabla \log \pi_t\) is used to move samples through progressively less noisy marginals toward the data distribution \cite{song2019generative, song2021score, baldassari2026dimension}. In sampling theory, annealing and tempering are used to improve the exploration of multi-modal or non-log-concave targets, by introducing intermediate distributions with more favorable geometry \cite{ge2018simulated, chehab2025provable}. Recent work has developed non-asymptotic theory for several variants of annealed Langevin methods, including geometric tempering \cite{chehab2025provable}, diffusion annealed Langevin Monte Carlo along Gaussian or heavy-tailed
interpolation paths \cite{cordero2025non}, diffusion annealed Langevin dynamics from the viewpoint of prescribed
marginal flows and functional inequalities \cite{cattiaux2025diffusion}, dimension-robust preconditioned annealed Langevin
dynamics for mixture-like multimodal targets \cite{baldassari2026dimension}, and posterior sampling methods that combine diffusion priors with annealed Langevin updates \cite{xunposterior}.

The work closest to ours is the slowly annealed Langevin Monte Carlo analysis \cite{GuoTaoChen2024}, which shows that traversing an annealing path more slowly can improve sampling complexity
for non-log-concave targets. Their analysis proceeds through a path-space comparison based on Girsanov's theorem, and the resulting guarantee is governed by the action of the annealing path. Our analysis is complementary. Instead of comparing path measures, we directly differentiate the marginal forward KL divergence between the sampler law and the instantaneous target. This reveals two distinct mechanisms behind slowdown: the moving-target forcing term is reduced by the time-rescaling, while the dissipative Fisher-information term can be converted into contraction whenever the intermediate targets satisfy log-Sobolev inequalities. As a result, our bounds explicitly separate initialization mismatch, contraction along the path, and the energy of the target evolution. This marginal viewpoint also leads naturally to discrete-time guarantees for Euler--Maruyama implementations and to iteration-complexity bounds (e.g. for Gaussian-mixture annealing paths).

Our formulation is also related to the two-loop structure of SMLD \cite{song2019generative}, where one alternates between changing the noise level and applying several Langevin correction steps at a fixed level. SALD can be viewed as a continuous moving-target analogue of this intuition, but it is not simply the continuous-time limit of SMLD: the target evolves continuously in algorithmic time, and the analysis applies to general moving distributions rather than only to diffusion-model noise marginals. This distinction is important for our theory, since the key object is not only the score field but also the transport cost, or energy of the prescribed target path.

\paragraph{Guided generation and alignment}

Guided generation modifies a pretrained generative sampler to favor samples satisfying a certain condition, reward, preference, or constraint. Classical diffusion guidance mechanisms include the classifier guidance and classifier-free guidance paradigms \cite{song2021score, ho2022classifier}. Recent work has further clarified the structure of such guidance rules; for instance, classifier-free guidance can be interpreted through the lens of predictor-corrector mechanisms \cite{bradley2025classifierfree}. More broadly, training-free guidance methods seek to steer a pretrained diffusion model (at inference time) without retraining the base score network, often by adding a reward-gradient or condition-dependent drift to the reverse sampling process \cite{chen2024overview, jiao2025towards, ye2024tfg, liu2026alignment}. Additionally, \cite{kim2025testtime} proposes a training-free test-time method based on a (tempered) sequential Monte Carlo (SMC) method to align diffusion models with arbitrary rewards.  In addition, several works focus on addressing black-box guidance. \cite{lyu2024covariance} formulates the training-free guided generation as a sequential black-box optimization problem of a sequence of Gaussian distributions and explicitly optimizes the drift term. Furthermore,  \cite{wei2025evolvable} employs the zeroth-order gradient update at each inference-time sampling step. TreeG~\cite{guo2025training} builds a tree-search scheme along the inference-time sampling process.  DSearch~\cite{li2025dynamic} investigates a dynamic tree expansion for inference-time guidance.  

Our work contributes to this perspective by considering a moving-target sampling framework. Given a pretrained marginal path \((p_t)_{t \in [0,T]}\) and a guide \(f_t\), the natural guided path is 
\[\pi_t(x) \propto p_t(x) \exp(-f_t(x)).\]

A direct application of SALD would require tracking the full evolution of \((\pi_t)_{t \in [0,T]}\). In contrast, VA-SALD explicitly uses the transport velocity of the pretrained path \((p_t)_{t \in [0,T]}\) and applies Langevin correction only with respect to the guided density. This separates the base generative evolution from the additional guide-induced deformation. In our theory, this distinction appears through the correction vector field \(w_t\) with the relevant energy complexity term \(\mathfrak{E}_\alpha(\pi_t, w_t)\). This is in contrast to direct SALD on the guided path which involves the complexity and energy of the full guided velocity. Thus, VA-SALD is especially well-suited to settings where the pretrained model already captures most of the marginal evolution and the guide only introduces a comparatively small correction. 

\paragraph{h-Transform to align diffusion models}

A principled way to modify Markov diffusion so that it targets its desired target tilted distribution is given by Doob's \(h\)-transform. If
\[
    h_t(x) := \mathbb{E}\!\left[\exp(-f(X_T)) \mid X_t=x\right],
\]
then adding the correction $\nabla \log h_t$ to the reverse-time score yields the exact transformed process whose terminal law is proportional to \(p_T e^{-f}\), under the idealized setting where \(h_t\) is known exactly. This makes the \(h\)-transform a natural theoretical tool for conditional generation, preference alignment, and reward-guided diffusion sampling \cite{tang2024stochastic, zhu2026training, kawatadirect, chang2026inference}

\cite{tang2024stochastic} tackles the problem of supervised fine-tuning with endogenous conditioning, in which they learn the $h$-function through a martingale-covariation loss. In the alignment for diffusion models in the Direct Preference Optimization (DPO) setting, a recent work \cite{kawatadirect} formulates a distributional optimization problem and subsequently enable sampling from the learned distribution through the $h$-transform independent of isoperimetric inequalities. Doob's matching \cite{chang2026inference} is an inference-time alignment method that estimates the $h$-function via a gradient-regularized regression objective whose output is variationally stable (outputs converge as well as their gradients, yielding good approximation). DOIT (Doob-Oriented Inference-time Transformation) \cite{zhu2026training} instead approximates the dynamic Doob correction at inference time through simulation, allowing training-free adaptation to more general reward functions, including non-differentiable rewards.

Our approach is different in both mechanism and objective; VA-SALD does not attempt to learn or estimate the exact \(h_t\), and thus does not require solving a backwards conditional-expectation correction, training a time-dependent guidance network, or repeatedly estimating the exact Doob correction. Instead, it allows one to prescribe a guide \(f_t\) directly and uses slowdown to control the bias introduced by this simpler guidance rule. The resulting guarantee is not an exact \(h\)-transform guarantee; rather, it quantifies how well the sampler tracks the guided moving target under functional inequlities of \(\pi_t\) and the energy of the guide-induced correction field \(w_t\). Thus, our work is positioned as a lightweight, training-free alternative to exact \(h\)-transform alignment since it trades exact Doob optimality for a simple, inference-time algorithm whose error is tracked by a moving-target KL analysis.  

\section{Auxiliary Lemmas}\label{sec:lemmas}
We present useful lemmas on which our convergence theory relies.

\begin{lemma}[Gr\"onwall's inequality]\label{lem:gronwall}
    Let $a_t, b_t$ be continuous functions and $K_t$ be a differentiable function in $t \in [0,t_1]$. Suppose $\frac{\dd}{\dd t}K_t \leq -a_t K_t + b_t~(t\in [0,t_1])$. Then, it follows that,
    \begin{align*}
        K_{t_1} 
        \leq \exp\left( -\int_0^{t_1} a_{u} \dd u \right) K_0 
        + \int_0^{t_1} \exp\left( -\int_t^{t_1} a_u \dd u \right) b_t \dd t.
    \end{align*}
\end{lemma}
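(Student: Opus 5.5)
The plan is the standard integrating-factor argument. Set $A_t := \int_0^t a_u\,\dd u$ for $t\in[0,t_1]$. Since $a_t$ is continuous, the fundamental theorem of calculus makes $A_t$ continuously differentiable with $\dot A_t = a_t$. Define the auxiliary function
\[
    \Phi_t := e^{A_t} K_t - \int_0^t e^{A_u} b_u \,\dd u .
\]
This function is differentiable on $[0,t_1]$, because $K_t$ is differentiable and $u\mapsto e^{A_u}b_u$ is continuous.

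First I differentiate $\Phi_t$ using the product rule and the hypothesis $\frac{\dd}{\dd t}K_t \le -a_tK_t + b_t$:
\[
    \frac{\dd}{\dd t}\Phi_t
    = e^{A_t}\Big(\frac{\dd}{\dd t}K_t + a_t K_t\Big) - e^{A_t} b_t
    \le e^{A_t}\big(-a_tK_t + b_t + a_tK_t\big) - e^{A_t}b_t = 0 .
\]
Thus $\Phi$ has a nonpositive derivative everywhere on $[0,t_1]$. By the mean value theorem, $\Phi$ is nonincreasing, so $\Phi_{t_1}\le \Phi_0 = K_0$. Using the mean value theorem here, rather than integrating $\frac{\dd}{\dd t}K_t$ directly, means no integrability of the derivative is needed, only its existence.

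Unwinding $\Phi_{t_1}\le K_0$ gives
\[
    e^{A_{t_1}}K_{t_1} \le K_0 + \int_0^{t_1} e^{A_t} b_t\,\dd t .
\]
Multiplying by the positive factor $e^{-A_{t_1}}$ and using $A_t - A_{t_1} = -\int_t^{t_1} a_u\,\dd u$ yields exactly the claimed bound:
\[
    K_{t_1} \le \exp\Big(-\int_0^{t_1}a_u\,\dd u\Big)K_0 + \int_0^{t_1}\exp\Big(-\int_t^{t_1}a_u\,\dd u\Big) b_t\,\dd t .
\]
No sign assumptions on $a_t$, $b_t$ or $K_t$ are needed, since $e^{A_t}>0$ always.

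There is no real obstacle. The only point needing care is regularity: the differentiability of $\Phi$ and the monotonicity conclusion. I handle this by relying on the continuity of $a_t$ and $b_t$ and the mean value theorem, as above.
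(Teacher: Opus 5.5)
Your proof is correct and is the same integrating-factor argument the paper uses. The paper differentiates $\exp(\int_0^t a_u\,\dd u)K_t$, bounds that derivative by $\exp(\int_0^t a_u\,\dd u)b_t$, and integrates over $[0,t_1]$; your move of folding the $b$-term into $\Phi_t$ and applying the mean value theorem is just a slightly more careful justification of that integration step.
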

\begin{proof}
    We get following inequality:
    \begin{align*} 
        \frac{\dd}{\dd t}\left( \exp\left( \int_0^t a_u \dd u \right) K_t \right)
        &= a_t \exp\left( \int_0^t a_u \dd u \right) K_t 
        + \exp\left( \int_0^t a_u \dd u \right) \frac{\dd K_t}{\dd t} \\
        &\leq \exp\left( \int_0^t a_u \dd u \right) b_t.
    \end{align*}
    Integrating the above inequality, we get $\exp\left( \int_0^{t_1} a_{u} \dd u \right) K_{t_1} \leq K_0 + \int_0^{t_1}\exp\left( \int_0^t a_u \dd u \right) b_t \dd t$. Therefore,
    \begin{align*}
        K_{t_1} 
        &\leq \exp\left( -\int_0^{t_1} a_{u} \dd u \right) K_0 
        + \exp\left( -\int_0^{t_1} a_{u} \dd u \right)\int_0^{t_1}\exp\left( \int_0^t a_u \dd u \right) b_t \dd t \\
        &= \exp\left( -\int_0^{t_1} a_{u} \dd u \right) K_0 
        + \int_0^{t_1} \exp\left( -\int_t^{t_1} a_u \dd u \right) b_t \dd t.
    \end{align*}
\end{proof}

\begin{lemma}[Corollary 4.15, \cite{boucheron2013concentration}]\label{lem:dv_variation}
    Let $\mu$ and $\nu$ be the probability distributions on the same space. Then,
    \[
        \KL(\nu \| \mu) = \sup_Z \left\{ \E_{\nu}[Z] - \log \E_{\mu}[\exp(Z)] \right\},
    \]
    where the supremum is taken over all random variables such that $\log \E_{\mu}[\exp(Z)] < + \infty$.
\end{lemma}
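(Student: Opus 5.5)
The plan is to prove the two inequalities separately: show that $\E_{\nu}[Z] - \log \E_{\mu}[\exp(Z)] \le \KL(\nu\|\mu)$ for every admissible $Z$, and then exhibit maximizing sequences. Throughout, $\KL(\nu\|\mu)=+\infty$ when $\nu\not\ll\mu$, and I interpret $\E_\nu[Z]$ as allowed to equal $-\infty$, in which case the inequality is trivial.

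\emph{Upper bound via the Gibbs (tilted) measure.} Fix $Z$ with $\log\E_\mu[e^Z]<\infty$, and assume $\nu\ll\mu$ (otherwise the upper bound is trivial). Define the tilted probability measure $\mu_Z(\dd x) := e^{Z(x)}\mu(\dd x)/\E_\mu[e^Z]$. It is mutually absolutely continuous with $\mu$, since $e^Z>0$. Hence $\nu\ll\mu_Z$ with
\[
\frac{\dd\nu}{\dd\mu_Z} = \frac{\dd\nu}{\dd\mu}\, e^{-Z}\,\E_\mu[e^Z].
\]
Taking $\log$ and integrating against $\nu$ formally gives the identity
\[
\KL(\nu\|\mu_Z) = \KL(\nu\|\mu) - \E_\nu[Z] + \log\E_\mu[e^Z].
\]
Nonnegativity of $\KL(\nu\|\mu_Z)$ (Jensen's inequality) then yields the bound.

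The only care needed is integrability in splitting the integral. When $\E_\nu[Z]$ is finite the identity is legitimate. To be safe, I would first prove the bound for $Z\wedge n$, which still has a finite exponential moment, and then let $n\to\infty$ by monotone convergence on $\E_\nu[(Z\wedge n)^+]$ and $\log\E_\mu[e^{Z\wedge n}]$.

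\emph{Attaining the supremum.} Suppose first that $\nu\not\ll\mu$. Pick a set $A$ with $\mu(A)=0$ and $\nu(A)>0$, and take $Z=c\mathbbm{1}_A$. Then $\log\E_\mu[e^Z]=0$ while $\E_\nu[Z]=c\,\nu(A)\to\infty$ as $c\to\infty$, so the supremum is $+\infty$.

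If instead $\nu\ll\mu$, let $\ell=\log\frac{\dd\nu}{\dd\mu}$ and consider the truncations $Z_n = (\ell\wedge n)\vee(-n)$, which are bounded and hence admissible. As $n\to\infty$, $\E_\mu[e^{Z_n}]\to\E_\mu[e^{\ell}\mathbbm{1}_{\{\ell>-\infty\}}]=1$ by dominated convergence on $\{\ell\le 0\}$ and monotone convergence on $\{\ell>0\}$. Also $\E_\nu[Z_n]\to\E_\nu[\ell]=\KL(\nu\|\mu)$, with the value $+\infty$ allowed. This uses monotone convergence for the positive part, together with the fact that $\E_\nu[\ell^-]\le \E_\mu[e^{\ell}\ell^-]\le e^{-1}<\infty$, which justifies dominated convergence for the negative part. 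Hence the supremum is at least $\KL(\nu\|\mu)$.

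\emph{Main obstacle.} The main obstacle is purely measure-theoretic: handling unbounded or infinite quantities, namely the case $\KL=\infty$ and unbounded $\ell$, in both directions. The truncation arguments above are designed to address exactly this point; the algebraic core is the one-line Gibbs identity.
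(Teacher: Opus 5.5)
The paper gives no proof of this lemma; it is simply imported from Boucheron--Lugosi--Massart. I believe it is derived there from the duality formula for entropy applied to $Y=\dd\nu/\dd\mu$, whose core is the same Jensen/tilting step you use, with optimizer $Z=\log\frac{\dd\nu}{\dd\mu}$ and an indicator construction when $\nu\not\ll\mu$.

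Your proof is a correct, self-contained version of this standard argument. The upper bound is exactly $\KL(\nu\|\mu_Z)\ge 0$ for the Gibbs tilt $\mu_Z\propto e^{Z}\mu$. In the attainment direction, the two-sided truncations $(\ell\wedge n)\vee(-n)$ together with $\E_\nu[\ell^-]\le e^{-1}$ correctly justify the limits.

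What writing it out buys over the bare citation is clarity about which half the paper actually needs. Every application uses only $\E_\nu[Z]\le\KL(\nu\|\mu)+\log\E_\mu[e^{Z}]$ with $Z=\alpha\|\cdot\|^2\ge 0$, which is your one-line Gibbs inequality. Examples are bounding $\|v_t\|^2_{L_2(\rho_{s(t)})}$ in the continuous-time SALD proof, and bounding $\|m_t\|^2$, $\|c_{t(s)}\|^2$, $(1+M)^2$ in the VA-SALD and discretization arguments. The attainment half is never invoked.

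One small rigor point in your upper bound concerns truncating only from above. $Z\wedge n$ gives a vacuous inequality when $\E_\nu[Z^-]=\infty$, so it does not exclude the undefined case $\E_\nu[Z^+]=\E_\nu[Z^-]=\infty$. To close this, apply your bound to the bounded variable $Z^+\wedge n$, which is admissible since $e^{Z^+}\le 1+e^{Z}$, and let $n\to\infty$. This gives $\E_\nu[Z^+]\le\KL(\nu\|\mu)+\log(1+\E_\mu[e^{Z}])<\infty$ whenever $\KL(\nu\|\mu)<\infty$. Hence $\E_\nu[Z]$ is always well defined in $[-\infty,\infty)$ in the nontrivial case.
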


\section{Bounding the Complexity under Poincar\'{e} Inequality}\label{subsec:velocity-norm}
We present a way to estimate the complexity under Poincar\'{e} Inequality (PI).
\begin{definition}[Poincar\'{e} Inequality (PI)]\label{def:PI}
    We say a probability distribution $\mu$ satisfies PI with constant $\cPI~>0$ if for all smooth $\varphi$ 
    \[
        \Var_\mu[\varphi] 
        = \E_{\mu}[ \varphi^2 ] - \E_{\mu}[\varphi]^2 
        \le 
        \frac{1}{\cPI~} \int \|\nabla \varphi\|^2 \dd\mu.
    \]
\end{definition}

Let $\mu$ be a probability distribution that satisfies PI with a constant $\cPI~ > 0$.
We consider the weighted mean-zero Sobolev space defined as follows:
\begin{equation}
    \dot{\bH}^1(\mu) 
    = \left\{ \varphi \in \bH^1(\mu) \left| \int \varphi(x) \mu_t(x) \dd x = 0 \right. \right\},
\end{equation}
where $\bH^1(\mu)=W^{1,2}(\mu)$ is a weighted Sobolev space. 
We equip $\dot{\bH}^1(\mu)$ with the inner-product defined as $\ip{\varphi}{\psi}_{ \dot{\bH}^1(\mu)} = \int \nabla \varphi(x)^\top \nabla \psi(x) \mu(x)\dd x$ for all $\varphi, \psi \in \dot{\bH}^1(\mu)$.
PI implies that for any $\varphi \in \dot{\bH}^1(\mu_t)$,  
\begin{align*} 
    \| \varphi \|^2_{\dot{\bH}^1(\mu)} 
    \leq \| \varphi \|^2_{\bH^1(\mu)} 
    = \int \varphi^2(x) \mu(x) \dd x 
    + \int \| \nabla \varphi(x) \|_2^2 \mu(x)\dd x 
    \leq \left( 1 + \frac{1}{\cPI~}\right) \| \varphi \|^2_{\dot{\bH}^1(\mu)}.
\end{align*}
Therefore $\|\cdot \|_{\dot{\bH}^1(\mu)}$ defines the norm equivalent to $\| \cdot \|_{\bH^1(\mu)}$ on $ \dot{\bH}^1(\mu)$.

\begin{lemma}\label{lem:velocity-norm-bound}
    Given $g \in \dot{\bH}^1(\mu)$, consider the PDE: $\nabla \cdot (\mu v) = - g \mu$ where $v=(v_1,\ldots,v_d),$~$(v_i \in L_2(\mu_t))$.
    If $\mu$ satisfies PI with a constant $\cPI~$, the weak solution $v$ of this PDE  exists, with $\|v\|_{L_2(\mu)} \leq \frac{1}{\sqrt{\cPI~}}\| g \|_{L_2(\mu)}$.
\end{lemma}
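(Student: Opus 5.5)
The plan is to take $v=\nabla\varphi$ for a potential $\varphi\in\dot{\bH}^1(\mu)$ and obtain $\varphi$ from the Riesz representation theorem on the Hilbert space $(\dot{\bH}^1(\mu),\ip{\cdot}{\cdot}_{\dot{\bH}^1(\mu)})$. The weak form of $\nabla\cdot(\mu v)=-g\mu$ is that for all test functions $\psi$,
\[
\int \nabla\psi^\top v\,\dd\mu = \int \psi\, g\,\dd\mu .
\]
Since $g$ has mean zero, the right-hand side does not change when $\psi$ is replaced by $\psi-\E_\mu[\psi]$. It is therefore enough to check this identity for $\psi\in\dot{\bH}^1(\mu)$, together with constants, for which both sides vanish.

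First, define the linear functional $\ell(\psi):=\int\psi g\,\dd\mu$ on $\dot{\bH}^1(\mu)$. By Cauchy--Schwarz and then PI applied to the mean-zero $\psi$,
\[
|\ell(\psi)|\le \|g\|_{L_2(\mu)}\|\psi\|_{L_2(\mu)}\le \|g\|_{L_2(\mu)}\,(\cPI~)^{-1/2}\|\psi\|_{\dot{\bH}^1(\mu)}.
\]
So $\ell$ is bounded with operator norm at most $(\cPI~)^{-1/2}\|g\|_{L_2(\mu)}$.

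Second, confirm that $\dot{\bH}^1(\mu)$ with the gradient inner product is a Hilbert space. This uses the norm equivalence with $\bH^1(\mu)$ shown just above the lemma. It also uses that $\dot{\bH}^1(\mu)$ is a closed subspace of $\bH^1(\mu)$, which holds because the mean is continuous in $L_2(\mu)$. Riesz then gives a unique $\varphi\in\dot{\bH}^1(\mu)$ with $\int\nabla\psi^\top\nabla\varphi\,\dd\mu=\ell(\psi)$ for all $\psi$, and
\[
\|\varphi\|_{\dot{\bH}^1(\mu)}=\|\ell\|\le(\cPI~)^{-1/2}\|g\|_{L_2(\mu)}.
\]

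Third, set $v:=\nabla\varphi$. Each component lies in $L_2(\mu)$, and $v$ satisfies the weak equation above, with the sign convention matching $\nabla\cdot(\mu v)=-g\mu$ after integration by parts. The desired bound follows because $\|v\|_{L_2(\mu)}=\|\varphi\|_{\dot{\bH}^1(\mu)}$.

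The main obstacle is the functional-analytic bookkeeping rather than the estimate itself. One point is completeness of the weighted mean-zero Sobolev space. The other is justifying that smooth test functions, which are the ones used in the weak formulation, are admissible in the Riesz identity. I would define $\bH^1(\mu)$ as the closure of smooth functions and handle both points by density.
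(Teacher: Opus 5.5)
Your proposal is correct and takes essentially the same route as the paper: restrict to gradient fields $v=\nabla\varphi$ with $\varphi\in\dot{\bH}^1(\mu)$, and bound the functional $\psi\mapsto\int\psi g\,\dd\mu$ using Cauchy--Schwarz and the Poincar\'e inequality on mean-zero test functions. The Riesz representation theorem then gives $\|\nabla\varphi\|_{L_2(\mu)}\le \|g\|_{L_2(\mu)}/\sqrt{C^{\mathrm{PI}}}$, and your extra points (extending the weak identity to non-centered test functions via $\E_\mu[g]=0$, and completeness of the weighted space via density) fill in details the paper leaves implicit.
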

\begin{proof}
    Here, we restrict the class of solutions to those of the form $v = \nabla \varphi$ with $\psi \in \dot{\bH}^1(\mu_t)$.
    The weak form of $\nabla \cdot (\mu \nabla \varphi) = - g \mu$ is as follows; for any $\varphi \in \dot{\bH}^1(\mu)$,
    \begin{equation}\label{eq:weak-PDE}
        \ip{\varphi}{\psi}_{ \dot{\bH}^1(\mu)} 
        = \int \psi(x) g(x) \mu(x)\dd x =: T_\mu(\psi).
    \end{equation}

    Under PI, the following inequality holds 
    \begin{equation*}
        T_\mu(\psi) 
        \leq \| \psi \|_{L_2(\mu)} \| g \|_{L_2(\mu)}
        =  \sqrt{\Var_\mu [\psi]} \| g \|_{L_2(\mu)}
        \leq \frac{1}{\sqrt{\cPI~}}  \| \psi \|_{\dot{\bH}^1(\mu)}\| g \|_{L_2(\mu)},
    \end{equation*}
    meaning that $T_\mu: \dot{\bH}^1(\mu) \to \R$ is the bounded linear operator with the operator norm $\frac{1}{\sqrt{\cPI~}}\| g \|_{L_2(\mu)}$. Therefore, the Riesz representation theorem yields the solution $\varphi$ of Eq.~\eqref{eq:weak-PDE} with 
    \[ 
        \| \nabla \varphi \|_{L_2(\mu)}
        = \| \varphi \|_{\dot{\bH}^1(\mu)} \leq \frac{1}{\sqrt{\cPI~}}\| g \|_{L_2(\mu)}.
    \]
\end{proof}

We here demonstrate that Lemma \ref{lem:velocity-norm-bound} immediately yields the bound on the energy of moving targets $(\pi_t)_{t \in [0,T]}$. We suppose $\pi_t$ satisfies Poincar\'{e} inequality with a constant $\cPI{t}$ for all $t \in [0,T]$ and consider PDE: $\nabla\cdot (\pi_t v_t) = -\pi_t (\partial_t \log \pi_t)$.
Then, Lemma \ref{lem:velocity-norm-bound} with $\mu=\pi_t$ and $g = \partial_t \log \pi_t$ guarantees the solution of $v_t$ with $\|v_t\|_{L_2(\pi_t)} \leq \frac{1}{\sqrt{\cPI{t}}} \| \partial_t \log \pi_t  \|_{L_2(\pi_t)}$. Therefore,

\begin{equation*}
    \cA_0(\pi,v) \leq \int_0^T \frac{1}{\cPI{t}} \| \partial_t \log \pi_t  \|_{L_2(\pi_t)}^2 \dd t.
\end{equation*}

This improves the constant in the estimate of \cite{GuoTaoChen2024} from $2$ to $1$. 

In the context of guided generation in Section \ref{sec:convergence_velocityAwareSALD}, we can also estimate the bound on the complexity of $(w_t)_{t \in [0,T]}$ along $(\pi_t)_{t\in[0,T]}$.
If $\pi_t$ satisfies Poincar\'{e} inequality with a constant $\cPI{t}>0$ for all $t \in [0,T]$, we can further bound the complexity $\cA_0(\pi,w)$. Applying Lemma \ref{lem:velocity-norm-bound} with $g = \E_{\pi_t}[g_t] - g_t$ where $g_t = \partial_t f_t + \nabla f_t^\top u_t$, we get $\|w_t\|_{L_2(\pi_t)} \leq \frac{1}{\sqrt{\cPI{t}}}\| g_t - \E_{\pi_t}[g_t] \|_{L_2(\pi_t)}$. Therefore, we obtain the following bound:
\begin{equation*}\label{eq:action_bound_under_guidance}
    \cA_0(\pi, w)
    \leq \int_0^T \frac{1}{\cPI{t}}\Var_{\pi_t}[g_t] \dd t.
\end{equation*}

\section{Analysis of SALD}\label{sec:proof_of_SALD}
In this section, we provide the convergence analysis of SALD described in Section \ref{sec:convergence_sald}.

\subsection{Analysis of Continuous-Time SALD}\label{sec:continuous_time_SALD}
We first provide the proof of the convergence rate of SALD (Theorem~\ref{thm:forward-KL}).

\begin{proof}[Proof of Theorem~\ref{thm:forward-KL}]

    Differentiating $\KL(\rho_s \| \tilde{\pi}_s )$ in time $s$, we get
    \begin{equation}\label{eq:KL-derivative-0}
        \frac{\dd}{\dd s}\KL(\rho_s \| \tilde{\pi}_s )
        = \int \partial_s \rho_s \log \frac{\rho_s}{\tilde{\pi}_s}\dd x
        - \int \frac{\rho_s}{\tilde{\pi}_s}\partial_s \tilde{\pi}_s \dd x
    \end{equation}
    since $\int \partial_s \rho_s \dd x=0$.

    Using the following Fokker--Planck equation associated with~\eqref{eq:SALD}
    \[
        \partial_s\rho_s = \nabla\cdot\left(\rho_s \nabla \log \frac{\rho_s}{\tilde{\pi}_s}\right),
    \]
    the first term of Eq.~\eqref{eq:KL-derivative-0} becomes
    \begin{align}\label{eq:KL-derivative-1}
        \int \partial_s \rho_s \log \frac{\rho_s}{\tilde{\pi}_s}\dd x
        &= - \int \rho_s \left\| \nabla \log \frac{\rho_s}{\tilde{\pi}_s} \right\|^2 \dd x \notag \\
        &= - \FI(\rho_s \| \tilde{\pi}_s).
    \end{align}

    Let $v_t$ $(t \in [0,T])$ be a velocity field satisfying the continuity equation
    \[
        \partial_t \pi_t + \nabla\cdot (v_t \pi_t)=0.
    \]
    Then $\tilde{v}_s := \dot{t}(s)v_{t(s)}$ generates $\tilde{\pi}_s$ because
    \[
        \partial_s \tilde{\pi}_s
        = \dot{t}(s)\partial_t \pi_t |_{t=t(s)}
        = -\dot{t}(s)\nabla\cdot(v_{t(s)}\pi_{t(s)})
        = -\nabla\cdot(\tilde{v}_s \tilde{\pi}_s).
    \]

    The second term of Eq.~\eqref{eq:KL-derivative-0} can be evaluated as follows:
    \begin{align}\label{eq:KL-derivative-2}
       - \int \frac{\rho_s}{\tilde{\pi}_s}\partial_s \tilde{\pi}_s \dd x
       &= \int \frac{\rho_s}{\tilde{\pi}_s} \nabla\cdot (\tilde{v}_s \tilde{\pi}_s) \dd x \notag \\
       &= - \int \nabla \frac{\rho_s}{\tilde{\pi}_s}^\top  (\tilde{v}_s \tilde{\pi}_s) \dd x \notag \\
       &= - \int \rho_s \nabla \log \frac{\rho_s}{\tilde{\pi}_s}^\top  \tilde{v}_s  \dd x \notag \\
       &\leq \sqrt{\FI(\rho_s \| \tilde{\pi}_s)} \| \tilde{v}_s \|_{L_2(\rho_s)} \notag \\
       &\leq \frac{1}{2}\FI(\rho_s \| \tilde{\pi}_s)
       + \frac{1}{2}\| \tilde{v}_s \|_{L_2(\rho_s)}^2.
    \end{align}

    Combining Eq.~\eqref{eq:KL-derivative-0}, \eqref{eq:KL-derivative-1}, \eqref{eq:KL-derivative-2}, and LSI, we get
    \begin{align*}
        \frac{\dd}{\dd s}\KL(\rho_s \| \tilde{\pi}_s)
        &\leq - \frac{1}{2}\FI(\rho_s \| \tilde{\pi}_s)
        + \frac{1}{2}\|\tilde{v}_s\|_{L_2(\rho_s)}^2 \\
        &\leq - \cLSI{t(s)} \KL(\rho_s \| \tilde{\pi}_s)
        + \dot{t}(s)^2\frac{1}{2}\|v_{t(s)}\|_{L_2(\rho_s)}^2.
    \end{align*}
    Since
    \[
        \frac{\dd}{\dd t}\KL(\rho_{s(t)} \| \tilde{\pi}_{s(t)})
        =\dot{s}(t)\frac{\dd}{\dd s}\KL(\rho_s \| \tilde{\pi}_{s})\Big|_{s=s(t)},
    \]
    we further get
    \begin{align*}
        \frac{\dd}{\dd t}\KL(\rho_{s(t)} \| \pi_t)
        \leq - \dot{s}(t)\cLSI{t} \KL(\rho_{s(t)} \| \pi_t)
        + \frac{\dot{s}(t)^{-1}}{2}\|v_{t}\|_{L_2(\rho_{s(t)})}^2.
    \end{align*}

    By Lemma~\ref{lem:dv_variation},
    \[
        \|v_t\|_{L_2(\rho_{s(t)})}^2
        \leq
        \frac{1}{\alpha}\KL(\rho_{s(t)} \| \pi_t)
        + \frac{1}{\alpha} \log \E_{\pi_t}\left[ \exp(\alpha \|v_t\|^2) \right].
    \]
    Therefore,
    \begin{align*}
        \frac{\dd}{\dd t}\KL(\rho_{s(t)} \| \pi_t)
        &\leq - \left(\dot{s}(t)\cLSI{t} - \frac{1}{2}\dot{s}(t)^{-1}\alpha^{-1} \right)\KL(\rho_{s(t)} \| \pi_t)
        + \frac{1}{2}\dot{s}(t)^{-1}\mathfrak{E}_{\alpha}(\pi_t,v_t).
    \end{align*}

    Applying Lemma~\ref{lem:gronwall}, we get
    \begin{align*}
        \KL(\rho_{S} \| \pi_T)
        &\leq \exp\left( - \int_0^{T} \left(\dot{s}(t)\cLSI{t} - \frac{1}{2}\dot{s}(t)^{-1}\alpha^{-1}\right)\dd t \right)\KL(\rho_0 \| \pi_0) \\
        &\quad+ \int_0^{T} \exp\left(-\int_t^{T} \left(\dot{s}(u)\cLSI{u} - \frac{1}{2}\dot{s}(u)^{-1}\alpha^{-1}\right) \dd u\right)  \frac{1}{2}\dot{s}(t)^{-1}\mathfrak{E}_{\alpha}(\pi_t,v_t) \dd t \\
        &\leq \exp\left( - \int_0^{T} \dot{s}(t)\cLSI{t} \dd t\right)
        \exp\left( \int_0^{T} \frac{1}{2}\dot{s}(t)^{-1}\alpha^{-1}\dd t\right)\KL(\rho_0 \| \pi_0) \\
        &\quad+ \int_0^{T} \exp\left(\int_t^{T} \frac{1}{2}\dot{s}(u)^{-1}\alpha^{-1} \dd u\right)  \frac{1}{2}\dot{s}(t)^{-1}\mathfrak{E}_{\alpha}(\pi_t,v_t) \dd t.
    \end{align*}

\end{proof}

\subsection{Analysis of Discrete-Time SALD}\label{sec:discrete_time_SALD}
We next prove the convergence rate of discrete-SALD. To do so, we here define the continuous interpolation $(\hat X_s)_{s\in[0,S]}$ of Eq.~\eqref{eq:sald_em_terminal_disc_prop_additive_final} as follows.
For time $s\in[s_k,s_{k+1}]$, 
 \begin{equation}\label{eq:frozen_interp_terminal_disc_prop_additive_final}
     \hat X_s
     :=
     X_k^\eta
     +(s-s_k)\,\nabla\log\pi_{t_k}(X_k^\eta)
     +\sqrt2\,(W_s-W_{s_k}).
 \end{equation}

The next lemma provides the bound on $\int \hat\rho_s\langle \delta_{\pi_t}(x),\nabla\log\frac{\hat\rho_s}{\tilde\pi_s}\rangle\, \dd x$ needed for the analysis of discrete-time SALD. We omit its proof here, since it follows directly from the general lemma proved later in Lemma \ref{lem:frozen_delta_cross_lip} by setting $c \equiv 0$ and $\sigma_\eta(t)=\sqrt{2}$.

For \(t\in[t_k,t_{k+1}]\), define the frozen-field error
\begin{equation}\label{eq:discrete_delta_def}
    \delta_{\pi_t}(x)
    := \nabla\log\pi_t(x)
    -
    \E\left[ \nabla\log\pi_{t_k}(X_k^\eta) \middle|~\hat X_s=x \right].
\end{equation}

 Set $\hat\rho_s:=\Law(\hat X_s)$.
\begin{lemma}\label{lem:frozen_delta_cross_lip_sald}
Let $A_s:=\nabla\log\frac{\hat\rho_s}{\tilde\pi_s}$.
We make the following assumptions.
\begin{enumerate}[itemsep=0mm,leftmargin=5mm,topsep=0mm] 
    \item There exist constant $L_{\pi,\mathrm{space}}>0$ such that for all \(x,y\in\R^d\),~$t \in [0,T]$
    \begin{align*}
    \|\nabla\log\pi_{t}(x)-\nabla\log\pi_{t}(y)\|
    \le
    L_{\pi,\mathrm{space}}\|x-y\|.
    \end{align*}
    \item There exist a measurable function \(M\) and constants $L_{\pi,\mathrm{time}}>0$ such that for every $s < s', (s,s'\in [0,S])$, and every \(x\in\R^d\),
    \begin{align*}
    \|\nabla\log\pi_{t(s')}(x)-\nabla\log\pi_{t(s)}(x)\|
    \le
    L_{\pi,\mathrm{time}}(s'-s)(1+M(x)).
    \label{eq:score_time_regularity_lip}
    \end{align*}

    \item There exists \(\alpha_0'>0\) such that
    \[
    \mathfrak{E}_{\alpha_0'}(\pi_t,\nabla\log\pi_t)<+\infty,
    \qquad
    \mathfrak{E}_{\alpha_0'}(\pi_t,1+M)<+\infty,
    \qquad \forall t\in[0,T].
    \]

    \item We assume $\eta^2 L_{\pi,\mathrm{space}}^2 < \frac{1}{8}$.
\end{enumerate}
Then, for every \(\alpha'\in(0,\alpha_0']\) and every \(s\in[s_k,s_{k+1}]\),
\begin{equation}
\begin{aligned}
-\int \hat\rho_s\langle \delta_{\pi_{t(s)}}(x),A_s\rangle\, \dd x
&\le
\frac{1}{4}\FI(\hat\rho_s\|\tilde\pi_s)
+  2\eta^2\alpha'^{-1} \Gamma(t(s))
\KL(\hat\rho_s\|\tilde\pi_s) +2\eta \Delta(t(s)).
\end{aligned}
\end{equation}
Here,
\begin{align*}
    &\Gamma(t) 
    := 2L_{\pi,\mathrm{time}}^2
    + 64 L_{\pi,\mathrm{space}}^2
    \Bigl(
    \dot{s}(t)^{-2} + 1 + \eta^2L_{\pi,\mathrm{time}}^2  \Bigr), \\
    &\Delta(t) := 
    64\eta L_{\pi,\mathrm{space}}^2 
    \mathfrak{E}_{\alpha'}(\pi_{t},\nabla\log\pi_{t})
    + 2\eta L_{\pi,\mathrm{time}}^2 (1+32\eta^2 L_{\pi,\mathrm{space}}^2)\mathfrak{E}_{\alpha'}(\pi_{t},1+M)
    + 16 d L_{\pi,\mathrm{space}}^2.
\end{align*}
\end{lemma}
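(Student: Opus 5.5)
We bound the cross term by a weighted Young inequality followed by a Donsker--Varadhan change of measure. By Cauchy--Schwarz and $ab\le \frac14 a^2+b^2$,
\[
-\int \hat\rho_s\langle \delta_{\pi_{t(s)}},A_s\rangle\,\dd x \le \frac14\FI(\hat\rho_s\|\tilde\pi_s)+\E\bigl[\|\delta_{\pi_{t(s)}}(\hat X_s)\|^2\bigr].
\]
It therefore suffices to show that the second moment of the frozen-field error is at most $2\eta^2\alpha'^{-1}\Gamma\,\KL+2\eta\Delta$.

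By Jensen's inequality for the conditional expectation,
\[
\E\|\delta_{\pi_t}(\hat X_s)\|^2\le \E\|\nabla\log\pi_t(\hat X_s)-\nabla\log\pi_{t_k}(X_k^\eta)\|^2 .
\]
We split this difference into a time part and a space part:
\[
\bigl[\nabla\log\pi_t(\hat X_s)-\nabla\log\pi_{t_k}(\hat X_s)\bigr]+\bigl[\nabla\log\pi_{t_k}(\hat X_s)-\nabla\log\pi_{t_k}(X_k^\eta)\bigr].
\]
The time part is bounded by $L_{\pi,\mathrm{time}}\eta(1+M(\hat X_s))$ using assumption 2, since $s-s_k\le\eta$. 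The space part is bounded by $L_{\pi,\mathrm{space}}\|\hat X_s-X_k^\eta\|$, and
\[
\|\hat X_s-X_k^\eta\|\le \eta\|\nabla\log\pi_{t_k}(X_k^\eta)\|+\sqrt2\|W_s-W_{s_k}\|.
\]
The Brownian contribution has expectation $2d(s-s_k)\le 2d\eta$. This accounts for the term $16dL_{\pi,\mathrm{space}}^2$ in $\Delta$, which gets multiplied by $2\eta$.

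The score at the \emph{old} point $X_k^\eta$ is not directly controllable, so we re-express it at the current point $\hat X_s$ and time $t(s)$. We write
\[
\nabla\log\pi_{t_k}(X_k^\eta)=\nabla\log\pi_{t_k}(\hat X_s)+O\bigl(L_{\pi,\mathrm{space}}\|\hat X_s-X_k^\eta\|\bigr).
\]
Because $\eta^2L_{\pi,\mathrm{space}}^2<1/8$, the recurring term can be absorbed into the left side, which produces the numerical factors $32$ and $64$. We then move $\nabla\log\pi_{t_k}$ to $\nabla\log\pi_{t(s)}$ with the time Lipschitz bound, which is where the $\eta^2L_{\pi,\mathrm{time}}^2$ corrections come from. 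The factor $\dot s(t)^{-2}$ in $\Gamma$ enters through the conversion between $s$- and $t$-time in the regularity bounds.

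After this, every remaining term has the form $\E_{\hat\rho_s}[\phi(\hat X_s)]$ with $\phi\in\{\|\nabla\log\pi_{t(s)}\|^2,(1+M)^2\}$, plus constants. For each such term we apply Lemma~\ref{lem:dv_variation} with $Z=\alpha'\phi$:
\[
\E_{\hat\rho_s}[\phi]\le \alpha'^{-1}\KL(\hat\rho_s\|\tilde\pi_s)+\mathfrak E_{\alpha'}(\pi_{t(s)},\sqrt\phi).
\]
The coefficients of $\KL$ collect into $2\eta^2\alpha'^{-1}\Gamma(t(s))$ and the $\mathfrak E$ terms collect into $2\eta\Delta(t(s))$, which gives the stated bound.

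The main obstacle is the absorption step. We have to control the score at the stale point $X_k^\eta$, whose law is not $\hat\rho_s$, only in terms of quantities under $\hat\rho_s$ measured against $\tilde\pi_s$, and we have to do so without losing the KL structure. The smallness condition on $\eta L_{\pi,\mathrm{space}}$ is exactly what makes this self-referential bound close. The constant bookkeeping is routine once that step is in place.
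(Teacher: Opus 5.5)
Your proposal is correct and follows essentially the same route as the paper, which obtains this lemma by specializing Lemma~\ref{lem:frozen_delta_cross_lip} to $c\equiv 0$, $\sigma_\eta=\sqrt2$. The steps match: Young's inequality, the time/space split, the increment bound closed by absorption under $4\eta^2L_{\pi,\mathrm{space}}^2<\frac12$, the time-Lipschitz transfer from $t_k$ to $t(s)$, and Donsker--Varadhan. One correction: the $\dot s(t)^{-2}$ in $\Gamma$ does not come from an $s$/$t$ conversion, since assumption 2 is already stated in $s$-time. It is an artifact of that specialization, where the change of measure is applied to the vanishing $c$-term. Your bookkeeping therefore gives a bound without it and with smaller constants, which implies the stated one because $\KL$ and $\mathfrak{E}_{\alpha'}$ are nonnegative.
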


We here present the proof of Theorem~\ref{thm:forward-KL-discrete}, based on Lemma \ref{lem:frozen_delta_cross_lip_sald}. 
 
\begin{proof}[Proof of Theorem~\ref{thm:forward-KL-discrete}]
    Fix \(k\in\{0,\dots,K-1\}\). For \(s\in[s_k,s_{k+1}]\), $\hat\rho_{s_k}=\rho_k^\eta,~\hat\rho_{s_{k+1}}=\rho_{k+1}^\eta$.

    Differentiating \(\KL(\hat\rho_s\|\tilde\pi_s)\) with respect to \(s\), we get
    \begin{equation}\label{eq:KL-derivative-0-discrete}
        \frac{\dd}{\dd s}\KL(\hat\rho_s\|\tilde\pi_s)
        =
        \int \partial_s\hat\rho_s\log\frac{\hat\rho_s}{\tilde\pi_s}\,\dd x
        -
        \int \frac{\hat\rho_s}{\tilde\pi_s}\partial_s\tilde\pi_s\,\dd x,
    \end{equation}
    since \(\int \partial_s\hat\rho_s\,\dd x=0\).

    We define    
    \[
        \bar b_{k,s}(x)
        :=
        \E\!\left[
        \nabla\log\pi_{t_k}(X_k^\eta)\,\middle|\,\hat X_s=x
        \right],
    \]

    Using the Fokker--Planck equation associated with~\eqref{eq:frozen_interp_terminal_disc_prop_additive_final},
    \[
        \partial_s\hat\rho_s
        =
        -\nabla\cdot(\hat\rho_s\bar b_{k,s})
        +\Delta \hat\rho_s,
        \qquad s\in[s_k,s_{k+1}],
    \]
    together with
    \[
        \Delta \hat\rho_s
        =
        \nabla\cdot(\hat\rho_s\nabla\log\hat\rho_s)
        =
        \nabla\cdot\!\left(\hat\rho_s\nabla\log\frac{\hat\rho_s}{\tilde\pi_s}\right)
        +
        \nabla\cdot(\hat\rho_s\nabla\log\tilde\pi_s),
    \]
    we obtain
    \begin{align*}
        \partial_s\hat\rho_s
        &=
        -\nabla\cdot(\hat\rho_s\bar b_{k,s})
        +\nabla\cdot\!\left(\hat\rho_s\nabla\log\frac{\hat\rho_s}{\tilde\pi_s}\right)
        +\nabla\cdot(\hat\rho_s\nabla\log\tilde\pi_s) \\
        &=
        \nabla\cdot\!\left(\hat\rho_s\nabla\log\frac{\hat\rho_s}{\tilde\pi_s}\right)
        +\nabla\cdot\!\left(\hat\rho_s(\nabla\log\tilde\pi_s-\bar b_{k,s})\right),
        \qquad s\in[s_k,s_{k+1}].
    \end{align*}
    Hence the first term of Eq.~\eqref{eq:KL-derivative-0-discrete} becomes
    \begin{align}\label{eq:KL-derivative-1-discrete}
        \int \partial_s\hat\rho_s\log\frac{\hat\rho_s}{\tilde\pi_s}\,\dd x
        &=
        \int \nabla\cdot\!\left(\hat\rho_s\nabla\log\frac{\hat\rho_s}{\tilde\pi_s}\right)
        \log\frac{\hat\rho_s}{\tilde\pi_s}\,\dd x
        \notag\\
        &\quad+
        \int \nabla\cdot\!\left(\hat\rho_s(\nabla\log\tilde\pi_s-\bar b_{k,s})\right)
        \log\frac{\hat\rho_s}{\tilde\pi_s}\,\dd x
        \notag\\
        &=
        -\int \hat\rho_s\left\|\nabla\log\frac{\hat\rho_s}{\tilde\pi_s}\right\|^2\,\dd x
        -\int \hat\rho_s
        \Bigl\langle
        \nabla\log\tilde\pi_s-\bar b_{k,s},
        \nabla\log\frac{\hat\rho_s}{\tilde\pi_s}
        \Bigr\rangle\,\dd x
        \notag\\
        &=
        -\FI(\hat\rho_s\|\tilde\pi_s)
        -\int \hat\rho_s
        \Bigl\langle
        \nabla\log\tilde\pi_s-\bar b_{k,s},
        \nabla\log\frac{\hat\rho_s}{\tilde\pi_s}
        \Bigr\rangle\,\dd x.
    \end{align}

    Let \(v_t\) be a velocity field that generate $\pi_t$ satisfying $\partial_t\pi_t+\nabla\cdot(v_t\pi_t)=0$, $t\in[0,T]$.
    As in the continuous-time proof, \(\tilde v_s:=\dot t(s)\,v_{t(s)}\) generates \(\tilde\pi_s\), namely $\partial_s\tilde\pi_s+\nabla\cdot(\tilde v_s\tilde\pi_s)=0$.

    Therefore, the second term of Eq.~\eqref{eq:KL-derivative-0-discrete} can be evaluated as
    \begin{align}\label{eq:KL-derivative-2-discrete}
       - \int \frac{\hat\rho_s}{\tilde\pi_s}\partial_s \tilde\pi_s \dd x
       &= \int \frac{\hat\rho_s}{\tilde\pi_s} \nabla\cdot (\tilde v_s \tilde\pi_s) \dd x \notag \\
       &= - \int \nabla \frac{\hat\rho_s}{\tilde\pi_s}^{\!\top} (\tilde v_s \tilde\pi_s) \dd x \notag \\
       &= - \int \hat\rho_s
       \Bigl\langle
       \nabla \log \frac{\hat\rho_s}{\tilde\pi_s},
       \tilde v_s
       \Bigr\rangle
       \dd x.
    \end{align}

    Combining Eq.~\eqref{eq:KL-derivative-0-discrete},
    \eqref{eq:KL-derivative-1-discrete}, and
    \eqref{eq:KL-derivative-2-discrete}, we obtain
    \begin{align}\label{eq:KL-derivative-3-discrete}
        \frac{\dd}{\dd s}\KL(\hat\rho_s\|\tilde\pi_s)
        &=
        -\FI(\hat\rho_s\|\tilde\pi_s)
        -\int \hat\rho_s
        \Bigl\langle
        \nabla \log \tilde\pi_s-\bar b_{k,s},
        \nabla \log \frac{\hat\rho_s}{\tilde\pi_s}
        \Bigr\rangle
        \dd x
        \notag\\
        &\quad
        -\int \hat\rho_s
        \Bigl\langle
        \nabla \log \frac{\hat\rho_s}{\tilde\pi_s},
        \tilde v_s
        \Bigr\rangle
        \dd x.
    \end{align}

    By Lemma \ref{lem:frozen_delta_cross_lip_sald} as well as Cauchy--Schwarz and Young's inequality,
    \begin{align}
        -\int \hat\rho_s
        \Bigl\langle
        \nabla \log \tilde\pi_s-\bar b_{k,s},
        \nabla \log \frac{\hat\rho_s}{\tilde\pi_s}
        \Bigr\rangle
        \dd x
        &\le
        \frac{1}{4} \FI(\hat\rho_s\|\tilde\pi_s)
        +
        2\eta^2\alpha'^{-1}\Gamma(t(s)) \KL(\hat\rho_s\|\tilde\pi_s) 
        + 2\eta \Delta(t(s)),
        \label{eq:frozen-cross-bound-discrete}
        \\
        -\int \hat\rho_s
        \Bigl\langle
        \nabla \log \frac{\hat\rho_s}{\tilde\pi_s},
        \tilde v_s
        \Bigr\rangle
        \dd x
        &\le
        \frac14 \FI(\hat\rho_s\|\tilde\pi_s)
        +
        \|\tilde v_s\|_{L^2(\hat\rho_s)}^2.
        \label{eq:moving-cross-bound-discrete}
    \end{align}
    Substituting these estimates into \eqref{eq:KL-derivative-3-discrete}, we get
    \begin{align}\label{eq:KL-derivative-5-discrete}
        \frac{\dd}{\dd s}\KL(\hat\rho_s\|\tilde\pi_s)
        &\le
        -\frac12 \FI(\hat\rho_s\|\tilde\pi_s)
        +\|\tilde v_s\|_{L^2(\hat\rho_s)}^2
        +2\eta^2\alpha'^{-1} \Gamma(t(s)) \KL(\hat\rho_s\|\tilde\pi_s) +2 \eta\Delta(t(s)) \notag \\
        &\le
        -\cLSI{t(s)}\KL(\hat\rho_s\|\tilde\pi_s)
        +\|\tilde v_s\|_{L^2(\hat\rho_s)}^2
        +2 \eta^2\alpha'^{-1}\Gamma(t(s)) \KL(\hat\rho_s\|\tilde\pi_s) +2 \eta \Delta(t(s)),
    \end{align}
    where we applied LSI on $\tilde{\pi}_s$.

    We now estimate $\|\tilde v_s\|_{L^2(\hat\rho_s)}^2$ by Lemma~\ref{lem:dv_variation}. Since $\tilde v_s=\dot t(s) v_{t(s)}$, Lemma~\ref{lem:dv_variation} with $\nu=\hat\rho_s,~\mu=\tilde\pi_s,~Z=\alpha\|v_{t(s)}\|^2$
    yields
    \begin{align}\label{eq:dv-v-term-discrete}
        \|\tilde v_s\|_{L^2(\hat\rho_s)}^2
        &=
        \dot t(s)^2\|v_{t(s)}\|_{L^2(\hat\rho_s)}^2
        \notag\\
        &\le
        \dot t(s)^2
        \left(
        \frac{1}{\alpha}\KL(\hat\rho_s\|\tilde\pi_s)
        +\frac{1}{\alpha}\log \E_{\tilde\pi_s}\!\left[\exp\!\bigl(\alpha\|v_{t(s)}\|^2\bigr)\right]
        \right)
        \notag\\
        &=
        \dot t(s)^2
        \left(
        \frac{1}{\alpha}\KL(\hat\rho_s\|\tilde\pi_s)
        +\mathfrak{E}_{\alpha}(\pi_{t(s)}, v_{t(s)})
        \right).
    \end{align}

    Substituting \eqref{eq:dv-v-term-discrete} into
    \eqref{eq:KL-derivative-5-discrete}, we obtain
    \begin{align}\label{eq:KL-derivative-6-discrete}
        \frac{\dd}{\dd s}\KL(\hat\rho_s\|\tilde\pi_s)
        &\le
        -\left(
        \cLSI{t(s)}
        -\dot t(s)^2 \alpha^{-1}
        - 2\eta^2\alpha'^{-1} \Gamma(t(s))
        \right)\KL(\hat\rho_s\|\tilde\pi_s)
        \notag\\
        &\quad+
        \dot t(s)^2 \mathfrak{E}_{\alpha}(\pi_{t(s)},v_{t(s)})
        + 2\eta\Delta(t(s)).
    \end{align}

    Since
    \[
        \frac{\dd}{\dd t}\KL(\hat\rho_{s(t)}\|\pi_t)
        = \frac{\dd}{\dd t}\KL(\hat\rho_{s(t)}\|\tilde{\pi}_{s(t)})
        = \dot s(t) \frac{\dd}{\dd s}\KL(\hat\rho_s\|\tilde\pi_s)\Big|_{s=s(t)},
    \]
    Eq.~\eqref{eq:KL-derivative-6-discrete} implies, for \(t\in[t_k,t_{k+1}]\),
    \begin{align}\label{eq:KL-derivative-7-discrete}
        \frac{\dd}{\dd t}\KL(\hat\rho_{s(t)}\|\pi_t)
        &\le
        -\left(
        \dot s(t)\cLSI{t}
        -\dot s(t)^{-1}\alpha^{-1}
        - 2\dot s(t) \eta^2\alpha'^{-1}\Gamma(t)
        \right)\KL(\hat\rho_{s(t)}\|\pi_t)
        \notag\\
        &\quad+
        \dot s(t)^{-1}\mathfrak{E}_{\alpha}(\pi_t,v_t)
        +
         2 \dot s(t)\Delta(t)\eta.
    \end{align}

    Applying Lemma~\ref{lem:gronwall}, we get

    \begin{align*}
        \KL(\rho_K^\eta\|\pi_T)
        &\le
        \exp\!\left(
        -\int_0^{T}
        \Bigl(
        \dot s(t)\cLSI{t}
        -\dot s(t)^{-1}\alpha^{-1}
        - 
        2\dot s(t)\eta^2\alpha'^{-1}\Gamma(t)
        \Bigr)\dd t
        \right)
        \KL(\rho_0\|\pi_0)
        \\
        &\quad+
        \int_0^{T}
        \exp\!\left(
        -\int_t^{T}
        \Bigl(
        \dot s(u)\cLSI{u}
        -\dot s(u)^{-1}\alpha^{-1}
        - 2\dot s(u)\eta^2\alpha'^{-1}\Gamma(u)
        \Bigr)\dd u
        \right)
        \\
        &\qquad\qquad\cdot
        \Bigl(
        \dot s(t)^{-1}\mathfrak{E}_{\alpha}(\pi_t,v_t)
        +
         2\dot s(t)\Delta(t)\eta
        \Bigr)\dd t.
    \end{align*}
\end{proof}

\section{Analysis of Velocity-Aware SALD}\label{app:convergence_velocityAwareSALD}
In this section, we provide the convergence analysis of VA-SALD for guided generation described in Section 

We first clarify the residual term \ref{eq:residual-term} that is not captured by pretrained velocity field $u_t$.
Let \((p_t)_{t\in[0,T]}\) be a smooth positive density path on \(\R^d\), and let
\(u_t:\R^d\to\R^d\) be a velocity field generating \((p_t)\), i.e.
\[
\partial_t p_t+\nabla\cdot(p_tu_t)=0.
\]
Let \(f_t:\R^d\to\R\) be a smooth guide, and define
\[
Z_t:=\int_{\R^d} p_t(x)e^{-f_t(x)}\,dx,
\qquad
\pi_t(x):=\frac{1}{Z_t}p_t(x)e^{-f_t(x)}.
\]
We also set
\[
g_t(x):=\partial_t f_t(x)+\nabla f_t(x)^\top u_t(x).
\]

\begin{proposition}[Residual identity for the guided path]
\label{prop:guided_path_residual}
For every \(t\in[0,T]\),
\begin{equation}
\label{eq:guided_path_residual}
\partial_t \pi_t+\nabla\cdot(\pi_t u_t)
=
-\pi_t\Bigl(g_t-\E_{\pi_t}[g_t]\Bigr).
\end{equation}
In particular, the right-hand side has zero \(\pi_t\)-mean:
\[
\int_{\R^d}\Bigl(g_t-\E_{\pi_t}[g_t]\Bigr)\pi_t\,dx=0.
\]
\end{proposition}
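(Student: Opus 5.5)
The plan is a direct computation with the log-density $\log\pi_t=\log p_t-f_t-\log Z_t$. I will differentiate this in time and use the continuity equation for $p_t$ to rewrite $\partial_t p_t$. Then I will expand $\nabla\cdot(\pi_t u_t)$ by the product rule, so that the terms involving $u_t$ and $\nabla\log p_t$ cancel.

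\emph{Time derivative.} Since $\pi_t=p_te^{-f_t}/Z_t$,
\[
\partial_t\pi_t=\pi_t\Bigl(\partial_t\log p_t-\partial_t f_t-\tfrac{\dot Z_t}{Z_t}\Bigr).
\]
From $\partial_t p_t=-\nabla\cdot(p_tu_t)$ we get
\[
\partial_t\log p_t=-\nabla\cdot u_t-\nabla\log p_t^\top u_t .
\]

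\emph{Flux term.} By the product rule,
\[
\nabla\cdot(\pi_tu_t)=\pi_t\bigl(\nabla\cdot u_t+\nabla\log\pi_t^\top u_t\bigr),
\qquad
\nabla\log\pi_t=\nabla\log p_t-\nabla f_t .
\]
Adding the two expressions, the $\nabla\cdot u_t$ terms cancel, and so do the $\nabla\log p_t^\top u_t$ terms. What remains is
\[
\partial_t\pi_t+\nabla\cdot(\pi_tu_t)=-\pi_t\bigl(\partial_t f_t+\nabla f_t^\top u_t\bigr)-\pi_t\tfrac{\dot Z_t}{Z_t}=-\pi_t g_t-\pi_t\tfrac{\dot Z_t}{Z_t}.
\]

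\emph{Identifying the constant.} I will compute $\dot Z_t$ directly rather than by integrating the identity:
\[
\dot Z_t=\int\bigl(\partial_tp_t-p_t\partial_tf_t\bigr)e^{-f_t}\,dx .
\]
Substituting $\partial_tp_t=-\nabla\cdot(p_tu_t)$ and integrating by parts gives
\[
\int-\nabla\cdot(p_tu_t)\,e^{-f_t}\,dx=\int p_tu_t^\top\nabla(e^{-f_t})\,dx=-\int p_te^{-f_t}\nabla f_t^\top u_t\,dx .
\]
Hence $\dot Z_t=-Z_t\,\E_{\pi_t}[g_t]$, that is, $\dot Z_t/Z_t=-\E_{\pi_t}[g_t]$. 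Substituting this into the previous display yields \eqref{eq:guided_path_residual}. The zero-mean statement is then immediate, since $\int(g_t-\E_{\pi_t}[g_t])\pi_t\,dx=\E_{\pi_t}[g_t]-\E_{\pi_t}[g_t]=0$.

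\emph{Main obstacle.} The only delicate step is justifying the differentiation of $Z_t$ under the integral and the integration by parts, where the boundary terms must vanish. I will handle this by assuming sufficient decay and integrability of $p_tu_te^{-f_t}$ and of $p_t|\partial_t f_t|e^{-f_t}$, consistent with the smoothness assumptions. Alternatively, I will read the identity in the weak sense, which is the sense in which the continuity equation is meant. Everything else is bookkeeping.
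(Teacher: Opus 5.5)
Your proposal is correct and follows essentially the same route as the paper. Both compute $\dot Z_t/Z_t=-\E_{\pi_t}[g_t]$ by integration by parts, then differentiate $\pi_t$ and expand $\nabla\cdot(\pi_t u_t)$ so that the transport terms cancel. The only cosmetic difference is that you work in log-coordinates and split $\nabla\cdot(p_tu_t)/p_t$ into $\nabla\cdot u_t+\nabla\log p_t^\top u_t$. The paper instead cancels the block $Z_t^{-1}\nabla\cdot(p_tu_t)e^{-f_t}$ directly, which avoids needing $u_t$ itself to be differentiable.
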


\begin{proof}
Since $\pi_t=\frac{1}{Z_t}p_t e^{-f_t}$, we first compute the derivative of \(Z_t\). Using
\(\partial_t p_t=-\nabla\cdot(p_tu_t)\) and integration by parts,
\begin{align*}
\dot Z_t
&=
\int_{\R^d}\partial_t\!\bigl(p_t e^{-f_t}\bigr)\,dx \\
&=
\int_{\R^d}(\partial_t p_t)e^{-f_t}\,dx
-
\int_{\R^d}p_t e^{-f_t}\partial_t f_t\,dx \\
&=
-\int_{\R^d}\nabla\cdot(p_tu_t)e^{-f_t}\,dx
-
\int_{\R^d}p_t e^{-f_t}\partial_t f_t\,dx \\
&=
\int_{\R^d}p_tu_t\cdot\nabla(e^{-f_t})\,dx
-
\int_{\R^d}p_t e^{-f_t}\partial_t f_t\,dx \\
&=
-\int_{\R^d}p_t e^{-f_t}
\Bigl(\nabla f_t^\top u_t+\partial_t f_t\Bigr)\,dx \\
&=
- Z_t\,\E_{\pi_t}[g_t].
\end{align*}
Hence
\begin{equation}
\label{eq:logZ_derivative}
\frac{\dot Z_t}{Z_t}=-\E_{\pi_t}[g_t].
\end{equation}

Next, differentiate \(\pi_t\) and use again \(\partial_t p_t=-\nabla\cdot(p_tu_t)\):
\begin{align*}
\partial_t \pi_t
&=
-\frac{\dot Z_t}{Z_t}\pi_t
+\frac{1}{Z_t}(\partial_t p_t)e^{-f_t}
-\frac{1}{Z_t}p_t e^{-f_t}\partial_t f_t \\
&=
-\frac{\dot Z_t}{Z_t}\pi_t
-\frac{1}{Z_t}\nabla\cdot(p_tu_t)e^{-f_t}
-\pi_t\,\partial_t f_t .
\end{align*}
Also,
\begin{align*}
\nabla\cdot(\pi_t u_t)
&=
\nabla\cdot\!\left(\frac{1}{Z_t}p_t e^{-f_t}u_t\right) \\
&=
\frac{1}{Z_t}\nabla\cdot(p_tu_t)e^{-f_t}
+\frac{1}{Z_t}p_tu_t\cdot\nabla(e^{-f_t}) \\
&=
\frac{1}{Z_t}\nabla\cdot(p_tu_t)e^{-f_t}
-\pi_t\,\nabla f_t^\top u_t .
\end{align*}
Adding the two identities, the divergence terms cancel and we obtain
\[
\partial_t \pi_t+\nabla\cdot(\pi_t u_t)
=
-\frac{\dot Z_t}{Z_t}\pi_t
-\pi_t\Bigl(\partial_t f_t+\nabla f_t^\top u_t\Bigr).
\]
Substituting \eqref{eq:logZ_derivative} gives
\[
\partial_t \pi_t+\nabla\cdot(\pi_t u_t)
=
\pi_t\,\E_{\pi_t}[g_t]-\pi_t g_t
=
-\pi_t\Bigl(g_t-\E_{\pi_t}[g_t]\Bigr),
\]
which is exactly \eqref{eq:guided_path_residual}. The mean-zero property is immediate by integration.
\end{proof}

\subsection{Analysis of Continuous-Time Velocity-Aware SALD}\label{sec:continuous_time_VA_SALD}
We now extend VA-SALD further by allowing the vector field $u$ to be replaced by a general vector field $c$:

\begin{equation}
\label{eq:general_moving_target_SALD}
\dd X_s
=
\left(
\dot t(s)c_{t(s)}(X_s)
+
\frac{\sigma_{t(s)}^2}{2}\nabla\log\pi_{t(s)}(X_s)
\right)\dd s
+
\sigma_{t(s)}\dd W_s,
\qquad X_0\sim \rho_0.
\end{equation}
\begin{theorem}\label{thm:general-moving-target-SALD}
    Let $(v_t)_{t \in [0,T]}$ be a transport velocity field of $\pi_t$ and set $m_t(x) := v_t(x) - c_t(x)$.
    Assume that \(\pi_t\) satisfies LSI with constant \(\cLSI{t}\ge 0\), and that there exists \(\alpha_0>0\) such that $\mathfrak{E}_{\alpha_0}(\pi_t,m_t)<+\infty,~\forall t\in[0,T]$. Then, for any \(\alpha\in(0,\alpha_0]\), the law \((\rho_s)_{s\in[0,S]}\) of $(X_s)_{s\in[0,S]]}$ defined by \eqref{eq:general_moving_target_SALD} satisfies
    \begin{align}\label{eq:general_moving_target_KL_bound}
        \KL(\rho_S\|\pi_T)
        &\le
        \exp\left(
        -\int_0^T \frac{\sigma_t^2}{2}\dot s(t)\cLSI{t}\,\dd t
        \right)
        \exp\left(
        \int_0^T \sigma_t^{-2}\dot s(t)^{-1}\alpha^{-1}\,\dd t
        \right)
        \KL(\rho_0\|\pi_0)
        \nonumber\\
        &\quad+
        \int_0^T
        \exp\left(
        \int_t^T \sigma_u^{-2}\dot s(u)^{-1}\alpha^{-1}\,\dd u
        \right)
        \sigma_t^{-2}\dot s(t)^{-1}\mathfrak{E}_{\alpha}(\pi_t,m_t)\,\dd t.
    \end{align}
    Notably, if $c_t$ has the \textit{full knowledge} of the moving target $\pi_t$'s velocity, namely
    \begin{equation*}\label{eq:self_cancel_condition_general}
        c_t(x)=v_t(x)
        \qquad \text{for all }(t,x)\in[0,T]\times\R^d,
    \end{equation*}
    then $m_t(x)\equiv 0,~\mathfrak{E}_{\alpha}(\pi_t,m_t)=0$,
    and the bound \eqref{eq:general_moving_target_KL_bound} reduces to the pure contraction estimate
    \begin{align}\label{eq:general_moving_target_pure_contraction}
        \KL(\rho_S\|\pi_T)
        &\le
        \exp\left(
        -\int_0^T \frac{\sigma_t^2}{2}\dot s(t)\cLSI{t}\,\dd t
        \right)
        \exp\left(
        \int_0^T \sigma_t^{-2}\dot s(t)^{-1}\alpha^{-1}\,\dd t
        \right)
        \KL(\rho_0\|\pi_0).
    \end{align}
\end{theorem}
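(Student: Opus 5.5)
The plan is to repeat the marginal KL-differentiation argument used for Theorem~\ref{thm:forward-KL}, now applied to the generalized dynamics \eqref{eq:general_moving_target_SALD}. Write $\tilde\pi_s=\pi_{t(s)}$, $\tilde\sigma_s=\sigma_{t(s)}$, $\tilde c_s=\dot t(s)c_{t(s)}$, and $\tilde v_s=\dot t(s)v_{t(s)}$. As before, $\tilde v_s$ generates $\tilde\pi_s$.

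\textbf{Fokker--Planck equation.} Using $\frac{\tilde\sigma_s^2}{2}\Delta\rho_s=\frac{\tilde\sigma_s^2}{2}\nabla\cdot(\rho_s\nabla\log\rho_s)$, the law of \eqref{eq:general_moving_target_SALD} satisfies
\[
\partial_s\rho_s=\nabla\cdot\Bigl(\rho_s\tfrac{\tilde\sigma_s^2}{2}\nabla\log\tfrac{\rho_s}{\tilde\pi_s}\Bigr)-\nabla\cdot(\rho_s\tilde c_s).
\]

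\textbf{Differentiating the KL.} Exactly as in \eqref{eq:KL-derivative-0}, differentiate $\KL(\rho_s\|\tilde\pi_s)$ and integrate by parts. The first term gives $-\frac{\tilde\sigma_s^2}{2}\FI(\rho_s\|\tilde\pi_s)$ plus the cross term $\int\rho_s\langle\tilde c_s,\nabla\log\frac{\rho_s}{\tilde\pi_s}\rangle$. The second term, as in \eqref{eq:KL-derivative-2}, gives $-\int\rho_s\langle\nabla\log\frac{\rho_s}{\tilde\pi_s},\tilde v_s\rangle$. The two cross terms combine into
\[
-\dot t(s)\int\rho_s\Bigl\langle\nabla\log\tfrac{\rho_s}{\tilde\pi_s},m_{t(s)}\Bigr\rangle.
\]
This cancellation of the known part $c$ of the velocity is the structural point of the proof.

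\textbf{Young's inequality and LSI.} Apply Cauchy--Schwarz and Young with weight $\tilde\sigma_s^2/4$:
\[
\le\frac{\tilde\sigma_s^2}{4}\FI+\tilde\sigma_s^{-2}\dot t(s)^2\|m_{t(s)}\|_{L^2(\rho_s)}^2.
\]
Then LSI turns the remaining $-\frac{\tilde\sigma_s^2}{4}\FI$ into $-\frac{\tilde\sigma_s^2}{2}\cLSI{t(s)}\KL$.

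\textbf{Donsker--Varadhan.} Use Lemma~\ref{lem:dv_variation} with $Z=\alpha\|m\|^2$ to get $\|m\|_{L^2(\rho_s)}^2\le\alpha^{-1}\KL+\mathfrak E_\alpha(\pi_{t(s)},m_{t(s)})$.

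\textbf{Change of variables and Gr\"onwall.} Convert to the $t$ variable by multiplying by $\dot s(t)$ and using $\dot t=\dot s^{-1}$. This gives
\[
\frac{\dd}{\dd t}\KL\le-\Bigl(\tfrac{\sigma_t^2}{2}\dot s\cLSI{t}-\sigma_t^{-2}\dot s^{-1}\alpha^{-1}\Bigr)\KL+\sigma_t^{-2}\dot s^{-1}\mathfrak E_\alpha(\pi_t,m_t).
\]
Lemma~\ref{lem:gronwall} then yields \eqref{eq:general_moving_target_KL_bound}, after dropping $\exp(-\int_t^T\dot s\,\sigma^2\cLSI{}/2)\le1$ in the second term. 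The special case $c=v$ follows immediately, since $m\equiv0$.

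\textbf{Main obstacle.} The step needing care is getting the constants consistent with the statement. The Young weight must split $\frac{\tilde\sigma_s^2}{2}\FI$ into halves so that the LSI factor is exactly $\frac{\sigma_t^2}{2}\dot s\,\cLSI{t}$ and the forcing coefficient is $\sigma_t^{-2}\dot s^{-1}$. One must also track the time-dependent $\sigma$ through the Fokker--Planck rewriting. Regularity needed to justify the integrations by parts and the differentiability of the KL is assumed, as in the paper.
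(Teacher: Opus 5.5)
Your proposal is correct and follows the paper's proof: Fokker--Planck rewriting, differentiating $\KL(\rho_s\|\pi_{t(s)})$ so that $c$ cancels against $v$ and leaves $-\dot t(s)\int \rho_s\, m_{t(s)}^\top \nabla\log\frac{\rho_s}{\pi_{t(s)}}\,\dd x$, Young's inequality leaving $-\frac{\sigma^2}{4}\FI$, then LSI, Donsker--Varadhan, the change of variables to $t$, and Gr\"onwall. Your constants match the paper's; the only differences are that the paper changes variables before applying LSI, and that it likewise obtains the stated bound by dropping $\exp\bigl(-\int_t^T \frac{\sigma_u^2}{2}\dot s(u)\cLSI{u}\,\dd u\bigr)\le 1$ in the second term.
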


\begin{proof}[Proof of Theorem~\ref{thm:general-moving-target-SALD}]
Differentiating \(\KL(\rho_s\|\pi_{t(s)})\) in time \(s\), we get
\begin{equation}
\label{eq:general_KL_derivative_0}
\frac{\dd}{\dd s}\KL(\rho_s\|\pi_{t(s)})
=
\int \partial_s \rho_s \log\frac{\rho_s}{\pi_{t(s)}}\,\dd x
-
\int \frac{\rho_s}{\pi_{t(s)}}\partial_s \pi_{t(s)}\,\dd x
\end{equation}
since \(\int \partial_s \rho_s\,\dd x=0\).

Using the SDE \eqref{eq:general_moving_target_SALD}, the law \(\rho_s\) satisfies the Fokker--Planck equation
\begin{equation}
\label{eq:general_moving_target_FP}
\partial_s \rho_s
=
-\nabla\cdot\left(\dot t(s)c_{t(s)}\,\rho_s\right)
+
\frac{\sigma_{t(s)}^2}{2}
\nabla\cdot\left(
\rho_s\nabla\log\frac{\rho_s}{\pi_{t(s)}}
\right).
\end{equation}
Let $A_s:=\nabla\log\frac{\rho_s}{\pi_{t(s)}}$.
Then the first term in \eqref{eq:general_KL_derivative_0} becomes
\begin{align}
\label{eq:general_KL_derivative_1}
\int \partial_s \rho_s \log\frac{\rho_s}{\pi_{t(s)}}\,\dd x
&=
\int
\left[
-\nabla\cdot\left(\dot t(s)c_{t(s)}\,\rho_s\right)
+
\frac{\sigma_{t(s)}^2}{2}\nabla\cdot(\rho_s A_s)
\right]
\log\frac{\rho_s}{\pi_{t(s)}}\,\dd x
\nonumber\\
&=
\dot t(s)\int \rho_s\, c_{t(s)}^\top A_s\,\dd x
-
\frac{\sigma_{t(s)}^2}{2}\int \rho_s \|A_s\|^2\,\dd x
\nonumber\\
&=
\dot t(s)\int \rho_s\, c_{t(s)}^\top A_s\,\dd x
-
\frac{\sigma_{t(s)}^2}{2}\FI(\rho_s\|\pi_{t(s)}).
\end{align}

Next, since \(v_t\) generates \(\pi_t\), the rescaled velocity $\tilde v_s:=\dot t(s)\,v_{t(s)}$ generates \(\pi_{t(s)}\), because
\[
\partial_s \pi_{t(s)}
=
\dot t(s)\partial_t \pi_t\big|_{t=t(s)}
=
-\dot t(s)\nabla\cdot(v_{t(s)}\pi_{t(s)})
=
-\nabla\cdot(\tilde v_s\pi_{t(s)}).
\]
Therefore the second term in \eqref{eq:general_KL_derivative_0} is
\begin{align}
\label{eq:general_KL_derivative_2}
-\int \frac{\rho_s}{\pi_{t(s)}}\partial_s \pi_{t(s)}\,\dd x
&=
\int \frac{\rho_s}{\pi_{t(s)}} \nabla\cdot(\tilde v_s\pi_{t(s)})\,\dd x
\nonumber\\
&=
-\int \rho_s\, \tilde v_s^\top A_s\,\dd x.
\end{align}

Combining \eqref{eq:general_KL_derivative_0}, \eqref{eq:general_KL_derivative_1}, and \eqref{eq:general_KL_derivative_2}, we obtain
\[
\frac{\dd}{\dd s}\KL(\rho_s\|\pi_{t(s)})
=
-\frac{\sigma_{t(s)}^2}{2}\FI(\rho_s\|\pi_{t(s)})
-
\dot t(s)\int \rho_s
\left(
v_{t(s)}-c_{t(s)}
\right)^\top A_s\,\dd x.
\]
That is,
\[
\frac{\dd}{\dd s}\KL(\rho_s\|\pi_{t(s)})
=
-\frac{\sigma_{t(s)}^2}{2}\FI(\rho_s\|\pi_{t(s)})
-
\dot t(s)\int \rho_s\, m_{t(s)}^\top A_s\,\dd x.
\]
By H\"older's inequality and \(ab\le \frac{\varepsilon}{2}a^2+\frac{1}{2\varepsilon}b^2\) with \(\varepsilon=2\dot{t}(s) / \sigma_{t(s)}^2\), 
\begin{align}
\label{eq:general_KL_derivative_3}
\frac{\dd}{\dd s}\KL(\rho_s\|\pi_{t(s)})
&\le
-\frac{\sigma_{t(s)}^2}{4}\FI(\rho_s\|\pi_{t(s)})
+
\sigma_{t(s)}^{-2}\dot t(s)^2
\int \rho_s \|m_{t(s)}\|^2\,\dd x.
\end{align}

Now define $K(t):=\KL(\rho_{s(t)}\|\pi_t) = \KL(\rho_{s(t)} \| \tilde{\pi}_{s(t)})$.
Multiplying \eqref{eq:general_KL_derivative_3} by \(\dot s(t)\), and using \(\dot t(s(t))=\dot s(t)^{-1}\), we obtain
\begin{align*}
\frac{\dd}{\dd t}K(t)
&\le
-\frac{\sigma_t^2}{4}\dot s(t)\FI(\rho_{s(t)}\|\pi_t)
+
\sigma_t^{-2}\dot s(t)^{-1}
\|m_t\|_{L_2(\rho_{s(t)})}^2.
\end{align*}
Combining this with the LSI for \(\pi_t\), we get
\begin{align*}
\frac{\dd}{\dd t}K(t)
&\le
-\frac{\sigma_t^2}{2}\dot s(t)\cLSI{t}\,K(t)
+
\sigma_t^{-2}\dot s(t)^{-1}
\|m_t\|_{L_2(\rho_{s(t)})}^2.
\end{align*}

The last term can be evaluated by Lemma~\ref{lem:dv_variation}:
\[
\|m_t\|_{L_2(\rho_{s(t)})}^2
\le
\frac{1}{\alpha}\KL(\rho_{s(t)}\|\pi_t)
+
\frac{1}{\alpha}\log\E_{\pi_t}\!\left[
\exp\!\bigl(\alpha \|m_t\|^2\bigr)
\right]
=
\frac{1}{\alpha}K(t)+\mathfrak{E}_{\alpha}(\pi_t,m_t).
\]
Therefore
\[
\frac{\dd}{\dd t}K(t)
\le
-\left(
\frac{\sigma_t^2}{2}\dot s(t)\cLSI{t}
-
\sigma_t^{-2}\dot s(t)^{-1}\alpha^{-1}
\right)K(t)
+
\sigma_t^{-2}\dot s(t)^{-1}\mathfrak{E}_{\alpha}(\pi_t,m_t).
\]
Applying Lemma~\ref{lem:gronwall}, we obtain
\begin{align*}
\KL(\rho_S\|\pi_T)
&\le
\exp\left(
-\int_0^T
\left(
\frac{\sigma_t^2}{2}\dot s(t)\cLSI{t}
-
\sigma_t^{-2}\dot s(t)^{-1}\alpha^{-1}
\right)\dd t
\right)\KL(\rho_0\|\pi_0)
\\
&\quad+
\int_0^T
\exp\left(
-\int_t^T
\left(
\frac{\sigma_u^2}{2}\dot s(u)\cLSI{u}
-
\sigma_u^{-2}\dot s(u)^{-1}\alpha^{-1}
\right)\dd u
\right)
\sigma_t^{-2}\dot s(t)^{-1}\mathfrak{E}_{\alpha}(\pi_t, m_t) \dd t.
\end{align*}
This inequality yields Eq.~\eqref{eq:general_moving_target_KL_bound}.

Finally, if \eqref{eq:self_cancel_condition_general} holds, then \(m_t\equiv 0\), and hence
\[
\mathfrak{E}_{\alpha}(\pi_t, m_t)
=
\frac{1}{\alpha}\log\E_{\pi_t}\exp\!\bigl(\alpha\|m_t\|^2\bigr)
=
0.
\]
Substituting this into \eqref{eq:general_moving_target_KL_bound} yields
\eqref{eq:general_moving_target_pure_contraction}.
\end{proof}

\begin{proof}[Proof of Theorem~\ref{thm:unified-forward-KL}]
 Theorem~\ref{thm:unified-forward-KL} immediately follows as a special case of Theorem~\ref{thm:general-moving-target-SALD} by setting $c_t\leftarrow u_t$.
\end{proof}

\subsection{Analysis of Discrete-Time Velocity-Aware SALD}\label{sec:discrete_time_VA_SALD}

In this section, we present the convergence analysis of VA-SALD with linear slowdown.

The Euler--Maruyama discretization of the general form of VA-SALD \eqref{eq:general_moving_target_SALD} is defined as follows.
\paragraph{Discrete-time general VA-SALD.} $X_0^\eta\sim \rho_0,~\xi_k\stackrel{\mathrm{i.i.d.}}{\sim}\mathcal N(0,I_d)$,
\begin{equation} \label{eq:SALD_general_EM}
 \begin{aligned}
 X_{k+1}^\eta
 &=
 X_k^\eta
 +
 \eta\left(
 \dot t_k c_{t_k}(X_k^\eta)
 +
 \frac{\sigma_{t_k}^2}{2}\nabla\log \pi_{t_k}(X_k^\eta) \right)
 +
 \sigma_{t_k}\sqrt{\eta}\,\xi_k,
 \end{aligned}
\end{equation}
where $t_k = t(k\eta)$, and $s_k = k\eta$.

Next, we introduce the continuous interpolation of the above step as follows. For \(t\in[0,T]\), let \(k(t)\in\{0,\dots,K-1\}\) be the unique index such that
\(t\in[t_{k(t)},t_{k(t)+1})\), and define the piecewise constant diffusion level
\[
\sigma_\eta(t):=\sigma_{t_{k(t)}}.
\]

Then, for \(s\in[s_k,s_{k+1}]\), we define
\begin{equation}
\label{eq:general_moving_target_SALD_frozen_interp}
\hat X_s
:=
X_k^\eta
+
(s-s_k)\Bigl(
\dot t_k\,c_{t_k}(X_k^\eta)
+
\frac{\sigma_{\eta}^2}{2}\nabla\log\pi_{t_k}(X_k^\eta)
\Bigr)
+
\sigma_{\eta}(W_s-W_{s_k}).
\end{equation}

For \(t\in[t_k,t_{k+1}]\), with \(s=s(t)\in[s_k,s_{k+1}]\), define
\begin{align*}
    \varphi_{t}(x)
    &:= \dot t(s)\,c_t(x)
    +\frac{\sigma_{\eta}^2}{2}\nabla\log\pi_t(x), \\
\end{align*}
and define the frozen-field error
\begin{equation}
\label{eq:general_discrete_delta_def}
\delta_{\pi_t}^{\mathrm{VA}}(x)
:=
\dot t(s)\,c_t(x)
+
\frac{\sigma_{\eta}^2}{2}\nabla\log\pi_t(x)
-
\E\!\left[
\dot t_k\,c_{t_k}(X_k^\eta)
+
\frac{\sigma_{\eta}^2}{2}\nabla\log\pi_{t_k}(X_k^\eta)
\,\middle|\,\hat X_s=x
\right].
\end{equation}

That is, $\delta_{\pi_t}^{\mathrm{VA}}(x)=\varphi_{t}(x)-\E\left[ \varphi_{t_k}(X_k^\eta) \middle|~\hat X_s=x\right]$.

The next lemma provides a bound on $\int \hat\rho_s\langle \delta_{\pi_t}^{\mathrm{VA}}(x),\nabla\log\frac{\hat\rho_s}{\tilde\pi_s}\rangle\, \dd x$.
Assume we work in the linear slowdown case, namely \(\dot t(s)\) is constant on \([0,S]\). Set $\hat\rho_s:=\Law(\hat X_s)$.

\begin{lemma}\label{lem:frozen_delta_cross_lip}
Let $A_s:=\nabla\log\frac{\hat\rho_s}{\tilde\pi_s}$.
We make the following assumptions.
\begin{enumerate}[itemsep=0mm,leftmargin=5mm,topsep=0mm] 
    \item There exist constants \(L_{c,\mathrm{space}},L_{\pi,\mathrm{space}}>0\) such that for all \(x,y\in\R^d\),~$t \in [0,T]$
    \begin{align}
    \|c_{t}(x)-c_{t}(y)\|
    &\le
    L_{c,\mathrm{space}}\|x-y\|,
    \label{eq:c_space_regularity_lip}
    \\
    \|\nabla\log\pi_{t}(x)-\nabla\log\pi_{t}(y)\|
    &\le
    L_{\pi,\mathrm{space}}\|x-y\|.
    \label{eq:score_space_regularity_lip}
    \end{align}

    \item There exist a measurable function \(M\) and constants \(L_{c,\mathrm{time}},L_{\pi,\mathrm{time}}>0\) such that for every $s < s', (s,s'\in [0,S])$, and every \(x\in\R^d\),
    \begin{align}
    \|c_{t(s')}(x)-c_{t(s)}(x)\|
    &\le
    L_{c,\mathrm{time}}(s'-s)(1+M(x)),
    \label{eq:c_time_regularity_lip}
    \\
    \|\nabla\log\pi_{t(s')}(x)-\nabla\log\pi_{t(s)}(x)\|
    &\le
    L_{\pi,\mathrm{time}}(s'-s)(1+M(x)).
    \label{eq:score_time_regularity_lip}
    \end{align}

    \item There exists \(\alpha_0'>0\) such that
    \[
    \mathfrak{E}_{\alpha_0'}(\pi_t,c_t)<+\infty,
    \qquad
    \mathfrak{E}_{\alpha_0'}(\pi_t,\nabla\log\pi_t)<+\infty,
    \qquad
    \mathfrak{E}_{\alpha_0'}(\pi_t,1+M)<+\infty,
    \qquad \forall t\in[0,T].
    \]

    \item We assume
    \[
    4\eta^2\Bigl(\dot t(s)L_{c,\mathrm{space}}+\frac{\sigma_{\eta}(t(s))^2}{2}L_{\pi,\mathrm{space}}\Bigr)^2<\frac12.
    \]
\end{enumerate}
Then, for every \(\alpha'\in(0,\alpha_0']\) and every \(s\in[s_k,s_{k+1}]\),
\begin{equation}
\begin{aligned}
-\int \hat\rho_s\langle \delta_{\pi_{t(s)}}^{\mathrm{VA}}(x),A_s\rangle\, \dd x
&\le
\frac{\sigma_{\eta}(t(s))^2}{8}\FI(\hat\rho_s\|\tilde\pi_s)
+  2\eta^2\alpha'^{-1} \Gamma(t(s))
\KL(\hat\rho_s\|\tilde\pi_s) +2\eta \Delta(t(s)).
\end{aligned}\label{eq:frozen_delta_cross_bound}
\end{equation}
Here
\begin{align*}
    \Gamma(t) 
    &:= \sigma_{\eta}(t)^{-2}\Biggl\{ 
    4\dot{s}(t)^{-2}L_{c,\mathrm{time}}^2+\sigma_{\eta} (t)^4L_{\pi,\mathrm{time}}^2\notag\\
    &+8\left(4\dot{s}(t)^{-2}L_{c,\mathrm{space}}^2+\sigma_{\eta} (t)^4L_{\pi,\mathrm{space}}^2 \right) \Bigl(
    4\dot{s}(t)^{-2}+\sigma_{\eta}(t)^4+\eta^2\bigl(4\dot{s}(t)^{-2}L_{c,\mathrm{time}}^2+\sigma_{\eta}(t)^4L_{\pi,\mathrm{time}}^2\bigr)
    \Bigr) \Biggr\},
\end{align*}
and
\begin{align*}
    \Delta(t) &:= 
    \eta\sigma_{\eta}(t)^{-2}\Biggl\{ 8\Bigl(
    4\dot{s}(t)^{-2}L_{c,\mathrm{space}}^2+\sigma_{\eta}(t)^4L_{\pi,\mathrm{space}}^2
    \Bigr) 
    \Bigl( 4 \dot{s}(t)^{-2} \,\mathfrak E_{\alpha'}(\pi_{t},c_{t}) 
    +\sigma_{\eta}(t)^4\,\mathfrak E_{\alpha'}(\pi_{t},\nabla\log\pi_{t}) \Bigr) \notag \\
    &+ \Bigl(4\dot{s}(t)^{-2}L_{c,\mathrm{time}}^2+\sigma_{\eta} (t)^4L_{\pi,\mathrm{time}}^2\Bigr) \Bigl( 
    1 +8 \eta^2 \Bigl(
    4\dot{s}(t)^{-2}L_{c,\mathrm{space}}^2+\sigma_{\eta}(t)^4L_{\pi,\mathrm{space}}^2
    \Bigr)  \Bigr)
    \mathfrak E_{\alpha'}(\pi_{t},1+M) \Biggr\} \notag \\
    &+ 4 d \Bigl(
    4\dot{s}(t)^{-2}L_{c,\mathrm{space}}^2+\sigma_{\eta}(t)^4L_{\pi,\mathrm{space}}^2
    \Bigr). 
\end{align*}
\end{lemma}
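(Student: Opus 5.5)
The plan is to split the frozen-field error into a time-regularity part and a space-regularity (one-step displacement) part, and then convert every resulting $L_2(\hat\rho_s)$ quantity into $\KL(\hat\rho_s\|\tilde\pi_s)$ plus $\pi_t$-exponential moments via Lemma~\ref{lem:dv_variation}. Throughout, write $s\in[s_k,s_{k+1}]$, $t=t(s)$, $\dot t:=\dot t(s)=\dot s(t)^{-1}$ (constant under linear slowdown), and $\sigma:=\sigma_\eta(t)$. First apply Cauchy--Schwarz and Young with weight $\sigma^2/4$:
\[
-\int \hat\rho_s\langle \delta^{\mathrm{VA}}_{\pi_t},A_s\rangle\,\dd x \le \frac{\sigma^2}{8}\FI(\hat\rho_s\|\tilde\pi_s)+\frac{2}{\sigma^2}\|\delta^{\mathrm{VA}}_{\pi_t}\|^2_{L_2(\hat\rho_s)}.
\]
Next, by the tower property and Jensen, $\|\delta^{\mathrm{VA}}_{\pi_t}\|^2_{L_2(\hat\rho_s)}\le \E\|\varphi_t(\hat X_s)-\varphi_{t_k}(X_k^\eta)\|^2$. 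Split this as
\[
\varphi_t(\hat X_s)-\varphi_{t_k}(\hat X_s)+\varphi_{t_k}(\hat X_s)-\varphi_{t_k}(X_k^\eta),
\]
using $(a+b)^2\le 2a^2+2b^2$ together with $\|\dot t\,a+\tfrac{\sigma^2}{2}b\|^2\le 2\dot t^2\|a\|^2+\tfrac{\sigma^4}{2}\|b\|^2$. This is where the combinations $4\dot s^{-2}L_{c,\cdot}^2+\sigma^4L_{\pi,\cdot}^2$ in $\Gamma,\Delta$ come from.

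The \emph{time part} is bounded by assumption (2) with $s-s_k\le\eta$. It gives $\eta^2(4\dot s^{-2}L_{c,\mathrm{time}}^2+\sigma^4L_{\pi,\mathrm{time}}^2)\,\E(1+M(\hat X_s))^2$, which is an expectation under $\hat\rho_s$. The \emph{space part} is bounded by assumption (1) as $(4\dot s^{-2}L_{c,\mathrm{space}}^2+\sigma^4L_{\pi,\mathrm{space}}^2)\,\E\|\hat X_s-X_k^\eta\|^2$. For the displacement,
\[
\E\|\hat X_s-X_k^\eta\|^2\le 2\eta^2\,\E\|\varphi_{t_k}(X_k^\eta)\|^2+2\sigma^2 d\eta.
\]
The $d$-term yields the last summand of $\Delta$. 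The remaining difficulty is that $\E\|\varphi_{t_k}(X_k^\eta)\|^2$ lives under $\rho_k^\eta$, not $\hat\rho_s$. To move it, write $\varphi_{t_k}(X_k^\eta)=\varphi_t(\hat X_s)-[\varphi_t(\hat X_s)-\varphi_{t_k}(X_k^\eta)]$. The bracket is exactly the quantity being estimated, so we get a self-referential inequality of the form
\[
E\le a+b\,\eta^2 L^2 E,\qquad E:=\E\|\varphi_t(\hat X_s)-\varphi_{t_k}(X_k^\eta)\|^2,\quad L:=\dot tL_{c,\mathrm{space}}+\tfrac{\sigma^2}{2}L_{\pi,\mathrm{space}}.
\]
Assumption (4), $4\eta^2L^2<\tfrac12$, lets us absorb the $E$ term on the left at the cost of a factor $2$. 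This absorption is the step I expect to be the main obstacle: tracking the constants so that the coefficients $8(\cdots)(4\dot s^{-2}+\sigma^4+\eta^2(\cdots))$ come out as stated.

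What remains are the $\hat\rho_s$-expectations of $\|c_t\|^2$, $\|\nabla\log\pi_t\|^2$ and $(1+M)^2$. Each is bounded by Lemma~\ref{lem:dv_variation} with $\nu=\hat\rho_s$, $\mu=\tilde\pi_s=\pi_t$ and $Z=\alpha'\|\cdot\|^2$, giving
\[
\E_{\hat\rho_s}\|h\|^2\le \alpha'^{-1}\KL(\hat\rho_s\|\tilde\pi_s)+\mathfrak E_{\alpha'}(\pi_t,h).
\]
These bounds are finite by assumption (3). Collecting terms, the $\KL$ coefficients carry a factor $\eta^2\alpha'^{-1}$ and assemble into $2\eta^2\alpha'^{-1}\Gamma(t)$, where the prefactor $2/\sigma^2$ from the first Young step produces the $\sigma^{-2}$ in front of $\Gamma$. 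The $\mathfrak E$ terms and the $d$-term assemble into $2\eta\Delta(t)$; here the extra $\eta$ inside $\Delta$ accounts for the $\eta^2$ versus $\eta$ scalings. Finally, verify that all numerical constants are dominated by those stated, enlarging them where convenient, since only an upper bound is required.
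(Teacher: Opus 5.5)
The gap is at the absorption step you flagged: with the estimates you actually write down, it does not follow from assumption (4).

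Put $\Lambda_{\mathrm{sp}}:=4\dot s^{-2}L_{c,\mathrm{space}}^2+\sigma^4L_{\pi,\mathrm{space}}^2$, $\Lambda_{\mathrm{tm}}:=4\dot s^{-2}L_{c,\mathrm{time}}^2+\sigma^4L_{\pi,\mathrm{time}}^2$ and $D:=\E\|\hat X_s-X_k^\eta\|^2$. Your three estimates are:
\begin{itemize}
\item the space part is bounded by $\Lambda_{\mathrm{sp}}D$;
\item the displacement satisfies $D\le 2\eta^2\E\|\varphi_{t_k}(X_k^\eta)\|^2+2\sigma^2 d\eta$;
\item the transfer gives $\E\|\varphi_{t_k}(X_k^\eta)\|^2\le 2\E_{\hat\rho_s}\|\varphi_t\|^2+2E$.
\end{itemize}
Chaining them, $E$ comes back with coefficient $4\eta^2\Lambda_{\mathrm{sp}}$. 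Since $2L^2\le\Lambda_{\mathrm{sp}}\le 4L^2$, this coefficient lies in $[8\eta^2L^2,16\eta^2L^2]$. Assumption (4) only gives $8\eta^2L^2<1$. When $\dot t L_{c,\mathrm{space}}\gg\sigma^2L_{\pi,\mathrm{space}}$ the coefficient can exceed $1$, and then nothing can be absorbed. Even in the best case $\Lambda_{\mathrm{sp}}=2L^2$ it is only $<1$, not $\le\frac12$, so the factor $2$ is not available.

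A crude factor $2$ would also break your final domination check. It multiplies the time piece $\eta^2\Lambda_{\mathrm{tm}}\E(1+M(\hat X_s))^2$, giving $2\eta^2\Lambda_{\mathrm{tm}}$ in front of $\mathfrak E_{\alpha'}(\pi_t,1+M)$. The stated $\Delta$ only provides $\eta^2\Lambda_{\mathrm{tm}}(1+8\eta^2\Lambda_{\mathrm{sp}})$ there, which is smaller whenever $8\eta^2\Lambda_{\mathrm{sp}}<1$.

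The missing idea is to close the loop on the displacement $D$ rather than on $E$, and to do the space shift and the time shift separately.
\begin{itemize}
\item \emph{Space shift at the frozen time.} At time $t_k$ the field $\varphi_{t_k}$ is $L$-Lipschitz, so $\E\|\varphi_{t_k}(X_k^\eta)\|^2\le 2\|\varphi_{t_k}\|^2_{L^2(\hat\rho_s)}+2L^2D$. Hence $D\le 4\eta^2\|\varphi_{t_k}\|^2_{L^2(\hat\rho_s)}+4\eta^2L^2D+2\sigma^2\eta d$.
\item \emph{Absorption.} Assumption (4) is exactly $4\eta^2L^2<\frac12$, which gives $D\le 8\eta^2\|\varphi_{t_k}\|^2_{L^2(\hat\rho_s)}+4\sigma^2\eta d$.
\item \emph{Time shift afterwards.} Only now move from $t_k$ to $t(s)$ pointwise under $\hat\rho_s$ using (2): $\|\varphi_{t_k}\|^2\le 2\|\varphi_{t(s)}\|^2+\eta^2\Lambda_{\mathrm{tm}}(1+M)^2$, with $\|\varphi_{t(s)}\|^2\le 2\dot t^2\|c_{t(s)}\|^2+\frac{\sigma^4}{2}\|\nabla\log\pi_{t(s)}\|^2$.
\end{itemize}
Insert this into $E\le\eta^2\Lambda_{\mathrm{tm}}\E(1+M(\hat X_s))^2+\Lambda_{\mathrm{sp}}D$, with no absorption on $E$ at all. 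This produces exactly the factor $8\Lambda_{\mathrm{sp}}(4\dot s^{-2}+\sigma^4+\eta^2\Lambda_{\mathrm{tm}})$ in $\Gamma$ and $(1+8\eta^2\Lambda_{\mathrm{sp}})$ in $\Delta$. The rest of your plan (Young with weight $\sigma^2/4$, conditional Jensen, Donsker--Varadhan) then goes through as you describe.

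Your $E$-loop can also be rescued, but only by tightening three things at once:
\begin{itemize}
\item use independence of the Brownian increment to get $D\le\eta^2\E\|\varphi_{t_k}(X_k^\eta)\|^2+\sigma^2\eta d$;
\item use $2L^2$ rather than $\Lambda_{\mathrm{sp}}$ in the space term;
\item keep $1/(1-c)\le 1+2c$ with $c=4\eta^2L^2$, instead of replacing it by $2$.
\end{itemize}
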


\begin{proof}

Consider $s \in [s_k, s_{k+1})$.
Let the joint distribution of \(X_k^\eta\) and \(\hat X_s\) be defined as \(\hat \rho_{t_k,s}\). Then,
\begin{align}
&-\int \hat\rho_s\langle \delta_{\pi_{t(s)}}^{\mathrm{VA}}(x),A_s\rangle\, \dd x \\
&=
-\int \hat \rho_{t_k,s}(x^\eta_k, \hat x_s)
\left( \varphi_{t(s)}(\hat x_s) - \varphi_{t_k}(x^\eta_k)\right)^\top A_s \dd \hat x_s \dd x^\eta_k \notag\\
&\leq \frac{\sigma_{\eta}(t(s))^2}{8}\mathrm{FI}(\hat\rho_s \| \tilde \pi_s) 
+ 2\sigma_{\eta}(t(s))^{-2}\int  \hat \rho_{t_k,s}(x^\eta_k, \hat x_s)
 \| \varphi_{t(s)}(\hat x_s) - \varphi_{t_k}(x^\eta_k)\|^2\dd \hat x_s \dd x^\eta_k, \label{eq:delta-bound}
\end{align}
where the inequality is obtained by the Cauchy--Schwarz and Young's inequalities.

We proceed to bound \(\| \varphi_{t(s)}(\hat x_s) - \varphi_{t_k}(x^\eta_k)\|^2\):
\begin{align*}
\| \varphi_{t(s)}(\hat x_s) - \varphi_{t_k}(x^\eta_k)\|^2
&\leq 2 \| \varphi_{t(s)}(\hat x_s) - \varphi_{t_k}(\hat x_s)\|^2 + 2 \| \varphi_{t_k}(\hat x_s) - \varphi_{t_k}(x^\eta_k)\|^2.
\end{align*}
For the time-varying term,
\begin{align*}
2 \| \varphi_{t(s)}(\hat x_s) - \varphi_{t_k}(\hat x_s)\|^2
&\le \eta^2\Bigl(4\dot t(s)^2L_{c,\mathrm{time}}^2+\sigma_{\eta}(t_k)^4L_{\pi,\mathrm{time}}^2\Bigr)
(1+M(\hat x_s))^2.
\end{align*}
For the space-varying term,
\begin{align*}
2 \| \varphi_{t_k}(\hat x_s) - \varphi_{t_k}(x^\eta_k)\|^2
&\le
\Bigl(
4\dot t(s)^2L_{c,\mathrm{space}}^2+\sigma_{\eta}(t_k)^4L_{\pi,\mathrm{space}}^2
\Bigr)
\|\hat x_s-x^\eta_k\|^2.
\end{align*}
Hence,
\begin{align}
&\int \hat \rho_{t_k,s}(x^\eta_k,\hat x_s)
\| \varphi_{t(s)}(\hat x_s) - \varphi_{t_k}(x^\eta_k)\|^2
\, \dd \hat x_s \dd x^\eta_k \notag\\
&\le
\eta^2\Bigl(4\dot t(s)^2L_{c,\mathrm{time}}^2+\sigma_{\eta}(t_k)^4L_{\pi,\mathrm{time}}^2\Bigr)
\|1+M\|_{L_2(\hat{\rho}_s)}^2
\notag\\
&+\Bigl(
4\dot t(s)^2L_{c,\mathrm{space}}^2+\sigma_{\eta}(t_k)^4L_{\pi,\mathrm{space}}^2
\Bigr)
\E_{\hat \rho_{t_k,s}}\!\left[\|\hat X_s-X_k^\eta\|^2\right].
\label{eq:delta_l2_by_increment_revised}
\end{align}

We now bound the quantity \(\|\hat X_s - X^\eta_k\|^2\) under the law \(\hat \rho_{t_k,s}\).
Recall that \(\hat X_s\) is the continuous EM interpolation \eqref{eq:general_moving_target_SALD_frozen_interp}:
\[
\hat X_s
=
X_k^\eta
+
(s-s_k)\varphi_{t_k}(X^\eta_k)
+
\sigma_{\eta}(t(s))\sqrt{s - s_k}\;\xi, \qquad \xi \sim \cN(0,I).
\]
Thus, 
\begin{align}
\E_{\hat \rho_{t_k,s}}[\|\hat X_s - X^\eta_k\|^2]
&\le 2 (s-s_k)^2 \E\| \varphi_{t_k}(X^\eta_k)\|^2 + 2\sigma_{\eta}(t)^2(s-s_k) \E[\|\xi\|^2]
\notag\\
&\le 2 \eta^2 \E[\| \varphi_{t_k}(X^\eta_k)\|^2] + 2\sigma_{\eta}(t(s))^2 \eta d.
\label{eq:increment_basic_bound_revised}
\end{align}

Next, by the Lipschitz property of \(\varphi_{t_k}\),
\[
\|\varphi_{t_k}(X_k^\eta)\|^2
\le
2\|\varphi_{t_k}(\hat X_s)\|^2
+
2\Bigl(\dot t(s)L_{c,\mathrm{space}}+\frac{\sigma_{\eta}(t_k)^2}{2}L_{\pi,\mathrm{space}}\Bigr)^2
\|\hat X_s-X_k^\eta\|^2.
\]
Taking expectation and substituting into \eqref{eq:increment_basic_bound_revised}, we obtain
\begin{align*}
\E_{\hat \rho_{t_k,s}}\!\left[\|\hat X_s-X_k^\eta\|^2\right]
&\le
4\eta^2\|\varphi_{t_k}\|_{L^2(\hat\rho_s)}^2
+
4\eta^2\Bigl(\dot t(s)L_{c,\mathrm{space}}+\frac{\sigma_{\eta}(t_k)^2}{2}L_{\pi,\mathrm{space}}\Bigr)^2
\E_{\hat \rho_{t_k,s}}\!\left[\|\hat X_s-X_k^\eta\|^2\right]
\\
&+
2\sigma_{\eta}(t(s))^2\eta d.
\end{align*}
Since $4\eta^2\Bigl(\dot t(s)L_{c,\mathrm{space}}+\frac{\sigma_{\eta}(t(s))^2}{2}L_{\pi,\mathrm{space}}\Bigr)^2<\frac12$, 
we have 
\begin{align}
\E_{\hat \rho_{t_k,s}}\!\left[\|\hat X_s-X_k^\eta\|^2\right]
&\le
8\eta^2\|\varphi_{t_k}\|_{L^2(\hat\rho_s)}^2
+
4\sigma_{\eta}(t)^2\eta d.
\label{eq:increment_resolved_bound}
\end{align}

To estimate \(\|\varphi_{t_k}\|_{L^2(\hat\rho_s)}^2\), we compare \(\varphi_{t_k}\) with \(\varphi_{t(s)}\). By Lipschitz continuity in time,
\begin{align*}
\|\varphi_{t_k}(x)\|^2
&\le
2\|\varphi_{t(s)}(x)\|^2
+
2\|\varphi_{t(s)}(x)-\varphi_{t_k}(x)\|^2 \\
&\le
2\|\varphi_{t(s)}(x)\|^2
+
\eta^2\Bigl(4\dot t(s)^2L_{c,\mathrm{time}}^2+\sigma_{\eta}(t_k)^4L_{\pi,\mathrm{time}}^2\Bigr)(1+M(x))^2.
\end{align*}
Also,
\[
\|\varphi_{t(s)}(x)\|^2
\le
2\dot t(s)^2\|c_{t(s)}(x)\|^2
+
\frac{\sigma_{\eta}(t_k)^4}{2}\|\nabla\log\pi_{t(s)}(x)\|^2.
\]
Hence
\begin{align}
\|\varphi_{t_k}\|_{L^2(\hat\rho_s)}^2
&\le
4\dot t(s)^2 \|c_{t(s)}\|_{L^2(\hat\rho_s)}^2
+
\sigma_{\eta}(t_k)^4\|\nabla \log \pi_{t(s)}\|_{L^2(\hat\rho_s)}^2
\notag\\
&+ \eta^2\Bigl(4\dot t(s)^2L_{c,\mathrm{time}}^2+\sigma_{\eta}(t_k)^4L_{\pi,\mathrm{time}}^2\Bigr)
\|(1+M)\|_{L^2(\hat\rho_s)}^2.
\label{eq:phi_l2_split_revised}
\end{align}

Invoking Lemma~\ref{lem:dv_variation},
\begin{align*}
\|c_{t(s)}\|_{L^2(\hat\rho_s)}^2
&\le \alpha'^{-1}\KL(\hat\rho_s\|\tilde\pi_s)
+\mathfrak E_{\alpha'}(\pi_{t(s)},c_{t(s)}),\\
\|\nabla\log\pi_{t(s)}\|_{L^2(\hat\rho_s)}^2
&\le \alpha'^{-1}\KL(\hat\rho_s\|\tilde\pi_s)
+\mathfrak E_{\alpha'}(\pi_{t(s)},\nabla\log\pi_{t(s)}),\\
\|(1+M)\|_{L^2(\hat\rho_s)}^2
&\le \alpha'^{-1}\KL(\hat\rho_s\|\tilde\pi_s)
+\mathfrak E_{\alpha}(\pi_{t(s)},1+M).
\end{align*}
Substituting into \eqref{eq:phi_l2_split_revised}, we obtain
\begin{align}
\|\varphi_{t_k}\|_{L^2(\hat\rho_s)}^2
&\le
\Bigl(
4\dot t(s)^2+\sigma_{\eta}(t_k)^4+\eta^2\bigl(4\dot t(s)^2L_{c,\mathrm{time}}^2+\sigma_{\eta}(t_k)^4L_{\pi,\mathrm{time}}^2\bigr)
\Bigr)\alpha'^{-1}\KL(\hat\rho_s\|\tilde\pi_s)
\notag\\
&+
4\dot t(s)^2\,\mathfrak E_{\alpha'}(\pi_{t(s)},c_{t(s)})
+
\sigma_{\eta}(t_k)^4\,\mathfrak E_{\alpha'}(\pi_{t(s)},\nabla\log\pi_{t(s)})
\notag\\
&+
\eta^2\Bigl(4\dot t(s)^2L_{c,\mathrm{time}}^2+\sigma_{\eta}(t_k)^4L_{\pi,\mathrm{time}}^2\Bigr)
\mathfrak E_{\alpha'}(\pi_{t(s)},1+M).
\label{eq:phi_l2_bound_revised}
\end{align}

Substituting \eqref{eq:increment_resolved_bound} and \eqref{eq:phi_l2_bound_revised} into \eqref{eq:delta_l2_by_increment_revised}, we obtain
\begin{align}
    &\int \hat \rho_{t_k,s}(x^\eta_k,\hat x_s)
    \| \varphi_{t(s)}(\hat x_s) - \varphi_{t_k}(x^\eta_k)\|^2
    \, \dd \hat x_s \dd x^\eta_k \notag\\
    &\le
    \eta^2\Bigl(4\dot t(s)^2L_{c,\mathrm{time}}^2+\sigma_{\eta} (t_k)^4L_{\pi,\mathrm{time}}^2\Bigr)
    (\alpha'^{-1}\KL(\hat\rho_s\|\tilde\pi_s)
    +\mathfrak E_{\alpha}(\pi_{t(s)},1+M))
    \notag\\
    &+8\eta^2\Bigl(
    4\dot t(s)^2L_{c,\mathrm{space}}^2+\sigma_{\eta}(t_k)^4L_{\pi,\mathrm{space}}^2
    \Bigr)  \notag\\
    &\cdot\Bigl(
    4\dot t(s)^2+\sigma_{\eta}(t_k)^4+\eta^2\bigl(4\dot t(s)^2L_{c,\mathrm{time}}^2+\sigma_{\eta}(t_k)^4L_{\pi,\mathrm{time}}^2\bigr)
    \Bigr)\alpha'^{-1}\KL(\hat\rho_s\|\tilde\pi_s) \notag\\
    &+8\eta^2\Bigl(
    4\dot t(s)^2L_{c,\mathrm{space}}^2+\sigma_{\eta}(t_k)^4L_{\pi,\mathrm{space}}^2
    \Bigr) 
    \cdot \Biggl\{
    4\dot t(s)^2\,\mathfrak E_{\alpha'}(\pi_{t(s)},c_{t(s)}) 
    +\sigma_{\eta}(t_k)^4\,\mathfrak E_{\alpha'}(\pi_{t(s)},\nabla\log\pi_{t(s)})\notag \\
    &+\eta^2\Bigl(4\dot t(s)^2L_{c,\mathrm{time}}^2+\sigma_{\eta}(t_k)^4L_{\pi,\mathrm{time}}^2\Bigr)
    \mathfrak E_{\alpha'}(\pi_{t(s)},1+M) \Biggr\} \notag \\
    &+4\sigma_\eta(t_k)^2 \eta d \Bigl(
    4\dot t(s)^2 L_{c,\mathrm{space}}^2+\sigma_{\eta}(t_k)^4L_{\pi,\mathrm{space}}^2
    \Bigr) 
\end{align}

Combining this inequality and Eq.~\eqref{eq:delta-bound} concludes the proof.
\end{proof}

We here provide the convergence analysis of the discrete-time general VA-SALD \eqref{eq:SALD_general_EM}. 
Set $\rho_k^\eta:=\Law(X_k^\eta),~k=0,\dots,K$.

\begin{theorem}[Discrete-time general VA-SALD]
\label{thm:general-moving-target-SALD-discrete}
Assume the same conditions as in Theorem~\ref{thm:general-moving-target-SALD} and Lemma \ref{lem:frozen_delta_cross_lip}. Consider $\dot t(s) = \dot s(t)^{-1}$ is a constant. Then, for any \(\alpha\in(0,\alpha_0]\) and \(\alpha'\in(0,\alpha_0']\),
\begin{align}\label{eq:general_moving_target_KL_bound_discrete}
&\KL(\rho_K^\eta\|\pi_T) \notag\\
&\le
\exp\!\left(
-\int_0^T
\Bigl(
\frac{\sigma_\eta(t)^2}{2}\dot s(t)\cLSI{t}
-
2\sigma_\eta(t)^{-2}\dot s(t)^{-1}\alpha^{-1}
-
2\dot s(t) \eta^2\alpha'^{-1} \Gamma(t)
\Bigr)\dd t
\right)
\KL(\rho_0\|\pi_0)
\nonumber\\
&\quad+
\int_0^T
\exp\!\left(
-\int_t^T
\Bigl(
\frac{\sigma_\eta(u)^2}{2}\dot s(u)\cLSI{u}
-
2\sigma_\eta(u)^{-2}\dot s(u)^{-1}\alpha^{-1}
-
2\dot s(u) \eta^2\alpha'^{-1} \Gamma(u)
\Bigr)\dd u
\right)
\nonumber\\
&\qquad\qquad \cdot
\Bigl(2
\sigma_\eta(t)^{-2}\dot s(t)^{-1}\mathfrak{E}_{\alpha}(\pi_t,m_t)+ 2\dot s(t) \eta \Delta(t) \Bigr)\dd t,
\end{align}
where \(m_t=v_t-c_t\).
\end{theorem}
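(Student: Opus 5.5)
The argument follows the proof of Theorem~\ref{thm:forward-KL-discrete}, with the continuous-time VA-SALD computation (Theorem~\ref{thm:general-moving-target-SALD}) replacing the plain SALD one. We work on a fixed interval $s\in[s_k,s_{k+1}]$ with the interpolation $\hat X_s$ from \eqref{eq:general_moving_target_SALD_frozen_interp}. On this interval $\sigma_\eta$ is constant, so $\hat\rho_s$ solves the Fokker--Planck equation
\[
\partial_s\hat\rho_s=-\nabla\cdot(\hat\rho_s\bar b_{k,s})+\frac{\sigma_\eta^2}{2}\Delta\hat\rho_s,
\]
where $\bar b_{k,s}(x)=\E[\varphi_{t_k}(X_k^\eta)\mid \hat X_s=x]$. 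We rewrite the drift as
\[
\bar b_{k,s}=\varphi_{t(s)}-\delta^{\mathrm{VA}}_{\pi_{t(s)}}=\dot t(s)c_{t(s)}+\frac{\sigma_\eta^2}{2}\nabla\log\tilde\pi_s-\delta^{\mathrm{VA}}_{\pi_{t(s)}}.
\]
We also write $\frac{\sigma_\eta^2}{2}\Delta\hat\rho_s=\frac{\sigma_\eta^2}{2}\nabla\cdot(\hat\rho_s A_s)+\frac{\sigma_\eta^2}{2}\nabla\cdot(\hat\rho_s\nabla\log\tilde\pi_s)$ with $A_s=\nabla\log(\hat\rho_s/\tilde\pi_s)$. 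The score terms then cancel, and differentiating $\KL(\hat\rho_s\|\tilde\pi_s)$ as in \eqref{eq:general_KL_derivative_0}--\eqref{eq:general_KL_derivative_2} gives the exact identity
\[
\frac{\dd}{\dd s}\KL(\hat\rho_s\|\tilde\pi_s)=-\frac{\sigma_\eta^2}{2}\FI(\hat\rho_s\|\tilde\pi_s)-\dot t(s)\int\hat\rho_s\, m_{t(s)}^\top A_s\,\dd x-\int\hat\rho_s\langle\delta^{\mathrm{VA}}_{\pi_{t(s)}},A_s\rangle\,\dd x .
\]
The middle term comes from pairing the $c$-drift with the transport term $-\int\hat\rho_s\tilde v_s^\top A_s$, exactly as in the continuous proof.

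Next we split the Fisher information budget. The discretization term is bounded by Lemma~\ref{lem:frozen_delta_cross_lip}, which uses $\frac{\sigma_\eta^2}{8}\FI$ and produces $2\eta^2\alpha'^{-1}\Gamma\,\KL+2\eta\Delta$. For the moving-target term we use Cauchy--Schwarz and Young, costing another $\frac{\sigma_\eta^2}{8}\FI$:
\[
\dot t(s)\int\hat\rho_s m^\top A_s\le\frac{\sigma_\eta^2}{8}\FI+2\sigma_\eta^{-2}\dot t(s)^2\|m_{t(s)}\|_{L^2(\hat\rho_s)}^2 .
\]
This leaves $-\frac{\sigma_\eta^2}{4}\FI$, and LSI for $\tilde\pi_s$ turns it into $-\frac{\sigma_\eta^2}{2}\cLSI{t(s)}\KL$. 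We then apply Lemma~\ref{lem:dv_variation} with $Z=\alpha\|m_{t(s)}\|^2$:
\[
\|m_{t(s)}\|_{L^2(\hat\rho_s)}^2\le\alpha^{-1}\KL(\hat\rho_s\|\tilde\pi_s)+\mathfrak E_\alpha(\pi_{t(s)},m_{t(s)}).
\]
The resulting coefficients are $2\sigma_\eta^{-2}\dot t^2\alpha^{-1}$ on $\KL$ and $2\sigma_\eta^{-2}\dot t^2\mathfrak E_\alpha$ as forcing, which accounts for the factors of $2$ in the statement.

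Finally we change variables to $t$ by multiplying by $\dot s(t)$ and using $\dot t(s(t))=\dot s(t)^{-1}$. This gives, on each $[t_k,t_{k+1}]$, a linear differential inequality
\[
\frac{\dd}{\dd t}K\le -a_tK+b_t,
\]
with $a_t=\frac{\sigma_\eta(t)^2}{2}\dot s\,\cLSI{t}-2\sigma_\eta^{-2}\dot s^{-1}\alpha^{-1}-2\dot s\,\eta^2\alpha'^{-1}\Gamma(t)$ and $b_t=2\sigma_\eta^{-2}\dot s^{-1}\mathfrak E_\alpha(\pi_t,m_t)+2\dot s\,\eta\Delta(t)$. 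The interpolation is continuous at the grid points, since $\hat\rho_{s_k}=\rho_k^\eta$, and $a_t,b_t$ are only piecewise continuous. We therefore apply Lemma~\ref{lem:gronwall} on each subinterval and chain the bounds, or equivalently apply it once in integrated form. This yields \eqref{eq:general_moving_target_KL_bound_discrete} with $K(T)=\KL(\rho_K^\eta\|\pi_T)$.

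The main obstacle is justifying the Fokker--Planck equation for the interpolation with the conditional-expectation drift $\bar b_{k,s}$ (a standard mimicking/projection argument), together with the differentiability of $s\mapsto\KL(\hat\rho_s\|\tilde\pi_s)$ and the integrations by parts. We take these at the same level of rigor as in the proof of Theorem~\ref{thm:forward-KL-discrete}. A minor point is that Lemma~\ref{lem:frozen_delta_cross_lip} is stated with $\sigma_\eta(t(s))$ evaluated at the frozen level, so the Fisher information constants must match; this holds because $\sigma_\eta$ is constant on each step.
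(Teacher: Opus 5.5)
Your proposal is correct and matches the paper's proof step for step. It uses the same exact identity for $\frac{\dd}{\dd s}\KL(\hat\rho_s\|\tilde\pi_s)$, split into $-\frac{\sigma_\eta^2}{2}\FI$, the $m_{t(s)}$ cross term and the $\delta^{\mathrm{VA}}$ cross term, and the same $\frac{\sigma_\eta^2}{8}+\frac{\sigma_\eta^2}{8}$ split of the Fisher information (Young plus Lemma~\ref{lem:frozen_delta_cross_lip}), followed by LSI, Lemma~\ref{lem:dv_variation}, the time change and Lemma~\ref{lem:gronwall}; applying Gr\"onwall piecewise and chaining via continuity of $K$ at the grid points is just a slightly more careful version of the paper's single invocation.
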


\begin{proof}
\textbf{Proof of Theorem~\ref{thm:general-moving-target-SALD-discrete}.}

Fix $k\in\{0,\dots,K-1\}$. For \(s\in[s_k,s_{k+1}]\), let \(\hat\rho_s:=\Law(\hat X_s)\).
Then \(\hat\rho_{s_k}=\rho_k^\eta,~\hat\rho_{s_{k+1}}=\rho_{k+1}^\eta\).

Differentiating \(\KL(\hat\rho_s\|\tilde\pi_s)\) with respect to \(s\), we obtain
\begin{equation}
\label{eq:general_KL_derivative_0_discrete}
\frac{\dd}{\dd s}\KL(\hat\rho_s\|\tilde\pi_s)
=
\int \partial_s\hat\rho_s\log\frac{\hat\rho_s}{\tilde\pi_s}\,\dd x
-
\int \frac{\hat\rho_s}{\tilde\pi_s}\partial_s\tilde\pi_s\,\dd x,
\end{equation}
since \(\int \partial_s\hat\rho_s\,\dd x=0\).

Define the frozen conditional drift field
\[
\bar b_{k,s}(x)
:=
\E\!\left[
\dot t_k\,c_{t_k}(X_k^\eta)
+
\frac{\sigma_{\eta}^2}{2}\nabla\log\pi_{t_k}(X_k^\eta)
\,\middle|\,\hat X_s=x
\right].
\]
By the Fokker--Planck equation associated with
\eqref{eq:general_moving_target_SALD_frozen_interp},
\[
\partial_s\hat\rho_s
=
-\nabla\cdot(\hat\rho_s\bar b_{k,s})
+
\frac{\sigma_{\eta}^2}{2}\Delta\hat\rho_s.
\]
Let
\(
A_s:=\nabla\log\frac{\hat\rho_s}{\tilde\pi_s}.
\)
Using
\[
\Delta\hat\rho_s
=
\nabla\cdot(\hat\rho_s\nabla\log\hat\rho_s)
=
\nabla\cdot(\hat\rho_s A_s)
+
\nabla\cdot(\hat\rho_s\nabla\log\tilde\pi_s),
\]
we rewrite
\begin{align*}
\partial_s\hat\rho_s
&=
-\nabla\cdot(\hat\rho_s\bar b_{k,s})
+
\frac{\sigma_{\eta}^2}{2}\nabla\cdot(\hat\rho_s A_s)
+
\frac{\sigma_{\eta}^2}{2}\nabla\cdot(\hat\rho_s\nabla\log\tilde\pi_s)
\\
&=
\frac{\sigma_{\eta}^2}{2}\nabla\cdot(\hat\rho_s A_s)
+
\nabla\cdot\!\left(
\hat\rho_s\Bigl(
\frac{\sigma_{\eta}^2}{2}\nabla\log\tilde\pi_s-\bar b_{k,s}
\Bigr)
\right).
\end{align*}
Hence
\begin{align}
\label{eq:general_KL_derivative_1_discrete}
\int \partial_s\hat\rho_s\log\frac{\hat\rho_s}{\tilde\pi_s}\,\dd x
&=
-\frac{\sigma_{\eta}^2}{2}\FI(\hat\rho_s\|\tilde\pi_s)
-
\int \hat\rho_s
\Bigl\langle
\frac{\sigma_{\eta}^2}{2}\nabla\log\tilde\pi_s-\bar b_{k,s},
A_s
\Bigr\rangle
\dd x.
\end{align}

Next, since \(v_t\) generates \(\pi_t\), the rescaled velocity
\[
\tilde v_s:=\dot t(s)\,v_{t(s)}
\]
generates \(\tilde\pi_s\). Therefore,
\begin{align}
\label{eq:general_KL_derivative_2_discrete}
-\int \frac{\hat\rho_s}{\tilde\pi_s}\partial_s\tilde\pi_s\,\dd x
&=
-\int \hat\rho_s
\langle \tilde v_s, A_s\rangle\,\dd x.
\end{align}

Combining
\eqref{eq:general_KL_derivative_0_discrete},
\eqref{eq:general_KL_derivative_1_discrete},
and
\eqref{eq:general_KL_derivative_2_discrete},
we get
\begin{align}
\label{eq:general_KL_derivative_3_discrete}
\frac{\dd}{\dd s}\KL(\hat\rho_s\|\tilde\pi_s)
&=
-\frac{\sigma_{\eta}^2}{2}\FI(\hat\rho_s\|\tilde\pi_s)
-
\int \hat\rho_s
\Bigl\langle
\frac{\sigma_{\eta}^2}{2}\nabla\log\tilde\pi_s-\bar b_{k,s}+\tilde v_s,
A_s
\Bigr\rangle
\dd x.
\end{align}

Now observe that, by definition of \(m_t=v_t-c_t\) and
\eqref{eq:general_discrete_delta_def},
for \(t=t(s)\in[t_k,t_{k+1}]\),
\[
\frac{\sigma_{\eta}^2}{2}\nabla\log\tilde\pi_s-\bar b_{k,s}+\tilde v_s
=
\delta_{\pi_{t(s)}}^{\mathrm{VA}}(x)
+
\dot t(s)\,m_{t(s)}.
\]
Hence \eqref{eq:general_KL_derivative_3_discrete} becomes
\begin{align}
\label{eq:general_KL_derivative_4_discrete}
\frac{\dd}{\dd s}\KL(\hat\rho_s\|\tilde\pi_s)
&=
-\frac{\sigma_{\eta}^2}{2}\FI(\hat\rho_s\|\tilde\pi_s)
-
\int \hat\rho_s\langle \delta_{\pi_{t(s)}}^{\mathrm{VA}}(x),A_s\rangle\,\dd x
-
\dot t(s)\int \hat\rho_s\langle m_{t(s)},A_s\rangle\,\dd x.
\end{align}


Applying Young's inequality in the form $ab\le \frac{\varepsilon}{2}a^2+\frac{1}{2\varepsilon}b^2$
with some constants $\varepsilon$ and Lemma \ref{lem:frozen_delta_cross_lip},
\begin{align}
\label{eq:general_cross_m_discrete}
-\dot t(s)\int \hat\rho_s\langle m_{t(s)},A_s\rangle\,\dd x
&\le
\frac{\sigma_{\eta}^2}{8}\FI(\hat\rho_s\|\tilde\pi_s)
+
2\sigma_{\eta}^{-2}\dot t(s)^2\|m_{t(s)}\|_{L^2(\hat\rho_s)}^2,
\\
\label{eq:general_cross_delta_discrete}
-\int \hat\rho_s\langle \delta_{\pi_{t(s)}}^{\mathrm{VA}}(x),A_s\rangle\, \dd x
&\le
\frac{\sigma_{\eta}^2}{8}\FI(\hat\rho_s\|\tilde{\pi}_{s})
+
2\Gamma(t(s))\eta^2\alpha'^{-1}
\KL(\hat\rho_s\|\tilde\pi_s) +
2\Delta(t(s))\eta.
\end{align}

Substituting these bounds into \eqref{eq:general_KL_derivative_4_discrete}, we obtain
\begin{align}
\label{eq:general_KL_derivative_5_discrete}
\frac{\dd}{\dd s}\KL(\hat\rho_s\|\tilde\pi_s)
&\le
-\frac{\sigma_{\eta}^2}{4}\FI(\hat\rho_s\|\tilde\pi_s)
+
2\sigma_{\eta}^{-2}\dot t(s)^2\|m_{t(s)}\|_{L^2(\hat\rho_s)}^2
+
2\Gamma(t(s))\eta^2\alpha'^{-1}
\KL(\hat\rho_s\|\tilde\pi_s) +
2\Delta(t(s))\eta.
\end{align}
Using the LSI for \(\tilde\pi_s=\pi_{t(s)}\),
\[
\FI(\hat\rho_s\|\tilde\pi_s)\ge 2\cLSI{t(s)}\KL(\hat\rho_s\|\tilde\pi_s),
\]
we get
\begin{align}
\label{eq:general_KL_derivative_6_discrete}
\frac{\dd}{\dd s}\KL(\hat\rho_s\|\tilde\pi_s)
&\le
-\frac{\sigma_{\eta}^2}{2}\cLSI{t(s)}\KL(\hat\rho_s\|\tilde\pi_s)
+
2\sigma_{\eta}^{-2}\dot t(s)^2\|m_{t(s)}\|_{L^2(\hat\rho_s)}^2
+
2\Gamma(t(s))\eta^2\alpha'^{-1}
\KL(\hat\rho_s\|\tilde\pi_s) +
2\Delta(t(s))\eta.
\end{align}

We now bound the last two terms by Lemma~\ref{lem:dv_variation}:
\begin{align}
\label{eq:general_dv_m_discrete}
\|m_{t(s)}\|_{L^2(\hat\rho_s)}^2
&\le
\frac{1}{\alpha}\KL(\hat\rho_s\|\tilde\pi_s)
+
\mathfrak{E}_{\alpha}(\pi_{t(s)},m_{t(s)}).
\end{align}

Substituting \eqref{eq:general_dv_m_discrete} and
$\Gamma(t)$ into
\eqref{eq:general_KL_derivative_6_discrete}, we find
\begin{align}
\label{eq:general_KL_derivative_7_discrete}
\frac{\dd}{\dd s}\KL(\hat\rho_s\|\tilde\pi_s)
&\le
-\Bigl(
\frac{\sigma_{\eta}^2}{2}\cLSI{t(s)}
-
2\sigma_{\eta}^{-2}\dot t(s)^2\alpha^{-1}
-
2\Gamma(t(s))\eta^2\alpha'^{-1}
\Bigr)\KL(\hat\rho_s\|\tilde\pi_s)
\nonumber\\
&\quad+
2\sigma_{\eta}^{-2}\dot t(s)^2\mathfrak{E}_{\alpha}(\pi_{t(s)},m_{t(s)})+2\Delta(t(s))\eta
\end{align}

Now define
\[
K(t):=\KL(\hat\rho_{s(t)}\|\pi_t)=\KL(\hat\rho_{s(t)}\|\tilde\pi_{s(t)}).
\]
Since
\[
\frac{\dd}{\dd t}K(t)
=
\dot s(t)\frac{\dd}{\dd s}\KL(\hat\rho_s\|\tilde\pi_s)\Big|_{s=s(t)},
\]
and \(\dot t(s(t))=\dot s(t)^{-1}\), Eq.~\eqref{eq:general_KL_derivative_7_discrete} yields
\begin{align}
\label{eq:general_KL_derivative_8_discrete}
\frac{\dd}{\dd t}K(t)
&\le
-\Bigl(
\frac{\sigma_\eta(t)^2}{2}\dot s(t)\cLSI{t}
-
2\sigma_\eta(t)^{-2}\dot s(t)^{-1}\alpha^{-1}
-
2\dot s(t) \eta^2\alpha'^{-1} \Gamma(t)
\Bigr)K(t)
\nonumber\\
&\quad+
2\sigma_\eta(t)^{-2}\dot s(t)^{-1}\mathfrak{E}_{\alpha}(\pi_{t},m_{t})+2\dot s(t) \Delta(t)\eta
\end{align}

Finally, applying Lemma~\ref{lem:gronwall} finishes the proof.
\end{proof}

As the discrete-time VA-SALD, Euler-Maruyama discretization of \ref{eq:unified_guided_sald_explicit_with_B} is a special case of \eqref{eq:SALD_general_EM}, Theorem \ref{thm:general-moving-target-SALD-discrete} immediately yields the convergence rate of discrete-time VA-SALD just by replacing $c$ with $u$ (see Section \ref{subsec:guided-generation} for the notation $u$).

\subsection{Verification of Theorem~\ref{thm:forward-KL-discrete} for VP-reverse Marginals under Dissipativity}\label{app:vp-dissipative-verification}

We verify the assumptions of Theorem~\ref{thm:forward-KL-discrete} for the reverse marginal path of the VP forward process. Let
\[
    \dd Y_\tau=-\frac12\beta_\tau Y_\tau \dd\tau+\sqrt{\beta_\tau} \dd W_\tau,
    \qquad
    Y_0\sim q_0=p_{\rm data},
\]
where \(0\le \beta_\tau\le \beta_{\max}\) on \([0,T]\). Then
\[
    Y_\tau=a_\tau X_0+\gamma_\tau Z,
    \qquad
    a_\tau:=\exp\left(-\frac12\int_0^\tau\beta_s\,ds\right),
    \qquad
    \gamma_\tau^2:=1-a_\tau^2,
\]
with \(Z\sim N(0,I_d)\) independent of \(X_0\). Then, $(X_0,Y_\tau)$ follows the distribution proportional to 
\[ 
    \exp\left( -V_0(x_0) - \frac{1}{2\gamma_\tau^2}\|y_\tau - a_\tau x_0\|^2 \right). 
\]
We write \(q_\tau=\Law(Y_\tau)\) and consider the reverse marginal path
\[
    \pi_t:=q_{T-t},\qquad t\in[0,T].
\]
Since \(T<\infty\), we have \(a_\tau\ge a_T>0\).

\begin{assumption}[Dissipativity and smoothness]
\label{ass:regular-dissipative-data}
Assume that $q_0(x)=Z_0^{-1}e^{-V_0(x)}$ with \(V_0\in C^3(\mathbb R^d)\), and that there exist constants
\(m_0>0\), \(b_0\ge0\), \(G,L_2,L_3<\infty\) such that
\begin{align}\label{assumption:dissipativity}
    \langle x-y,\nabla V_0(x)-\nabla V_0(y)\rangle
    \ge m_0\|x-y\|^2-b_0,
    \qquad \forall x,y\in\mathbb R^d,
\end{align}
and
\begin{align}\label{assumption:smoothness}
    \|\nabla V_0(x)\|\le G+L_2\|x\|,
    ~~
    \|\nabla^2V_0(x)\|_{\rm op}\le L_2,
    ~~
    \|\nabla^3V_0(x)\|_{\rm op}\le L_3,
    \qquad \forall x\in\mathbb R^d.
\end{align}
\end{assumption}

\begin{lemma}[Radial dissipativity and quadratic exponential moment]\label{lem:radial-dissipativity-qem}
Under Assumption~\ref{ass:regular-dissipative-data}, we have 
\[
    \langle x,\nabla V_0(x)\rangle
    \ge
    c_1\|x\|^2-C_1,
\]
where $c_1:=\frac{m_0}{2},~C_1:=b_0+\frac{G^2}{2m_0}$.
Moreover, for \(X_0\sim q_0\propto e^{-V_0}\),
\[
    M_0(\theta):=\mathbb{E}[e^{\theta\|X_0\|^2}]
    \le
    \left(1-\frac{2\theta}{c_1}\right)^{-(d+C_1)/2},
    \qquad
    0< \forall \theta<\frac{c_1}{2}.
\]
Consequently, for the VP marginal \(q_\tau=\Law(a_\tau X_0+\gamma_\tau Z)\),
\[
    M_\tau(\theta):=
    \mathbb{E}_{q_\tau}[e^{\theta\|X\|^2}]
    \leq
    \mathcal{Q}(\theta),
    \qquad
    \forall \tau \in [0,T],
    ~~0<\forall\theta<\min\left\{\frac{c_1}{4},\frac14\right\},
\]
where 
\[
    \mathcal{Q}(\theta)
    :=
    \left(1-\frac{4\theta}{c_1}\right)^{-(d+C_1)/2}
    (1-4\theta)^{-d/2}.
\]
\end{lemma}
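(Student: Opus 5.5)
The plan has three steps, matching the three claims of Lemma~\ref{lem:radial-dissipativity-qem}: radial dissipativity, an exponential moment bound for $q_0$ via a generator identity, and then the transfer to $q_\tau$ by a crude splitting.

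\textbf{Radial dissipativity.} Set $y=0$ in the two-point condition \eqref{assumption:dissipativity}. This gives $\langle x,\nabla V_0(x)\rangle \ge m_0\|x\|^2-b_0+\langle x,\nabla V_0(0)\rangle$. By \eqref{assumption:smoothness}, $\|\nabla V_0(0)\|\le G$, so $\langle x,\nabla V_0(0)\rangle\ge -G\|x\|$. Young's inequality gives $G\|x\|\le \frac{m_0}{2}\|x\|^2+\frac{G^2}{2m_0}$. Combining these yields $\langle x,\nabla V_0(x)\rangle\ge c_1\|x\|^2-C_1$ with the stated $c_1=\frac{m_0}{2}$ and $C_1=b_0+\frac{G^2}{2m_0}$.

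\textbf{Moment bound for $q_0$.} I would use the invariance identity $\E_{q_0}[\mathcal L\varphi]=0$ for the Langevin generator $\mathcal L\varphi=\Delta\varphi-\langle\nabla V_0,\nabla\varphi\rangle$, which follows from integration by parts against $e^{-V_0}$. Take $\varphi_\theta(x)=e^{\theta\|x\|^2}$, so that $\nabla\varphi_\theta=2\theta x\varphi_\theta$ and $\Delta\varphi_\theta=(2\theta d+4\theta^2\|x\|^2)\varphi_\theta$. Inserting the radial bound gives
\[
0\le \E\bigl[(2\theta d+4\theta^2\|X_0\|^2-2\theta c_1\|X_0\|^2+2\theta C_1)\varphi_\theta(X_0)\bigr].
\]
This rearranges to $(c_1-2\theta)\,\E[\|X_0\|^2\varphi_\theta]\le (d+C_1)\,\E[\varphi_\theta]$. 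The left-hand expectation over $\E[\varphi_\theta]$ is exactly $\frac{\dd}{\dd\theta}\log M_0(\theta)$. Hence $\frac{\dd}{\dd\theta}\log M_0(\theta)\le \frac{d+C_1}{c_1-2\theta}$. Integrating from $0$ (where $M_0(0)=1$) to $\theta$ gives $\log M_0(\theta)\le -\frac{d+C_1}{2}\log\bigl(1-\frac{2\theta}{c_1}\bigr)$, which is the claim.

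\textbf{Main obstacle: rigor of the generator step.} I need $M_0(\theta)<\infty$ for all $\theta<c_1/2$, differentiation under the integral, and vanishing boundary terms in the integration by parts. I would first integrate the radial bound along rays: $\frac{\dd}{\dd r}V_0(r\hat x)\ge c_1r-C_1/r$ gives $V_0(x)\ge \frac{c_1}{2}\|x\|^2-C_1\log\|x\|-C$ for $\|x\|\ge1$. This shows $e^{\theta\|x\|^2-V_0}$ and its products with polynomials are integrable for $\theta<c_1/2$. Together with the linear growth of $\nabla V_0$, it also kills the boundary terms over large balls. If that is delicate, the fallback is to run the argument with a cutoff $\varphi_\theta\chi_R$ and pass to the limit $R\to\infty$ by monotone convergence.

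\textbf{VP marginals.} Since $a_\tau,\gamma_\tau\le1$, we have $\|a_\tau X_0+\gamma_\tau Z\|^2\le 2\|X_0\|^2+2\|Z\|^2$. Independence of $X_0$ and $Z$ then gives $M_\tau(\theta)\le M_0(2\theta)\,\E[e^{2\theta\|Z\|^2}]$. The Gaussian factor is $(1-4\theta)^{-d/2}$ for $\theta<\frac14$. Applying the bound on $M_0$ at $2\theta<c_1/2$ gives $\mathcal Q(\theta)$ uniformly in $\tau\in[0,T]$, under the restriction $\theta<\min\{c_1/4,1/4\}$.
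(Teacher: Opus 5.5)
Your proposal is correct and follows the paper's proof. The generator identity $\E[\mathcal{L}\varphi_\theta]=0$ is exactly $2\theta$ times the paper's identity $\int \nabla\cdot\bigl(x e^{\theta\|x\|^2}e^{-V_0}\bigr)\,\dd x = 0$, and from there your differential inequality for $\log M_0$ and your splitting $\|Y_\tau\|^2\le 2\|X_0\|^2+2\|Z\|^2$ coincide with the paper's steps. Your extra justification (finiteness of $M_0$ for $\theta<c_1/2$ and vanishing boundary terms, via the lower bound on $V_0$ from integrating along rays) supplies details the paper leaves implicit.
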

\begin{proof}
Taking \(y=0\) in the two-point dissipativity condition gives
\[
    \langle x,\nabla V_0(x)-\nabla V_0(0)\rangle
    \ge
    m_0\|x\|^2-b_0.
\]
Using \(\|\nabla V_0(0)\|\le G\), we obtain

\begin{equation}\label{eq:radial-dissipativity}
    \langle x,\nabla V_0(x)\rangle
    \ge
    m_0\|x\|^2-G\|x\|-b_0 
    \ge
    \frac{m_0}{2}\|x\|^2
    -
    \left(b_0+\frac{G^2}{2m_0}\right).
\end{equation}

Next, let \(M_0(\theta)=\mathbb E e^{\theta\|X_0\|^2}\). Applying integration by parts to $0 = \int \nabla\cdot \left (xe^{\theta \|x\|^2} e^{-V_0(x_0)} \right) \dd x$,
\[
    \mathbb E[
        \langle X_0,\nabla V_0(X_0)\rangle
        e^{\theta\|X_0\|^2}
    ]
    =
    \mathbb E[
        (d+2\theta\|X_0\|^2)
        e^{\theta\|X_0\|^2}
    ].
\]
Using Eq.~\eqref{eq:radial-dissipativity},
\[
    (c_1-2\theta)M_0'(\theta)
    \le
    (d+C_1)M_0(\theta).
\]
For \(0<\theta<c_1/2\), this yields
\[
    \frac{d}{d\theta}\log M_0(\theta)
    \le
    \frac{d+C_1}{c_1-2\theta}.
\]
Integrating from \(0\) to \(\theta\), and using \(M_0(0)=1\), gives
\[
    M_0(\theta)
    \le
    \left(1-\frac{2\theta}{c_1}\right)^{-(d+C_1)/2}.
\]

Finally, since \(Y_\tau=a_\tau X_0+\gamma_\tau Z\),
\[
    \|Y_\tau\|^2\le 2\|X_0\|^2+2\|Z\|^2.
\]
Therefore,
\[
    \mathbb E[] e^{\theta\|Y_\tau\|^2} ]
    \le
    M_0(2\theta)(1-4\theta)^{-d/2}.
\]
\end{proof}

\begin{lemma}[Posterior first moment bound]\label{lem:posterior-first-moment-stein}
Under Assumption~\ref{ass:regular-dissipative-data}, there exists $C_{\rm post}>0$ such that for all \(\tau\in[0,T]\) and \(y\in\mathbb R^d\),
\[
    \mathbb E[\|X_0\|\mid Y_\tau=y]
    \le
    \sqrt{\frac{d+C_1}{c_1}}+\frac{\|y\|}{a_\tau}
    \leq
    C_{\rm post}(1+\|y\|).
\]
\end{lemma}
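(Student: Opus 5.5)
The plan is to run a Stein / integration-by-parts argument under the posterior of $X_0$ given $Y_\tau=y$, mirroring the proof of Lemma~\ref{lem:radial-dissipativity-qem}.

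\textbf{Posterior.} Fix $\tau$ and $y$. By the joint density displayed above, the posterior is
\[
\mu_y(x)\propto \exp\Bigl(-V_0(x)-\tfrac{1}{2\gamma_\tau^2}\|y-a_\tau x\|^2\Bigr),
\]
whose potential is $U(x)=V_0(x)+\frac{1}{2\gamma_\tau^2}\|y-a_\tau x\|^2$. Its gradient is
\[
\nabla U(x)=\nabla V_0(x)-\tfrac{a_\tau}{\gamma_\tau^2}(y-a_\tau x).
\]
The boundary terms in the integration by parts below vanish because $U$ grows at least quadratically: this follows from the radial dissipativity \eqref{eq:radial-dissipativity} together with the Gaussian factor.

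\textbf{Stein identity.} Integrating $\nabla\cdot(x\,\mu_y)$ over $\R^d$ gives $\E_{\mu_y}[\langle X,\nabla U(X)\rangle]=d$. Expanding $\nabla U$ and using \eqref{eq:radial-dissipativity}, this becomes
\[
c_1\E\|X\|^2-C_1+\tfrac{a_\tau^2}{\gamma_\tau^2}\E\|X\|^2-\tfrac{a_\tau}{\gamma_\tau^2}\langle y,\E X\rangle\le d .
\]
By Cauchy--Schwarz, $\langle y,\E X\rangle\le \|y\|\sqrt{\E\|X\|^2}$. Set $m:=\sqrt{\E\|X\|^2}$ and $\kappa:=a_\tau^2/\gamma_\tau^2$. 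The inequality then reads
\[
(c_1+\kappa)m^2-\tfrac{\kappa}{a_\tau}\|y\|\,m\le d+C_1 .
\]

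\textbf{Solving the quadratic.} From the quadratic inequality, either $m\le\sqrt{(d+C_1)/(c_1+\kappa)}$ or $m$ is controlled by the linear term. More concretely, the positive root gives
\[
m\le \frac{\kappa\|y\|/a_\tau}{c_1+\kappa}+\sqrt{\frac{d+C_1}{c_1+\kappa}}\le \frac{\|y\|}{a_\tau}+\sqrt{\frac{d+C_1}{c_1}} .
\]
Jensen's inequality gives $\E[\|X_0\|\mid Y_\tau=y]\le m$, which is the first claimed bound.

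\textbf{Uniform constant.} For the second inequality, use $a_\tau\ge a_T>0$ and set
\[
C_{\rm post}=\max\Bigl\{\sqrt{(d+C_1)/c_1},\,1/a_T\Bigr\}.
\]

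\textbf{Main obstacle.} The only delicate point is the endpoint $\tau=0$, where $\gamma_0=0$ and the posterior degenerates to $\delta_y$. This case is handled directly, since $\E\|X_0\|=\|y\|$ and $a_0=1$. Away from $\tau=0$, the main technical check is justifying the integration by parts, which the quadratic growth of $U$ covers.
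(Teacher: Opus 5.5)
Your proof is correct and follows essentially the same route as the paper: a Stein identity $\E\langle X,\nabla U(X)\rangle=d$ for the posterior, combined with the radial dissipativity bound, then $a_\tau\ge a_T$ for the uniform constant. The only difference is how the cross term is absorbed. The paper uses the pointwise bound $2\langle x,z\rangle\le\|x\|^2+\|z\|^2$ to get $(c_1+\lambda)\E\|X\|^2\le d+C_1+\lambda\|z\|^2$ directly, whereas you apply Cauchy--Schwarz in expectation and bound the positive root of the quadratic in $m$. Your explicit handling of the degenerate case $\gamma_\tau=0$ is a small addition the paper omits; note it arises whenever $\gamma_\tau=0$, which can include $\tau>0$ if $\beta$ vanishes on an initial interval, not only $\tau=0$.
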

\begin{proof}
Let $\Phi_{z,\lambda}(x):=V_0(x)+\lambda\|x-z\|^2$ and $q_{z,\lambda}(x) \propto \exp(-\Phi_{z,\lambda}(x))$.
Applying integration by parts to $\int_{-\infty}^\infty x_i \partial_i q_{z,\lambda}(x)\dd x$, we get
\[
    \mathbb E_{q_{z,\lambda}}
    [
        \langle X,\nabla\Phi_{z,\lambda}(X)\rangle
    ]=d.
\]
On the other hand,
\[
\begin{aligned}
    \langle x,\nabla\Phi_{z,\lambda}(x)\rangle
    &=
    \langle x,\nabla V_0(x)\rangle
    +2\lambda\|x\|^2-2\lambda\langle x,z\rangle \\
    &\ge
    c_1\|x\|^2-C_1
    +2\lambda\|x\|^2-\lambda\|x\|^2-\lambda\|z\|^2 \\
    &=
    (c_1+\lambda)\|x\|^2-C_1-\lambda\|z\|^2.
\end{aligned}
\]

Therefore, $(c_1+\lambda)\mathbb E_{q_{z,\lambda}}[\|X\|^2] \le d+C_1+\lambda\|z\|^2$.
Thus,
\begin{equation*}
    \mathbb E_{q_{z,\lambda}}[\|X\|]
    \le
    \left(
        \frac{d+C_1+\lambda\|z\|^2}{c_1+\lambda}
    \right)^{1/2} 
    \le
    \sqrt{\frac{d+C_1}{c_1}}+\|z\|.
\end{equation*}
For the VP posterior, \(z=y/a_\tau\) and
\(\lambda=a_\tau^2/(2\gamma_\tau^2)\), which gives the claim.
\end{proof}

\begin{lemma}[Score growth]\label{lem:score-growth-final}
Under Assumption~\ref{ass:regular-dissipative-data}, there exist constants
\(A,B<\infty\) such that
\[
    \|\nabla\log q_\tau(y)\|
    \le
    A+B\|y\|,
    \qquad
    \forall \tau\in[0,T],~\forall y\in\R^d.
\]
\end{lemma}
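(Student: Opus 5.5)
We will use Tweedie's formula for the VP marginal. Since $Y_\tau = a_\tau X_0 + \gamma_\tau Z$, the score satisfies
\[
    \nabla\log q_\tau(y) = \frac{a_\tau\,\E[X_0\mid Y_\tau=y] - y}{\gamma_\tau^2}.
\]
Combined with Lemma~\ref{lem:posterior-first-moment-stein}, this immediately gives
\[
    \|\nabla\log q_\tau(y)\|\le \gamma_\tau^{-2}\bigl(a_\tau C_{\rm post}(1+\|y\|)+\|y\|\bigr).
\]
This bound is uniform in $\tau$ only away from $\tau=0$, because $\gamma_\tau\to0$ there. The plan is therefore to split $[0,T]$ into a small-noise regime $\tau\in[0,\tau_0]$ and a regime $\tau\in[\tau_0,T]$. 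On $[\tau_0,T]$ we have $\gamma_\tau^2\ge\gamma_{\tau_0}^2>0$ and $a_\tau\le1$, so Tweedie plus Lemma~\ref{lem:posterior-first-moment-stein} yields constants $A,B$ depending on $\tau_0$.

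For the small-noise regime we use the alternative representation obtained by convolution. Writing $q_\tau(y)=\int q_0(x_0)\,\cN(y;a_\tau x_0,\gamma_\tau^2 I)\,\dd x_0$ and changing variables $x_0=(y-\gamma_\tau z)/a_\tau$ gives $q_\tau(y)\propto \int e^{-V_0((y-\gamma_\tau z)/a_\tau)}\,\cN(z;0,I)\,\dd z$. Differentiating under the integral then yields
\[
    \nabla\log q_\tau(y)=-a_\tau^{-1}\,\E\bigl[\nabla V_0(X_0)\mid Y_\tau=y\bigr].
\]
This is the ``score = posterior mean of the data score'' identity. Justifying the differentiation under the integral uses the growth bound $\|\nabla V_0(x)\|\le G+L_2\|x\|$ together with Gaussian integrability. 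Consequently
\[
    \|\nabla\log q_\tau(y)\|\le a_T^{-1}\bigl(G+L_2\,\E[\|X_0\|\mid Y_\tau=y]\bigr)\le a_T^{-1}\bigl(G+L_2C_{\rm post}(1+\|y\|)\bigr),
\]
again by Lemma~\ref{lem:posterior-first-moment-stein}. This bound is in fact uniform over all $\tau\in[0,T]$, since $a_\tau\ge a_T>0$. So the splitting is unnecessary, and we may take $A=a_T^{-1}(G+L_2C_{\rm post})$ and $B=a_T^{-1}L_2C_{\rm post}$.

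The main obstacle is rigorously justifying the identity $\nabla\log q_\tau=-a_\tau^{-1}\E[\nabla V_0(X_0)\mid Y_\tau]$, including at $\tau=0$ where it is trivial. The cleanest route is integration by parts in $x_0$ inside the joint density $\exp(-V_0(x_0)-\|y-a_\tau x_0\|^2/(2\gamma_\tau^2))$. The relation $\nabla_y$ of the Gaussian kernel $=-a_\tau^{-1}\nabla_{x_0}$ of the kernel lets us move the derivative onto $e^{-V_0}$. Boundary terms vanish because $e^{-V_0}$ decays like a Gaussian, which follows from the radial dissipativity in Lemma~\ref{lem:radial-dissipativity-qem}, while $\nabla V_0$ grows at most linearly.
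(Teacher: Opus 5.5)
Your proof is correct and matches the paper's argument. Both use the identity $\nabla\log q_\tau(y)=-a_\tau^{-1}\E[\nabla V_0(X_0)\mid Y_\tau=y]$ obtained by integration by parts, the linear growth of $\nabla V_0$, Lemma~\ref{lem:posterior-first-moment-stein}, and $a_\tau\ge a_T$, with $\tau=0$ handled directly; as you note yourself, the initial Tweedie/splitting detour is unnecessary.
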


\begin{proof}
For \(\tau>0\), integration by parts gives
\[
    \nabla\log q_\tau(y)
    =
    -\frac1{a_\tau}
    \mathbb E[\nabla V_0(X_0)\mid Y_\tau=y].
\]
Therefore, using \eqref{assumption:smoothness} and Lemma~\ref{lem:posterior-first-moment-stein},
\[
    \|\nabla\log q_\tau(y)\|
    \leq
    \frac{1}{a_T}
    \mathbb E[G+L_2\|X_0\|\mid Y_\tau=y] 
    \leq
    \frac{G}{a_T} + \frac{C_{\rm post}L_2}{a_T}(1+\|y\|).
\]
At \(\tau=0\), by \eqref{assumption:smoothness}
\[
    \|\nabla\log q_0(y)\|= \|\nabla V_0(y)\| \leq G+L_2\|y\|,
\]
\end{proof}

Let $q_{\tau,y}(x)$ be a conditional probability of $X_0$ given $Y_\tau = y$, that is,
\begin{equation}\label{eq:conditional-probability}
    q_{\tau,y}(x)
    \propto
    \exp\left(
        -V_0(x)-\frac{\|y-a_\tau x\|^2}{2\gamma_\tau^2}
    \right).
\end{equation}
\begin{lemma}[Posterior centered moments]
\label{lem:posterior-centered-moments}
Let $X$ be a random variable that follows $q_{\tau,y}$.
Under Assumption~\ref{ass:regular-dissipative-data}, 
\begin{align*}
    &\E[\|X-\E[X]\|^2]
    \leq
    \frac{2d+b_0}{2(m_0+\eta_\tau)}, \\
    &\E[\|X-\E[X]\|^4]
    \leq
    \frac{(2d+4+b_0)(2d+b_0)}{(m_0+\eta_\tau)^2},
\end{align*}
where $\eta_\tau:=a_\tau^2/\gamma_\tau^2$.
\end{lemma}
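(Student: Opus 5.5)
The idea is to treat $q_{\tau,y}$ as a Gibbs measure $q_{\tau,y}\propto e^{-\Phi}$ with
\[
\Phi(x):=V_0(x)+\frac{\|y-a_\tau x\|^2}{2\gamma_\tau^2}=V_0(x)+\frac{\eta_\tau}{2}\Bigl\|x-\frac{y}{a_\tau}\Bigr\|^2 .
\]
The quadratic part contributes $\langle x-x',\eta_\tau(x-x')\rangle=\eta_\tau\|x-x'\|^2$. Combined with \eqref{assumption:dissipativity}, $\Phi$ therefore satisfies the two-point dissipativity
\[
\langle x-x',\nabla\Phi(x)-\nabla\Phi(x')\rangle\ge(m_0+\eta_\tau)\|x-x'\|^2-b_0 .
\]
I will apply this to two independent copies $X,X'\sim q_{\tau,y}$. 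I will use Stein/integration-by-parts identities of the form $\E[\langle F(X),\nabla\Phi(X)\rangle]=\E[\nabla\cdot F(X)]$, as in Lemmas~\ref{lem:radial-dissipativity-qem} and \ref{lem:posterior-first-moment-stein}.

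\textbf{Second moment.} Fix $X'$ and take $F(x)=x-X'$. Integration by parts gives $\E[\langle X-X',\nabla\Phi(X)\rangle\mid X']=d$. Symmetrically, $\E[\langle X'-X,\nabla\Phi(X')\rangle\mid X]=d$. Summing,
\[
\E\langle X-X',\nabla\Phi(X)-\nabla\Phi(X')\rangle=2d .
\]
The dissipativity bound then yields $(m_0+\eta_\tau)\E\|X-X'\|^2\le 2d+b_0$. Since $\E\|X-X'\|^2=2\E\|X-\E X\|^2$, the first claim follows.

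\textbf{Fourth moment.} Now take $F(x)=\|x-c\|^2(x-c)$ with $c=X'$ frozen. Its divergence is $(d+2)\|x-c\|^2$, so $\E[\|X-X'\|^2\langle X-X',\nabla\Phi(X)\rangle\mid X']=(d+2)\E[\|X-X'\|^2\mid X']$. Symmetrizing in the same way gives
\[
\E\bigl[\|X-X'\|^2\langle X-X',\nabla\Phi(X)-\nabla\Phi(X')\rangle\bigr]=2(d+2)\E\|X-X'\|^2 .
\]
Multiplying the dissipativity inequality by $\|X-X'\|^2\ge 0$ bounds the left side below by $(m_0+\eta_\tau)\E\|X-X'\|^4-b_0\E\|X-X'\|^2$. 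Hence
\[
\E\|X-X'\|^4\le\frac{2d+4+b_0}{m_0+\eta_\tau}\,\E\|X-X'\|^2\le\frac{(2d+4+b_0)(2d+b_0)}{(m_0+\eta_\tau)^2}.
\]
To pass back to the centered moment, note $\E[X-X'\mid X]=X-\E X$. Conditional Jensen applied to the convex map $z\mapsto\|z\|^4$ then gives $\E\|X-\E X\|^4\le\E\|X-X'\|^4$, which is the second claim.

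\textbf{Main obstacle.} The delicate step is justifying the integration by parts, i.e.\ showing that the boundary terms vanish and that all expectations are finite. I would handle this as in Lemma~\ref{lem:radial-dissipativity-qem}. Radial dissipativity of $\Phi$, obtained from the two-point version with $x'=0$, together with the growth bound \eqref{assumption:smoothness} shows that $q_{\tau,y}$ has Gaussian-type tails. Since $F$ and $F\,\nabla\Phi$ grow only polynomially, the products $F e^{-\Phi}$ decay fast enough, so the boundary terms at infinity vanish and every moment used above is finite.
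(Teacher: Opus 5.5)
Your proposal is correct and follows essentially the same argument as the paper. The paper likewise writes $q_{\tau,y}\propto e^{-\Phi_{\tau,y}}$ with two-point dissipativity constant $m_0+\eta_\tau$, and applies the symmetrized integration-by-parts identity with $F(x)=\|x-x'\|^{p-2}(x-x')$ for $p=2,4$ to obtain $(m_0+\eta_\tau)\E\|X-X'\|^p\le\{2(d+p-2)+b_0\}\E\|X-X'\|^{p-2}$, then concludes via $\E\|X-X'\|^2=2\E\|X-\E X\|^2$ and Jensen. Your extra remarks on justifying the integration by parts and on using the conditional form of Jensen only make explicit what the paper leaves implicit.
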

\begin{proof}
Let $\Phi_{\tau,y}(x):=V_0(x)+\frac{\eta_\tau}{2}\left\|x-\frac{y}{a_\tau}\right\|^2$.
Then \(q_{\tau,y}\propto e^{-\Phi_{\tau,y}}\), and by \eqref{assumption:dissipativity},
\[
    \langle x-x',
        \nabla\Phi_{\tau,y}(x)-\nabla\Phi_{\tau,y}(x')\rangle  
    \geq
    (m_0+\eta_\tau)\|x-x'\|^2-b_0.
\]

Let \(X,X'\) be i.i.d. from \(q_{\tau,y}\). For \(p\ge2\), 
\begin{align*}
    0&=\sum_{i=1}^d\int_{-\infty}^\infty \partial_{x_i}( \|x-x'\|^{p-2}(x_i-x'_i) q_{\tau,y}(x)) \dd x\\
    &=\sum_{i=1}^d\int_{-\infty}^\infty \left\{ \partial_{x_i}( \|x-x'\|^{p-2}(x_i-x'_i) ) q_{\tau,y}(x) -\|x-x'\|^{p-2}(x_i-x'_i)\partial_{x_i}\Phi_{\tau,y}(x) q_{\tau,y}(x) \right\}\dd x \\
    &=(d+p-2)\int_{-\infty}^\infty  \|x-x'\|^{p-2} q_{\tau,y}(x) \dd x 
    - \int_{-\infty}^\infty \|x-x'\|^{p-2} \langle x-x', \Phi_{\tau,y}(x) \rangle q_{\tau,y}(x) \dd x.
\end{align*}
Taking the expectation w.r.t $X' \sim q_{\tau,y}(X')$ as well, we have
\[ 
    (d+p-2)\E[ \|X-X'\|^{p-2}] = \E[\|X-X'\|^{p-2} \langle X-X', \Phi_{\tau,y}(X) \rangle].
\]

The equation where $X$ and $X'$ are flipped is also obtained in the same way. As a result,
\begin{align*}
    \E\left[
        \|X-X'\|^{p-2}
        \langle X-X',
        \nabla\Phi_{\tau,y}(X)-\nabla\Phi_{\tau,y}(X')\rangle
    \right]  
    = 2(d+p-2)\E\|X-X'\|^{p-2}.
\end{align*}

Therefore, we get
\begin{equation}\label{eq:general-p-bound}
    (m_0+\eta_\tau)[\E\|X-X'\|^p]
    \le
    \{2(d+p-2)+b_0\}\E[\|X-X'\|^{p-2}].
\end{equation}
For $p=2$,
\[
    \E[\|X-X'\|^2]
    \leq
    \frac{2d+b_0}{m_0+\eta_\tau}.
\]
Since $\E[\|X-X'\|^2] = 2\E[\|X-\E[X]\|^2]$, the second-moment bound follows. 

For $p=4$, using the bound with $p=2$ and Eq.~\eqref{eq:general-p-bound},
\[
    \E[\|X-X'\|^4]
    \leq
    \frac{(2d+4+b_0)(2d+b_0)}{(m_0+\eta_\tau)^2}.
\]
This bound and Jensen's inequality $\E[\|X-\E[X]\|^4] \leq \E[\|X-X'\|^4]$ conclude the proof.
\end{proof}

\begin{lemma}[Uniform spatial Lipschitzness of the VP score]
\label{lem:score-hessian-final}
Under Assumption~\ref{ass:regular-dissipative-data},
\[
    \sup_{\tau\in[0,T]}\sup_{y\in\mathbb R^d}
    \|\nabla_y^2\log q_\tau(y)\|_{\rm op}
    \leq 
    \max\left\{
        L_2,
        \frac{L_2(2d+b_0)}{2a_T^2}
    \right\} =: H_2.
\]
Consequently, the condition \eqref{eq:lip_SALD_1} of Theorem~\ref{thm:forward-KL-discrete} holds for
$\pi_t=q_{T-t}$ with $L_{\pi,\mathrm{space}}=H_2$.
\end{lemma}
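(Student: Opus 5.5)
We need a uniform bound on the Hessian of $\log q_\tau$. For $\tau=0$ this is immediate: $\nabla^2\log q_0=-\nabla^2V_0$, so the operator norm is at most $L_2$ by \eqref{assumption:smoothness}. For $\tau>0$ the plan is to use the Tweedie-type representation already used in Lemma~\ref{lem:score-growth-final}, namely
\[
\nabla\log q_\tau(y)=-\frac{1}{a_\tau}\E\left[\nabla V_0(X_0)\mid Y_\tau=y\right]=-\frac{1}{a_\tau}\E_{q_{\tau,y}}[\nabla V_0(X)],
\]
with the posterior $q_{\tau,y}$ from \eqref{eq:conditional-probability}. We then differentiate this identity in $y$.

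Since $y$ enters $q_{\tau,y}$ only through the factor $\exp(\langle y, a_\tau x\rangle/\gamma_\tau^2)$ (up to normalization), we have $\nabla_y\log q_{\tau,y}(x)=\frac{a_\tau}{\gamma_\tau^2}(x-\E_{q_{\tau,y}}[X])$. Differentiating under the integral gives a covariance formula:
\[
\nabla_y^2\log q_\tau(y)=-\frac{1}{\gamma_\tau^2}\,\mathrm{Cov}_{q_{\tau,y}}\!\left(\nabla V_0(X),\,X\right).
\]
Here the matrix is $\E[\nabla V_0(X)(X-\E X)^\top]$, up to transpose. Exchanging differentiation and expectation is justified by the Gaussian tails coming from the dissipativity of $V_0$ and the quadratic term.

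Next we bound this covariance by writing $\E[(\nabla V_0(X)-\nabla V_0(\E X))(X-\E X)^\top]$; the subtracted term has zero mean against $X-\E X$. Since $\nabla V_0$ is $L_2$-Lipschitz, for unit vectors $e,e'$ we get
\[
|e^\top\mathrm{Cov}\,e'|\le L_2\,\E\|X-\E X\|^2\le \frac{L_2(2d+b_0)}{2(m_0+\eta_\tau)}
\]
by Lemma~\ref{lem:posterior-centered-moments}. Multiplying by $\gamma_\tau^{-2}$ and using $\eta_\tau=a_\tau^2/\gamma_\tau^2$ gives $\gamma_\tau^{-2}/(m_0+\eta_\tau)\le \gamma_\tau^{-2}/\eta_\tau=a_\tau^{-2}\le a_T^{-2}$. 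Hence $\|\nabla^2\log q_\tau\|_{\rm op}\le L_2(2d+b_0)/(2a_T^2)$ for $\tau>0$. Taking the maximum with the $\tau=0$ case yields $H_2$.

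Finally, $\pi_t=q_{T-t}$, so the mean value theorem turns the Hessian bound into \eqref{eq:lip_SALD_1} with $L_{\pi,\mathrm{space}}=H_2$.

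The main obstacle is rigor, not algebra. One must justify the covariance identity, in particular differentiating under the integral uniformly in $y$, and handle the degeneration as $\tau\to0$, where $\gamma_\tau\to0$. The second point is harmless, because the bound $\gamma_\tau^{-2}/(m_0+\eta_\tau)\le a_T^{-2}$ stays uniform. For the first point, I would use the moment bounds of Lemma~\ref{lem:posterior-centered-moments} together with the linear growth of $\nabla V_0$ to supply dominated convergence.
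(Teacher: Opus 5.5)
Your proof is correct and follows the paper's argument. It uses the same ingredients: the posterior covariance representation $\nabla_y^2\log q_\tau(y)=-\gamma_\tau^{-2}\mathrm{Cov}_{q_{\tau,y}}(\nabla V_0(X),X)$, the $L_2$-Lipschitz reduction to $L_2\,\E\|X-\E X\|^2$, Lemma~\ref{lem:posterior-centered-moments}, and the estimate $\gamma_\tau^{-2}(m_0+\eta_\tau)^{-1}\le a_T^{-2}$. The only cosmetic difference is that you subtract $\nabla V_0(\E X)$ where the paper symmetrizes with an independent copy $X'$, and both give the identical constant.
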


\begin{proof}
For \(\tau>0\), posterior calculus gives
\[
    \nabla_y^2\log q_\tau(y)
    =
    -\frac1{\gamma_\tau^2}
    \operatorname{Cov}_{q_{\tau,y}}(\nabla V_0(X),X).
\]
Let $X,X'$ be i.i.d. from $q_{\tau,y}$. Then
\[
    \operatorname{Cov}(\nabla V_0(X),X)
    = \frac{1}{2} \E[(\nabla V_0(X)-\nabla V_0(X'))(X-X')^\top].
\]
Using the $L_2$-Lipschitz continuity of $\nabla V_0$,
\[
    \|\operatorname{Cov}(\nabla V_0(X),X)\|_{\rm op}
    \leq \frac{L_2}{2} \E[ \| X - X' \|^2 ]
    = L_2 \E[\|X-\E[X]\|^2].
\]
By Lemma~\ref{lem:posterior-centered-moments},
\[
    \|\operatorname{Cov}(\nabla V_0(X),X)\|_{\rm op}
    \le
    \frac{L_2(2d+b_0)}{2(m_0+\eta_\tau)}.
\]
Thus
\[
    \|\nabla_y^2\log q_\tau(y)\|_{\rm op}
    \leq
    \frac{L_2(2d+b_0)}
    {2\gamma_\tau^2(m_0+\eta_\tau)}  
    =
    \frac{L_2(2d+b_0)}
    {2(a_\tau^2+m_0\gamma_\tau^2)}
    \leq
    \frac{L_2(2d+b_0)}{2a_T^2}.
\]

At \(\tau=0\), \(\nabla^2\log q_0=-\nabla^2V_0\), hence
\[
    \|\nabla^2\log q_0\|_{\rm op}\le L_2.
\]
The claim follows.
\end{proof}

\begin{lemma}[Uniform third derivative bound for the VP score]
\label{lem:score-third-final}
Under Assumption~\ref{ass:regular-dissipative-data},
\[
    \sup_{\tau\in[0,T]}\sup_{y\in\mathbb R^d}
    \|D_y^3\log q_\tau(y)\|_{\rm op}
    \le H_3
\]
for some finite constant \(H_3\) depending only on
\(a_T,L_2,L_3,m_0,b_0,d\).
\end{lemma}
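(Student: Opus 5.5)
The plan is to follow the route used for Lemma~\ref{lem:score-hessian-final}: write the third derivative of $\log q_\tau$ as a posterior cumulant, then bound that cumulant with the dissipativity moment estimates of Lemma~\ref{lem:posterior-centered-moments}.

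\textbf{Case $\tau=0$.} Here $D^3\log q_0=-D^3V_0$, so the bound $L_3$ is immediate.

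\textbf{Case $\tau>0$: cumulant representation.} Write
\[
\log q_\tau(y)=-\frac{\|y\|^2}{2\gamma_\tau^2}+\log\int e^{-V_0(x)-\frac{a_\tau^2\|x\|^2}{2\gamma_\tau^2}+\frac{a_\tau}{\gamma_\tau^2}\langle x,y\rangle}\,\dd x+\mathrm{const}.
\]
The second term is the cumulant generating function of the posterior $q_{\tau,y}$, viewed as a function of $\theta=a_\tau y/\gamma_\tau^2$. Hence
\[
D^3_y\log q_\tau(y)=\frac{a_\tau^3}{\gamma_\tau^6}\,\kappa_3(q_{\tau,y}),\qquad \kappa_3=\E\big[(X-\E X)^{\otimes 3}\big],
\]
and $\|\kappa_3\|_{\rm op}\le \E\|X-\E X\|^3$.

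\textbf{Small-$\tau$ obstacle.} Lemma~\ref{lem:posterior-centered-moments} gives $\E\|X-\E X\|^3\lesssim (m_0+\eta_\tau)^{-3/2}$, which is of order $\gamma_\tau^3/a_\tau^3$. Inserting this yields a bound of order $\gamma_\tau^{-3}$, which blows up as $\tau\to0$. This is the main obstacle, and it has to be handled the same way as in the Hessian proof: trade the Gaussian factors $\gamma_\tau^{-2}$ for derivatives of $V_0$ by integrating by parts.

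\textbf{Integration by parts (first choice).} Start from the identity already used,
\[
\nabla^2\log q_\tau(y)=-\gamma_\tau^{-2}\operatorname{Cov}_{q_{\tau,y}}(\nabla V_0(X),X).
\]
Differentiate it once more in $y$. Since $\nabla_y q_{\tau,y}(x)=\frac{a_\tau}{\gamma_\tau^2}q_{\tau,y}(x)\,(x-\E x)$, the derivative of the covariance is a third-order joint cumulant
\[
\frac{a_\tau}{\gamma_\tau^2}\,\kappa\big(\nabla V_0(X),X,X\big),
\]
giving an overall prefactor $a_\tau\gamma_\tau^{-4}$.

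To absorb that prefactor, use the posterior Stein identity: for smooth $h$,
\[
\E[h(X)\,\nabla\Phi_{\tau,y}(X)]=\E[\nabla h(X)],\qquad \nabla\Phi_{\tau,y}(x)=\nabla V_0(x)+\eta_\tau\Big(x-\frac{y}{a_\tau}\Big).
\]
This expresses $\eta_\tau\operatorname{Cov}(X,h)$ as $\E\nabla h-\operatorname{Cov}(\nabla V_0,h)$. Applying it twice converts the factors $\gamma_\tau^{-2}\approx \eta_\tau/a_\tau^2$ into expectations of $\nabla^2V_0$ and $\nabla^3V_0$, which are bounded by $L_2$ and $L_3$, together with centered moments of $X$. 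The result should be
\[
D^3\log q_\tau=a_\tau^{-3}\Big(-\E[D^3V_0(X)]+\text{(cumulant terms in }\nabla V_0,\nabla^2V_0,X)\Big).
\]
This is the convolution analogue of the Hessian formula. It uses the equivalent representation $\nabla\log q_\tau=-a_\tau^{-1}\E[\nabla V_0(X)\mid Y_\tau=y]$ from Lemma~\ref{lem:score-growth-final}, differentiated twice, with each $y$-derivative contributing $\frac{a_\tau}{\gamma_\tau^2}\times$(covariance with $X$). Each such covariance is then converted via Stein's identity into $\E[\nabla(\cdot)]$ minus a $\nabla V_0$-covariance, with every $\eta_\tau^{-1}$ supplied by Lemma~\ref{lem:posterior-centered-moments}.

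\textbf{Bounding the remaining terms.} The leftover cumulants have the form $\operatorname{Cov}(\nabla^2V_0(X),X)$ and $\kappa(\nabla V_0,X,X)$ with suitable powers of $(m_0+\eta_\tau)^{-1}$. Bound them by Lipschitz arguments in the pairs $(X,X')$ as in Lemma~\ref{lem:score-hessian-final}, using $L_2,L_3$ and the second and fourth centered moments from Lemma~\ref{lem:posterior-centered-moments} via Cauchy--Schwarz. The factors $\gamma_\tau^{-2}(m_0+\eta_\tau)^{-1}\le a_T^{-2}$ then give a bound uniform in $\tau\in(0,T]$.

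Combining this with the $\tau=0$ case gives a constant $H_3$ depending only on $a_T,L_2,L_3,m_0,b_0,d$, and in particular independent of $y$, since Lemma~\ref{lem:posterior-centered-moments} is uniform in $y$. The step I expect to be hardest, and would carry out first, is the bookkeeping that checks every negative power of $\gamma_\tau$ is matched by a factor $(m_0+\eta_\tau)^{-1}$ or by an integration by parts.
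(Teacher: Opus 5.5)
Your route is correct, apart from one bookkeeping slip noted below, but it differs from the paper's. The paper also starts from $D^3_y\log q_\tau=\frac{a_\tau^3}{\gamma_\tau^6}\E[\bar X^{\otimes 3}]$, but it never moves derivatives onto $V_0$. Instead it splits at $\eta_\tau=2L_2$:
\begin{itemize}
\item For $\eta_\tau<2L_2$ the prefactor $a_\tau^3/\gamma_\tau^6=\eta_\tau^3/a_\tau^3$ is already bounded.
\item For $\eta_\tau\ge 2L_2$ it applies Stein's identity once, with the quadratic test function $\langle u,x-m\rangle\langle v,x-m\rangle$, and Taylor-expands $\nabla V_0$ at the posterior mean $m$. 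This gives $T(u,v,(\eta_\tau I+\nabla^2V_0(m))w)=O\bigl(L_3(m_0+\eta_\tau)^{-2}\bigr)$. Inverting $\eta_\tau I+\nabla^2V_0(m)$, which is where $\eta_\tau\ge 2L_2$ is needed, yields $|T|\lesssim L_3\eta_\tau^{-3}$, and this cancels the prefactor.
\end{itemize}
Your approach instead pushes both $y$-derivatives onto $V_0$. Carried to the end, it gives the exact identity
\[
D^3_y\log q_\tau(y)=a_\tau^{-3}\Bigl(-\E[D^3V_0(X)]+3\,\mathrm{Sym}\operatorname{Cov}\bigl(\nabla^2V_0(X),\nabla V_0(X)\bigr)-\kappa_3\bigl(\nabla V_0(X)\bigr)\Bigr),\qquad X\sim q_{\tau,y}.
\]
Equivalently, this is $\nabla^k q_\tau/q_\tau=a_\tau^{-k}\E[(\nabla^k q_0/q_0)(X)\mid Y_\tau=y]$ expanded into cumulants. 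Its three terms are bounded by $L_3$, $\tfrac32L_2L_3\E\|X-X'\|^2$ and $L_2^3\E\|X-X'\|^3$. All three are uniform in $(\tau,y)$ by Lemma~\ref{lem:posterior-centered-moments}, since $m_0>0$. What your route buys is a single argument for all $\tau>0$, with no regime split, no perturbative inversion, and explicit constants. The paper's argument needs only one Stein identity and only moments of $X$, at the cost of the case distinction.

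The slip is in your last paragraph: $\kappa(\nabla V_0,X,X)$ cannot be one of the leftover terms.
\begin{itemize}
\item It carries the prefactor $a_\tau\gamma_\tau^{-4}=\eta_\tau^2/a_\tau^3$. Its Lipschitz bound $O\bigl((m_0+\eta_\tau)^{-3/2}\bigr)$ therefore still leaves growth of order $\eta_\tau^{1/2}$ as $\tau\to0$.
\item One Stein step on an $X$-slot gives $\eta_\tau\,\kappa(\nabla V_0,X,X)=\operatorname{Cov}(\nabla^2V_0,X)-\kappa(\nabla V_0,\nabla V_0,X)$.
\item So the correct leftovers are $\operatorname{Cov}(\nabla^2V_0,X)=O\bigl((m_0+\eta_\tau)^{-1}\bigr)$ and $\kappa(\nabla V_0,\nabla V_0,X)=O\bigl((m_0+\eta_\tau)^{-3/2}\bigr)$, each with prefactor $(a_\tau\gamma_\tau^2)^{-1}$.
\end{itemize}
Only at that point does $\gamma_\tau^2(m_0+\eta_\tau)=a_\tau^2+m_0\gamma_\tau^2\ge a_\tau^2$ close the bound. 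Alternatively, a second Stein step gives the closed formula above.
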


\begin{proof}
For \(\tau>0\), posterior calculus gives, for unit vectors \(u,v,w\),
\begin{equation}\label{eq:third-derivative_score}
    D_y^3\log q_\tau(y)[u,v,w]
    =
    \frac{a_\tau^3}{\gamma_\tau^6}
    \mathbb E_{q_{\tau,y}}
    [
        \langle u,\bar X\rangle
        \langle v,\bar X\rangle
        \langle w,\bar X\rangle
    ],
\end{equation}
where $X \sim q_{\tau,y} \propto
    \exp\left(
        -V_0(x)-\frac{\|y-a_\tau x\|^2}{2\gamma_\tau^2}
    \right)$ and $\bar X=X-\E[X]$.

Set \(\eta_\tau=a_\tau^2/\gamma_\tau^2\). We split into two regimes.

First suppose \(\eta_\tau\ge2L_2\). Let \(m:=\E[X]\), \(H:=\nabla^2V_0(m)\), and write
\[
    \nabla V_0(X)-\E[\nabla V_0(X)]
    =
    H\bar X+\mathcal R(X)-\E[\mathcal{R}(X)] ,
\]
where
\[
    \mathcal R(X)
    :=
    \nabla V_0(X)-\nabla V_0(m)-H\bar X.
\]
By Eq.~\eqref{assumption:smoothness} (Assumption \ref{ass:regular-dissipative-data}),
\[
    \|\mathcal R(X)\|\le \frac{L_3}{2}\|\bar X\|^2.
\]
Set $\Phi_{\tau,y}(x):=V_0(x)+\frac{\eta_\tau}{2}\left\|x-\frac{y}{a_\tau}\right\|^2$ and
\[
    T(u,v,w):=
    \mathbb E[
        \langle u,\bar X\rangle
        \langle v,\bar X\rangle
        \langle w,\bar X\rangle
    ],
\]
By integration by parts 
\begin{align*}
    &0 = \E[\nabla f(X)]
    = \E[f(X)\nabla\Phi_{\tau,y}(X)]~~~~(f(x)=\langle u,x-m\rangle\langle v,x-m\rangle), \\
    &0 = \E[\nabla\Phi_{\tau,y}(X)],
\end{align*}
we obtain $0 = \E[f(X) (\nabla\Phi_{\tau,y}(X) - \E[\nabla\Phi_{\tau,y}(X)] )]$.
Combining this with 
\begin{align*}
    \nabla\Phi_{\tau,y}(X) - \E[\nabla\Phi_{\tau,y}(X)]
    &= \nabla V_0(X) - \E[\nabla V_0(X)] 
    + \eta_r \bar{X} \\
    &= H\bar X+\mathcal R(X)-\E[\mathcal{R}(X)] + \eta_r \bar{X},
\end{align*}
we get
\[
    T(u,v,(\eta_\tau I+H)w)
    =
    -
    \E[
        \langle u,\bar X\rangle
        \langle v,\bar X\rangle
        \langle w,\mathcal{R}(X)-\E[\mathcal{R}(X)]\rangle
    ].
\]

Since \(\|H\|_{\rm op}\le L_2\) and \(\eta_\tau\ge2L_2\), $\|\eta_\tau I+H\|_{\rm op} \geq \eta_\tau-L_2$.
Therefore,
\[
    (\eta_\tau-L_2) |T(u,v,w)|
    \leq
    \frac{L_3}{2}
    \left\{
        \E[\|\bar X\|^4]
        +
        (\E[\|\bar X\|^2])^2
    \right\}.
\]

By Lemma~\ref{lem:posterior-centered-moments}, the bracketed term is bounded by
\(\exists C(m_0,b_0,d)/(m_0+\eta_\tau)^2\). Since \(\eta_\tau\ge2L_2\), there exists $C>0$ such that 
\[
    |T(u,v,w)|
    \le
    \frac{C L_3}{\eta_\tau^3}.
\]
Plugging this into Eq.~\eqref{eq:third-derivative_score} yields
\[
    |D_y^3\log p_\tau(y)[u,v,w]|
    \le
    \frac{C L_3 a_\tau^3}{\gamma_\tau^6\eta_\tau^3}
    =
    \frac{C L_3}{a_\tau^3}
    \le
    \frac{C L_3}{a_T^3}.
\]

Next suppose \(\eta_\tau<2L_2\). Then $\frac{a_\tau^3}{\gamma_\tau^6} = \frac{\eta_\tau^3}{a_\tau^3} \leq \frac{(2L_2)^3}{a_T^3}$.

Moreover, Lemma~\ref{lem:posterior-centered-moments} gives a uniform fourth moment bound, and hence $\E\|\bar X\|^3\leq \exists C(m_0,b_0,d)$.

Therefore, Eq.~\eqref{eq:third-derivative_score} gives a uniform bound in this regime as well.

At $\tau=0$, $D^3\log q_0=-D^3V_0$, which is bounded by $L_3$. Combining the three cases proves the claim.
\end{proof}

\begin{lemma}[Time Lipschitzness of the VP score]
\label{lem:score-time-final}
Under Assumption~\ref{ass:regular-dissipative-data}, there exists
\(C_{\rm time}<\infty\) such that
\[
    \|\partial_\tau\nabla\log q_\tau(x)\|
    \le
    C_{\rm time}(1+\|x\|),
    \qquad
    \forall \tau\in[0,T],\ x\in\R^d.
\]
Consequently, for the linear slowdown \(t(s)=s/r\), \(r\ge1\),the condition \eqref{eq:lip_SALD_2} of Theorem~\ref{thm:forward-KL-discrete} holds with
\[
    M(x)=\|x\|,
    \qquad
    L_{\pi,\mathrm{time}}=\frac{C_{\rm time}}{r}.
\]
\end{lemma}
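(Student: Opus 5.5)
We need to bound $\partial_\tau\nabla\log q_\tau(x)$ linearly in $\|x\|$ uniformly in $\tau\in[0,T]$. The plan is to avoid the Tweedie/posterior representation in $\tau$ directly and instead use the Fokker--Planck equation of the VP forward process. Writing $\ell_\tau:=\log q_\tau$, the forward density satisfies $\partial_\tau q_\tau=\frac{\beta_\tau}{2}\nabla\cdot(x q_\tau)+\frac{\beta_\tau}{2}\Delta q_\tau$. Dividing by $q_\tau$ gives
\[
\partial_\tau \ell_\tau=\frac{\beta_\tau}{2}\Bigl(d+\langle x,\nabla\ell_\tau\rangle+\Delta\ell_\tau+\|\nabla\ell_\tau\|^2\Bigr).
\]
Taking the spatial gradient, we get
\[
\partial_\tau\nabla\ell_\tau=\frac{\beta_\tau}{2}\Bigl(\nabla\ell_\tau+\nabla^2\ell_\tau\,x+\nabla\Delta\ell_\tau+2\nabla^2\ell_\tau\nabla\ell_\tau\Bigr).
\]
Each term is then bounded using the previous lemmas together with $\beta_\tau\le\beta_{\max}$.

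\textbf{First term.} Lemma~\ref{lem:score-growth-final} gives $\|\nabla\ell_\tau\|\le A+B\|x\|$.

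\textbf{Second term.} Lemma~\ref{lem:score-hessian-final} gives $\|\nabla^2\ell_\tau x\|\le H_2\|x\|$.

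\textbf{Third term.} $\nabla\Delta\ell_\tau$ is a contraction of $D^3\ell_\tau$, so Lemma~\ref{lem:score-third-final} gives $\|\nabla\Delta\ell_\tau\|\le dH_3$ (or $\sqrt d\,dH_3$, depending on the norm convention).

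\textbf{Fourth term.} Combining the Hessian and growth bounds, $\|\nabla^2\ell_\tau\nabla\ell_\tau\|\le H_2(A+B\|x\|)$.

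Summing these, we can take $C_{\rm time}=\frac{\beta_{\max}}{2}\bigl(A+dH_3+2H_2A+\max\{B+H_2+2H_2B,\,0\}\bigr)$, which yields $C_{\rm time}(1+\|x\|)$.

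The main obstacle is justifying the computation rigorously. We need $q_\tau$ to be $C^{3}$ in space and $C^1$ in time, the Fokker--Planck equation to hold classically, and the interchange $\partial_\tau\nabla=\nabla\partial_\tau$ to be valid. For $\tau>0$ this follows from the Gaussian convolution representation $q_\tau(y)=\int q_0(x)\,\mathcal N(y;a_\tau x,\gamma_\tau^2 I)\,\dd x$, which is smooth, together with the exponential moment of Lemma~\ref{lem:radial-dissipativity-qem} for dominated differentiation. At $\tau=0$, where $\gamma_0=0$, we would argue by continuity: the right-hand side of the gradient identity is continuous up to $\tau=0$, with limit expressed through $V_0\in C^3$, so the bound extends by a limiting argument. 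Alternatively, the VP flow can be used to write $q_\tau$ as a smooth pushforward of $q_0$ with Gaussian noise, whose derivatives are bounded via Assumption~\ref{ass:regular-dissipative-data}.

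For the consequence, set $t(s)=s/r$ and $\pi_{t}=q_{T-t}$. For $s<s'$, the mean value theorem gives
\[
\|\nabla\log\pi_{t(s')}(x)-\nabla\log\pi_{t(s)}(x)\|\le \sup_\tau\|\partial_\tau\nabla\log q_\tau(x)\|\cdot\frac{s'-s}{r}\le \frac{C_{\rm time}}{r}(s'-s)(1+\|x\|).
\]
This is exactly \eqref{eq:lip_SALD_2} with $M(x)=\|x\|$ and $L_{\pi,\mathrm{time}}=C_{\rm time}/r$.
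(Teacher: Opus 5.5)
Your proposal is correct and follows essentially the same route as the paper. You derive the identity for $\partial_\tau\log q_\tau$ from the VP Fokker--Planck equation, differentiate it in space, bound the four resulting terms with the score-growth, Hessian and third-derivative lemmas together with $\beta_\tau\le\beta_{\max}$, and conclude with the mean value inequality under $t(s)=s/r$. Your remarks on justifying the interchange $\partial_\tau\nabla=\nabla\partial_\tau$ and on treating the endpoint $\tau=0$ by continuity address regularity points that the paper's proof leaves implicit.
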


\begin{proof}
Let \(s_\tau(x):=\nabla\log q_\tau(x)\). The Fokker--Planck equation gives
\[
    \partial_\tau q_\tau
    =
    \frac{\beta_\tau}{2}\nabla\cdot(x q_\tau)
    +
    \frac{\beta_\tau}{2}\Delta q_\tau.
\]
Hence
\[
    \partial_\tau\log q_\tau
    =
    \frac{\beta_\tau}{2}
    \left[
        d+x^\top s_\tau+\nabla\cdot s_\tau+\|s_\tau\|^2
    \right].
\]
Taking the gradient,
\[
    \partial_\tau s_\tau
    =
    \frac{\beta_\tau}{2}
    \left[
        s_\tau+(\nabla s_\tau)x
        +\nabla(\nabla\cdot s_\tau)
        +2(\nabla s_\tau)s_\tau
    \right].
\]
Using Lemmas~\ref{lem:score-growth-final}, \ref{lem:score-hessian-final}, and \ref{lem:score-third-final}, we get
\[
    \|\partial_\tau s_\tau(x)\|
    \le
    C_{\rm time}(1+\|x\|).
\]
For \(\pi_{t(s)}=q_{T-t(s)}\), the mean value theorem gives
\[
\begin{aligned}
    &\|\nabla\log\pi_{t(s')}(x)-\nabla\log\pi_{t(s)}(x)\|  \\
    &\qquad\leq
    C_{\rm time}|t(s')-t(s)|(1+\|x\|)
    \leq
    \frac{C_{\rm time}}{r}(s'-s)(1+\|x\|)
\end{aligned}
\]
for \(s'\ge s\), because \(t(s)=s/r\) and \(r\ge1\).
\end{proof}

\begin{proposition}[Verification of Theorem~\ref{thm:forward-KL-discrete} assumptions]
\label{prop:verify-thm2-dissipative}
Under Assumption~\ref{ass:regular-dissipative-data}, the VP reverse marginal path
\[
    \pi_t=q_{T-t}
\]
satisfies the smoothness and complexity assumptions of Theorem~\ref{thm:forward-KL-discrete}. More precisely, under the linear slowdown $t(s)=s/r$, the conditions \eqref{eq:lip_SALD_1} and \eqref{eq:lip_SALD_2} holds with
\[
    L_{\pi,\mathrm{space}}=H_2,~~~~M(x)=\|x\|,~~~~L_{\pi,\mathrm{time}}=C_{\rm time}.
\]
Furthermore, for sufficiently small \(\alpha,\alpha'>0\),
\[
    A_\alpha(\pi,v)<\infty,
    \qquad
    \sup_{t\in[0,T]}\mathfrak E_{\alpha'}(\pi_t,\nabla\log\pi_t)<\infty,
    \qquad
    \sup_{t\in[0,T]}\mathfrak E_{\alpha'}(\pi_t,1+M)<\infty.
\]
\end{proposition}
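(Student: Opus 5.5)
The proof assembles the preceding lemmas and adds exponential-moment bounds for the complexity terms.

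\textbf{Smoothness conditions.} Condition \eqref{eq:lip_SALD_1} follows directly from Lemma~\ref{lem:score-hessian-final}. Since $\pi_t=q_{T-t}$ and $\nabla\log\pi_t$ has Hessian bounded uniformly in $\tau$ by $H_2$, the mean value theorem gives Lipschitz constant $L_{\pi,\mathrm{space}}=H_2$. Condition \eqref{eq:lip_SALD_2} follows from Lemma~\ref{lem:score-time-final} with $M(x)=\|x\|$. We have $|t(s')-t(s)|=(s'-s)/r\le s'-s$ for $r\ge1$. Hence the constant $C_{\rm time}/r$ may be replaced by the $r$-independent $L_{\pi,\mathrm{time}}=C_{\rm time}$, which is the form stated.

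\textbf{The two score-type complexities.} By Lemma~\ref{lem:score-growth-final}, $\|\nabla\log\pi_t(x)\|^2\le 2A^2+2B^2\|x\|^2$. Hence
\[
\mathfrak E_{\alpha'}(\pi_t,\nabla\log\pi_t)\le 2A^2+\tfrac{1}{\alpha'}\log \E_{q_{T-t}}\bigl[e^{2\alpha' B^2\|X\|^2}\bigr]\le 2A^2+\tfrac{1}{\alpha'}\log\mathcal Q(2\alpha' B^2),
\]
using Lemma~\ref{lem:radial-dissipativity-qem}. This is finite and uniform in $t$ once $2\alpha'B^2<\min\{c_1/4,1/4\}$. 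The bound for $1+M$ is identical, using $(1+\|x\|)^2\le 2+2\|x\|^2$.

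\textbf{The action $\mathcal A_\alpha(\pi,v)$.} This is the main step, because we must choose a concrete transport velocity with linear growth. I would not use the $L^2$-minimal field. Instead I would take the explicit velocity of the reverse VP path from \eqref{eq:continuity_Ito}:
\[
v_t(x)=\tfrac12\beta_{T-t}\bigl(x+\nabla\log q_{T-t}(x)\bigr),
\]
which generates $(\pi_t)$ by the Fokker--Planck equation written in the proof of Lemma~\ref{lem:score-time-final}. With $\beta\le\beta_{\max}$ and Lemma~\ref{lem:score-growth-final}, $\|v_t(x)\|^2\le C(1+\|x\|^2)$ with $C$ depending on $\beta_{\max},A,B$. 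The exponential moment bound above then gives $\sup_t\mathfrak E_\alpha(\pi_t,v_t)<\infty$ for small $\alpha$, and integrating over $[0,T]$ gives finiteness of $\mathcal A_\alpha$.

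\textbf{Remaining subtlety.} Theorem~\ref{thm:forward-KL} is proved for an arbitrary transport velocity field, since its proof only uses the continuity equation. So using this explicit $v$ is legitimate, and the minimal-norm field is not needed. The Theorem also assumes the LSI with $\cLSI{t}\ge0$, which holds trivially with constant $0$ and needs no verification.
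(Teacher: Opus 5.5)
Your proposal is correct and follows the paper's proof essentially step by step. You read the Lipschitz conditions off Lemmas~\ref{lem:score-hessian-final} and~\ref{lem:score-time-final}. You bound the score and $1+M$ complexities via linear growth plus the quadratic exponential moment bound $\mathcal{Q}$ of Lemma~\ref{lem:radial-dissipativity-qem}. You control the action with the same explicit reverse VP velocity $v_t(x)=\tfrac12\beta_{T-t}(x+\nabla\log q_{T-t}(x))$ the paper uses, rather than the $L^2$-minimal field.

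Two of your remarks are valid points the paper leaves implicit: that $C_{\rm time}/r\le C_{\rm time}$ for $r\ge 1$, and that the proof of Theorem~\ref{thm:forward-KL} uses only the continuity equation, so any generating velocity is admissible.
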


\begin{proof}
The conditions \eqref{eq:lip_SALD_1} and \eqref{eq:lip_SALD_2} follow from Lemmas~\ref{lem:score-hessian-final} and \ref{lem:score-time-final}.

It remains to verify the complexity terms. We use the elementary fact that if
\[
    \|g(x)\|\le m+M\|x\|,
\]
then, for any $\gamma>0$ such that $2\gamma B^2<\bar\theta := \min\left\{\frac{c_1}{4},\frac{1}{4}\right\}$,
\[
\begin{aligned}
    \mathfrak{E}_\gamma(q_\tau,g)
    :=
    \frac1\gamma
    \log\E_{q_\tau}[e^{\gamma\|g(X)\|^2}] 
    \le
    2m^2+
    \frac1\gamma
    \log\mathcal Q(2\gamma M^2),
\end{aligned}
\]
by Lemma~\ref{lem:radial-dissipativity-qem}.

First, Lemma~\ref{lem:score-growth-final} gives $\|\nabla\log p_\tau(x)\|\le A+B\|x\|$, and hence if $2\alpha'B^2<\bar\theta$,
\[
    \sup_{t\in[0,T]}
    \mathfrak E_{\alpha'}(\pi_t,\nabla\log\pi_t)
    \le
    2A^2+
    \frac1{\alpha'}\log\mathcal Q(2\alpha'B^2).
\]
Second, since $M(x)=\|x\|$ (Lemma \ref{lem:score-time-final}), $(1+M(x))^2\leq 2+2\|x\|^2$.
Therefore, if \(2\alpha'<\bar\theta\),
\[
    \sup_{t\in[0,T]}
    \mathfrak E_{\alpha'}(\pi_t,1+M)
    \le
    2+
    \frac1{\alpha'}\log\mathcal Q(2\alpha').
\]

Finally, the transport velocity of the reverse VP marginal path is
\[
    v_{T-\tau}(x)
    =
    \frac{\beta_\tau}{2}
    \left(x+\nabla\log q_\tau(x)\right).
\]
Hence
\[
    \|v_{T-\tau}(x)\|
    \le
    A_v+B_v\|x\|,
\]
where one can take $A_v:=\frac{\beta_{\max}}2 A_s,~B_v:=\frac{\beta_{\max}}2(1+B_s)$.
Thus, if $2\alpha B_v^2<\bar\theta$,
\begin{align*}
    A_\alpha(\pi,v)
    =
    \int_0^T
    \mathfrak E_\alpha(\pi_t,v_t) \dd t 
    =
    \int_0^T
    \mathfrak E_\alpha(p_\tau,v_{T-\tau}) \dd \tau 
    \leq
    T\left[
        2A_v^2+
        \frac1\alpha\log\mathcal Q(2\alpha B_v^2)
    \right].
\end{align*}

This proves all required complexity bounds.
\end{proof}

\newpage

\section{Additional Experimental Details}\label{app:experiment_details}

\subsection{Additional Details of Synthetic Data}\label{app:additional_syth}

This appendix reports additional experimental details of two tasks.
Both tasks use the same scalar VP forward diffusion
\begin{equation}
    \mathrm{d}Y_\tau
    =
    -\frac{1}{2}\beta(\tau)Y_\tau\,\mathrm{d}\tau
    + \sqrt{\beta(\tau)}\,\mathrm{d}W_\tau,
    \qquad
    \beta(\tau)=\beta_{\min}
    +\frac{\beta_{\max}-\beta_{\min}}{T}\tau ,
\end{equation}
and the reverse-indexed marginal family $p_t=q_{T-t}$.  The SALD and
VA-SALD samplers use the slowed time parametrization $t=s/r$, where $r$
is the computational budget.  In contrast, the DOIT baseline is run on the
ordinary reverse VP time interval $t\in[0,T]$; its budget label is denoted
by $r_b$ below to emphasize that it controls only the number of Euler
transitions, not a time slowdown.

\paragraph{Unguided VP sanity check.}
Before adding guide functions, we verify that the Euler-Maruyama SALD sampler
recovers the terminal two-Gaussian mixture as the slow-down budget increases.

\begin{figure}[h]
    \centering
    \includegraphics[width=0.72\textwidth]{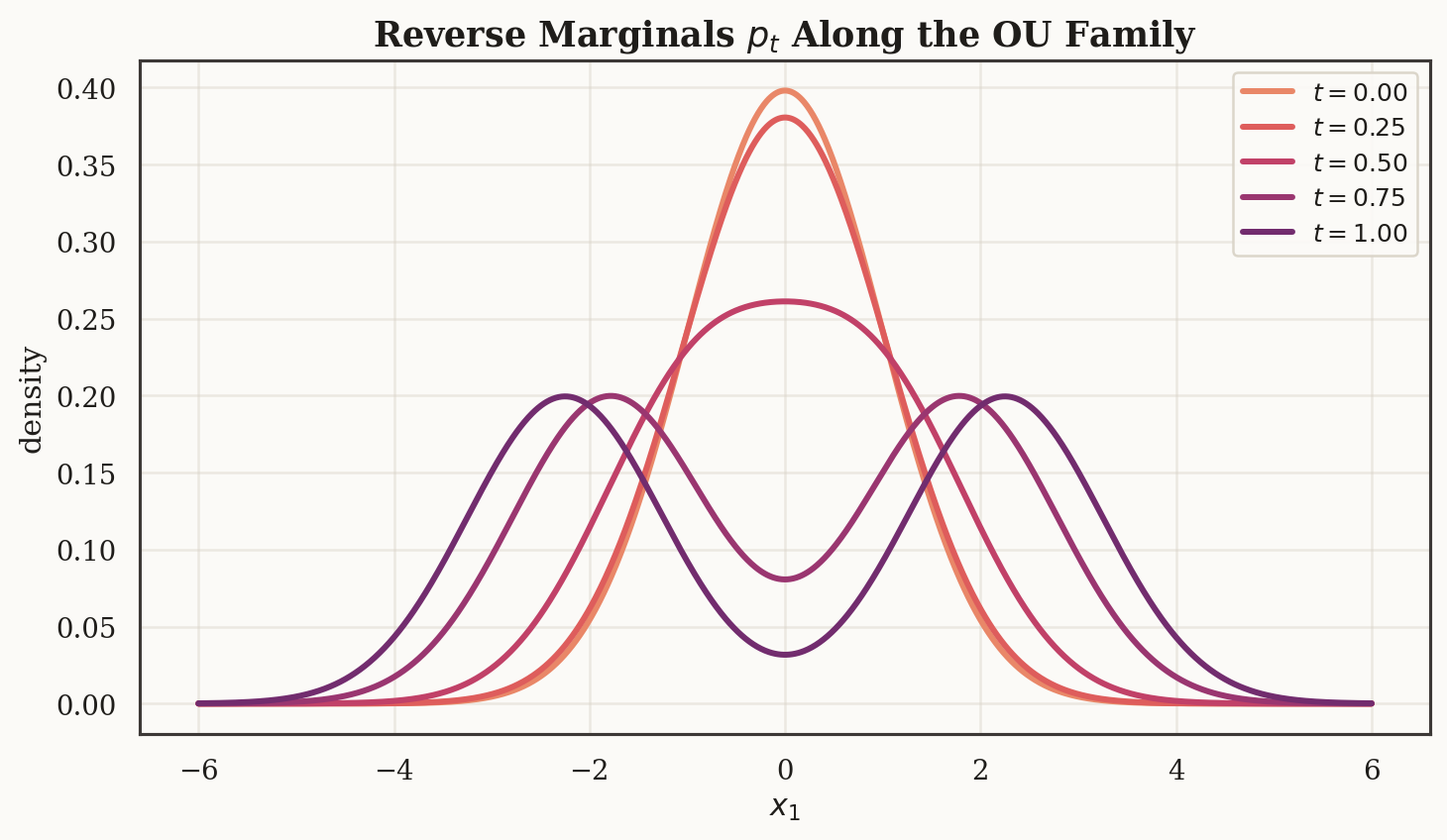}
    \caption{\textbf{Reverse-indexed VP marginal family for the unguided
    two-Gaussian experiment.}  The forward VP diffusion contracts the mixture
    means toward the origin, so the reverse initialization is close to a
    centered Gaussian while the terminal law remains bimodal.}
    \label{fig:app-target-family}
\end{figure}

\begin{figure}[h]
    \centering
    \includegraphics[width=0.92\textwidth]{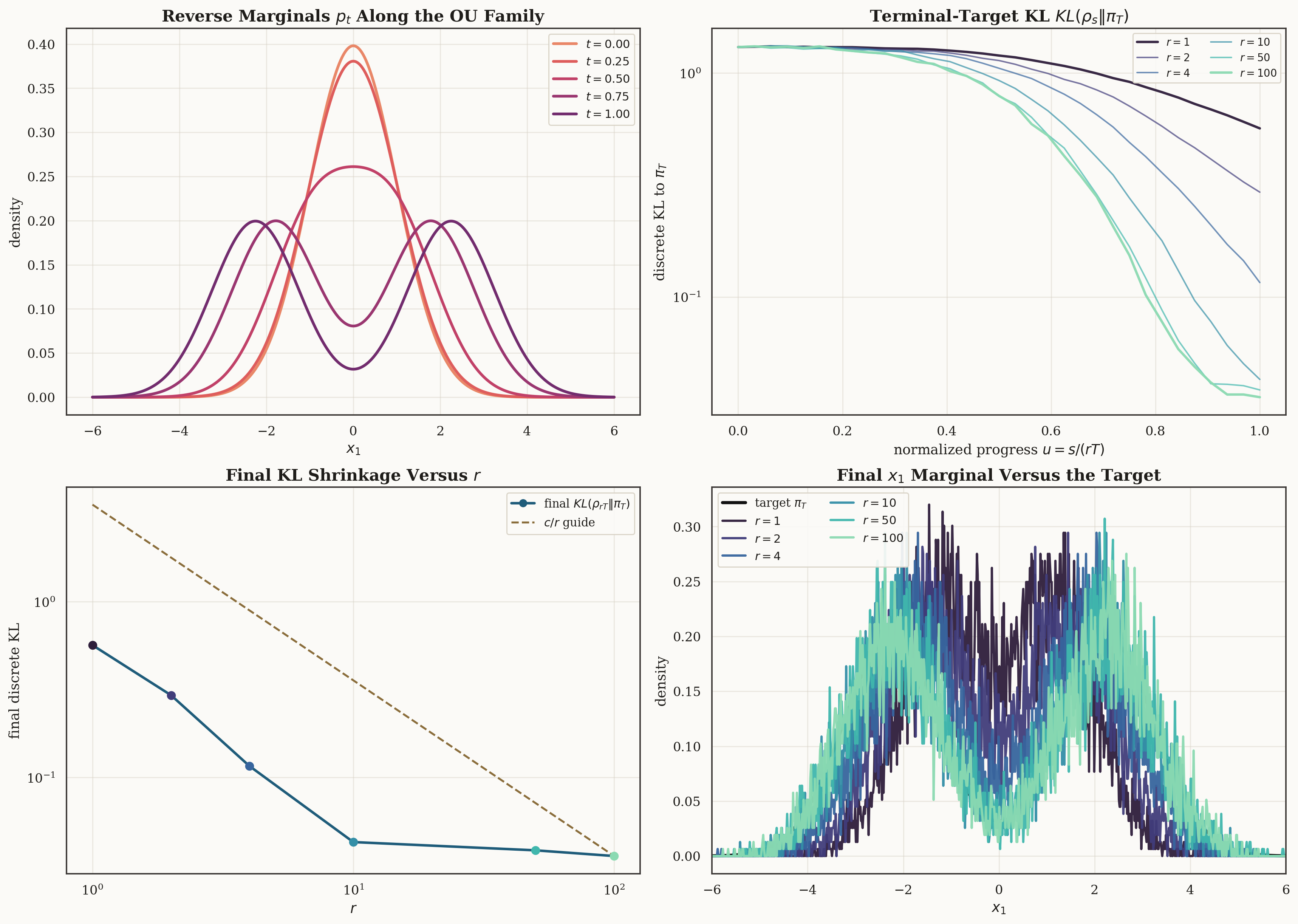}
    \caption{\textbf{Unguided SALD overview.}  The KL trajectory and terminal KL
    confirm that increasing the slow-down budget reduces the mismatch to the
    terminal two-Gaussian target until the finite-particle and discretization
    floor dominates.}
    \label{fig:app-sald-overview}
\end{figure}

\begin{figure}[h]
    \centering
    \includegraphics[width=0.92\textwidth]{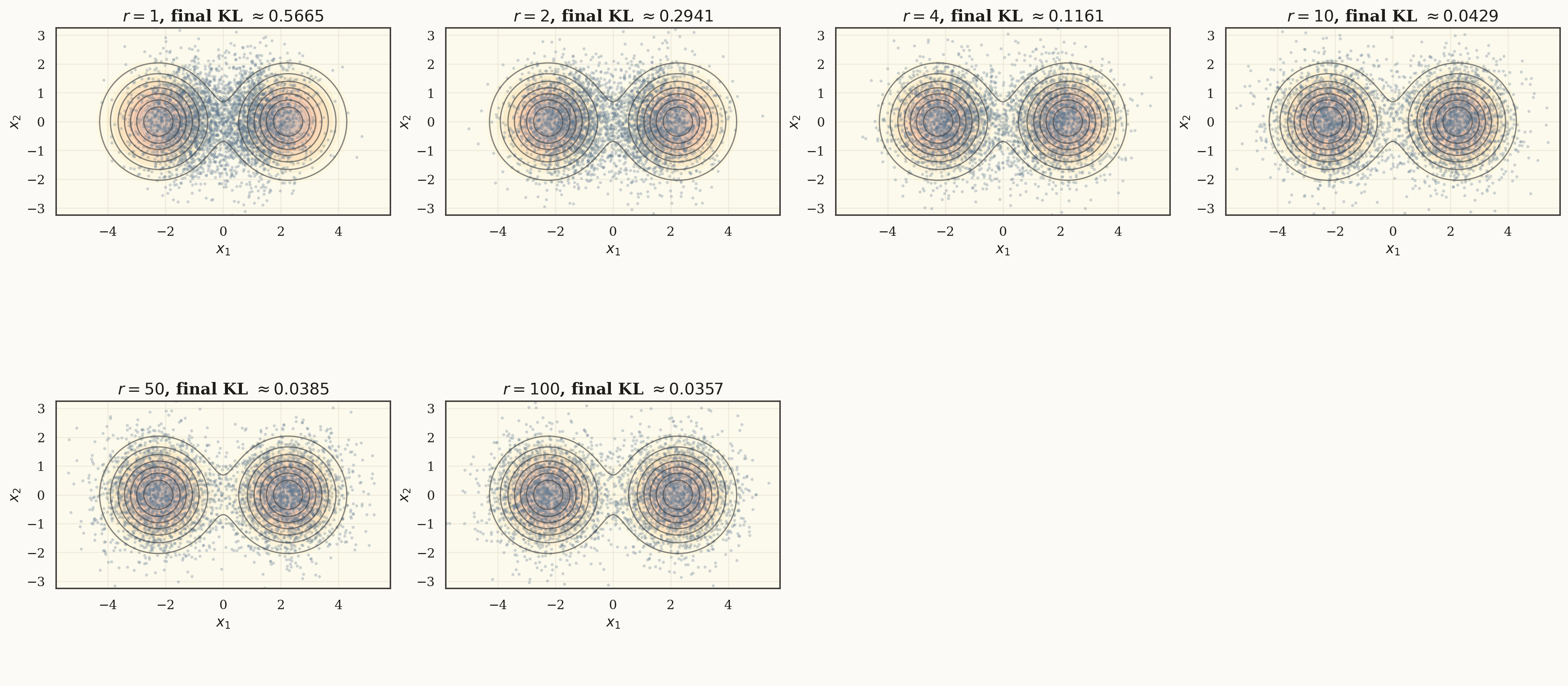}
    \caption{\textbf{Unguided SALD terminal samples.}  Larger budgets produce
    visibly sharper recovery of the two target modes.}
    \label{fig:app-sald-samples}
\end{figure}

\subsubsection{Two-moons guided two-Gaussian VP diffusion.}
The first guided task starts from a two-component Gaussian data distribution
and uses a two-moons guide.  The guide is represented by a fixed reference
cloud $\{y_j\}_{j=1}^N$ sampled from a translated and rescaled two-moons
distribution.  The potential and gradient are
\begin{equation}
    f_{\mathrm{moon}}(x)
    =
    \frac{1}{\lambda N}\sum_{j=1}^N \|x-y_j\|_2,
    \qquad
    \nabla f_{\mathrm{moon}}(x)
    =
    \frac{1}{\lambda N}\sum_{j=1}^N
    \frac{x-y_j}{\|x-y_j\|_2}.
\end{equation}
The guided moving target is
\begin{equation}
    \pi_t(x)\propto p_t(x)\exp\{-f_{\mathrm{moon}}(x)\}.
\end{equation}
Directly averaging over the full reference population inside every sampler
step is expensive, so the notebook precomputes $f_{\mathrm{moon}}$ and
$\nabla f_{\mathrm{moon}}$ on a two-dimensional grid and uses bilinear
interpolation during sampling.  All reported KL values for this task are
computed against the guided terminal target $\pi_T$, not against the
unguided two-Gaussian law.  The next figures isolate each method on this
same two-moons guided task.  These diagnostics show the guided terminal
density, the KL trajectory, terminal KL, and the mean guidance objective.

\begin{figure}[h]
    \centering
    \includegraphics[width=0.92\textwidth]{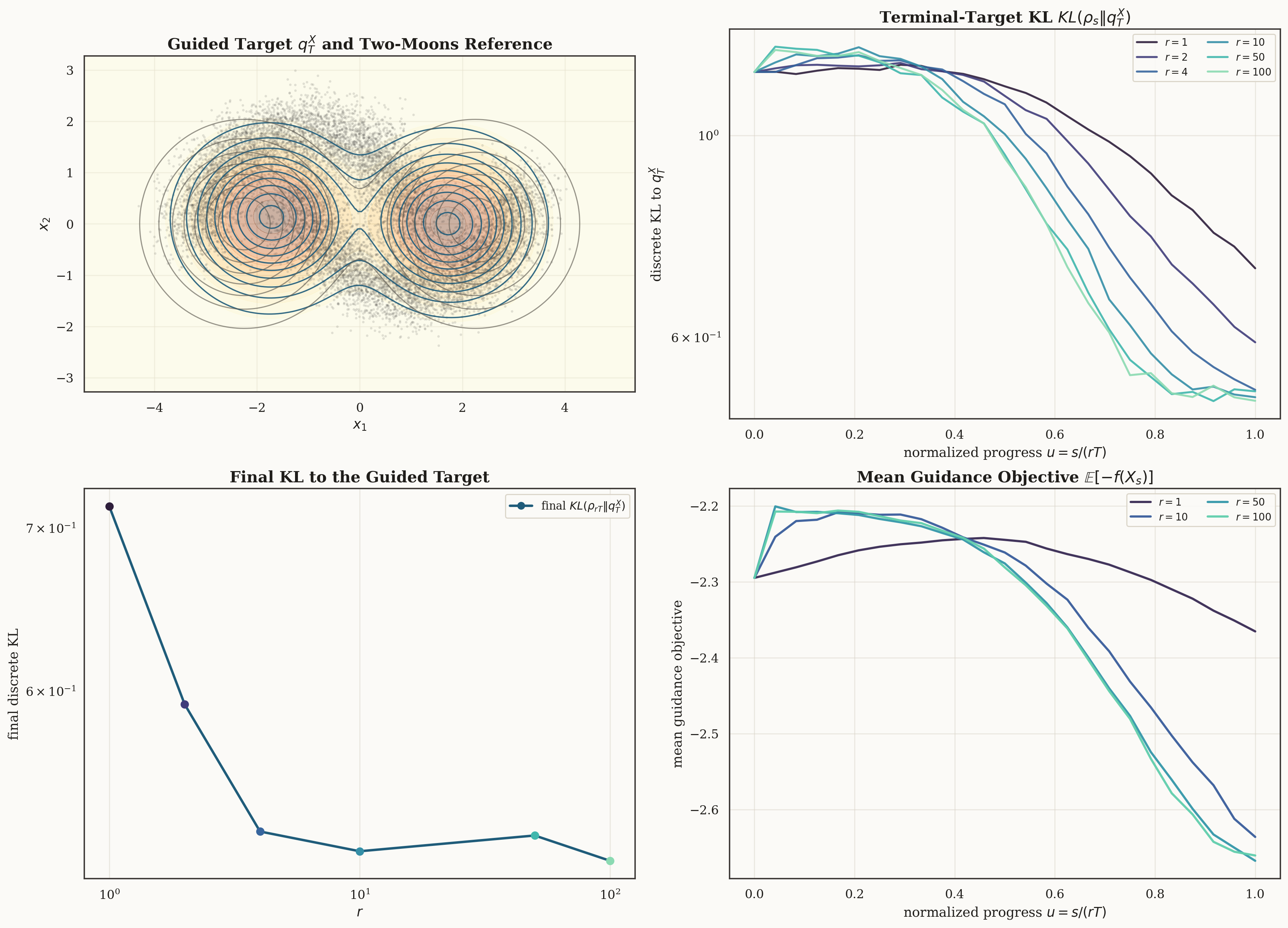}
    \caption{\textbf{SALD on the two-moons guided two-Gaussian task.}  SALD
    follows the expected budget trend: larger $r$ improves the terminal-target
    KL, with the curve eventually flattening near the Monte Carlo floor.}
    \label{fig:app-guided-overview}
\end{figure}

\begin{figure}[h]
    \centering
    \includegraphics[width=0.92\textwidth]{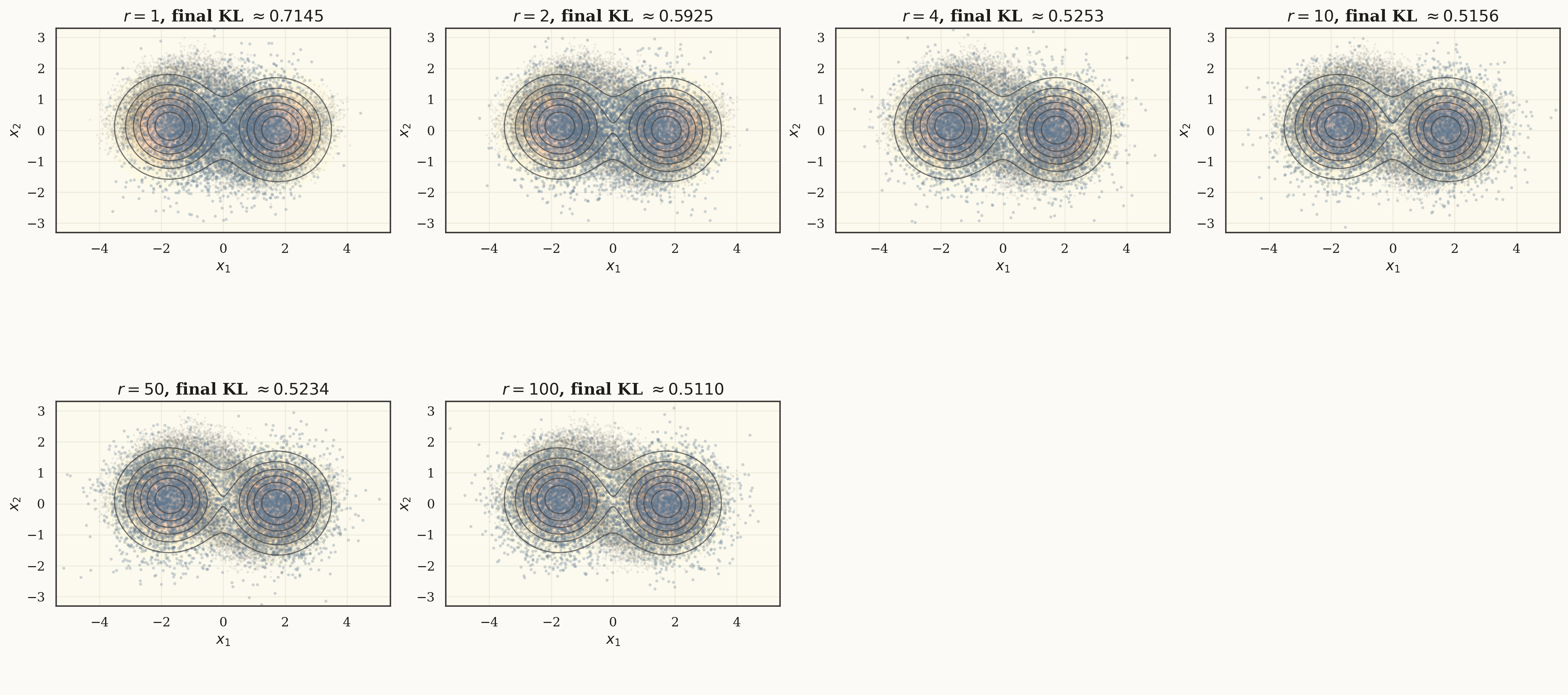}
    \caption{\textbf{SALD samples for the two-moons guided task.}  Terminal
    samples increasingly align with the two-moons guided target as the budget
    increases.}
    \label{fig:app-guided-samples}
\end{figure}

\begin{figure}[h]
    \centering
    \includegraphics[width=0.92\textwidth]{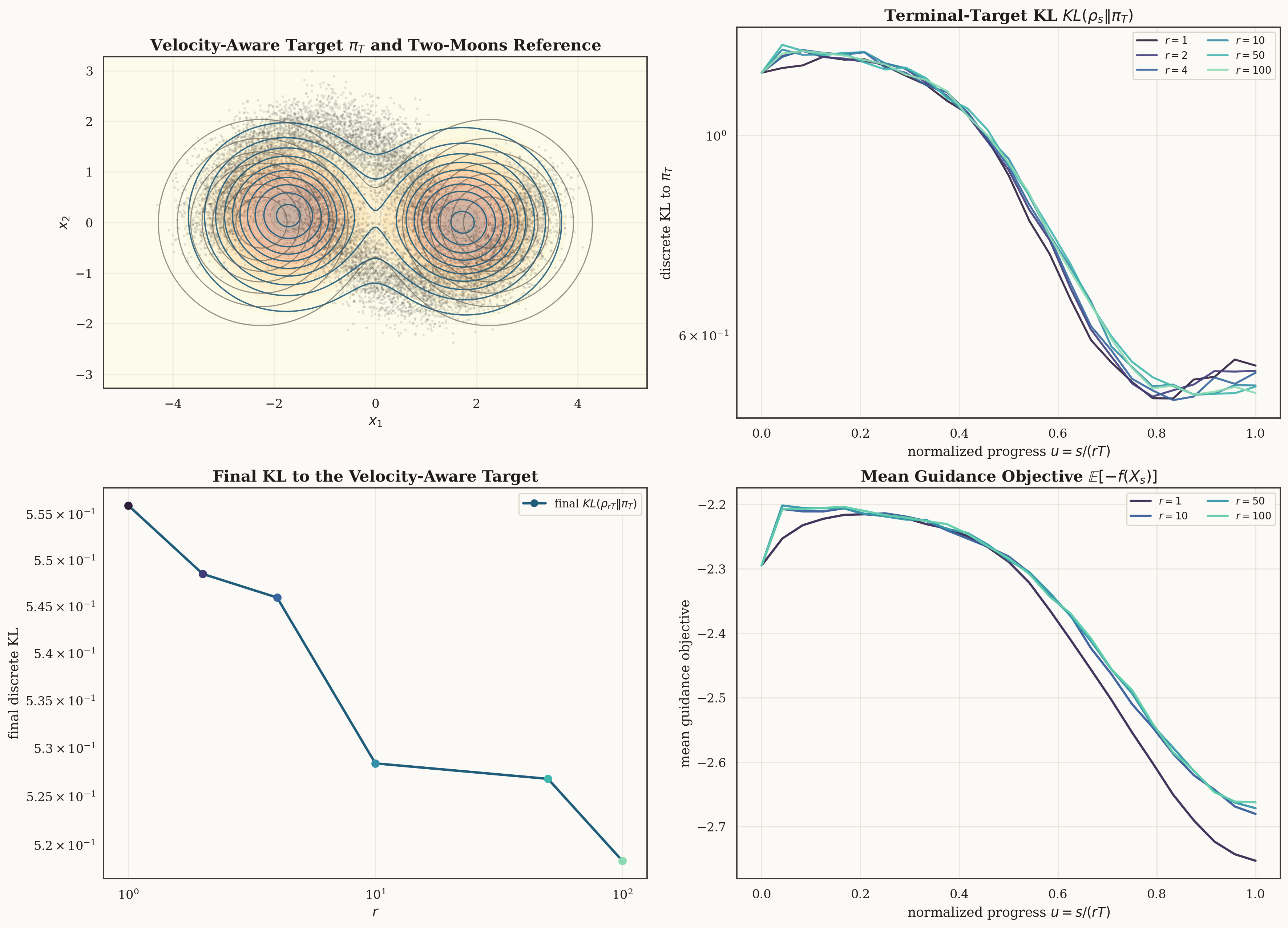}
    \caption{\textbf{VA-SALD on the two-moons guided task.}  Incorporating the
    VP velocity field yields low terminal KL across the tested budgets and a
    stable guidance-objective trajectory.}
    \label{fig:app-va-overview}
\end{figure}

\begin{figure}[h]
    \centering
    \includegraphics[width=0.92\textwidth]{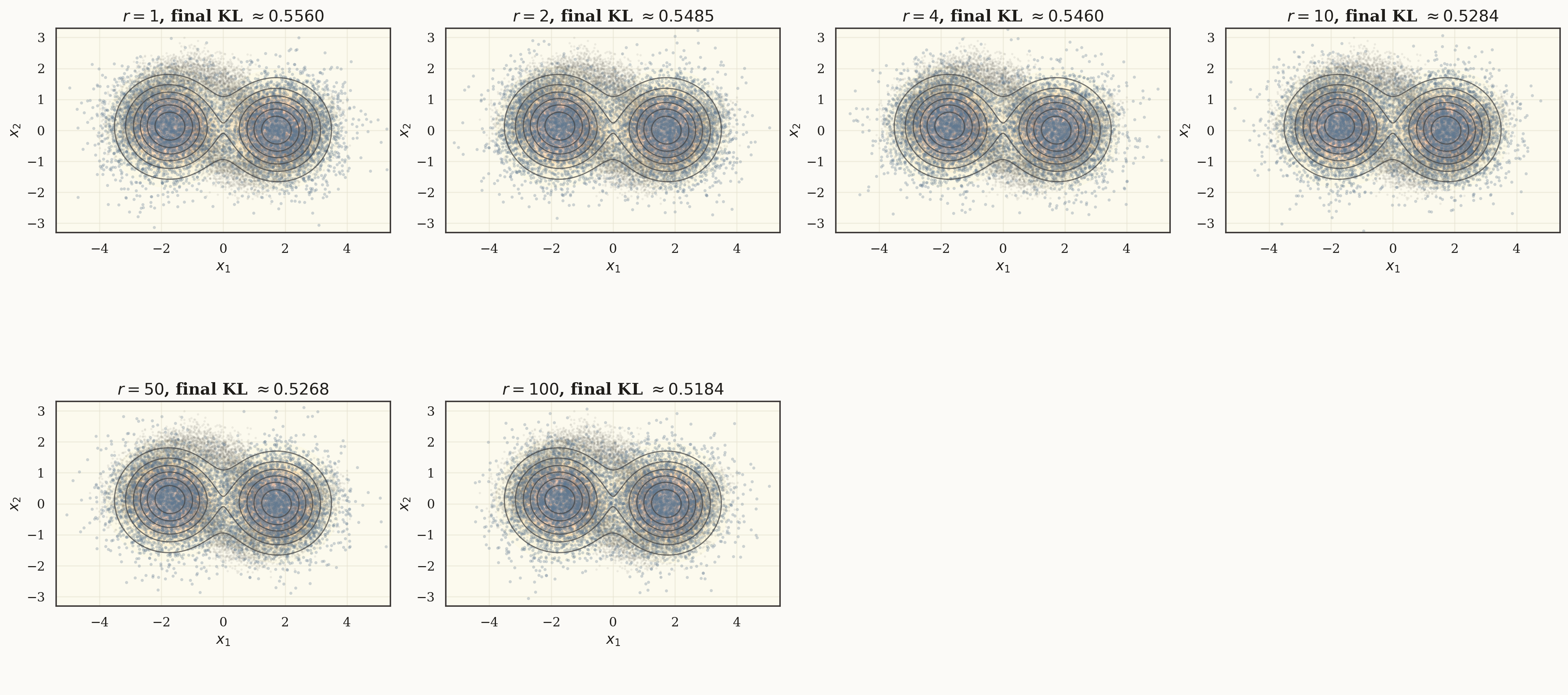}
    \caption{\textbf{VA-SALD samples for the two-moons guided task.}  The sample
    panels show that VA-SALD reaches the guided terminal geometry at small and
    large budgets.}
    \label{fig:app-va-samples}
\end{figure}

\begin{figure}[h]
    \centering
    \includegraphics[width=0.92\textwidth]{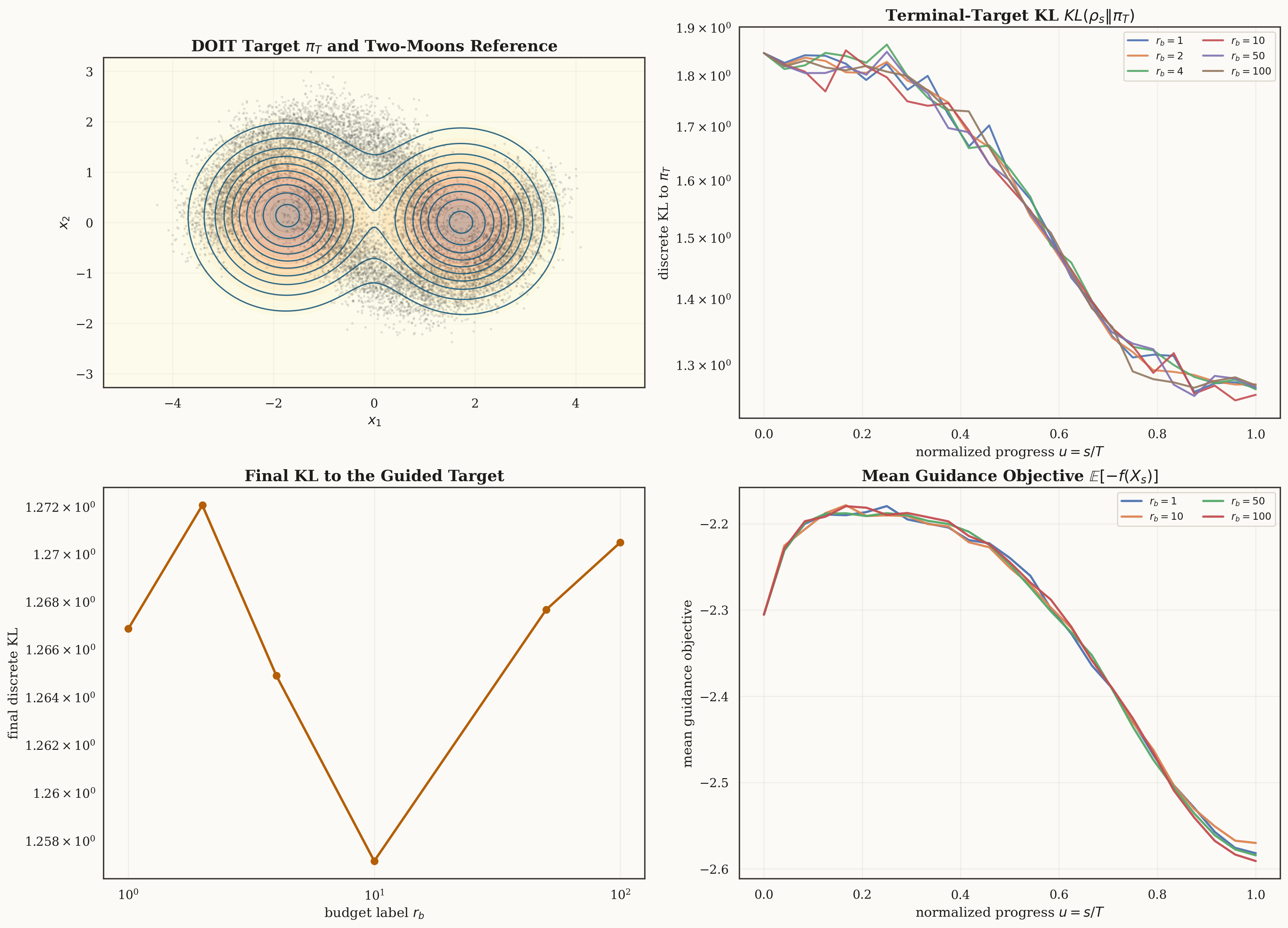}
    \caption{\textbf{DOIT on the two-moons guided task.}  DOIT uses the same VP
    reverse SDE base sampler and the same total proposal budget, but its local
    Doob correction does not close the terminal KL gap as effectively as SALD or
    VA-SALD.}
    \label{fig:app-doit-overview}
\end{figure}

\paragraph{DOIT \cite{zhu2026training} adaptation and budget matching.}
We adapt DOIT \cite{zhu2026training} (Opensource at \href{https://github.com/liamyzq/Doob_training_free_adaptation}{DOIT}) as a training-free local approximation to the Doob
$h$-transform.  For a base reverse transition density
$\phi_\theta(x_{\ell-1}\mid x_\ell)$ and terminal preference $h(x_0,0)$,
DOIT \cite{zhu2026training} estimates
\begin{equation}
    h(x_\ell,t_\ell)
    =
    \mathbb{E}\!\left[h(X_0,0)\mid X_{t_\ell}=x_\ell\right],
\end{equation}
and adds a correction proportional to $\nabla\log h$ to the base reverse
dynamics.  In our VP adaptation, DOIT \cite{zhu2026training} is not slowed down.  For a budget
label $r_b\in\{1,2,4,10,50,100\}$ we set
\begin{equation}\label{eq:role_rb_DOIT}
    N_{\mathrm{DOIT}}(r_b)
    =
    \left\lceil \frac{r_bT}{\eta_s}\right\rceil,
    \qquad
    \eta_{\mathrm{DOIT}}=\frac{T}{N_{\mathrm{DOIT}}(r_b)},
    \qquad
    t_k=k\eta_{\mathrm{DOIT}} .
\end{equation}
Thus DOIT \cite{zhu2026training} uses the same number of reverse transitions as SALD and VA-SALD
at budget $r_b$, but it traverses the original interval $[0,T]$ instead of
using $t=s/r_b$.

The base sampler for DOIT \cite{zhu2026training} is the original VP reverse SDE, not the
probability-flow velocity used in the VA-SALD derivation.  Its drift is
\begin{equation}
    b_{\mathrm{base}}(x,t)
    =
    \frac{\beta(T-t)}{2}x
    +\beta(T-t)\nabla\log p_t(x).
\end{equation}
For each particle, DOIT \cite{zhu2026training} draws $M$ local Gaussian proposals
\begin{equation}
    x_{k+1}^{(m)}
    =
    x_k
    +\eta_{\mathrm{DOIT}} b_{\mathrm{base}}(x_k,t_k)
    +\sqrt{\eta_{\mathrm{DOIT}}\beta(T-t_k)}\,z_m,
    \qquad z_m\sim\mathcal{N}(0,I),
\end{equation}
scores them with the reward $R(x)=-f(x)$, and forms Boltzmann weights
\begin{equation}
    w_m
    =
    \mathrm{softmax}_m
    \left(
    \frac{R(x_{k+1}^{(m)})-\max_j R(x_{k+1}^{(j)})}{\tau_R}
    \right).
\end{equation}
The corresponding local Doob direction is
\begin{equation}
    \widehat g_{\mathrm{Doob}}
    =
    \sum_{m=1}^M
    w_m
    \frac{z_m}{\sqrt{\eta_{\mathrm{DOIT}}\beta(T-t_k)}} ,
\end{equation}
and the adapted update is
\begin{equation}
    X_{k+1}
    =
    X_k
    +
    \eta_{\mathrm{DOIT}}
    \left(
    b_{\mathrm{base}}(X_k,t_k)
    +
    \gamma \beta(T-t_k)\widehat g_{\mathrm{Doob}}
    \right)
    +
    \sqrt{\eta_{\mathrm{DOIT}}\beta(T-t_k)}\,\xi_k .
\end{equation}
Since DOIT spends $M$ proposal evaluations per particle and step, a fair
particle-step comparison uses approximately $1/M$ as many DOIT particles as
SALD and VA-SALD particles.  In the notebook $M=4$, so DOIT uses $2500$
particles and has effective proposal-particle budget $2500\times 4=10000$,
matching the $10000$ particles used by SALD and VA-SALD.

\subsubsection{Eight-Gaussian mode-penalty VP diffusion.}
The second guided task uses an eight-component Gaussian mixture with centers
uniformly spaced on a circle around the origin.  With radius $R$ and angular
offset $\theta_0=\pi/8$, the centers are
\begin{equation}
    c_j
    =
    R\left(
    \cos\left(\theta_0+\frac{2\pi j}{8}\right),
    \sin\left(\theta_0+\frac{2\pi j}{8}\right)
    \right),
    \qquad j=0,\ldots,7,
\end{equation}
and
\begin{equation}
    p_{\mathrm{data}}(x)
    =
    \frac{1}{8}\sum_{j=0}^7 \varphi_2(x-c_j).
\end{equation}
Because the component covariance is the stationary covariance of the VP
diffusion, the marginals remain mixtures with decayed means,
\begin{equation}
    p_t(x)
    =
    \frac{1}{8}\sum_{j=0}^7
    \varphi_2\!\left(x-\alpha(T-t)c_j\right).
\end{equation}
The exact score used by all methods is
\begin{equation}
    \nabla\log p_t(x)
    =
    \sum_{j=0}^7
    \omega_j(x,t)\left(\alpha(T-t)c_j-x\right),
    \qquad
    \omega_j(x,t)
    =
    \frac{
    \varphi_2(x-\alpha(T-t)c_j)
    }{
    \sum_{\ell=0}^7
    \varphi_2(x-\alpha(T-t)c_\ell)
    } .
\end{equation}
The guide penalizes a chosen set $\mathcal{P}$ of modes.  We use the four
left-half modes, i.e. those with negative first coordinate, and define
\begin{equation}
    f_{\mathcal{P}}(x)
    =
    \lambda\sum_{j\in\mathcal{P}}
    \exp\left(
    -\frac{\|x-c_j\|^2}{2\ell_f^2}
    \right),
\end{equation}
with gradient
\begin{equation}
    \nabla f_{\mathcal{P}}(x)
    =
    -\frac{\lambda}{\ell_f^2}
    \sum_{j\in\mathcal{P}}
    \exp\left(
    -\frac{\|x-c_j\|^2}{2\ell_f^2}
    \right)(x-c_j).
\end{equation}
The guided terminal target is
$\pi_T(x)\propto p_T(x)\exp\{-f_{\mathcal{P}}(x)\}$, which shifts mass away
from the penalized modes while preserving the same VP marginal family and
exact mixture score.

\begin{figure}[h]
    \centering
    \includegraphics[width=0.72\textwidth]{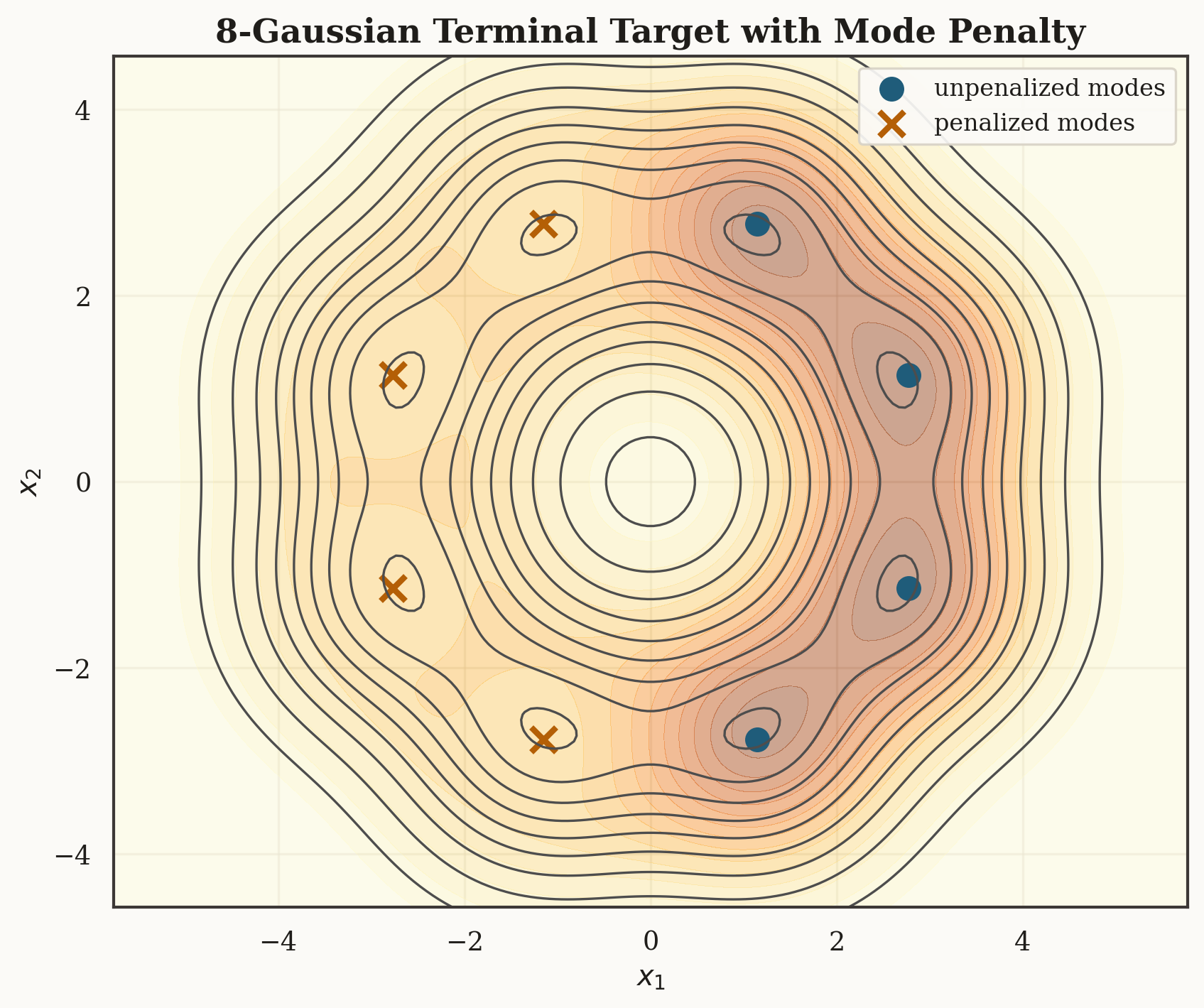}
    \caption{\textbf{Eight-Gaussian guided terminal target.}  Blue markers denote
    unpenalized modes, and orange crosses denote penalized modes.  Contours show
    the base and guided terminal densities.}
    \label{fig:app-eight-setup}
\end{figure}

\begin{figure}[h]
    \centering
    \includegraphics[width=0.92\textwidth]{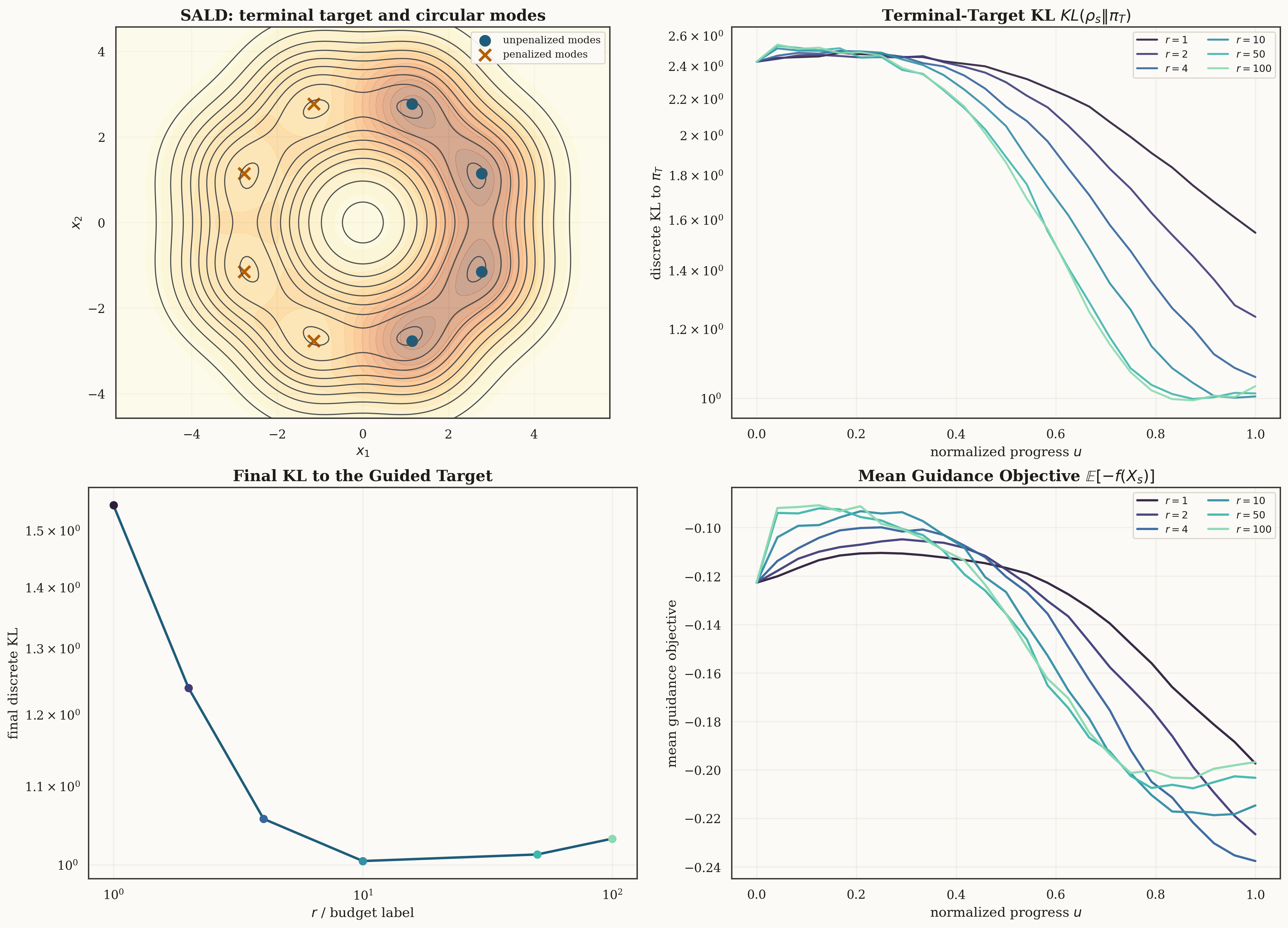}
    \caption{\textbf{SALD on the eight-Gaussian mode-penalty task.}  SALD reduces
    the terminal KL as $r$ increases and then approaches a plateau.}
    \label{fig:app-eight-sald}
\end{figure}

\begin{figure}[h]
    \centering
    \includegraphics[width=0.92\textwidth]{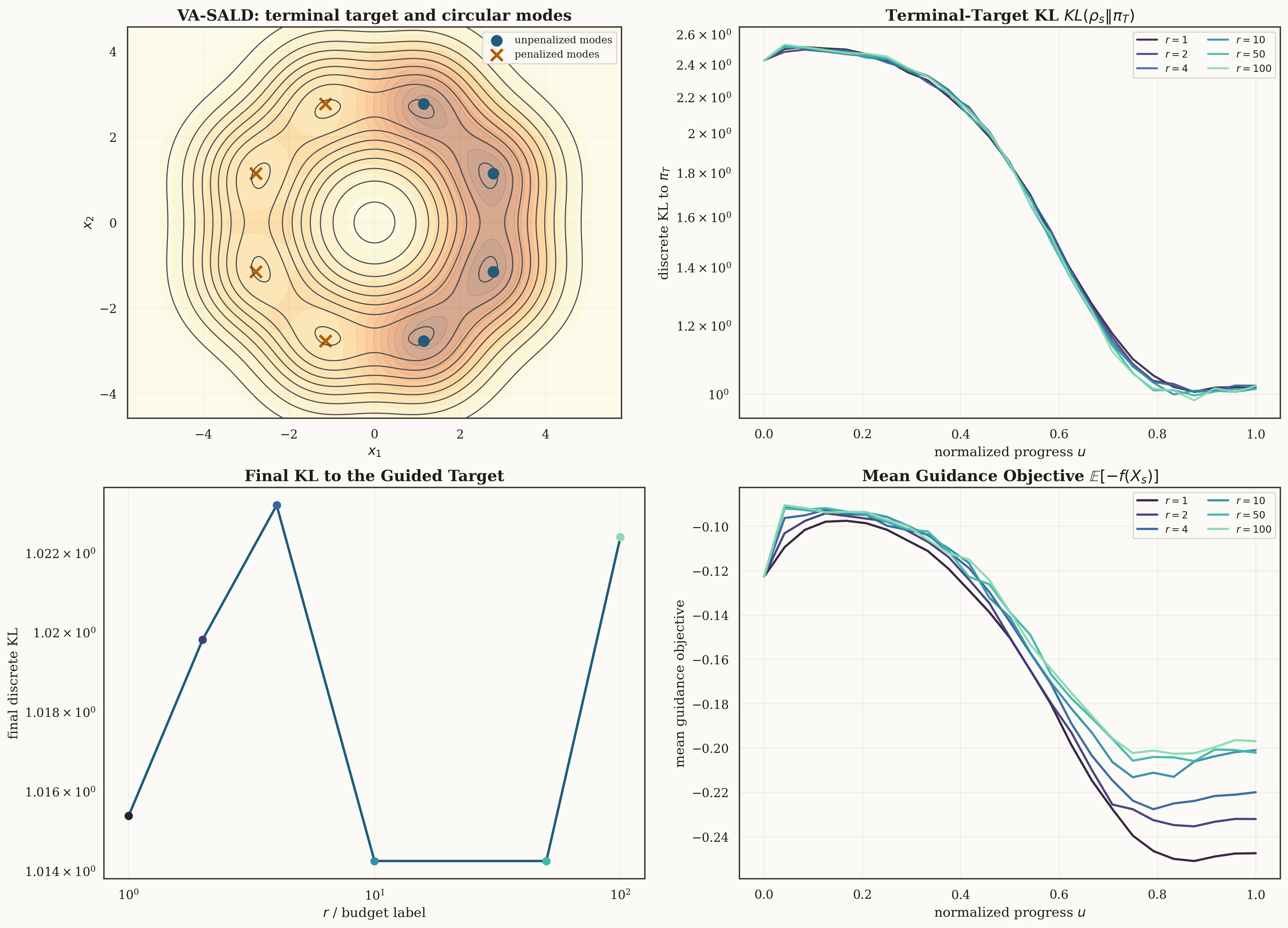}
    \caption{\textbf{VA-SALD on the eight-Gaussian mode-penalty task.}  The
    terminal KL is low and nearly budget-insensitive, indicating that the
    velocity-aware reverse dynamics already align well with this guided target.}
    \label{fig:app-eight-va}
\end{figure}

\begin{figure}[h]
    \centering
    \includegraphics[width=0.92\textwidth]{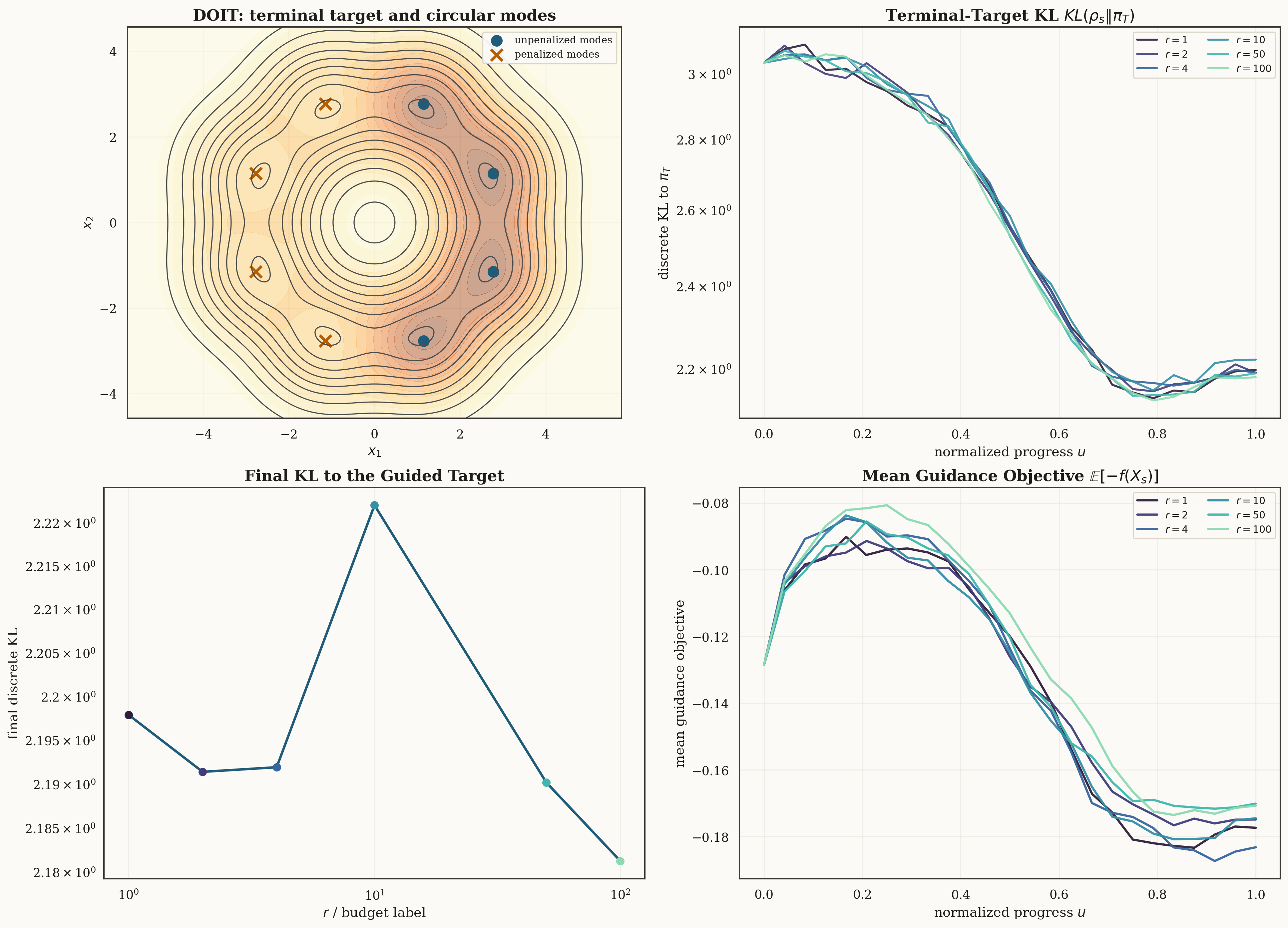}
    \caption{\textbf{DOIT on the eight-Gaussian mode-penalty task.}  The local
    proposal-based Doob correction improves the guide-related statistic but
    leaves a larger terminal KL than SALD and VA-SALD at the same computational
    budgets.}
    \label{fig:app-eight-doit}
\end{figure}

\paragraph{Summaries.}
Tables~\ref{tab:synthetic-kl} and~\ref{tab:synthetic-budget} summarize the
terminal metrics.  The effective DOIT particle
budget is reported as the number of particles times the number of local
proposals, so that it is directly comparable to the SALD and VA-SALD particle
counts.

\begin{table}[h]
    \centering
    \scriptsize
    \caption{\textbf{Terminal KL.}  Budgets are
    $r\in\{1,2,4,10,50,100\}$.  Smaller values are better.}
    \label{tab:synthetic-kl}
    \begin{tabular}{llrrrrrr}
        \hline
        Task & Method & $r=1$ & $r=2$ & $r=4$ & $r=10$ & $r=50$ & $r=100$ \\
        \hline
        Two moons & SALD & 0.715 & 0.592 & 0.525 & 0.516 & 0.523 & 0.511 \\
        Two moons & VA-SALD & 0.556 & 0.549 & 0.546 & 0.528 & 0.527 & 0.518 \\
        Two moons & DOIT & 1.267 & 1.272 & 1.265 & 1.257 & 1.268 & 1.270 \\
        Eight Gaussian & SALD & 1.547 & 1.239 & 1.057 & 1.005 & 1.013 & 1.032 \\
        Eight Gaussian & VA-SALD & 1.015 & 1.020 & 1.023 & 1.014 & 1.014 & 1.022 \\
        Eight Gaussian & DOIT & 2.198 & 2.191 & 2.192 & 2.222 & 2.190 & 2.181 \\
        \hline
    \end{tabular}
\end{table}

\begin{table}[h]
    \centering
    \scriptsize
    \caption{\textbf{Budget accounting and representative terminal guide
    statistics.}  The table reports the final budget $r=100$, the number of
    Euler steps, the particle count used by the sampler, and the effective
    particle count used for budget matching.  The guide statistic is the
    terminal mean penalty.}
    \label{tab:synthetic-budget}
    \begin{tabular}{llrrrr}
        \hline
        Task & Method & Steps & Particles & Effective particles & Mean penalty \\
        \hline
        Two moons & SALD & 100000 & 10000 & 10000 & 2.661 \\
        Two moons & VA-SALD & 100000 & 10000 & 10000 & 2.662 \\
        Two moons & DOIT & 100000 & 2500 & 10000 & 2.591 \\
        Eight Gaussian & SALD & 100000 & 10000 & 10000 & 0.197 \\
        Eight Gaussian & VA-SALD & 100000 & 10000 & 10000 & 0.197 \\
        Eight Gaussian & DOIT & 100000 & 2500 & 10000 & 0.171 \\
        \hline
    \end{tabular}
\end{table}

\clearpage
\subsection{Additional Details of Guided Image Generation}\label{app:additional_stablediff}

\subsubsection{Apply VA-SALD to Flow-Matching Setting}

We employ the pre-trained Stable-Diffusion 3.5 Medium (SD-3.5M)~\cite{esser2024scaling} trained by the Flow Matching objective as the backbone to access the velocity and the base score function for VA-SALD.

Specifically,  given the forward  SDE in Eq.~(\ref{SDE_FlowMatching}) that generates the same marginal $p_\tau$ w.r.t the deterministic ODE in Eq~(\ref{ODE_FlowMatching}) of flow-matching~\cite{liu2025flow},
\begin{equation}
\label{ODE_FlowMatching}
    \dd \boldsymbol{x}_\tau = \boldsymbol{v}_\tau \dd \tau
\end{equation}
\begin{equation}
\label{SDE_FlowMatching}
    \dd \boldsymbol{x}_\tau = ( \boldsymbol{v}_\tau \big(\boldsymbol{x}_\tau) + \frac{\sigma_\tau^2}{2} \nabla \log p_\tau(\boldsymbol{x}_\tau) \big)\dd  \tau + \sigma_\tau  \dd  W_\tau
\end{equation}
the score function is 
\begin{equation}
    \nabla \log p_\tau(\boldsymbol{x}_\tau) = - \frac{\boldsymbol{x}_\tau}{\tau} - \frac{1-\tau}{\tau} \boldsymbol{v}_\tau (\boldsymbol{x}_\tau)
\end{equation}

Note that $t= T-\tau $ (with $T=1$ in flow-matching), 
the SDE of our VA-SALD for guidance is  
\begin{align}
    \label{eq:guided_sald_FlowMatching}
   \dd X_s
& =
\left(
-\dot t(s)B_{t(s)}(X_s)
+
\dot t(s)\frac{\sigma_{t(s)}^2}{2} 
\nabla\log p_{t(s)}(X_s)  + \frac{\sigma_{t(s)}^2}{2} \nabla\log \pi_{t(s)}(X_s)
\right)\dd s
+
\sigma_{t(s)} \dd W_s.  \\
& = \left( -\dot t(s) \boldsymbol{v}_{t(s)} (X_s) + \frac{\sigma_{t(s)}^2}{2} \nabla\log \pi_{t(s)}(X_s) \right) \dd s +
\sigma_{t(s)} \dd W_s
\end{align}
where $\pi_{t(s)}(X_s) \propto p_t(X_s) \exp{(- c \; f_{t(s)}(X_s))}$ and 
$B_{t(s)}(X_s) = \boldsymbol{v}_{t(s)} (X_s)  + \frac{\sigma_{t(s)}^2}{2} \nabla \log p_{t(s)}(X_s) $.

In our experiments, we employ $t= \frac{s}{r}$ and then $\dot t(s) =\frac{1}{r}$, and we set $\sigma_t = (1-t) \sigma_0$. We set $\sigma_0 = 0.7$ same as the noise level used in \cite{liu2025flow}.

Finally,  the SDE of our VA-SALD for guidance used in our experiments  is 
\begin{align}\label{VA_SALD_FlowMatching_SDE}
    \dd X_s & = \left ( - \frac{\sigma_t^2}{2(1-t)} X_{s} - \big( \frac{1}{r} + \frac{t \sigma_t^2}{2(1-t)} \big) \boldsymbol{v}_{t(s)} (X_s)  - \frac{ c  \sigma_t^2 }{2} \nabla f_{t(s)} (X_s)  \right ) \dd s + \sigma_{t(s)} \dd W_s \\ &  = \left ( - \frac{(1-t)\sigma_0^2}{2} X_s -\big( \frac{1}{r} + \frac{t(1-t)\sigma_0^2}{2}\big) \boldsymbol{v}_{t(s)} (X_s) - \frac{ c (1-t)^2 \sigma_0^2 }{2} \nabla f_{t(s)} (X_s) \right) \dd s + (1-t)\sigma_0 \dd W_s
\end{align}


\subsubsection{Construction of $\pi_t$ for Black-box Guidance}
To apply our VA-SALD on the appealing inference-time black-box guided generation task,  we construct the moving distribution sequence $\pi_t$  as  $\pi_t \propto  p_t  \exp{(-c \; f_t)}  $   with  Gaussian-smooth reward functions $f_t(\boldsymbol{x}) = \mathbb{E}_{\mathcal{N}(\boldsymbol{0},\boldsymbol{I})} [ f(\boldsymbol{x} + \bar{\sigma} _t \boldsymbol{\epsilon}) ]  $. And $\bar{\sigma} _T  = 0$ leads to $f_T = f$ to achieve the true reward function $f$. 

The gradient $\nabla f_t(\boldsymbol{x})$ can be approximated by the zeroth-order gradient estimator Eq.~(\ref{BO})~\cite{nesterov2017random}:  
\begin{align}
\label{BO}
    \nabla f_t(\boldsymbol{x}) & = \frac{1}{\bar{\sigma} _t } \mathbb{E}_{\mathcal{N}(\boldsymbol{0},\boldsymbol{I})} [ f(\boldsymbol{x} + \bar{\sigma} _t \boldsymbol{\epsilon}) \boldsymbol{\epsilon} ]  \approx \frac{1}{N \bar{\sigma} _t } \sum_{i=1}^N f(\boldsymbol{x} + \bar{\sigma} _t  \boldsymbol{\epsilon}^i) \boldsymbol{\epsilon}^i ,
\end{align}
where $\boldsymbol{\epsilon}^i$ for all $i \in \{1,\cdots,N\} $ denote i.i.d. samples from $\mathcal{N}(\boldsymbol{0},\boldsymbol{I})$.
 We employ the group reward normalization technique introduced in \cite{lyu2019black} to normalize the reward as 
 $\hat{f}(\boldsymbol{x}^i) = \frac{f(\boldsymbol{x}^i) - \hat{\mu}}{\hat{\sigma}}  $, where $\boldsymbol{x}^i= \boldsymbol{x} + \bar{\sigma} _t \boldsymbol{\epsilon}^i$, $\hat{\mu}$ and $\hat{\sigma}$ denote the mean and std of the rewards among the group samples.

\subsubsection{Discretization Update} 

Apply the Euler-Maruyama discretization to the SDE in Eq.(\ref{VA_SALD_FlowMatching_SDE}) with stepsize $\eta$, we have
\begin{align}
\label{VA_SALD_FlowMatching_SDE_discrete}
    X_{(k+1)\eta} = \left(1 -  \frac{\sigma_t^2}{2(1-t)} \eta \right) X_{k\eta}  -  \left( \frac{1}{r} + \frac{t \sigma_t^2}{2(1-t)} \right)  \eta \; \boldsymbol{v}_{t} (X_{k\eta})  - \frac{ c  \sigma_t^2 \eta}{2} \nabla f_{t} (X_{k\eta})   + \sigma_t \sqrt{\eta} \; \boldsymbol{w}
\end{align}
with $t = \frac{k\eta}{r}$ and $\boldsymbol{w}$ sampling from $\mathcal{N}(\boldsymbol{0},\boldsymbol{I})$.

In our experiments , we set the $\bar{\sigma} _t$ in zeroth-order gradient estimator in Eq.(\ref{BO}) as $\bar{\sigma} _t  = \sigma_t \sqrt{\eta}$. 
Plug the zeroth-order gradient estimator in Eq.(\ref{BO}) into Eq.~(\ref{VA_SALD_FlowMatching_SDE_discrete}), we achieve the discrete update rule. 
\begin{align}
    X_{(k+1)\eta} 
    &= \left(1 -  \frac{\sigma_t^2}{2(1-t)} \eta \right) X_{k\eta}  -  \left( \frac{1}{r} + \frac{t \sigma_t^2}{2(1-t)} \right)  \eta \; \boldsymbol{v}_{t} (X_{k\eta}) \notag\\ 
    &- \frac{ c  \sigma_t \sqrt{\eta}  }{2N} \sum_{i=1}^N f (X_{k\eta} + \sigma_t \sqrt{\eta}\boldsymbol{\epsilon}^i) \boldsymbol{\epsilon}^i  + \sigma_t \sqrt{\eta} \; \boldsymbol{w}
\end{align}
with $t = \frac{k\eta}{r}$ and $\boldsymbol{w}$ sampling from $\mathcal{N}(\boldsymbol{0},\boldsymbol{I})$.

We employ the group normalized reward and fix stepsize $\eta = 0.025$ and batch query size $N= 32$ in our experiments.  Given stepsize $\eta$, the number of steps of VA-SALD is $K= \frac{r}{\eta}$. 

\subsubsection{Baselines}

We compare our VA-SALD with two closely-related baselines that employ zeroth-order gradient: FM-ZG and FM-Evolv~\cite{wei2025evolvable}
\begin{itemize}
    \item \textbf{FM-ZG}:  Apply the default (Flow-Matching) SDE sampler in \cite{liu2025flow} with the plug-in zeroth-order gradient estimator (the same one used in VA-SALD) in the drift term. The std parameter $\bar{\sigma} _t$ of the zeroth-order gradient estimator is set to the same value as the coefficient of the Gaussian variable $\boldsymbol{w}$, which is the same setting scheme as VA-SALD. 

    \item \textbf{FM-Evolv}: Apply the default (Flow-Matching) SDE sampler in \cite{liu2025flow} with a posterior  zeroth-order gradient update at each inference step~\cite{wei2025evolvable}.  The std parameter $\bar{\sigma} _t$ of the zeroth-order gradient estimator is set to the same value as the coefficient of the Gaussian variable $\boldsymbol{w}$, which is the same setting scheme as VA-SALD. 
\end{itemize}

In all the experiments,  we keep the number of steps and batch query size of all baselines the same as VA-SALD.

\subsubsection{Guided Function}

We employ Aesthetic Score, PickScore, and CLIP Score as the guided function.  In addition, we compute the mean pair-wise distance of  CLIP embedding as the Diversity score, same as in \cite{liu2025flow}.

The Hugging Face link of the pre-trained SD3.5-M model and the Guided Function used in our experiments are listed below:
\begin{table}[h]
\centering
\begin{tabular}{|l|l|}
\hline
SD3.5-M         & https://huggingface.co/stabilityai/stable-diffusion-3.5-medium \\ \hline
Aesthetic Score & https://huggingface.co/trl-lib/ddpo-aesthetic-predictor        \\ \hline 
PickScore  &    https://huggingface.co/yuvalkirstain/PickScore\_v1              \\  \hline
\end{tabular}
\end{table}

\subsubsection{Experiments Setup}

We evaluate our VA-SALD on four generation tasks with complex prompts. Details of the prompt used are listed in the Table~\ref{FourPromptTask}.
\begin{table}[h]
\centering
\caption{Four generation tasks with complex prompts.}
\resizebox{\textwidth}{!}{
\begin{tabular}{|l|l|}
\hline
Task\#1  &    "The art deco music festival poster has a fox made of polished brass in the center of the picture, with smooth lines and elegant posture"              \\  \hline
Task\#2        & "A glowing dragon soaring through floating islands, leaving behind a trail of shimmering stardust" \\ \hline
Task\#3 & "quick doodle of a guy, medium hair with long bangs, hd detailed detailed"       \\ \hline 
Task\#4  &    "In the ink painting style, a naive giant panda is sitting on the majestic Great Wall, leisure lychewing bamboo"              \\  \hline
\end{tabular}
}\label{FourPromptTask}
\end{table}

For VA-SALD, and all baselines FM-ZG, FM-Evolv, we perform $10$ independent runs with seed in $\{0,1,\cdots,9\}$ for each guided function.  In all experiments, we set the guidance scale $c=8$.  For VA-SALD, we fix $r=4$, stepsize $\eta = 0.025$, which leads to the number of steps $K=160$. For all baselines, we use the same number of steps $K=160$.

The experimental results are shown in Table~\ref{ResultsT1} to Table~\ref{ResultsT4},  respectively. We can observe that VA-SALD achieves higher Aesthetic Scores and PickScores in this large guidance strength setting ($c=8$).  The generated images with  Aesthetic Score guidance are shown in Figure~\ref{fig:dragon} to Figure~\ref{fig:Panda}. We can see that the generated images of the baselines begin to degenerate, indicating that the heuristic guidance update is more likely to lead to over-optimization and to generation far from the image data manifold.  

 VA-SALD still generates good quality images. This is because VA-SALD theoretically converges to the distribution $\pi_T  \propto p_{0} \cdot \exp{(-c f)}$ for a given pre-trained model distribution $p_0$.  The  distribution $\pi_T$  is the optimal solution of the following regularized optimization problem
\begin{align}
\label{OBj_KL}
  \min_{p \in \mathcal{P}} \left \{ \mathbb{E}_p [f(X)] + \frac{1}{c}\textbf{KL}(p||p_0)   \right \},
\end{align}
which well-balanced the optimization for guidance $f$ and the KL-divergence to the pre-trained distribution $p_0$.

In contrast, baselines employ a heuristic guidance update, which leads to an unknown distribution after guidance. The resulting distribution of baselines is inferior compared with VA-SALD  for  the above optimization problem~(in Eq.~(\ref{OBj_KL})) 


\begin{table}[t]
\caption{On prompt "The art deco music festival poster has a fox made of polished brass in the center of the picture, with smooth lines and elegant posture" with guidance $c=8$}
\resizebox{\textwidth}{!}{
\begin{tabular}{|c|cc|cc|cc|}
\hline
         & \multicolumn{1}{c|}{Aesthetic} & DiversityScore & \multicolumn{1}{c|}{pickscore} & DiversityScore & \multicolumn{1}{c|}{CLIPscore} & DiversityScore \\ \hline
VA-SALD  & 6.061 ± 0.479                  & 13.941         & 24.207 ± 0.764                 & 13.636         & 35.264 ± 3.441                 & 11.461         \\ \hline
FM-ZG    & 5.977 ± 0.258                  & 12.405         & 23.079 ± 0.467                      & 13.339               & 33.733 ± 2.941                 & 14.058         \\ \hline
FM-Evolv & 5.547 ± 0.323                  & 10.693         & 20.795 ± 0.688                 & 12.136         & 32.089 ± 3.086                 & 11.847         \\ \hline
\end{tabular}
}\label{ResultsT1}
\end{table}

\begin{table}[t]
\caption{On prompt "A glowing dragon soaring through floating islands, leaving behind a trail of shimmering stardust" with guidance $c=8$}
\resizebox{\textwidth}{!}{
\begin{tabular}{|c|cc|cc|cc|}
\hline
         & \multicolumn{1}{c|}{Aesthetic} & DiversityScore & \multicolumn{1}{c|}{pickscore} & DiversityScore & \multicolumn{1}{c|}{CLIPscore} & DiversityScore \\ \hline
VA-SALD  & 6.586 ± 0.140                  & 7.393          & 24.526 ± 0.433                 & 7.466          & 29.619 ± 1.187                 & 7.718          \\ \hline
FM-ZG    & 6.291 ± 0.325                  & 9.929          & 22.201 ± 0.537                 & 10.220         & 29.729 ± 0.714                 & 8.889          \\ \hline
FM-Evolv & 5.633 ± 0.590                  & 13.498         & 21.420 ± 0.528                 & 10.456         & 31.794 ± 1.916                 & 10.367         \\ \hline
\end{tabular}
}\label{ResultsT2}
\end{table}

\begin{table}[!t]
\caption{On prompt "quick doodle of a guy, medium hair with long bangs, hd detailed detailed" with guidance $c=8$}
\resizebox{\textwidth}{!}{
\begin{tabular}{|c|cc|cc|cc|}
\hline
         & \multicolumn{1}{c|}{Aesthetic} & DiversityScore & \multicolumn{1}{c|}{pickscore} & DiversityScore & \multicolumn{1}{c|}{CLIPscore} & DiversityScore \\ \hline
VA-SALD  & 6.309 ± 0.109                  & 10.078         & 21.728 ± 0.327                 & 10.901         & 24.858 ± 1.353                 & 9.767          \\ \hline
FM-ZG    & 6.091 ± 0.130                  & 11.205         & 21.606 ± 0.299                 & 10.609         & 24.127 ± 1.047                 & 10.740         \\ \hline
FM-Evolv & 6.048± 0.249                   & 11.635         & 21.158 ± 0.688                 & 11.916         & 25.569 ± 1.535                 & 10.735         \\ \hline
\end{tabular}
}\label{ResultsT3}
\end{table}

\begin{table}[!t]
\caption{On prompt "In the ink painting style, a naive giant panda is sitting on the majestic Great Wall, leisure lychewing bamboo" with guidance $c=8$}
\resizebox{\textwidth}{!}{
\begin{tabular}{|c|cc|cc|cc|}
\hline
         & \multicolumn{1}{c|}{Aesthetic} & DiversityScore & \multicolumn{1}{c|}{pickscore} & DiversityScore & \multicolumn{1}{c|}{CLIPscore} & DiversityScore \\ \hline
VA-SALD  & 6.697 ± 0.153                  & 6.977          & 22.821 ± 0.318                 & 7.378          & 36.541 ± 0.902                 & 7.411          \\ \hline
FM-ZG    & 6.337 ± 0.148                  & 8.485          & 21.529 ± 0.271                 & 7.967          & 34.660 ± 1.019                 & 8.423          \\ \hline
FM-Evolv & 5.598± 0.240                   & 9.135          & 20.543 ± 0.460                 & 9.821          & 35.333 ± 1.905                 & 9.701          \\ \hline
\end{tabular}
}\label{ResultsT4}
\end{table}

\clearpage

\begin{figure}
     \centering
     \subfloat[VA-SALD]{\includegraphics[width=0.85\textwidth]{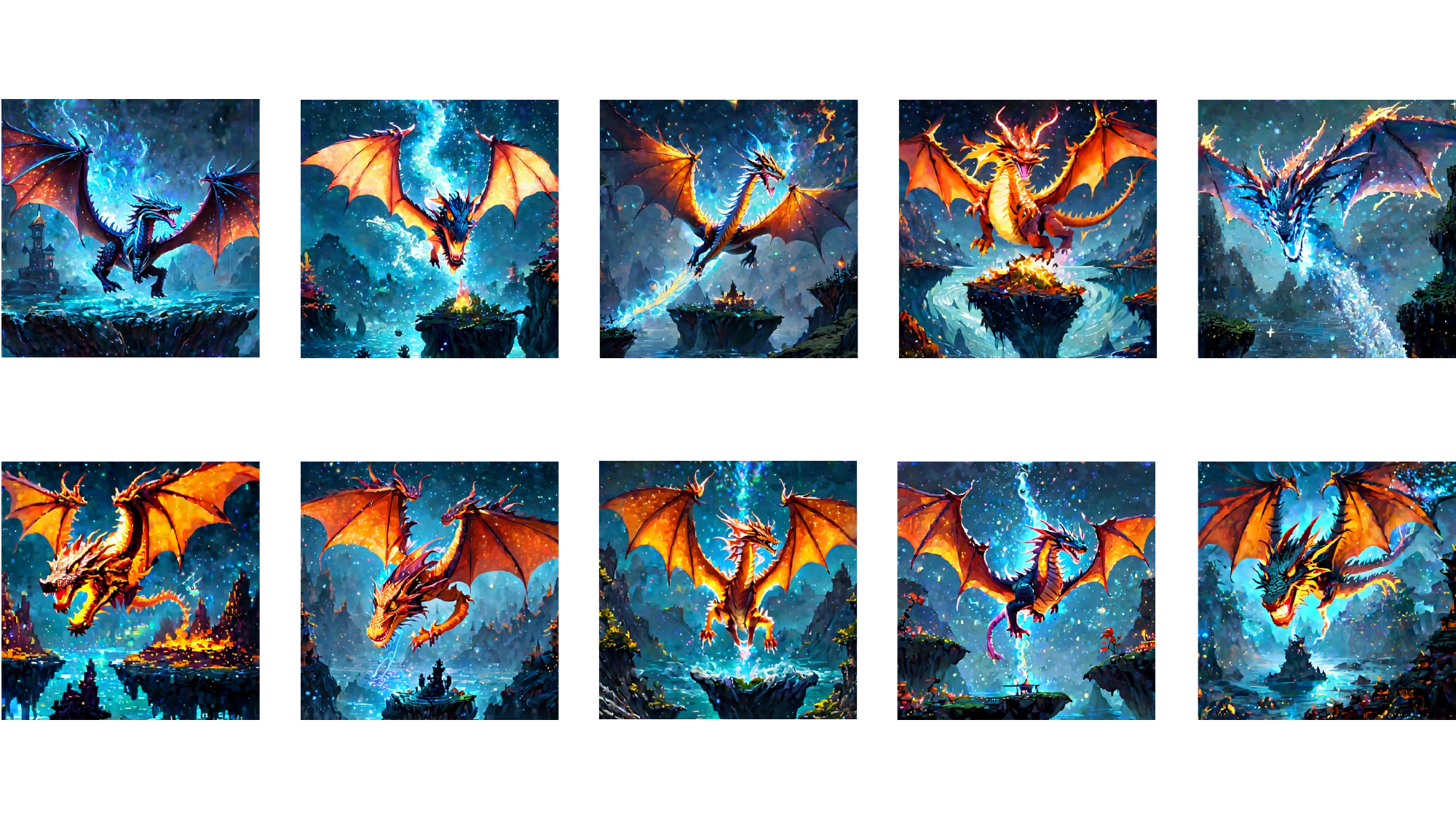}}    
      \hfill
      \subfloat[FM-ZG]{\includegraphics[width=0.85\textwidth]{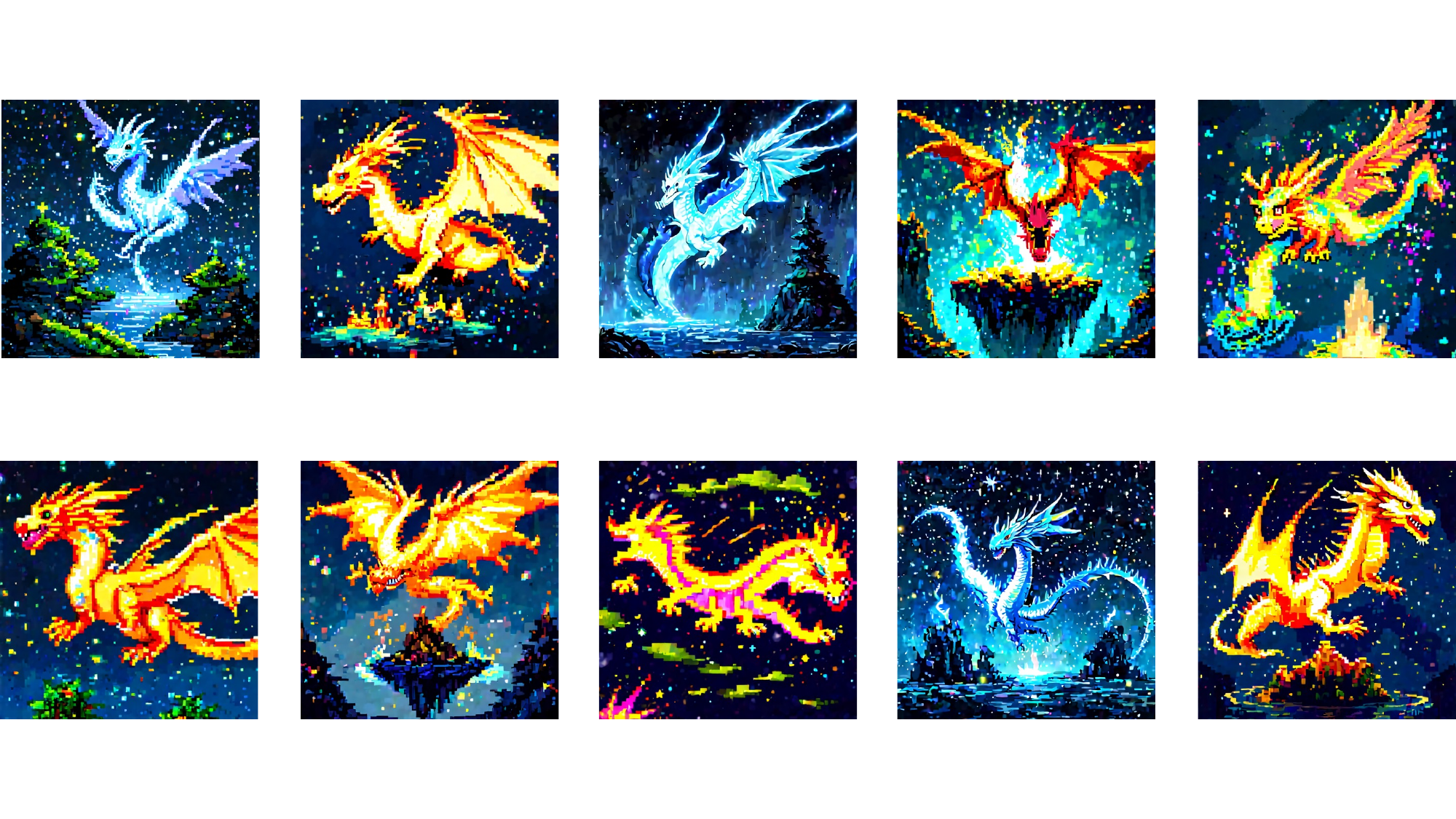}}  
      \hfill
       \subfloat[FM-Evolv]{\includegraphics[width=0.85\textwidth]{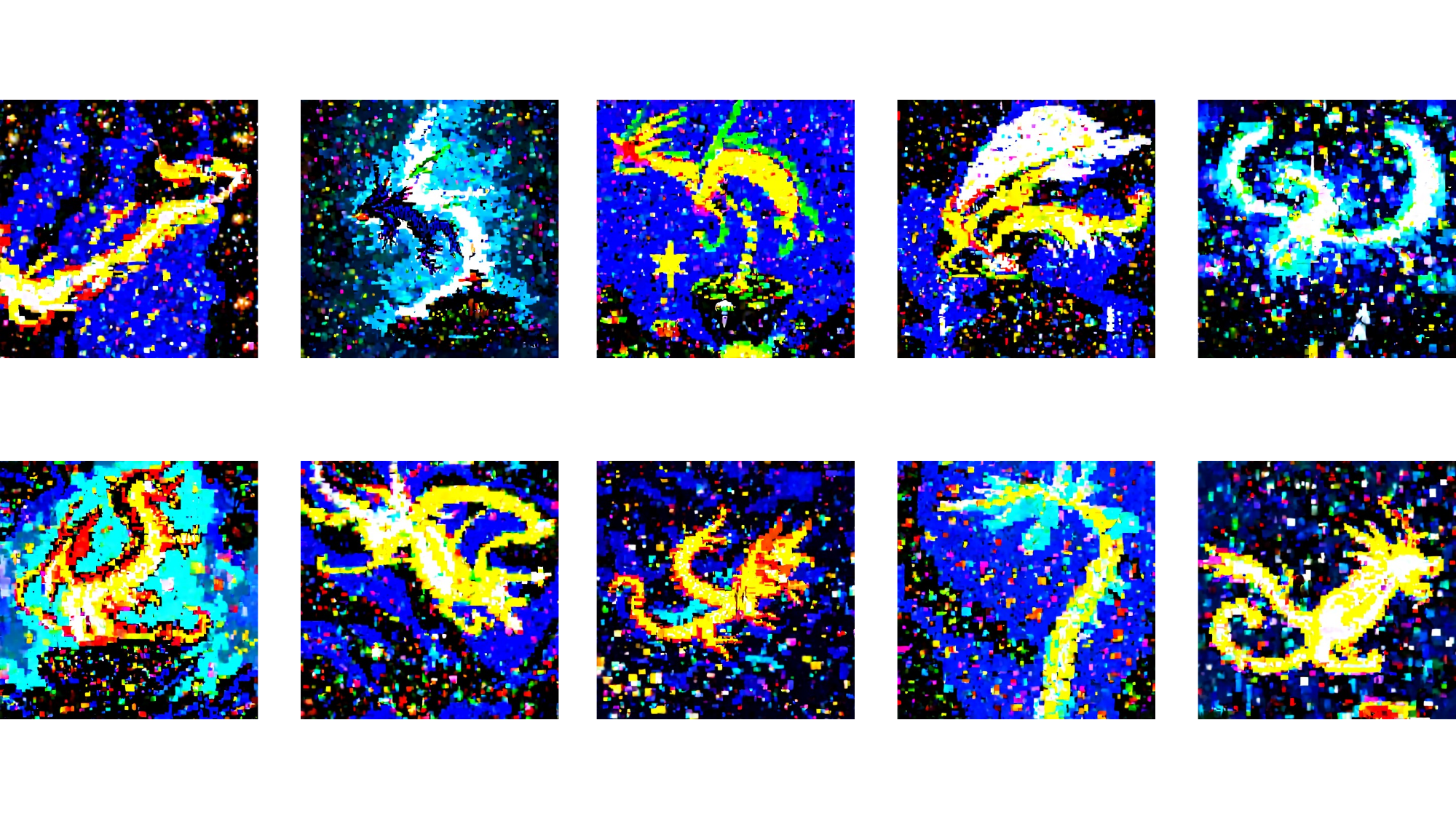}} 
        \caption{Generated Images on prompt "A glowing dragon soaring through floating islands, leaving behind a trail of shimmering stardust" with guidance $c=8$}
        \label{fig:dragon}
\end{figure}

\begin{figure}
     \centering
     \subfloat[VA-SALD]{\includegraphics[width=0.85\textwidth]{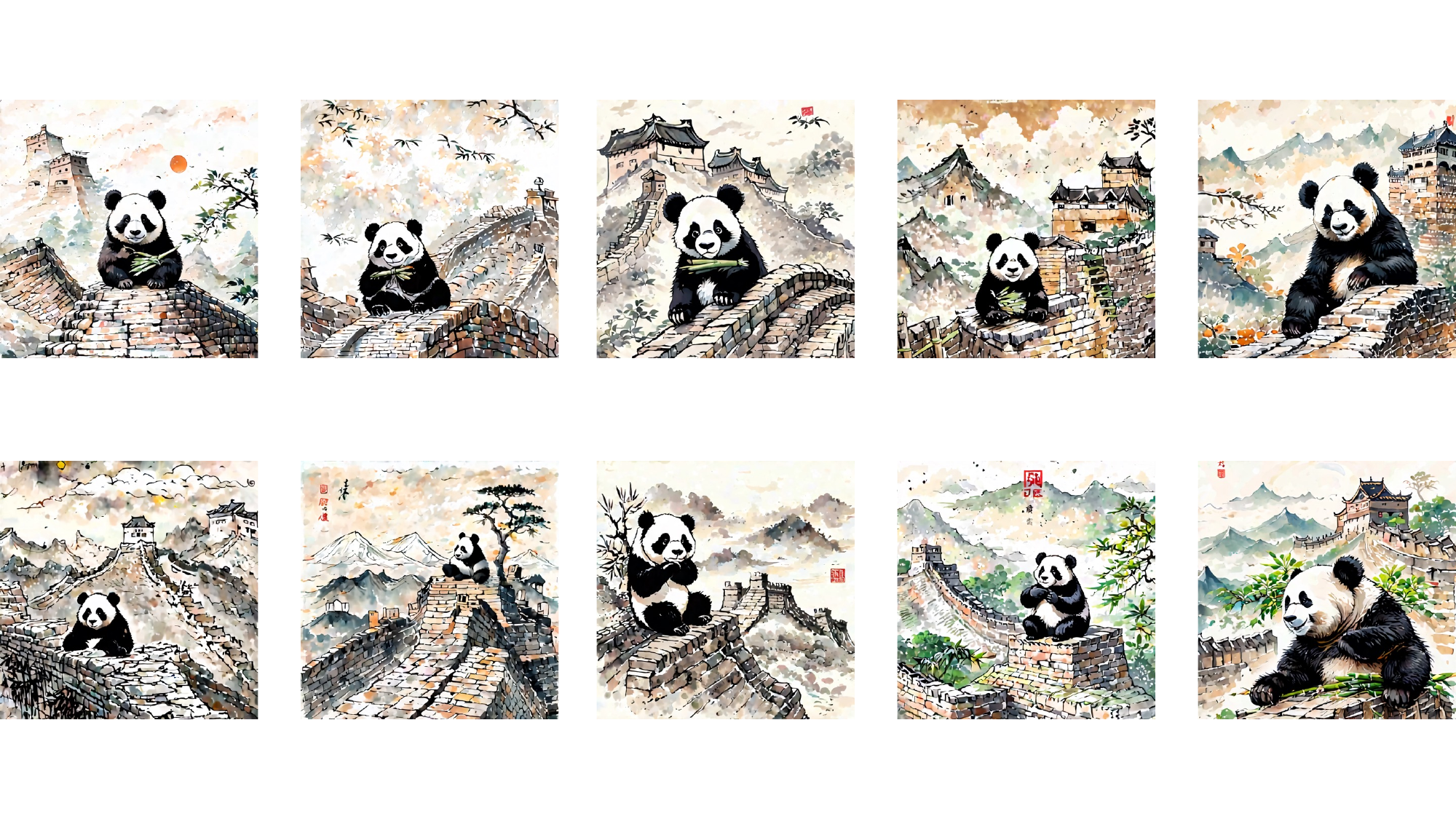}}    
      \hfill
      \subfloat[FM-ZG]{\includegraphics[width=0.85\textwidth]{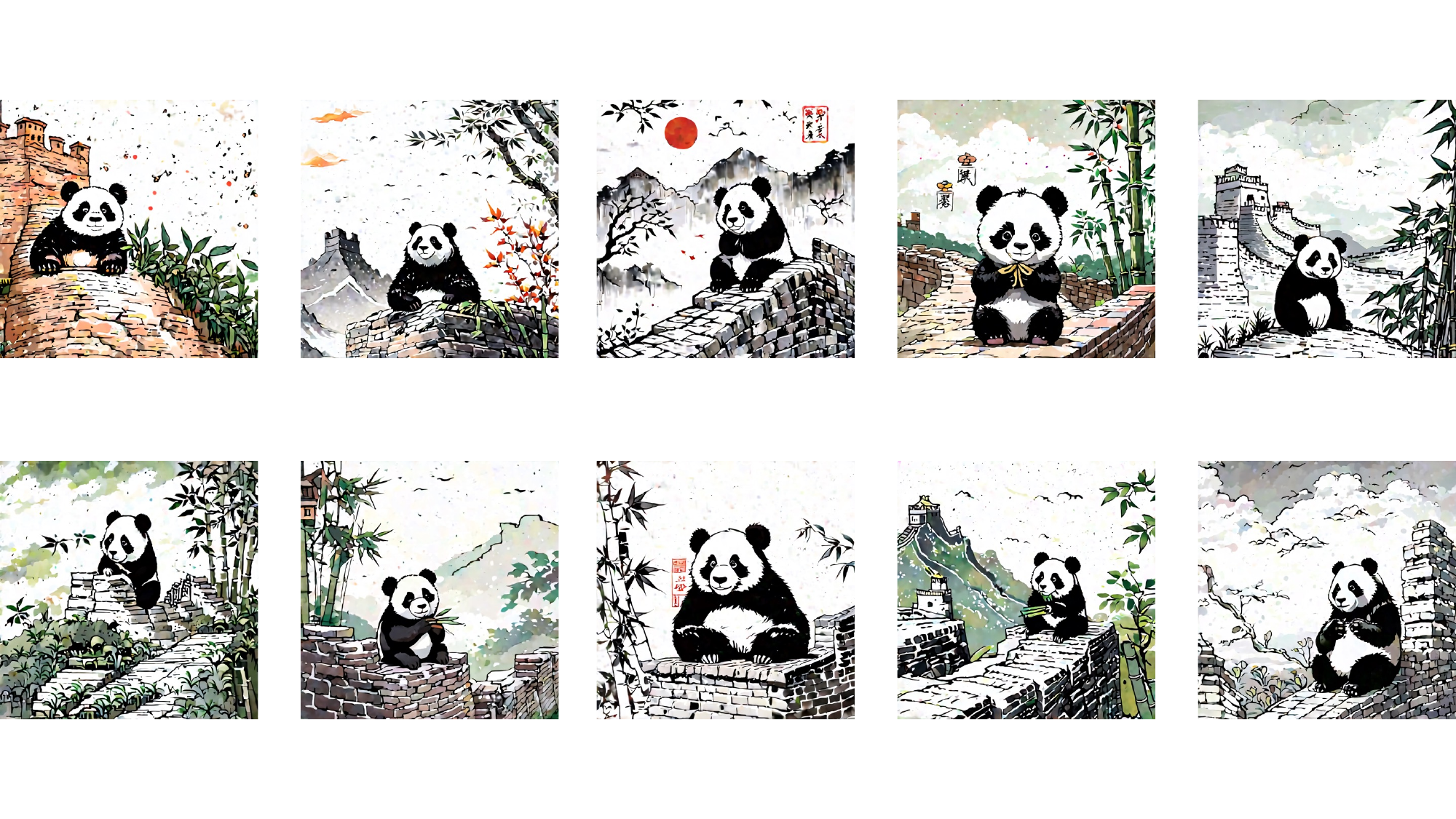}}  
      \hfill
       \subfloat[FM-Evolv]{\includegraphics[width=0.85\textwidth]{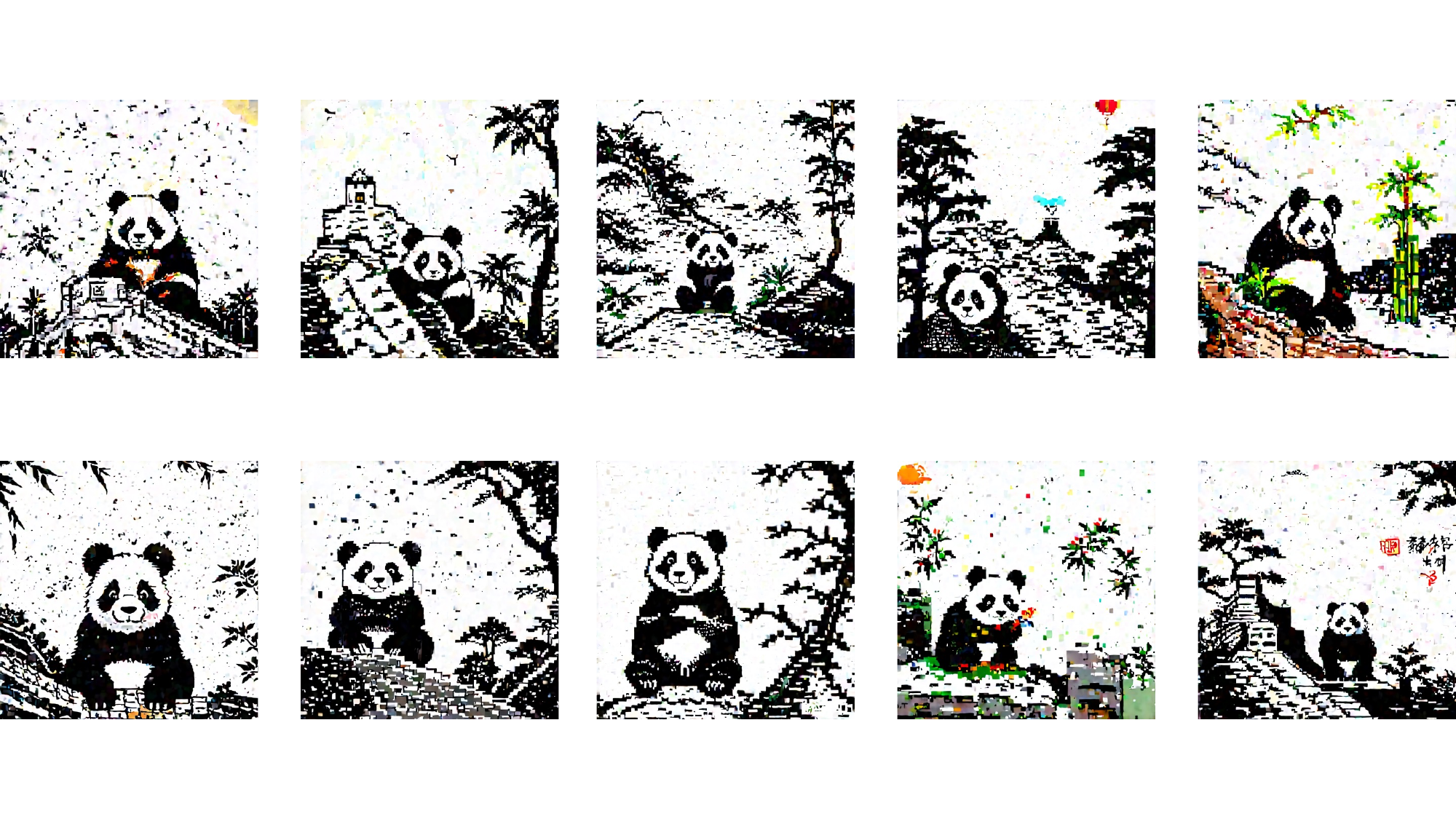}} 
        \caption{Generated Images on prompt "In the ink painting style, a naive giant panda is sitting on the majestic Great Wall, leisure lychewing bamboo" with guidance $c=8$}
        \label{fig:Panda}
\end{figure}

\begin{figure}
     \centering
     \subfloat[VA-SALD]{\includegraphics[width=0.85\textwidth]{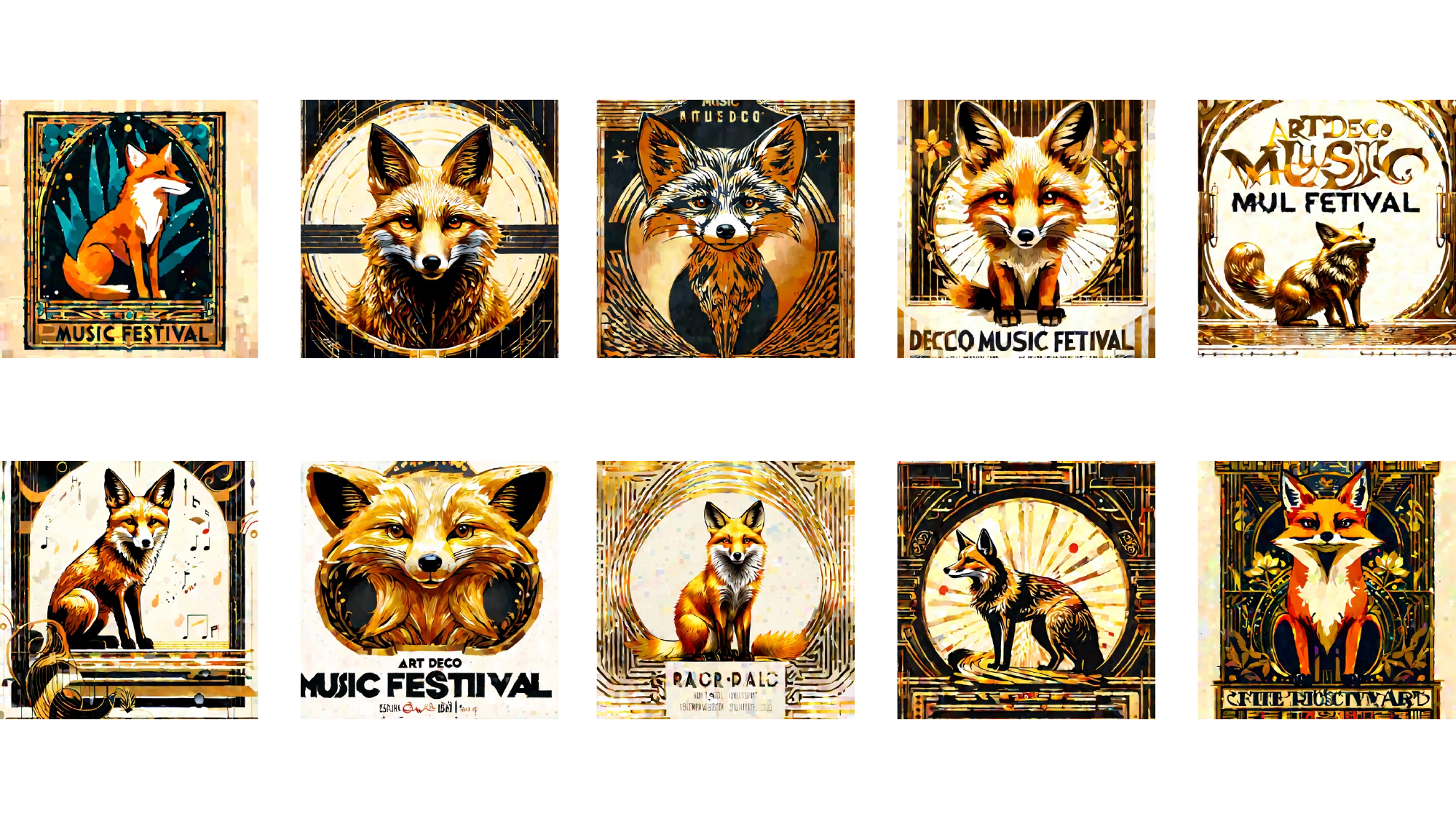}}    
      \hfill
      \subfloat[FM-ZG]{\includegraphics[width=0.85\textwidth]{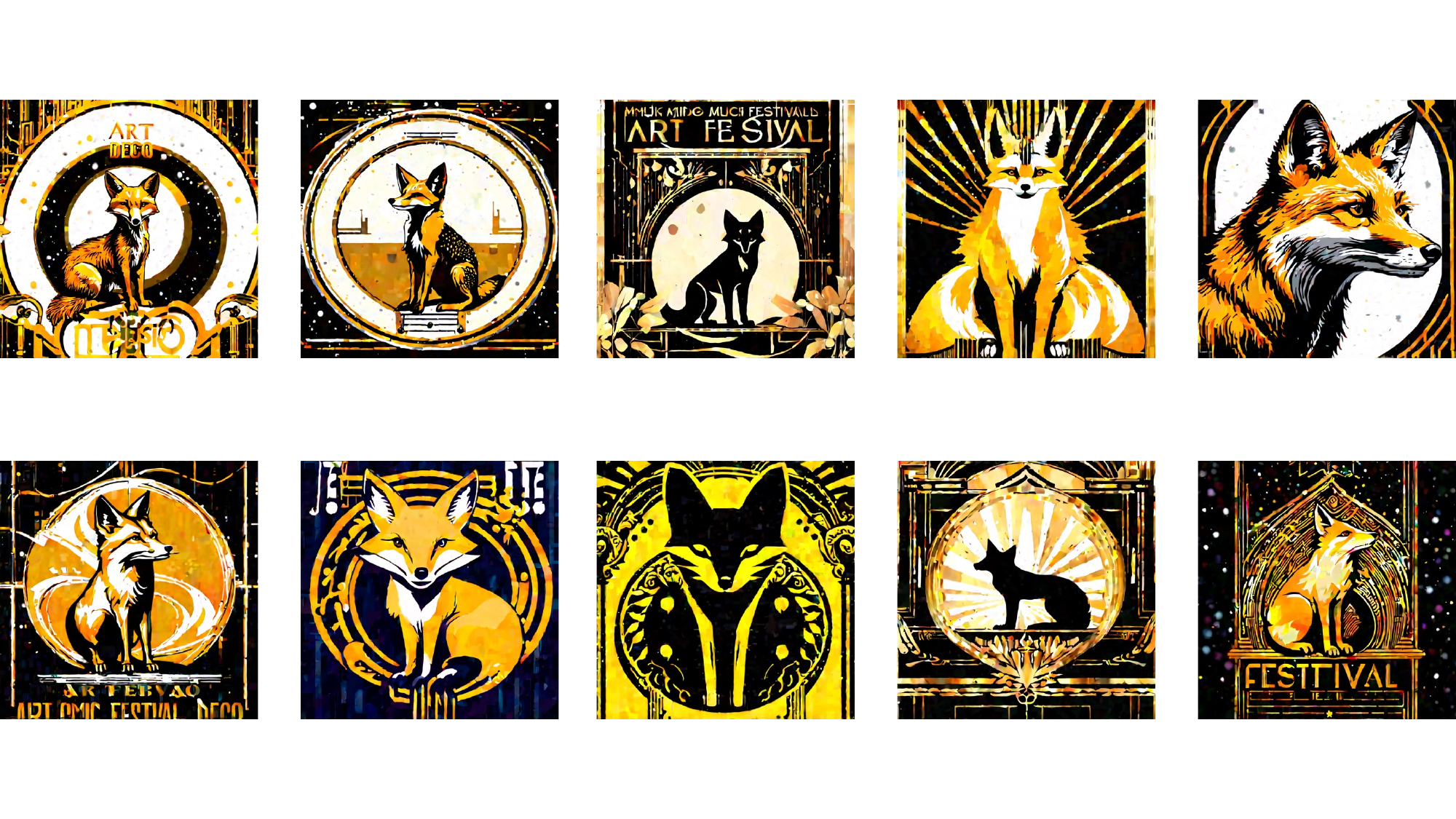}}  
      \hfill
       \subfloat[FM-Evolv]{\includegraphics[width=0.85\textwidth]{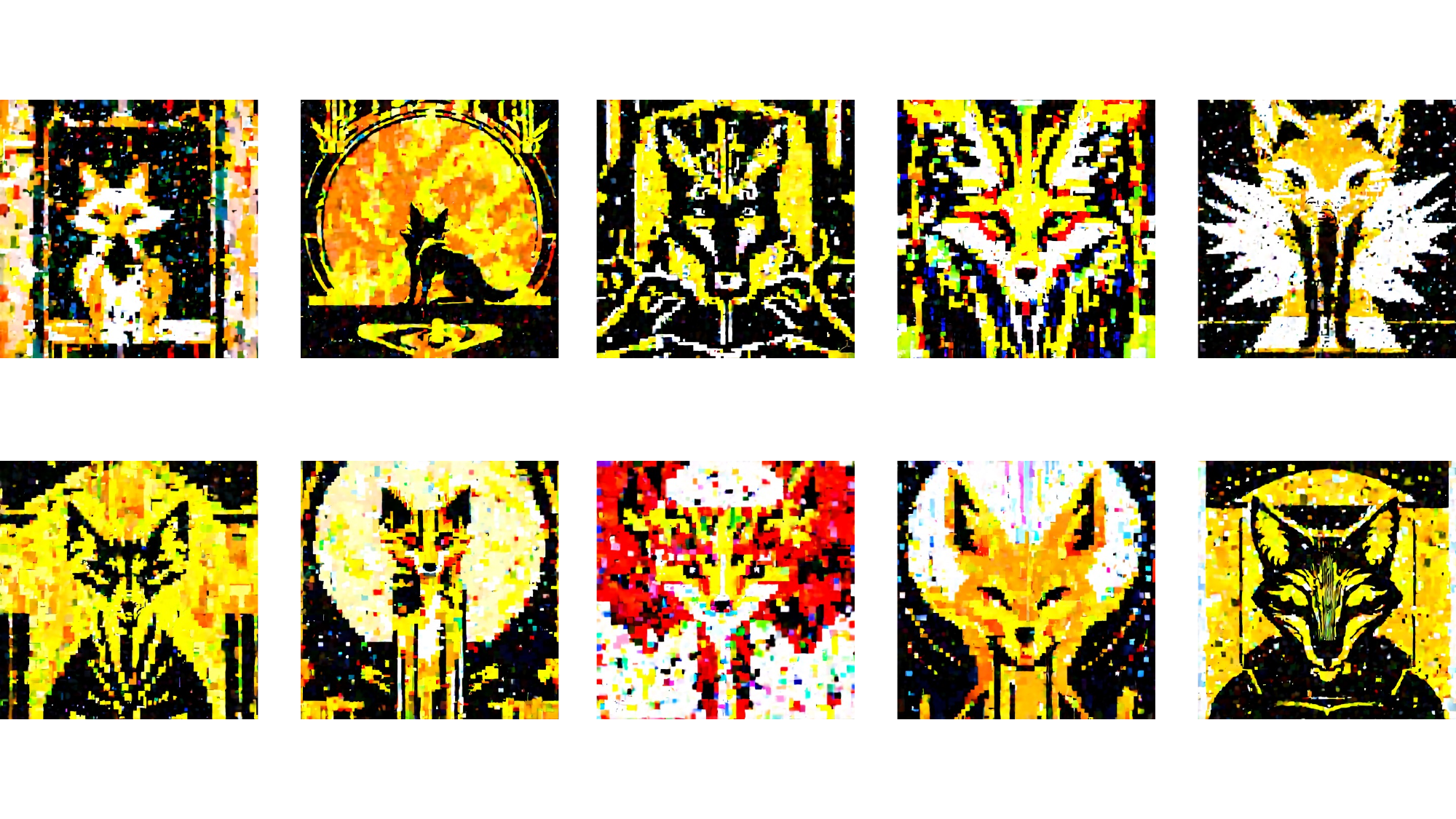}} 
        \caption{Generated Images on prompt "The art deco music festival poster has a fox made of polished brass in the center of the picture, with smooth lines and elegant posture" with guidance $c=8$}
        \label{fig:Panda}
\end{figure}

\begin{figure}
     \centering
     \subfloat[VA-SALD]{\includegraphics[width=0.85\textwidth]{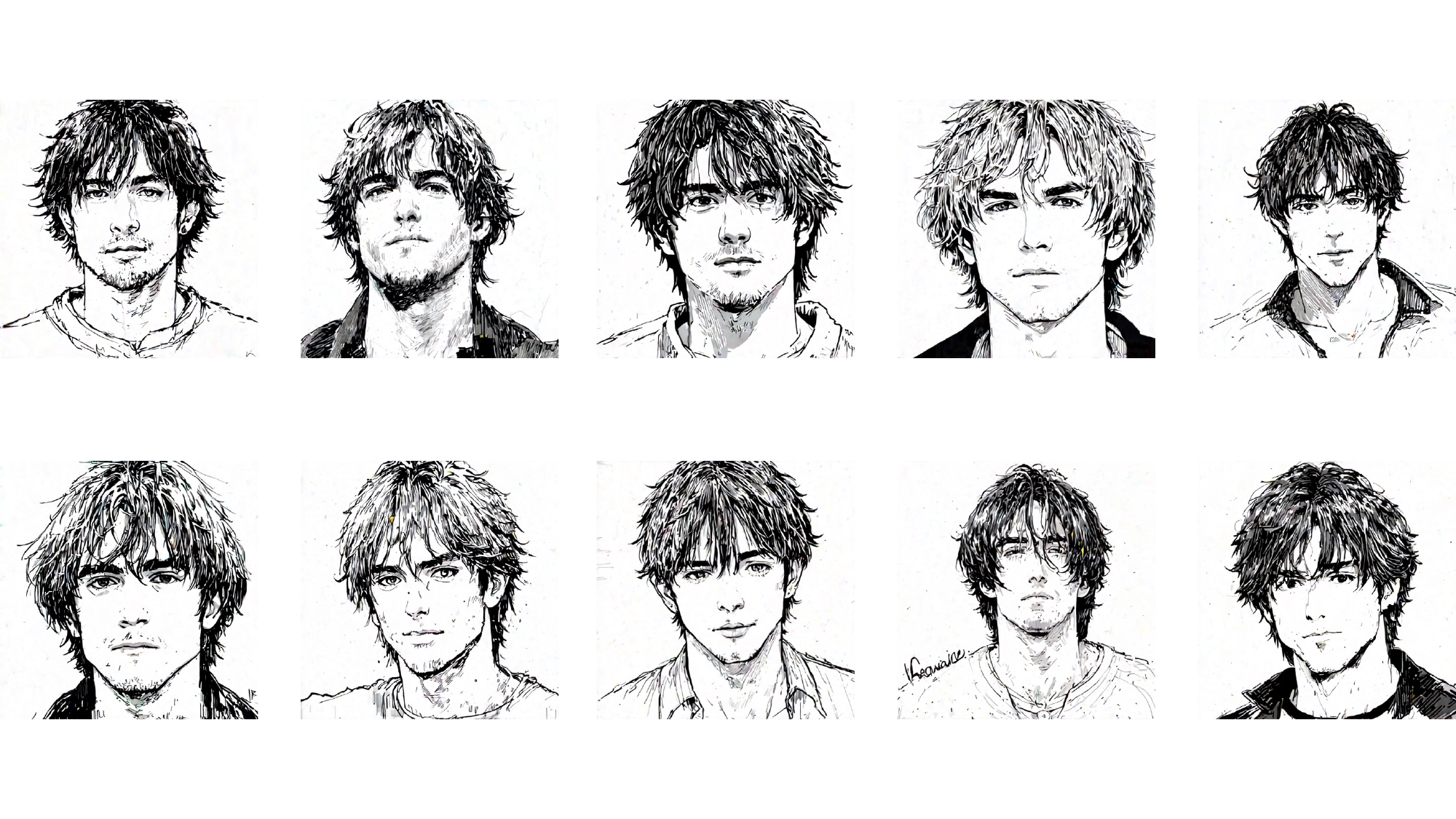}}    
      \hfill
      \subfloat[FM-ZG]{\includegraphics[width=0.85\textwidth]{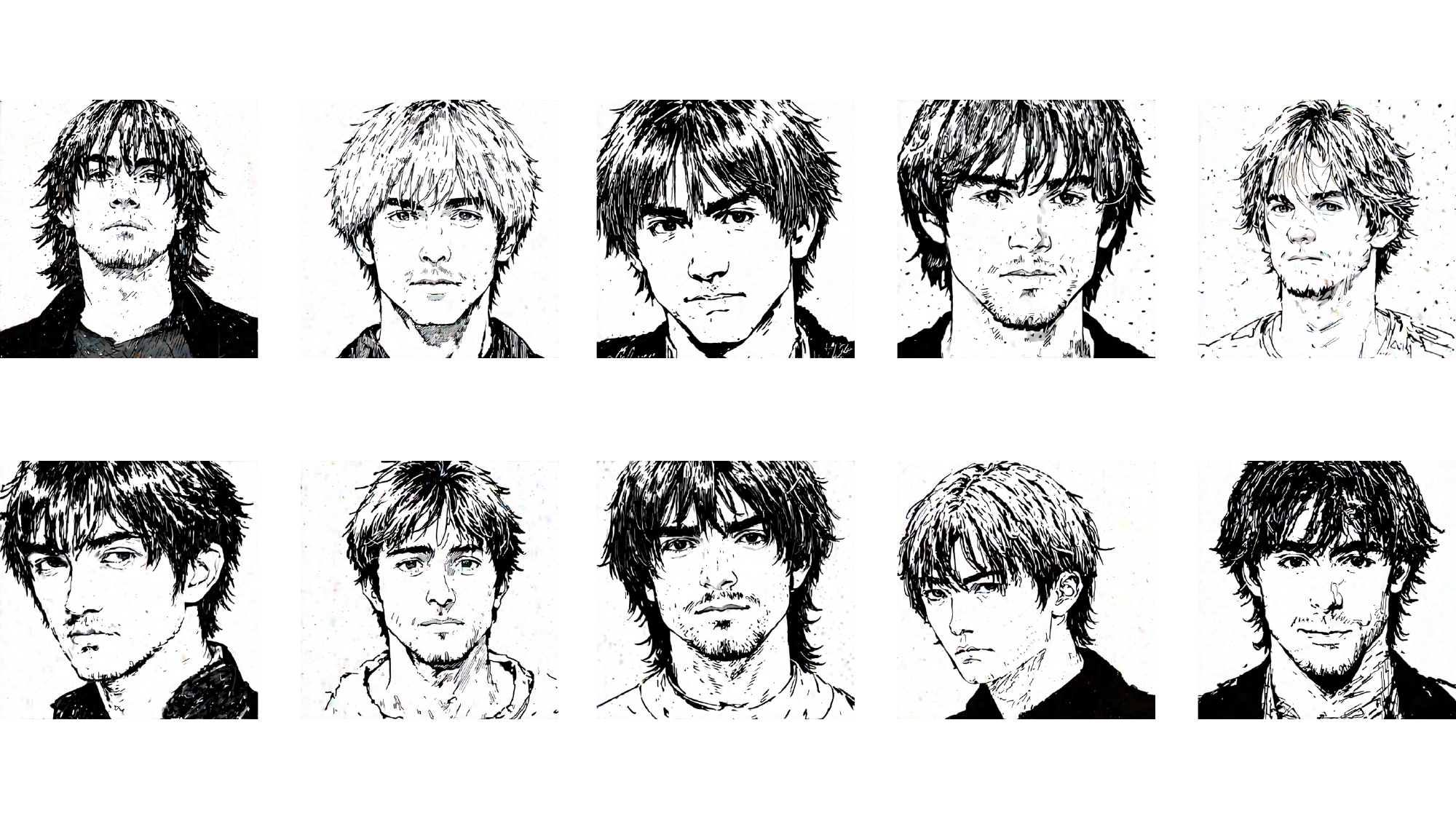}}  
      \hfill
       \subfloat[FM-Evolv]{\includegraphics[width=0.85\textwidth]{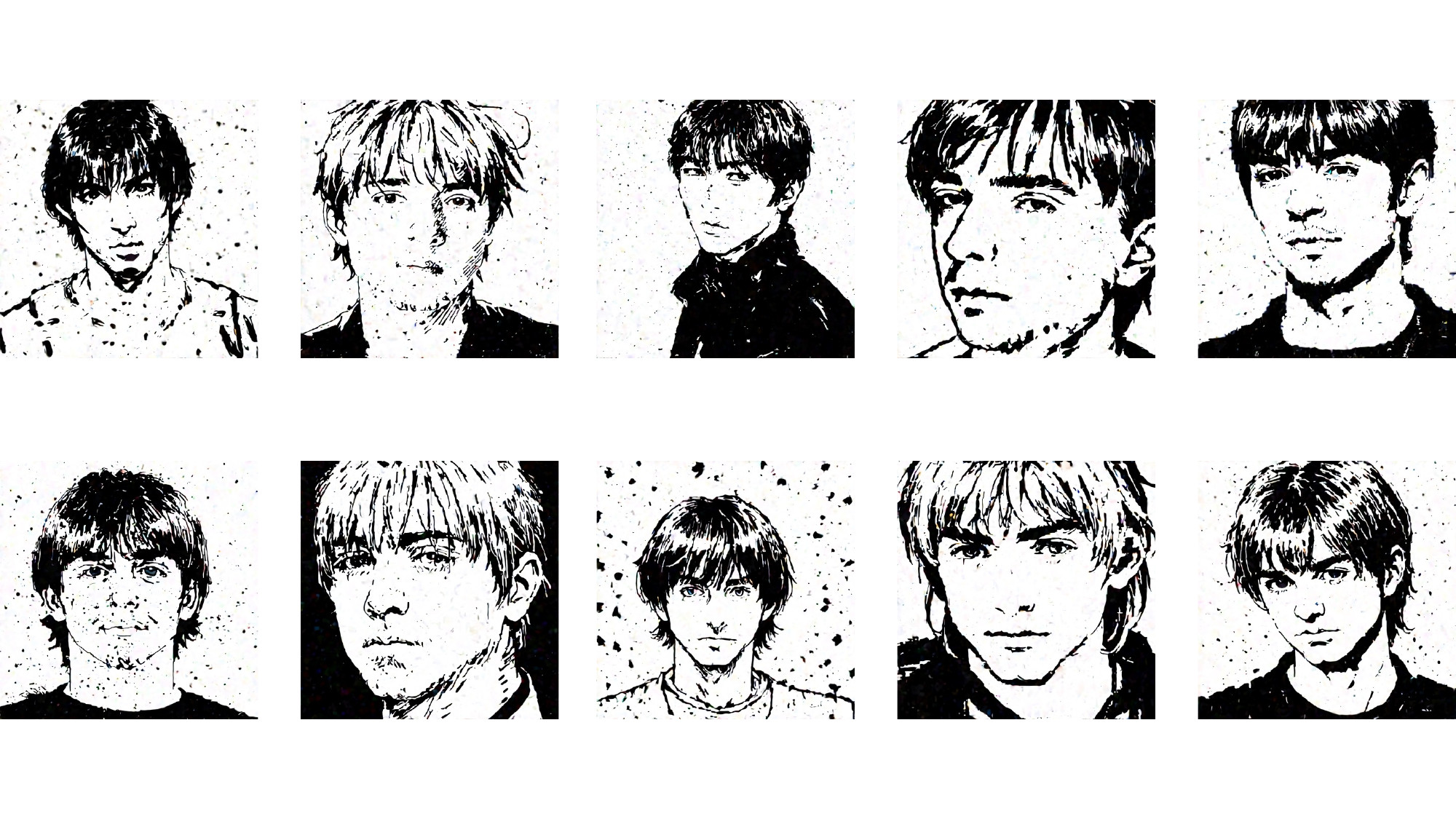}} 
        \caption{Generated Images on prompt "quick doodle of a guy, medium hair with long bangs, hd detailed detailed" with guidance $c=8$}
        \label{fig:Panda}
\end{figure}

\clearpage
\subsubsection{Visualization of the Generated Images with different $r$  (number of steps) }

We further provide the visualization of the generated images with different $r$  ( and number of steps).  Four simple prompts, "bear", "wolf",  "hippo", and "lion" , are employed.  For all the methods, we fix  $seed = 0$. The generated images are shown in Figure~\ref{bear_gc8} to Figure~\ref{lion_gc5}.  We can observe that as the $r$ increases, VA-SALD changes gradually significantly in a stable manner for both small guidance $c=5$ and large guidance $c=8$.  In contrast, baseline: FM-ZG and FM-Evolv tend to degenearte at large guidance $c=8$. 

\begin{figure}[t] 
    \centering
    \includegraphics[width=1\textwidth]{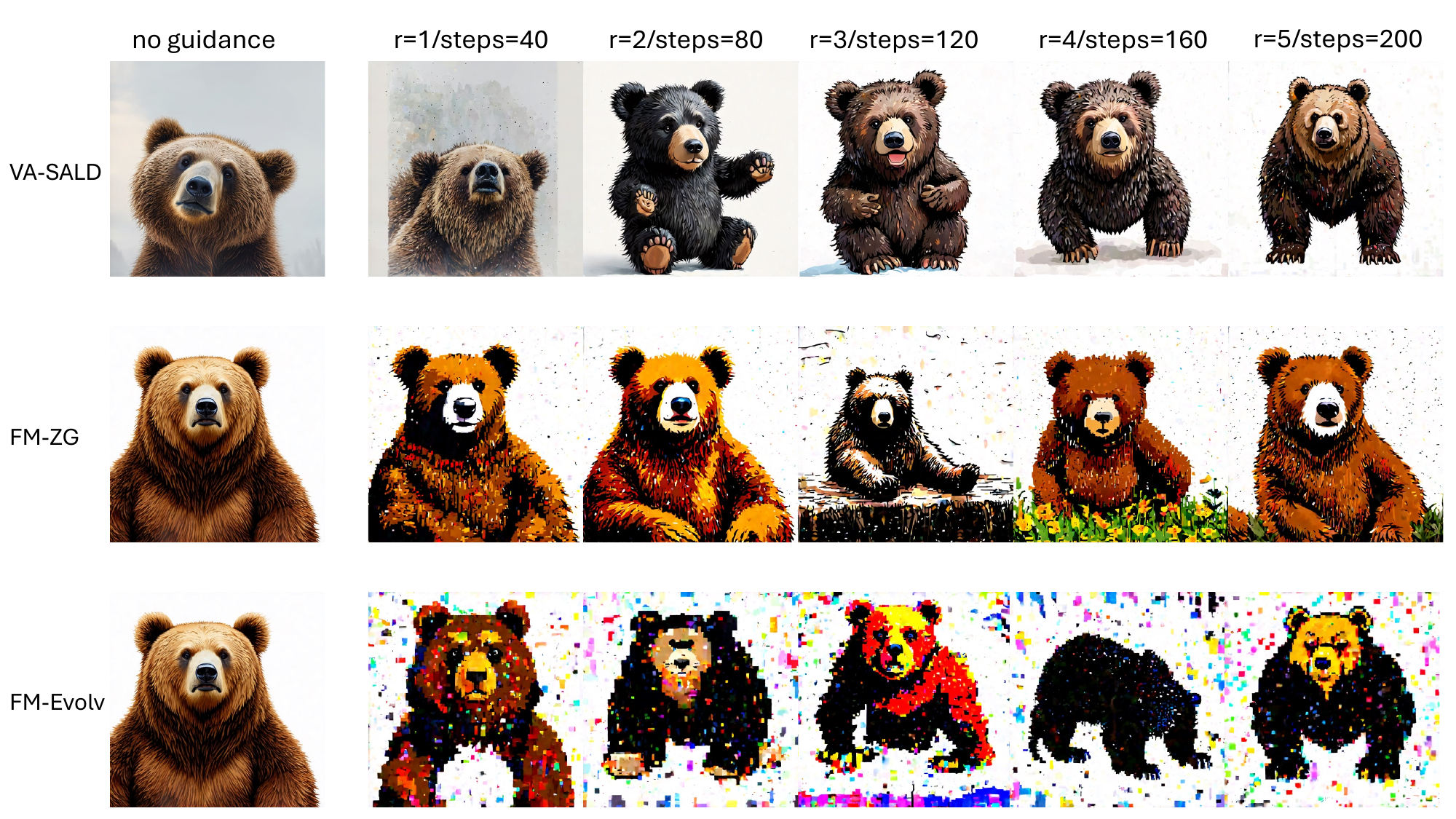} 
    \caption{Guided generation on "bear"   with guidance parameter $c=8$ }
    \label{bear_gc8}
\end{figure}

\begin{figure}[t] 
    \centering
    \includegraphics[width=1\textwidth]{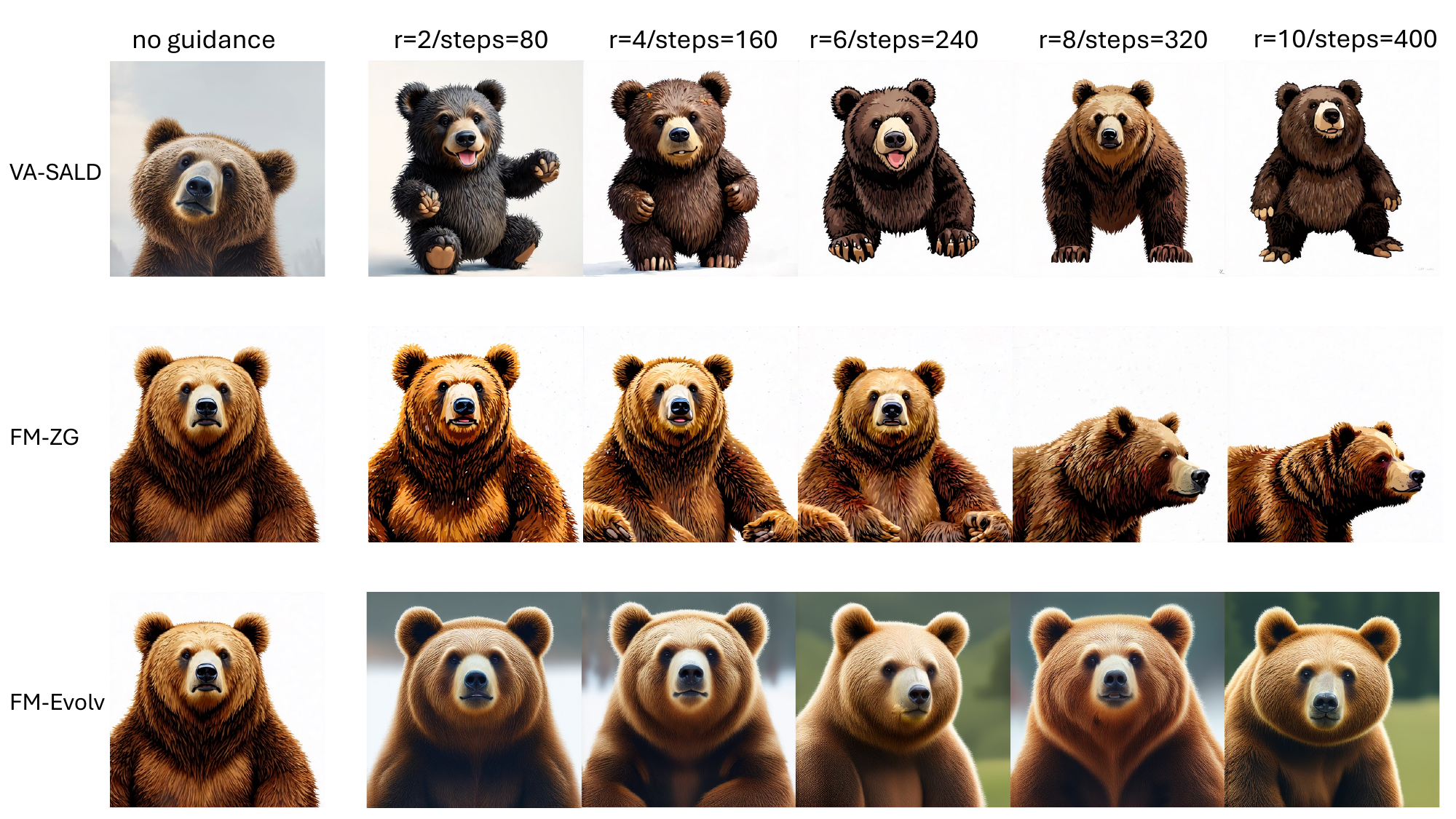} 
    \caption{Guided generation on "bear"   with guidance parameter $c=5$ }
    \label{bear_gc5}
\end{figure}

 \begin{figure}[h] 
    \centering
    \includegraphics[width=1\textwidth]{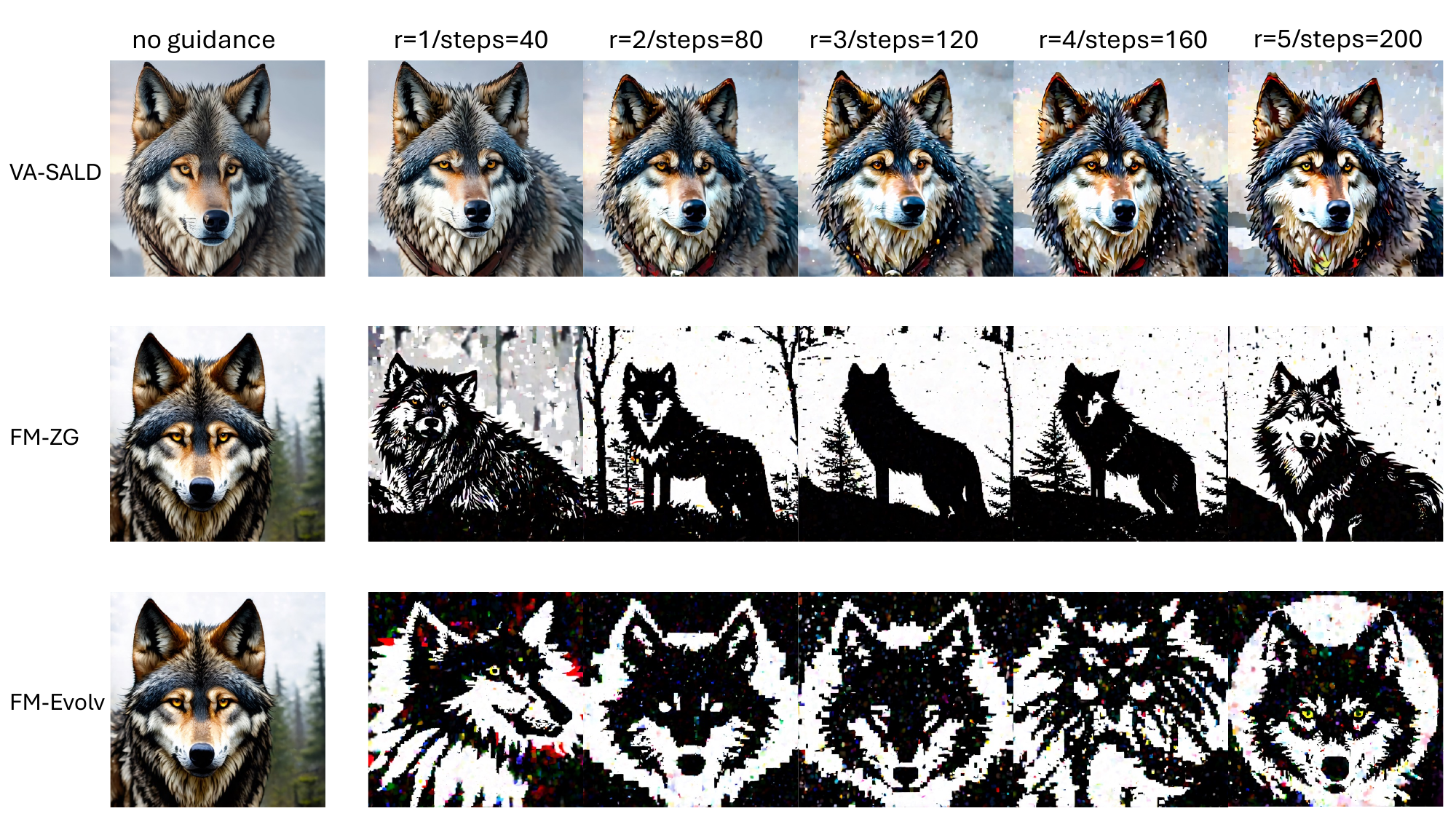} 
    \caption{Guided generation on "wolf"   with guidance parameter $c=8$ }
    \label{wolf_gc8}
\end{figure}

\begin{figure}[h] 
    \centering
    \includegraphics[width=1\textwidth]{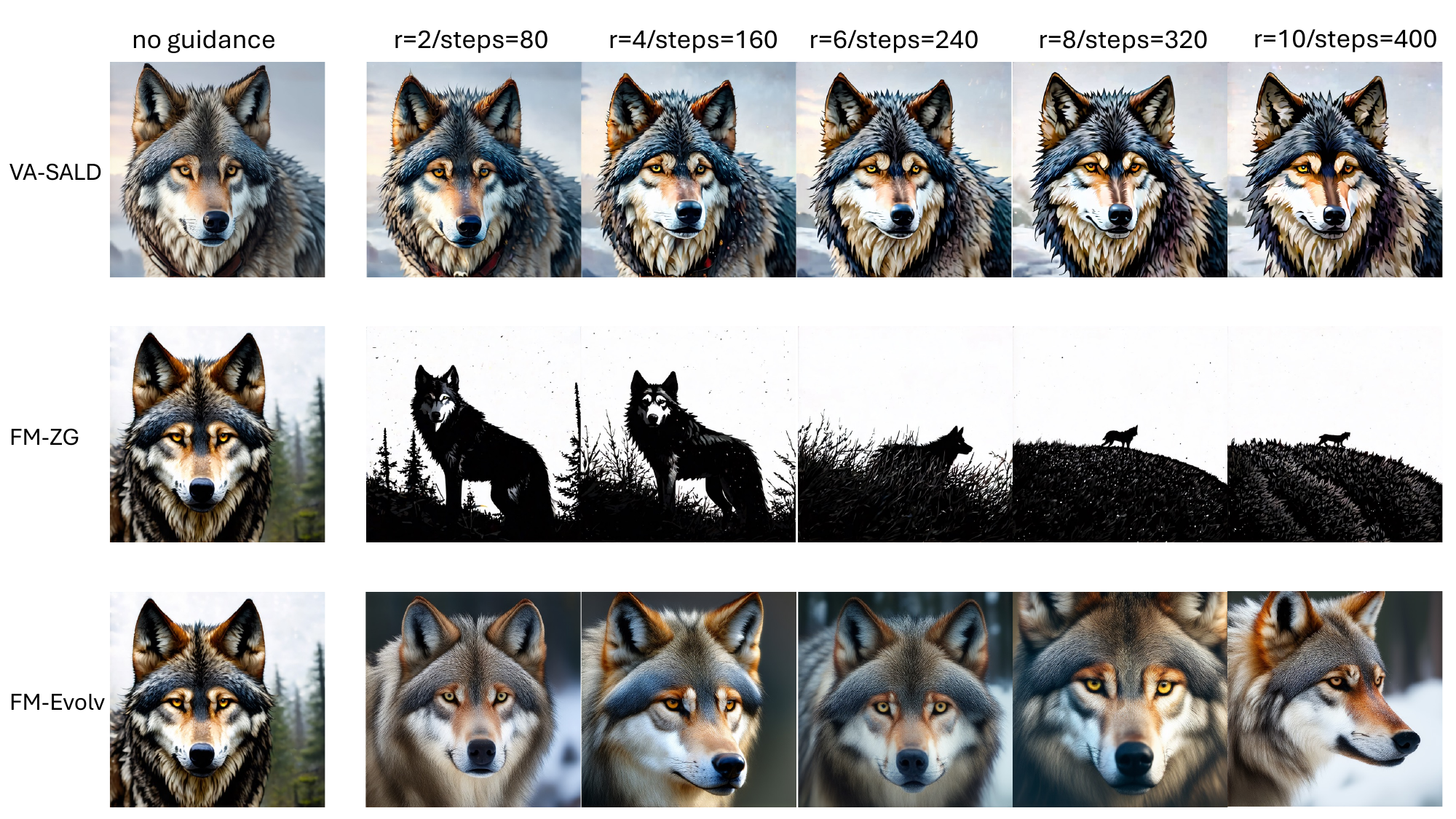} 
    \caption{Guided generation on "wolf"   with guidance parameter $c=5$ }
    \label{wolf_gc5}
\end{figure}

\begin{figure}[h] 
    \centering
    \includegraphics[width=1\textwidth]{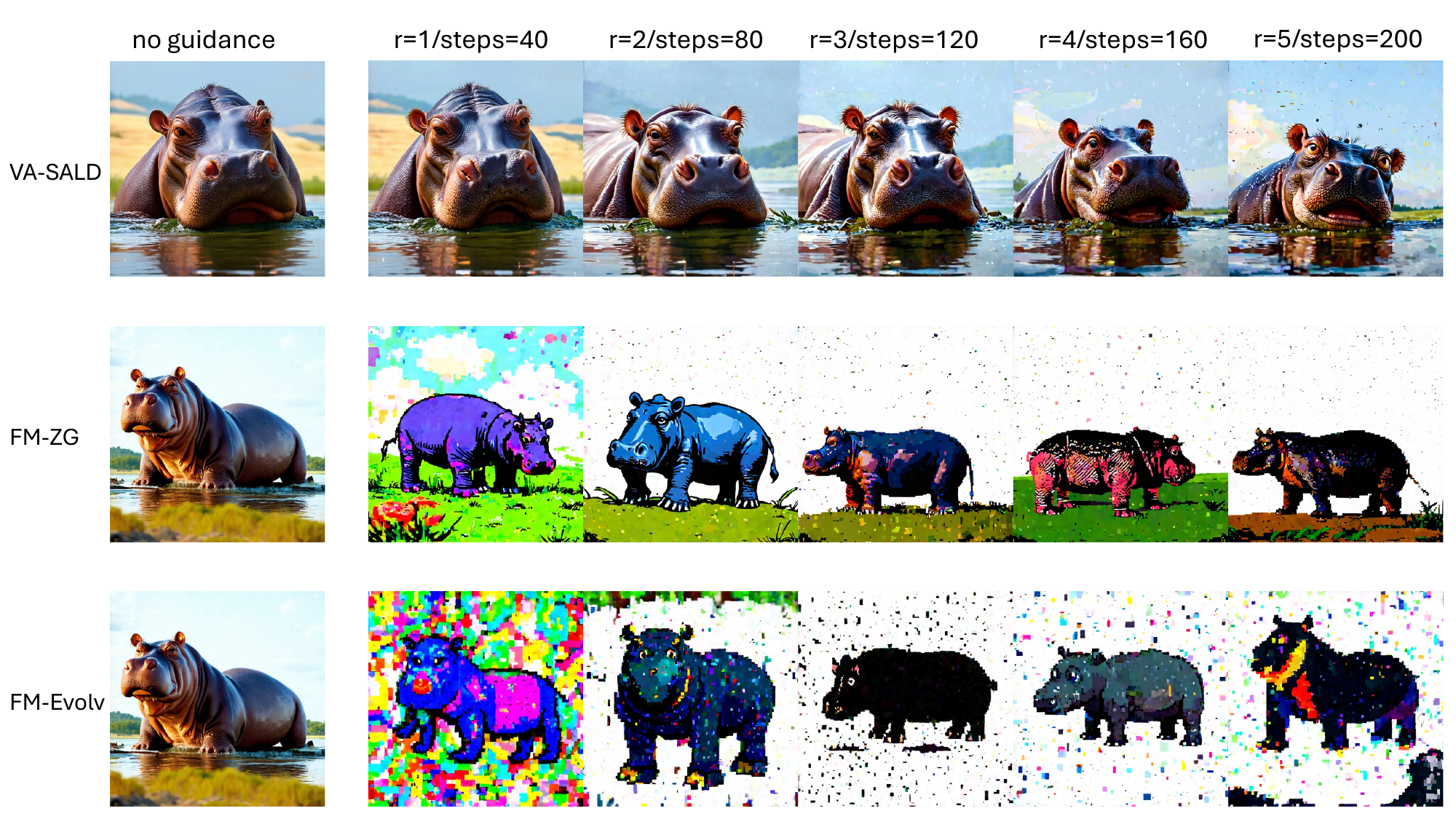} 
    \caption{Guided generation on "hippo"   with guidance parameter $c=8$ }
    \label{hippo_gc8}
\end{figure}

\begin{figure}[h] 
    \centering
    \includegraphics[width=1\textwidth]{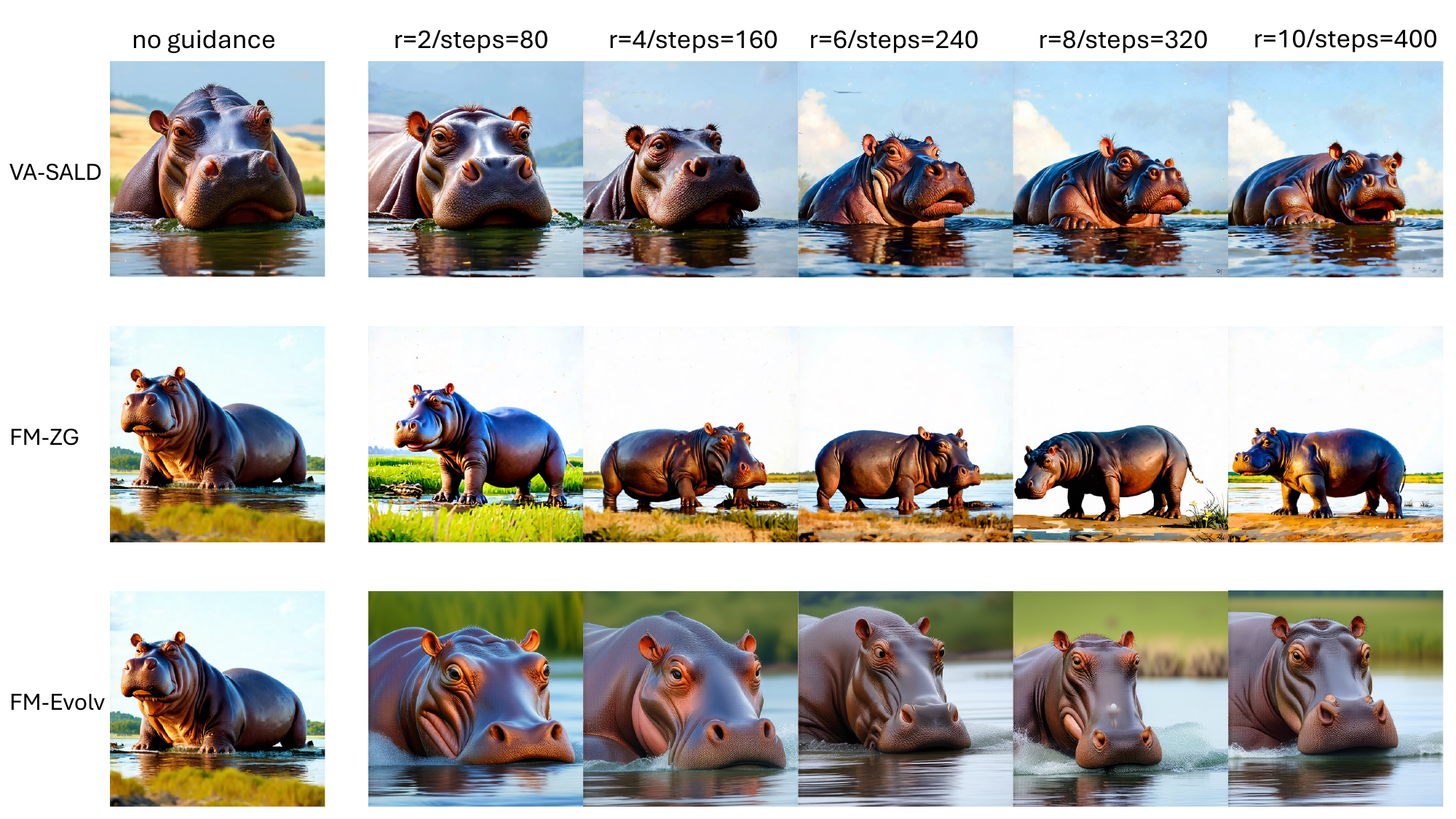} 
    \caption{Guided generation on "hippo"   with guidance parameter $c=5$ }
    \label{hippo_gc5}
\end{figure}

\begin{figure}[h] 
    \centering
    \includegraphics[width=1\textwidth]{./Figs/lion_gc8.pdf} 
    \caption{Guided generation on "lion"   with guidance parameter $c=8$ }
    \label{lion_gc8_2}
\end{figure}

\begin{figure}[h] 
    \centering
    \includegraphics[width=1\textwidth]{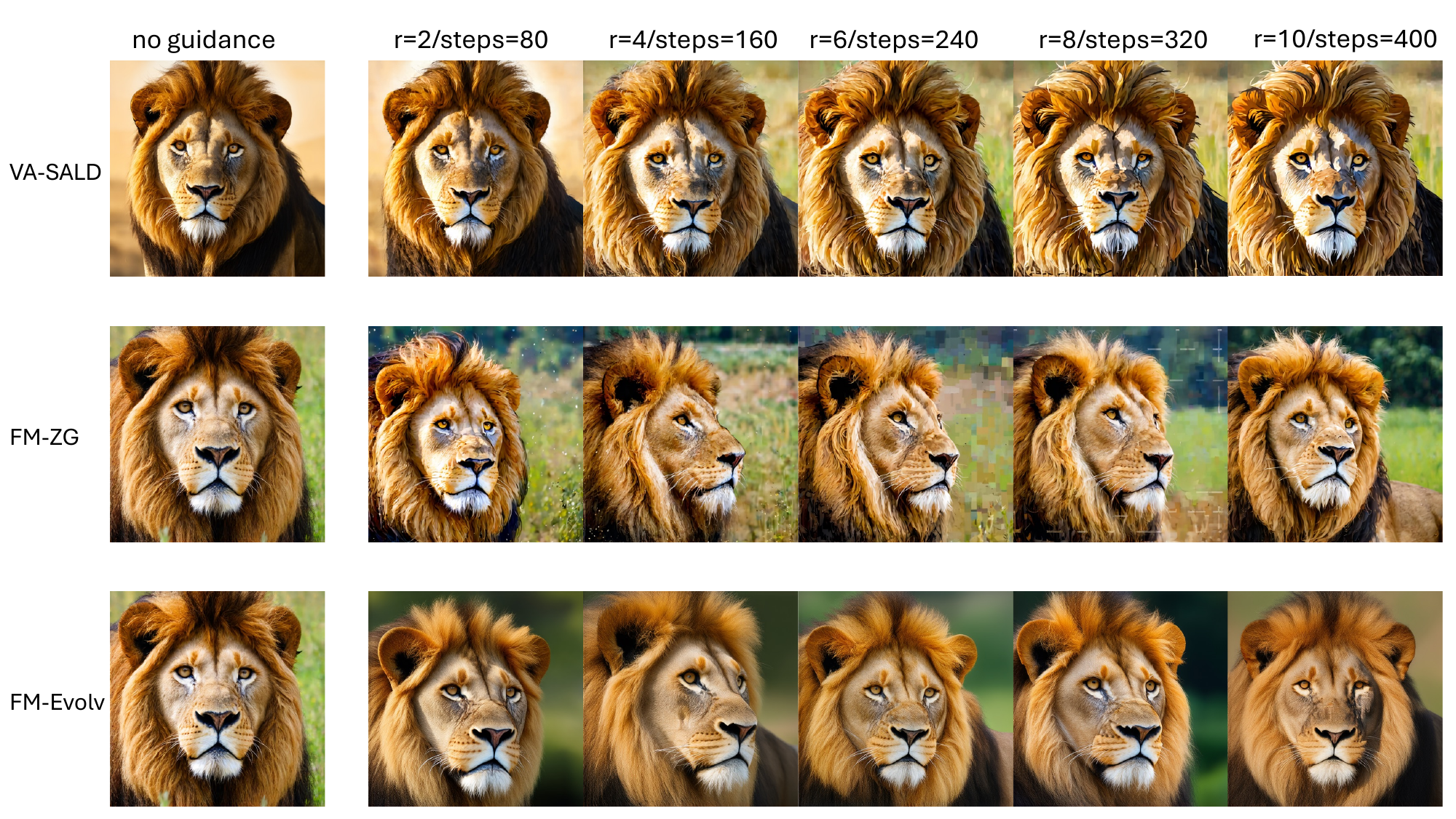} 
    \caption{Guided generation on "lion"   with guidance parameter $c=5$ }
    \label{lion_gc5}
\end{figure}

\clearpage

\clearpage

\end{document}